\renewcommand*{\backref}[1]{}
\renewcommand*{\backrefalt}[4]{%
\ifcase #1 %
(Not cited.)%
\or
(Cited on page~#2.)%
\else
(Cited on pages~#2.)%
\fi}
\definecolor{linkcol}{rgb}{0,0,0.4} 
\definecolor{citecol}{rgb}{0.5,0,0} 
\let\headruleORIG\headrule
\renewcommand{\headrule}{\color{black} \headruleORIG}
\def\cleardoublepage{\clearpage\if@twoside \ifodd\c@page\else%
  \hbox{}%
  \thispagestyle{empty}
  \newpage%
  \if@twocolumn\hbox{}\newpage\fi\fi\fi}
\let\minitocORIG\minitoc
\renewcommand{\minitoc}{\minitocORIG \vspace{1.5em}}
\patchcmd{\@mn@margintest}{\@tempswafalse}{\@tempswatrue}{}{}
\patchcmd{\@mn@margintest}{\@tempswafalse}{\@tempswatrue}{}{}
\colorlet{lightblue}{blue!5}
\newtheorem{theorem}{Theorem}[section]
\newtheorem{definition}{Definition}[section]
\newtheorem{lemma}{Lemma}[section]
\newtheorem{property}{Property}[section]
\theoremstyle{definition}
\newtheorem{example}{Example}[section]
\renewcommand{\epsilon}{\varepsilon}
\newcommand{\vmax}{{\overline{v}}}
\newcommand{\uL}{{\underline{u}}}
\newcommand{\uH}{{\overline{u}}}
\newcommand{\RR}{\mathbb{R}}
\newcommand{\NN}{\mathbb{N}}
\newcommand{\dt}{\Delta t}
\newcommand{\robots}{\mathcal{R}}
\newcommand{\im}[1]{{\mathrm{Im}\left(#1\right)}}
\newcommand{\xobsmax}{\overline{x}^\mathrm{\hspace{0.2mm}obs}}
\newcommand{\xobsmin}{\underline{x}^\mathrm{\hspace{0.2mm}obs}}
\newcommand{\free}{\mathrm{free}}
\newcommand{\chifree}{\chi^{\mathrm{free}}}
\newcommand{\phifree}{\paths(\chifree)}
\newcommand{\phifreeG}{\paths(\chifree_G)}
\newcommand{\chiobs}{\chi^{\mathrm{obs}}}
\newcommand{\chigoal}{\chi^{\mathrm{goal}}}
\newcommand{\controls}{\mathbf{U}}
\newcommand{\vcontrols}{\mathbf{V}}
\newcommand{\e}{\mathbf{e}}
\newcommand{\C}{\mathcal{C}}
\newcommand{\imp}{\mathrm{impulse}}
\newcommand{\entry}{{\mathrm{entry}}}
\newcommand{\exit}{{\mathrm{exit}}}
\renewcommand{\stop}{{\mathrm{stop}}}
\newcommand{\cl}[1]{\mathrm{cl}\left(#1\right)}
\newcommand{\uLbf}{\underline{\mathbf{u}}}
\newcommand{\uHbf}{\overline{\mathbf{u}}}
\newcommand{\vmaxbf}{\mathbf{\vmax}}
\newcommand{\ubf}{\mathbf{u}}
\newcommand{\vbf}{\mathbf{v}}
\renewcommand{\stop}{{\mathrm{stop}}}
\newcommand{\graphs}{\mathcal{G}}
\newcommand{\homotopy}{\mathcal{H}^\mathrm{free}}
\renewcommand{\path}{\varphi}
\newcommand{\paths}{\Phi}
\newcommand{\dbf}{\mathbf{d}}
\newcommand{\dsignals}{\mathbf{D}}
\newcommand{\dmin}{\underline{d}}
\newcommand{\dmax}{\overline{d}}
\newcommand{\dbfmin}{\mathbf{\underline{d}}}
\newcommand{\dbfmax}{\mathbf{\overline{d}}}
\newcommand{\ybf}{\mathbf{y}}
\newcommand{\Imin}{\underline{I}}
\newcommand{\supi}{\widetilde{\sup}^i}
\newcommand{\Ybf}{\mathbf{Y}}
\newcommand{\cycles}{\mathrm{cycles}}
\newcommand{\phibounded}{\underline{\overline{\paths}}}
\newcommand{\Cfree}{C^{\mathrm{free}}}
\newcommand{\Cobs}{C^{\mathrm{obs}}}
\newcommand{\rmax}{\overline{r}}
\newcommand{\dql}{\Delta Q^\mathrm{lim}}
\newcommand{\orthant}{\mathcal{O}}
\newcommand{\ball}{\mathcal{B}}
\author{Jean \textsc{Gregoire}}
\title{Coordination de robots mobiles par affectation de priorités\\\ \\Priority-based coordination of mobile robots}
\date{29 septembre 2014}
\begin{document} 

\pagedegarde

\dominitoc
\doparttoc

\renewcommand{\mtcgapbeforeheads}{20pt}
\renewcommand{\mtcgapafterheads}{10pt}

\pagenumbering{roman}

\chapter*{\centering Abstract}
\addcontentsline{toc}{chapter}{Abstract}\mtcaddchapter      

\fancyhead[RE]{\bfseries\nouppercase{Abstract}}      
\fancyhead[LO]{\bfseries\nouppercase{Abstract}}     

Since the end of the 1980’s, the development of self-driven autonomous vehicles is an intensive research area in most major industrial countries. Positive socio-economic potential impacts include a decrease of crashes, a reduction of travel times, energy efficiency improvements, and a reduced need of costly physical infrastructure. Some form of vehicle-to-vehicle and/or vehicle-to-infrastructure cooperation is required to ensure a safe and efficient global transportation system. This thesis deals with a particular form of cooperation by studying the problem of coordinating multiple mobile robots at an intersection area. Most of coordination systems proposed in previous work consist in planning a trajectory and to control the robots along the planned trajectory: that is the plan-as-program paradigm where planning is considered as a generative mechanism of action. The approach of the thesis is to plan priorities -- the relative order of robots to go through the intersection -- which is much weaker as many trajectories respect the same priorities. More precisely, priorities encode the homotopy classes of solutions to the coordination problem. Priority assignment is equivalent to the choice of some homotopy class to solve the coordination problem instead of a particular trajectory. Once priorities are assigned, robots are controlled through a control law preserving the assigned priorities, i.e., ensuring the described trajectory belongs to the chosen homotopy class. It results in a more robust coordination system – able to handle a large class of unexpected events in a reactive manner – particularly well adapted for an application to the coordination of autonomous vehicles at intersections where cars, public transport and pedestrians share the road.

\vspace{0.5cm}

\noindent\textbf{Keywords:} mobile robots, multi robot systems, motion planning, coordination space, priority graph, homotopy class, safety, robustness, hybrid architecture

\cleardoublepage
\fancyhead[RE]{\bfseries\nouppercase{\leftmark}}      
\fancyhead[LO]{\bfseries\nouppercase{\rightmark}}     

\chapter*{\centering Acknowledgment}
\addcontentsline{toc}{chapter}{Acknowledgment}\mtcaddchapter  

Special thanks to my thesis directors Arnaud de La Fortelle -- who hired me even though I criticized the city he lives in with his wife during the job interview -- and Silvère Bonnabel -- who supposedly accepted to work with me because we are both from Marseille. Discussions with you around drawings and equations have been dreamy.\\\\

Dreamy has also been the celebratory cocktail after my defense organized by the woman of my life, Mélodie, so many thanks for this, so many thanks as well for our intense and illuminating debates around perfomativity and for all your support and the fairy you have brought me all along the thesis.\\\\

Fairy has not necessarily been the exact thing that I have brought my officemates the past three years. You -- literally -- gave me reasons to think however that you have appreciated my coordinated smarties at their fair value. Vincent, Raoul, Amaury, Eva, Edgar, Tao Jin, Philippe, Fernando, Jean-François, Fabien, Sébastien, thanks a lot for your happiness, your savoir vivre, and our -- more or less -- deep discussions.\\\\

All my thanks also for the two incredible ski trips that I had with the lab. Thanks in particular to Vincent and Sylvain, ski and after-ski will be remembered. Thanks also to Arnaud, director of the lab as well as of my thesis, for accepting to come with us despite our insane cruelty and his defeat during snow battles.\\\\

I complained a few times for not having the right to go to the canteen of the Ministry of Research for administrative reasons. It allowed me however -- in addition to slightly enlarge by eating sandwiches instead of healthy meals  -- to spend a very nice lunch time in the garden of the school all along the thesis. Thanks in particular to Raoul, Axel, Martin, Xiangjun, Victorin, Tony, Zhuowei, Jorge, Laure, Olivier, Victor, Martyna, Bruno.\\\\

I must also thank Christine and Christophe, who learned me how to deal with administrative stuff like a superhero.\\\\

Having a coffee break has been so nice, always different, and let's say surprising, thanks to the art of Thérèse in managing the cafeteria and connecting people.\\\\

So many thanks to my family for their support in my choice of doing research, and in particular to my mother~\cite{Gregoire1978} who inspired me doing research when I was just a kid, and to my father who constantly recalls me to work on what I love to do.\\\\

I also want to thank all the members of my thesis jury. It has been a pleasure that you have felt and shared my enthusiasm around my research work.\\\\

Many thanks also to newly arrived PhD students Xiangjun and Sébastien, for sharing this enthusiasm, discussing, continuing and going much ahead my research work.\\\\

So many thanks to my friends who took half a day off to listen to what I have loved to do the three past years and who transformed Mines ParisTech into the best cocktail bar in Paris ever just for the celebratory cocktail of my defense.\\\\

Finally, thanks to Mines ParisTech. After three years as an engineer student and three years as a PhD student, I need to cut the cord for some time. I believe you brought me as much knowledge, people, happiness, friendship and love as you could have.

\cleardoublepage
\thispagestyle{plain}
\par\vspace*{.35\textheight}{\centering \textit{To Mélodie and in honor of Fairy}\par}

\tableofcontents

\mainmatter


\chapter{Introduction}
\label{chap:intro}
\vspace{-1.2cm}
\includegraphics[height=1.0\linewidth,angle=-90,trim=160 60 160 60, clip]{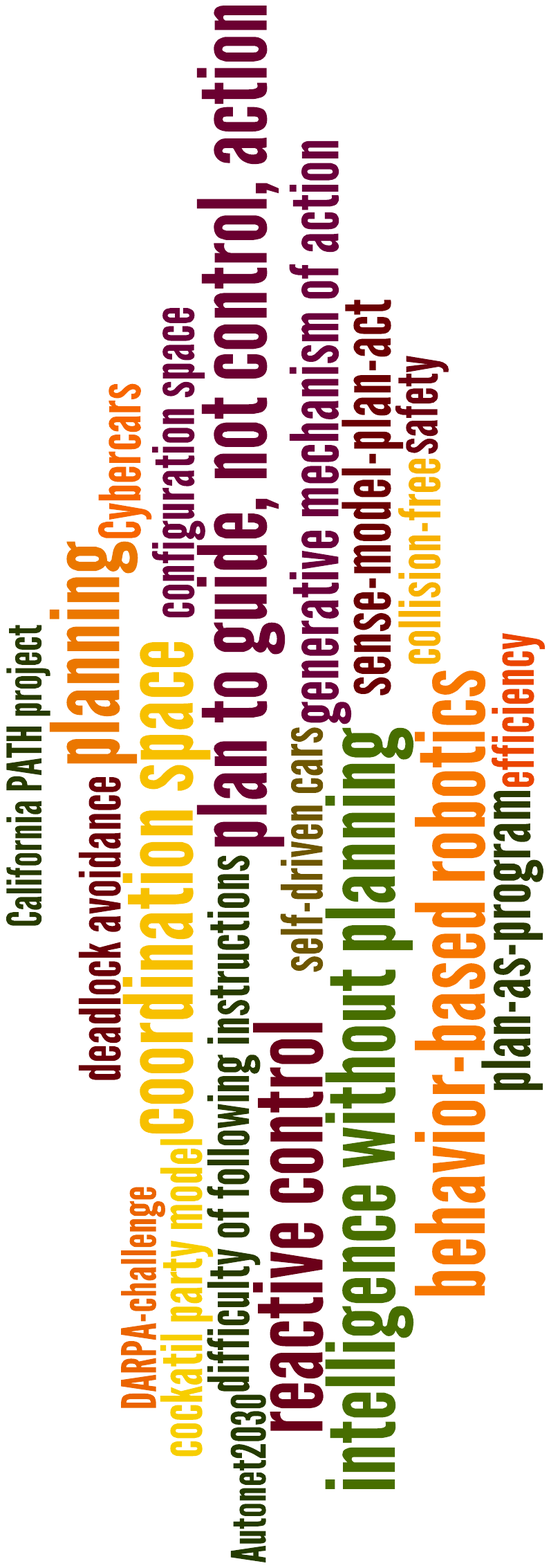}\hfill
\minitoc

\section{Industrial motivation}

The last decades have seen a number of projects addressing the automation of vehicles. The California PATH AHS project (1988-2003) was interested in making progress in automated highway systems~\cite{Alvarez1997,Horowitz2000} with about 600 person-years of effort invested~\cite{Shladover2008}. The European CityMobil project, finalized in 2011, addressed the integration of automated transport systems in the urban environment~\cite{CityMobil2011,Dijke2011} and the cooperation is continuing through CityMobil2 started in September 2012 for 4 years and involving 45 partners from system suppliers, city authorities, the research community and networking organizations~\cite{CityMobil2-2012}. The European interactIVe project, finalized in 2012, focused more on  advanced driver assistance systems (ADAS) for collision avoidance by active intervention in intelligent vehicles~\cite{Etemad2013}. The DARPA challenge, a prize competition for American autonomous vehicles, funded by the Defense Advanced Research Projects Agency, has also stimulated innovation and research in vehicles automation (see, e.g.,~\cite{Thrun2006}). 

All these research projects are funded thanks to high expectations in terms of economic and social impacts. A recent report~\cite{Wallace2012}, jointly written by a consulting company together with an automotive research center, presents self-driving car as the next revolution in the automotive industry. Car automation is expected to decrease crashes, to reduce the need for physical costly infrastructures, to create new models of shared mobility, to reduce and anticipate travel times, to improve productivity, to improve energy efficiency: a combination of social and economic positive impacts. To the authors of the report, it is clear that such disruptive change in the automotive industry opens opportunities for new players and requires all companies of the sector to embrace innovation or to be left behind. 

As autonomous vehicles are now starting to be deployed, cooperation among autonomous vehicles and also between autonomous and human-driven vehicles is necessary. This is the topic of the European project Autonet2030 \cite{Autonet2030}, just started in 2014. Many use cases require cooperation: lane change negotiation, overtaking, cooperative routing, or cooperative speed control. This thesis focuses on the coordination of autonomous vehicles at intersections. Two main goals motivate the research in this topic. The first one is to avoid crashes due to collisions that occur particularly at intersections and because of human error (the leading factor in most of road accidents~\cite{Treat1977,NCSA2004}). The second one is to enhance road traffic efficiency, given that intersections represent bottlenecks in the traffic network resulting in congestion, one of the major problems in today's metropolitan transportation networks. As the results provided in this thesis can be applied to multiple domains including self-driving cars, we will use the more generic term robot instead of vehicle. We consider the problem of coordinating a collection of cooperative mobile robots at an intersection area, that is a region of space with a high concentration of potential collisions. According to the taxonomy proposed in~\cite{Farinelli2004}, we propose to build a strongly coordinated multi robot control system aiming at ensuring safety and efficiency at intersection areas.

\section{Plan or react ?}

Since the 1980's, there is strong debate in the research community on the place of planning and reactive control in the design of autonomous robots~\cite{Elsaesser1994}. For a while, the dominant view in the Artificial Intelligence community was that all the intelligence of an autonomous robot lies in its planning capabilities. On the other hand, Brooks, with the introduction of Subsumption architecture~\cite{Brooks1986}, gave birth to a departure from the traditional planning approach, repudiating plans, convinced that intelligent autonomous robots can be designed through simple interconnected primitive reactive behaviors. Research work involving researchers from Robotics, Artificial Intelligence and Sociology, attempts to conciliate both camps by providing a new view on what planning is. Planning is proposedly considered as the generation of resources to guide action~\cite{Agre1990}, not as a generative mechanism of action. In the sequel, the three approaches introduced above are presented in more details and a literature review of the coordination of multiple robots is provided through the prism of the long-standing debate around the relative place of planning and reactive control.

\subsection{Planning as a generative mechanism of action}

\paragraph{The Sense-Model-Plan-Act paradigm} is the traditional approach of robot control in Artificial Intelligence with four components executed in a serial fashion~\cite{Medeiros1998}. The sensing system receives raw sensor input. The sensing data is turned into a world model by the modeling system. Provided a world model, the planner is in charge of computing a sequence of actions in order to achieve some goal: this step is time consuming and requires reasoning about the future. Finally, a low-level controller executes the plan. This traditional approach to planning, referred as plan-as-program~\cite{Agre1990}, considers planning as "a generative mechanism of action"~\cite{Suchman1986} as the planner dictates the actions to take in the future. 

\paragraph{Reservation-based autonomous intersection management} One of the most known autonomous intersection management system, proposed by Dresner and Stone~\cite{Dresner2008-multiagent-approach,Dresner2004} (see a screen-shot of the simulator in Figure~\ref{fig:dresner-stone-aim}), espouses the plan-as-program paradigm.
\begin{figure}[ht]
\begin{center}
\includegraphics[width=0.5\linewidth]{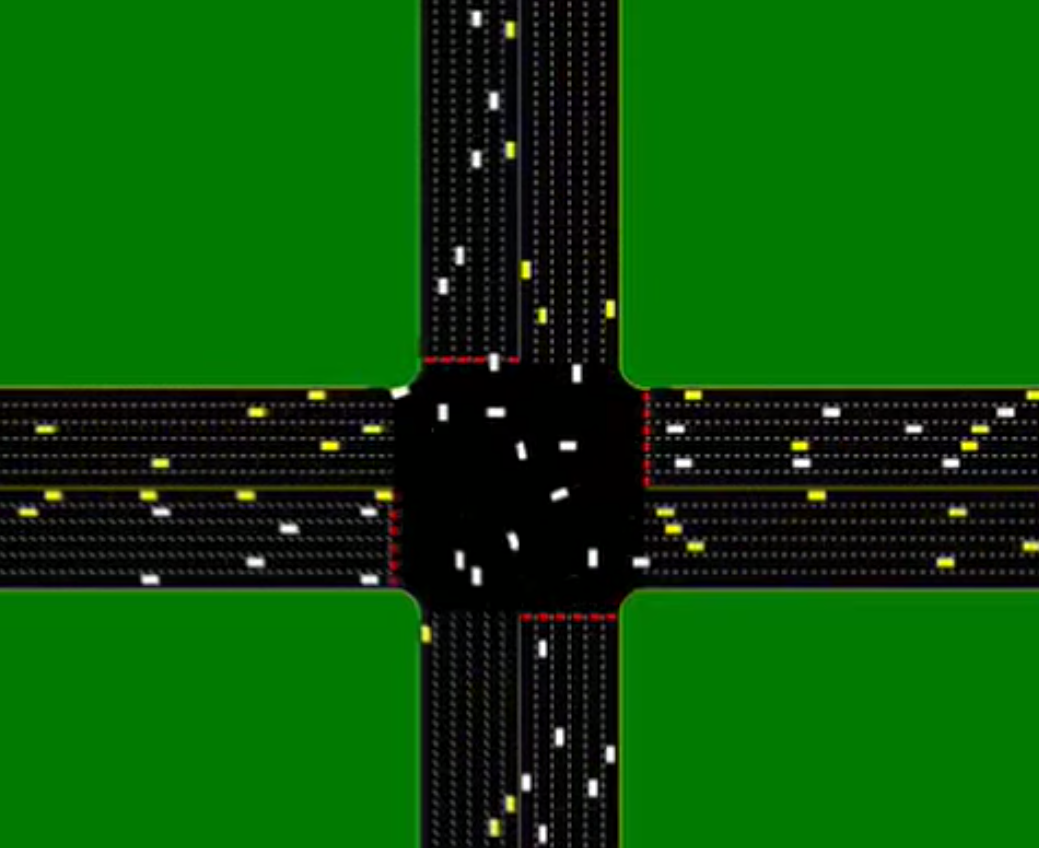}\hfill
\end{center}
\caption{A screen-shot of the "Autonomous Intersection Management" simulator developed by the team of the Department of Computer Sciences at the University of Texas at Austin, USA.}
\label{fig:dresner-stone-aim}
\end{figure}
The system is based on requests sent by vehicles to a central agent in order to reserve space-time regions. Basically, a region of the space where lies the intersection is reserved for a given vehicle during a certain time interval. The central agent ensures that accepted reservations are compatible with each other. Safety is ensured as long as all vehicles respect the specification of their accepted requests, i.e., the specified starting time and velocity profile through the intersection. Hence, requests are the representation of the plan-as-program that robots must execute. The approach has been widely studied with several incremental improvements (see, e.g.,~\cite{Dresner2008-mitigating,Hausknecht2011,Au2011}). Variants of the approach have been developed by several authors. In particular,~\cite{AsisGarciaCollado2010,Mehani2007} introduce critical points in order to improve the precision of the reservation system without increasing its complexity. In~\cite{I.Zohdy2012}, environment variables are taken into account at the planning phase. Reference~\cite{JingmanFan2011} proves that a reservation request can be processed in constant time provided the velocity profiles of vehicles is fixed. Experiments with real vehicles (Cybercars~\cite{Cybercars2006}) using a very simple reservation system are presented in~\cite{Kolodko2003}.

\paragraph{Motion planning in the configuration space} In Robotics, motion planning using the configuration space approach also espouses the plan-as-program paradigm. Reference~\cite{Lozano-Perez1980} introduced the notion of configuration space in order to formalize the traditional motion planning problem. Basically, each dimension of the configuration space represents a degree of freedom, and there is an obstacle region in the configuration space which is the set of forbidden configurations for the robot, to model the presence of a static obstacle or some constraints due to the geometry of the robot (think of a robot with multiple arms). The traditional motion planning problem consists in finding a collision-free path in the configuration space from specified initial/goal configurations. A multi robot system can be considered as a generic robot whose configuration space is the Cartesian product of the configuration space of each robot~\cite{Barraquand1991,Schwartz1983}. The obstacle region then contains  forbidden configurations of each robot, plus forbidden composite configurations to account for possible inter-robot collisions. In this framework, the multi robot motion planning problem consists in finding a collision-free path from a composite start configuration (the start configuration of all robots) to a composite goal configuration (the goal configuration of all robots). Many methods have been devised in order to find a path in a constrained configuration space. For a system of two robots, a shortest path algorithm using the concept of visibility graph is proposed in~\cite{Sheng2006}. In~\cite{Suh1988,Sutton1991}, the authors show how to use dynamic programming to solve motion planning problems. Sampling based methods have also demonstrated their efficiency when the number of degrees of freedom is reasonable. Partial motion planning (see, e.g.,~\cite{Petti2005-partial-motion-planning,Benenson2008}) samples the action space and chooses the control to apply considering only a finite horizon, guaranteeing a bounded computation time. Other sampling methods include probabilistic roadmaps~\cite{Kavraki1996,Kim2003}, which have been applied to multiple robot motion planning in, e.g.,~\cite{Svestka1995}. An improvement of the probabilistic roadmaps, particularly useful for nonholonomic robots is the Rapidly-Exploring Random Trees~\cite{LaValle1998-RRT} (an enhanced provably "optimal" version is proposed in~\cite{Karaman2010}). All the above methods do not scale with the number of robots. Finding a path in the composite configuration space is of high computational complexity and becomes unfeasible in practice for a large number of robots~\cite{Hopcroft1984}.

In~\cite{Kant1986}, a path-velocity decomposition allowing to reduce the problem's complexity is proposed. In this setting, each robot is assumed to move along a predefined path in its own configuration space and then the velocity profiles of the robots along their assigned paths are optimized. The configuration of each robot boils down to its curvilinear position on its path and the configuration space of the whole system is called the coordination space. It is a $n$-dimensional space where $n$ denotes the number of robots going through the intersection. To prevent collisions between robots, some configurations of the coordination space must be excluded: they constitute the so-called obstacle region. Such approaches based on the coordination space turn the coordination problem into the geometric problem of searching a collision-free path for a composite robot in a $n$-dimensional space where the obstacle region has a cylindrical shape~\cite{LaValle2006,Leroy1999} (see Figure~\ref{fig:guo-cylindrical-structure}).
\begin{figure}[!htbp]
\begin{center}
\includegraphics[width=0.7\linewidth]{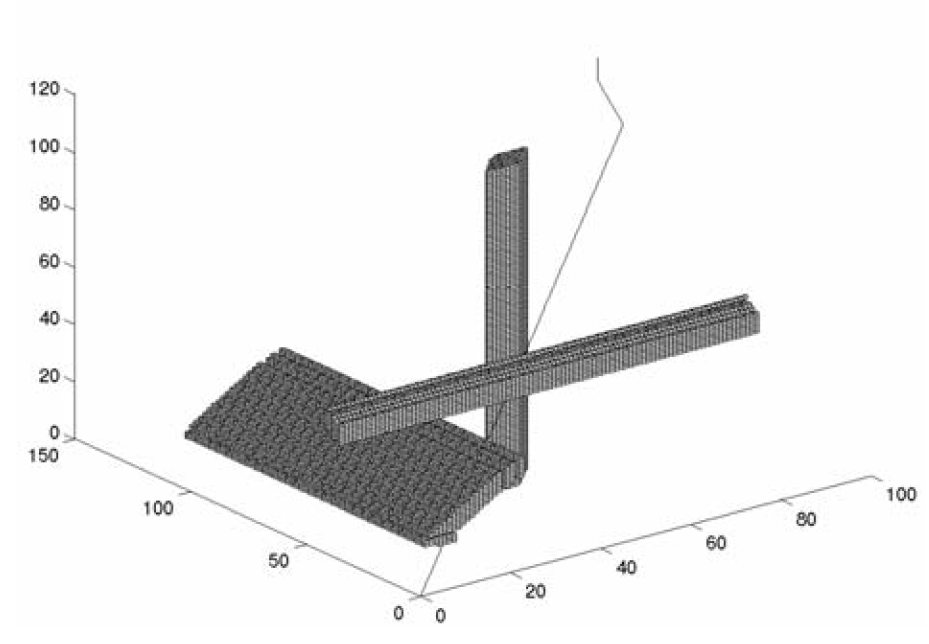}\hfill
\end{center}
\caption{The cylindrical obstacle region in the coordination space for the coordination of three robots and a collision-free path (courtesy of~\cite{Guo2002}).}
\label{fig:guo-cylindrical-structure}
\end{figure}
Even though some authors highlight some cases where the approach fails~\cite{Saha2006}, the approach has become standard in motion planning~\cite{Latombe1991,LaValle1996-optimal,Leroy1999,Lozano-Perez1980, Fraichard1989, Guo2002}. For two-robot systems, optimal solutions have been proposed~\cite{Shin1992,Bien1992,Chang1994}. Reference~\cite{Sharma2005} studies the time complexity of the coordination problem, defined as the completion time, i.e., the time for the last robot to reach its destination: lower and upper bounds are provided. The papers~\cite{Ghrist2005,Ghrist2006} study the problem of finding Pareto-optimal trajectories, i.e., each robot tries to optimize its own particular objective function. They  propose to first discretize the coordination space, and then to take advantage of the cylindrical structure to turn the coordination space into a negatively curved discrete space. Uniqueness of locally Pareto-optimal trajectories in each homotopy class of trajectories appears then as a mere consequence of the uniqueness of geodesics linking two points in a hyperbolic space. However, enumerating all locally optima in each homotopy class to find a globally optimal trajectory is a problem of high combinatorial complexity, and the authors point out the solution proposed  is of interest only with a few robots and a low degree of intersection. The complexity of searching a trajectory in the configuration space led researchers to develop the so-called prioritized motion planning method.

First introduced in~\cite{Erdmann1987}, prioritized motion planning avoids the complexity of searching a trajectory in the $n$-dimensional coordination space. Instead of directly searching a trajectory for the composite robot in the coordination space, it consists in planning the trajectory of each robot sequentially. Each robot is mapped to a real number called the priority of the robot, and the trajectory of each robot is planned, in order of decreasing priorities: robots for which motion has already been planned are considered as dynamic obstacles~\cite{VanDenBerg2005}. The approach has been widely and successfully utilized. The assignment of the priorities is key to the quality of the planned trajectory, e.g., with regards to the delay due to coordination. For $n$ robots, since priorities are sequential, there are $n!$ possible priority schedules. In~\cite{Bennewitz2001, Bennewitz2002}, a (randomized) search is proposed to optimize the prioritization scheme,~\cite{Regele2006} and~\cite{VanDenBerg2005} define simple heuristics for priority adjustment, and the heuristics of~\cite{Clark2002} dynamically updates the priorities of robots. Even if prioritized motion planning is not explicitly mentioned, the approach of~\cite{Dresner2008-multiagent-approach, Mehani2007} for autonomous intersection management also belongs to the family of prioritized motion planning, because the trajectory of robots are planned sequentially. In~\cite{Bekris2009}, a prioritized planning is implemented using a multiagent system approach and taking into account communication aspects.

In~\cite{Akella2002,Peng2003,Peng2005}, a collision-time formulation is proposed, it formulates the motion planning problem as a mixed linear programming (MILP) problem. Every robot is assumed to follow a path with a fixed velocity profile. Hence, the motion planning problem boils down to decide the starting time of each robot along the assigned trajectory (fixed path and velocity profile). The MILP formulation enables to solve the motion planning problem with efficient standard tools for MILP problems, so that as many as 20 robots can be coordinated, according to the authors. 

\paragraph{On the difficulty of following instructions} As noticed in~\cite{Suchman1986}, the main weakness of the plan-as-program approach resides in the inherent "difficulty of following instructions" in the face of environmental uncertainty and unpredictability. It has rapidly become admitted in the Artificial Intelligence community that unpredictability makes open-loop plan execution inefficient and leads to undesired behaviors. Replanning through time to account for new information is an attempt to treat this issue (see, e.g.,~\cite{Bekris2007,Bekris2009} for dynamic replanning of a multi robot system). However, planning is a time consuming task and constantly replanning makes difficult to respect real time execution constraints. These difficulties are at the origin of a completely opposite approach. Instead of considering that intelligence lies in the planning phase, a community of researchers initiated by the seminal work of Brooks~\cite{Brooks1986} tried to design intelligent robots that do not rely on planning at all.

\subsection{Intelligence without planning} 
\label{subsec:intelligence-without-planning}

\paragraph{Reactive and behavior-based robotics} In~\cite{Brooks1986}, Brooks proposed the foundation of what became behavior-based robotics. The main source of novelty is to abandon a centralized and centrally manipulated representation of the world~\cite{Mataric1999}. Instead, the robot control system is layered with several behaviors, each one achieving and/or maintaining a specified goal, e.g., "avoid-obstacles", "go-home". A behavior has either absolutely no world model and no internal state in which case the behavior is purely reactive mapping sensor data to actions, or it has its own minimal internal state maintained only in order to achieve its own goal. Behaviors are simple enough to run in real-time, they run in parallel and are layered so that the capability of the system increases as new behaviors are introduced. The intelligence of the autonomous system is not necessarily obvious when looking at every individual behavior. However, so-called emergent behaviors -- intelligence "in the eye of the beholder"~\cite{Werger1999} -- originate from the large amount of interactions between behaviors in the environment. The belief of the research community advocating for behavior-based design is that their approach can scale to higher level of complexity than many researchers of the plan-as-program school assume and result in more efficient and robust systems than through traditional planning~\cite{Werger1999}. 

\paragraph{The "cocktail party" model} is a simple example of such a reactive approach to the coordination of multiple robots. In this setting proposed in~\cite{Lumelsky1997}, a robot can only sense the surrounding objects, it knows its current and its target position, it can distinguish between static obstacle and robots and can sense the instantaneous motion of other robots. A reactive coordination of robots is proposed with only these capabilities and without mutual communication. The authors claim the obtained system demonstrates good performance and a remarkable robustness. The idea of the proposed algorithm is based on maze-searching techniques. Robots follow the boundary of static obstacles. For moving obstacles (including other robots), a collision front is build considering the maximal motion of the moving obstacles. Then, as for static obstacles, the boundary of the collision front is followed, ensuring collision avoidance. The term "cocktail party" is justified by the analogy with the behavior of a guest willing to talk to someone in a crowded place, such as a cocktail party. The guest travels between tables, chairs and other guests, planning his/her motion "on the fly". In~\cite{Lumelsky1997}, only the translation of robots in the plane is considered and robots are assumed to be able to stop instantly. These assumptions are relaxed in~\cite{Pallottino2007} where nonholonomic constraints are considered. 

\paragraph{On the difficulty of deadlock avoidance under reactive schemes}

As noticed in~\cite{Lumelsky1997,Pallottino2007}, deadlock avoidance is difficult to ensure in such a reactive control scheme. Reference~\cite{Lumelsky1997} provides an example itself drawn from a previous work~\cite{Schwartz1983} and depicted in Figure~\ref{fig:deadlock-example-schwartz}.
\begin{figure}[ht]
\begin{center}
\includegraphics[width=0.5\linewidth]{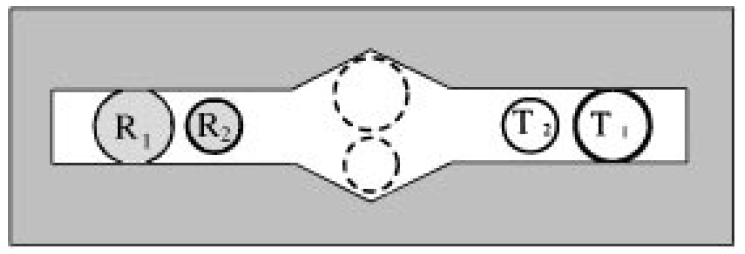}\hfill
\end{center}
\caption{An example where planning is necessary to achieve a task. "In order for the circular robots $R_1$ and $R_2$ to reach their respective targets, their relative position need to be switched. The only way to do so is to move $R_2$ into one of the 'wedges' and then move $R_1$ through the other wedge. The task is clearly impossible unless the motion of both robots is closely coordinated in a centralized manner." (courtesy of~\cite{Schwartz1983})}
\label{fig:deadlock-example-schwartz}
\end{figure}
As noticed by Lumelsky, the task seems to be "impossible" unless some planning is carried out to coordinate the motion of robots (see comments in Figure~\ref{fig:deadlock-example-schwartz}). Reference~\cite{Coffman1971} proposes a general characterization of system deadlocks as a situation where the following conditions hold:
\begin{itemize}
\item tasks claim exclusive control of the resources they require ("mutual exclusion" condition);
\item tasks hold resources already allocated to them while waiting for additional resources ("wait for" condition);
\item resources cannot be forcibly removed from the tasks holding them until the resources are used to completion ("no preemption" condition);
\item a circular chain of tasks exists, such that each task holds one or more resources that are being requested by the next task in the chain ("circular wait" condition).
\end{itemize}
To illustrate the notion, the authors give an example of traffic deadlock (see Figure~\ref{fig:traffic-deadlock}). As noticed by the authors, in this example, resources are the space occupied by cars. The "mutual exclusion" condition holds as two cars cannot occupy the same region without colliding. The "wait for" condition also holds as cars need to move forward (to get the next the space region) before releasing the current space region. The "no preemption" condition holds as cars cannot disappear from the real space, and finally the "circular wait" condition is clearly visible in Figure~\ref{fig:traffic-deadlock}.
\begin{figure}[!htbp]
\begin{center}
\includegraphics[width=0.7\linewidth]{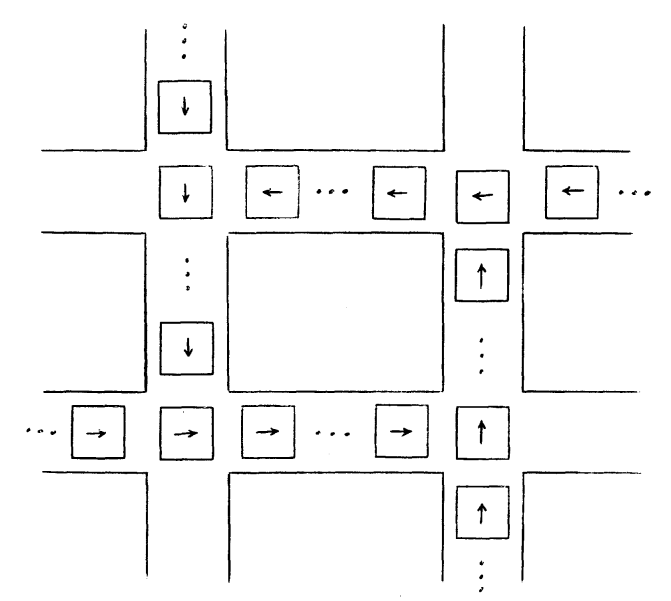}\hfill
\end{center}
\caption{An example of traffic deadlock with a circular chain of vehicles blocking each other (courtesy of~\cite{Coffman1971})}
\label{fig:traffic-deadlock}
\end{figure}
Later work refined the notion of deadlock in resource allocation systems and studied the complexity of deadlock avoidance/resolution (see, e.g.,~\cite{Araki1977,Lawley2001} and~\cite{Reveliotis2010,Reveliotis2011,Roszkowska2008} for a specific study focused on multiple robot systems). 

The deadlock avoidance problem mirrors the inability (by design) of reactive systems to carry out look-ahead to make better choices to accomplish actions. In the late 1980's, a research movement initiated by Agre and Chapman attempted to reconcile the camp of plan-as-program with the school of reactive control by asking the question: "what are plans for ?"~\cite{Agre1990}.

\subsection{Plan as a resource to guide action}

\paragraph{Plans should be used to guide, not control, action}

First of all, inspired by previous work of social scientists (see references therein~\cite{Agre1990}), Agre and Chapman proposed to retire the term "plan execution" advocated by the plan-as-program camp, considered as "prejudicial", and to use a more neutral term: "using a plan". They noticed that this terminology consideration raises new questions. First, "what can one do with a plan besides executing it ?". Second, if plan users are able to use plans sensibly rather than simply executing them, what are the implications on the representation of plans (how a plan looks like) and on the generation of plans (how to devise a plan easy to use in a sensible way by plan users ?). Conceptually, they propose to consider planning as the devising of resources to guide action. Plans are not executed but they are interpreted. Plans are a resource among others to decide the action to execute. Planning tasks are executed in parallel and asynchronously in order to retain a reactive quality. Plans are here to let the system be more goal-directed, to enhance performance, not to dictate action. 

\paragraph{Gradient fields as a guide to action}

As first noticed in~\cite{Payton1990}, gradient fields are an example of the new kind of plan -- a resource to guide action -- proposed in~\cite{Agre1990}.
\begin{figure}[!htbp]
\begin{center}
\includegraphics[width=0.7\linewidth]{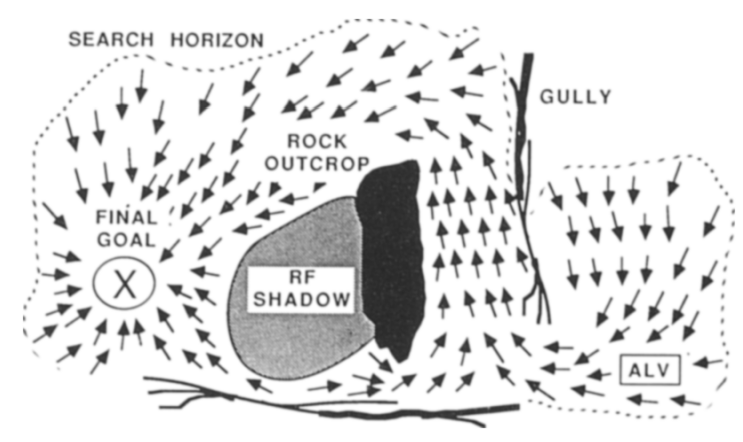}\hfill
\end{center}
\caption{A gradient field to guide a mobile robot. The mission of the robot is to get from its current location to the "final goal" location, bypassing a gully and avoiding a large rock. Moreover, along the maneuver, the robot needs to maintain communication with a remote radio tower. To this purpose, the "shadow" of the rock should also be avoided  (courtesy of~\cite{Payton1990}).}
\label{fig:gradient-field-payton}
\end{figure}
As depicted in Figure~\ref{fig:gradient-field-payton}, there is no explicit "traditional" plan. This plan can, instead, be interpreted as: "follow the arrows to reach the goal". Such plan is much more flexible than a traditional plan-as-program: at any point of time, a robot may decide not to follow the arrows for some reason (the sensors of the robot suddenly detect an obstacle that seems have not been detected at the moment of the gradient field computation). Nevertheless, the robot can still use the gradient field in the future no matter where it is currently located. In many scenarios, the robot will reach the goal without replanning, i.e., without a new time consuming computation of the vector field. Vector fields approaches for single robot (see, e.g.,~\cite{Rimon1992}) have been adapted for the coordination of holonomic robots in~\cite{Loizou2002}. In~\cite{Dimarogonas2006} a totally distributed version is proposed, the approach of~\cite{Dimarogonas2004} also applies to nonholonomic robots, and in~\cite{Pereira2003} sensing and communication constraints are taken into account.

\paragraph{Traffic signals as a guide to action}

Traffic signals are an effective way to coordinate competing traffic flows at intersections. To this purpose, they alternate the right of way of users (cars, buses, pedestrians).
\begin{figure}[ht]
\begin{center}
\includegraphics[width=0.5\linewidth]{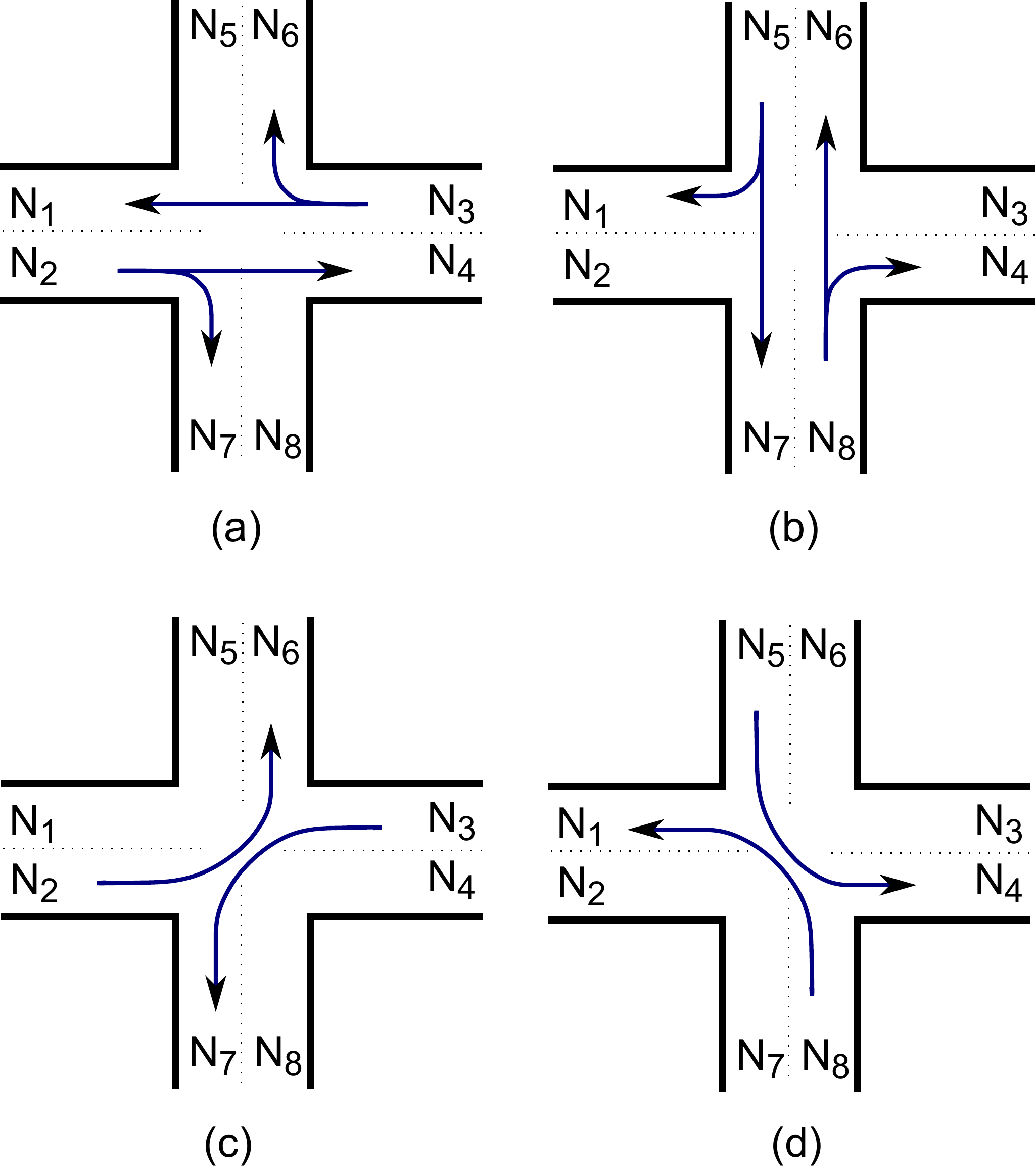}\hfill
\end{center}
\caption{A typical set of feasible phases at a junction.}
\label{fig:phases}
\end{figure}
A particular set of feasible simultaneous rights of way, called a phase, is decided for a certain period of time~\cite{Papageorgiou2003} (see Figure~\ref{fig:phases}). In an intersection ruled by a traffic signal, the traffic signal does not dictate actions to users. A pedestrian is not obliged to cross the road even if he/she is given the right of way. Vehicles are invited to cross the intersection when they have the right of way. However, they do not have to follow a precise assigned velocity profile, and if some unexpected event occurs like a pedestrian crossing the road without the right of way, the vehicle should stop as it is much more important to avoid pedestrians than to cross the intersection rapidly. Hence, traffic signals are a good example of a planning process that consists in providing resources to guide actions. Without such a resource, it would be difficult for vehicles to coordinate. Rules can also be decided in advance or be displayed using traffic signs, and again, they constitute the resources to guide action just like the traffic signal, and the design of rules can be considered as a planning process.

A valuable property of traffic signals compared to static rules is that the traffic signal can be controlled in order to enhance efficiency. Controlling a traffic light consists in designing rules to decide which phase to apply over time. It is interesting to see that just like any planning process, traffic signal optimization is a complex time consuming task. Pre-timed policies activate phases according to a time-periodic pre-defined schedule. There is much previous work on designing optimal pre-timed policies, e.g.,~\cite{Miller1963}. However, such policies are not efficient under changing arrival rates which require adaptive control. Most used adaptive traffic signal control systems include SCOOT~\cite{Hunt1982}, SCATS~\cite{Lowrie1990}, PRODYN~\cite{Henry1984}, RHODES~\cite{Mirchandani2001}, OPAC~\cite{Gartner1983} or TUC~\cite{Diakaki2002}. These systems update some control variables of a configurable pre-timed policy on middle term, based on traffic measures, and apply it on short term. Control variables may include phases, splits, cycle times and offsets~\cite{Papageorgiou2003}. Such algorithms may differ in the way optimization is carried out (e.g., mixed-integer linear programming~\cite{Gartner1975}, dynamic programming, exhaustive enumeration) and in the modeling approach (e.g., queuing network model~\cite{Osorio2009-analytic-finite-capacity,Osorio2009-surrogate-model}, cell transmission model~\cite{Lo2001}, store-and-forward~\cite{Aboudolas2009}, petri nets~\cite{DiFebbraro2002}).  Many major cities currently employ these systems which proved to be able to yield various benefits, including travel time and fuel consumption reduction, as well as safety improvements~\cite{Shepherd1992}. More recently, based on the seminal paper~\cite{Tassiulas1992}, feedback controls have been proposed both in the case of deterministic arrivals~\cite{Varaiya2013}, or stochastic arrivals~\cite{Varaiya2009, Wongpiromsarn2012,Le2013}. Time is slotted and at every time slot, a feedback controller decides the phase to apply based on current queue lengths estimation. This requires real-time queue length measures, but it enables to be much more reactive than other traffic controllers and to have stability guarantees. Reference~\cite{Tassiulas1992}  introduced the so-called back-pressure control which computes the control to apply based on queue lengths, and can achieve provably maximum stability. This algorithm was originally applied to wireless communication networks~\cite{Neely2005}, and some effort has been required to apply the approach in the context of a network of intersections~\cite{Varaiya2009, Wongpiromsarn2012,Gregoire2013-capacity,Gregoire2014-unknown-routing}. A key feature of this algorithm is that it can be completely distributed over intersections, in the sense that it can be implemented by running an algorithm of complexity $\mathcal{O}(1)$, requiring only local information, at each intersection.

\section{Contributions}

In this thesis, we advocate for the use of planning as the devising of resources to guide action. Such approach is more than designing a system with both planning and reactive abilities. It requires rethinking what a plan for our system is. In~\cite{Dresner2004}, a plan is a set of granted reservation requests (composed of the starting time and the velocity profile), that vehicles must execute. In~\cite{VanDenBerg2005}, the plan is simply the trajectory for each robot in the configuration space. A recent work~\cite{Kowshik2011} attempted to devise a coordination system for vehicles at intersections allowing some "freedom of action to cars" yet ensuring safety. Hence, the motivation is clearly to have a coordination with both planning and reactive abilities. However, the plan has quite the same representation as in traditional reservation-based systems. We believe that thinking plans as a resource to guide action should lead to a new representation of plans that should differ from traditional systems espousing the plan-as-program paradigm. What kind of plan to guide action can be designed for a multi robot system ?

The main contribution of this thesis is to propose a novel tool in multi robot motion planning: the priority graph. Roughly speaking, priorities describe a high-level coordination strategy: the relative order of robots. More precisely, they uniquely encode the homotopy classes of solutions to the multi robot coordination problem. This powerful theoretical tool is actionable to design a low complexity and robust priority-based coordination system. The planning process consists in assigning priorities. Under assigned priorities, robots can safely travel through the intersection in a reactive way provided all robots respect the assigned priorities. As planning priorities does not dictate a precise trajectory for robots through the intersection, but only provides useful concise resources for safe and efficient coordination, the system demonstrates valuable robustness properties in the face of environmental uncertainty and unpredictability. The thesis is organized as follows.

The first part presents the geometrical foundation of the priority-based approach using the standard coordination space framework. Priorities are formally defined and assigning priorities is provably equivalent to constrain the trajectory of robots to remain in a homotopy class of collision-free trajectories continuously deformable into each other. Assigning priorities does not plan a particular trajectory that robots must execute, yet it plans a higher-level coordination strategy describing the relative order of robots through the intersection: the priority graph. The priority graph can be considered as a unique meaningful representative of a homotopy class of trajectories. Planning priorities is a task of high combinatorial complexity as the set of possible priorities grows exponentially with the number of robots. The most important feature that the priority assignment policy must demonstrate is that the assigned priorities are feasible, i.e., that robots respecting the assigned priorities will eventually go through the intersection. Roughly speaking, the assigned priorities should encode a "non empty homotopy class" of solutions to the coordination problem. Interestingly, given assigned priorities, there are two exclusive options: either the assigned priorities will inevitably lead robots to a deadlock configuration where a circular chain of robots block each other; or the assigned priorities are feasible and robots will provably never reach a deadlock configuration provided priorities are respected. As a consequence, deadlock avoidance can be completely solved at the priority assignment level. It is a valuable property motivating the use of priorities as a coordination resource to guide robots through the intersection as deadlock avoidance is difficult to guarantee in a reactive manner. 

In contrast with the first part which has a quite mathematical -- more precisely, geometrical -- flavor with little care about control issues, the second part shows how to use priorities to guide robots through the intersection with control laws configured by the priority graph and ensuring priority preservation. Most importantly, under assigned priorities, for each pair of robots, there is not two but only one strategy to avoid collisions: the robot with lower priority must decelerate in favor of the robot with higher priority. As a consequence, the combinatorial complexity of multi robot control~\cite{Colombo2012} is avoided, and priority preserving control is of polynomial complexity, thus allowing real-time implementation. Moreover, the proposed control law demonstrates a quite novel robustness property in the presence of inertia. Robots may indeed safely brake at any point of time without violating priorities, in particular without colliding, which is referred as brake safety. This can be useful to handle unexpected events requiring braking like a pedestrian crossing the road. It is a quite novel property with regards to previous work as the standard plan-as-program approach constrains robots to track precisely a planned reference trajectory and thus does not allow a robot to brake if necessary to handle some unexpected event. Finally, the control scheme proposed in Chapter~\ref{chap:control-acceleration} and~\ref{chap:control-uncertainty} is decentralized. Each robot can compute the output of the control law independently without agreement with other robots through communication links. This benefit results from the prior agreement on the priority graph carried out at the planning level and requiring of course some form of communication.

The final part of the thesis proposes a priority-based coordination system adopting a three-layer control architecture. It has a more engineering flavor, specifying how priorities can be assigned dynamically as new robots arrive at the intersection and how to integrate priority preserving control proposed in the second part. A central agent, the intersection controller, constitutes the deliberative layer and assigns priorities. Robots implement several behaviors executed in parallel each one achieving/maintaining a specified goal, they constitute the reactive behavior-based layer. Behaviors include path following, moving forward, not entering the intersection before being accepted by the intersection controller, avoiding pedestrians, and of course respecting priorities which implements the priority preserving control law proposed in the second part of the thesis. Robots communicate asynchronously with the intersection controller through the sequencing layer to request the right of way and be assigned assigned priorities. The sequencing layer interfaces with the behavior-based layer by activating/deactivating/configuring behaviors. The behavior-based layer takes full benefit of the brake safety property. Some behavior, e.g., the behavior ensuring pedestrian avoidance, may indeed require a robot to brake at any point of time with the guarantee that it will not result in a priority violation. Therefore, the coordination system demonstrates significant robustness as it can handle a large class of unexpected events -- all events requiring braking -- without changing priorities, i.e., without replanning. Priority-based coordination combines the efficiency of traditional planning approaches as complex scheduling can be encoded by the priority graph -- much more complex scheduling than using traffic signals -- as well as the ability to handle a large class of unexpected events in a reactive manner.

\part{Priorities: definition and properties}
\label{part:priority-framework}

\chapter*{Introduction}

The present part constitutes the geometrical foundation of the priority-based approach proposed in this thesis. Priorities at road intersections are a well known concept aiming at organizing traffic. Signs, signals, markings are used to inform the users about who has the right to go first, or equivalently who has "priority"~\cite{wiki:traffic}. Convinced that the coordination space approach~\cite{ODonnell1989} is a convenient mathematical formulation of the coordination problem, we propose here a formal definition of priorities as a new concept in the coordination space. In the coordination space approach, a multi robot system composed of $n$ robots traveling along fixed paths is considered as a composite robot evolving in a $n$-dimensional space called the coordination space~\cite{ODonnell1989,LaValle2006}. Potential inter-robot collisions requires the composite robot to avoid an obstacle region in the coordination space. The obstacle region has a cylindrical structure (see Figure~\ref{fig:guo-cylindrical-structure}). In traditional motion planning, the coordination problem is reduced to finding a feasible path in the coordination space (see the collision-free path in Figure~\ref{fig:guo-cylindrical-structure}). It looks like the notion of priorities is completely lost. In this part, we provide theoretical tools in the coordination space in order to endow the coordination space approach with a concept of priority. The idea is that a collision-free path in the coordination space necessarily lies on one side or on the other side with respect to each collision cylinder. Deciding on which side to pass with respect to each collision cylinder is equivalent to deciding the relative order of robots to go through the intersection and constitutes the discrete part of the coordination problem that we refer as priority assignment. Respecting assigned priorities does not require robots to follow a precise path in the coordination space as many collision-free paths respect the same priorities, or equivalently, lie on the same side with respect to collision cylinders. Hence, it is possible to assign priorities, yet retaining some individual freedom of action to robots. More precisely, the result of this part enable to go one step ahead in the understanding of the structure of the solutions to the coordination problem. Previous work noticed the existence of homotopy classes of feasible paths in the coordination space~\cite{Ghrist2005}, and this part demonstrates that priorities are a unique meaningful representative of homotopy classes -- they uniquely encode homotopy classes.

\paragraph{Sketch of the part} Chapter~\ref{chap:priority-graph} starts by exposing the coordination space approach, introducing assumptions and notations. Priorities are defined as a binary relation between robots induced by a collision-free path in the coordination space. As the coordination space is thus endowed with a priority concept, Chapter~\ref{chap:priorities-homotopy} studies the structure of the coordination space under assigned priorities. It is proved that all paths respecting the same priorities are continuously deformable into each other, forming a homotopy class. Finally, the deadlock avoidance problem is shown to be solved by assigning so-called feasible priorities which are characterized. This part motivates the use of priorities as a plan to guide robots through the intersection.

{\pagestyle{plain}
\clearpage
\topskip0pt
\vspace*{\fill}
\includegraphics[height=1.0\linewidth,angle=-90,trim=160 60 160 60, clip]{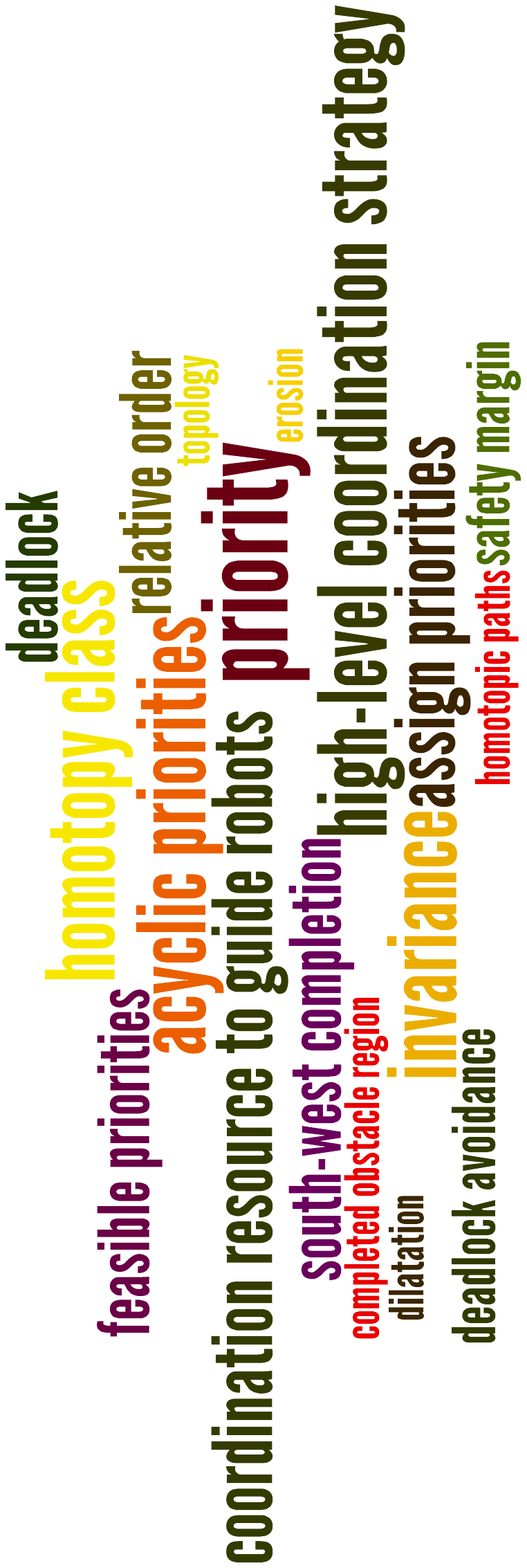}
\vspace*{\fill}

\parttoc}

\chapter[Priorities: a geometric concept in the coordination space]{Priorities: a geometric concept\\ in the coordination space}

\label{chap:priority-graph}
\minitoc

\paragraph{Sketch of the chapter} Section~\ref{sec:coordination-space} is quite expository, it recalls the basics of the coordination space approach providing the main assumptions and notations. Section~\ref{sec:priority-relation} endows the coordination space approach with a priority concept, defining the priority relation as well as the priority graph induced by a feasible path in the coordination space.

\section{The coordination space approach}
\label{sec:coordination-space}

Consider the problem of coordinating the motion of a collection of robots $\robots$ in a two-dimensional space. Every robot $i\in\robots$ follows a particular path $\gamma_i \subset \RR^2$ and we let $x_i \in \RR$ denote its curvilinear coordinate along the path (see Figure~\ref{fig-paths}).
\begin{figure}[!htbp]
\begin{center}
\includegraphics[width=0.7\linewidth]{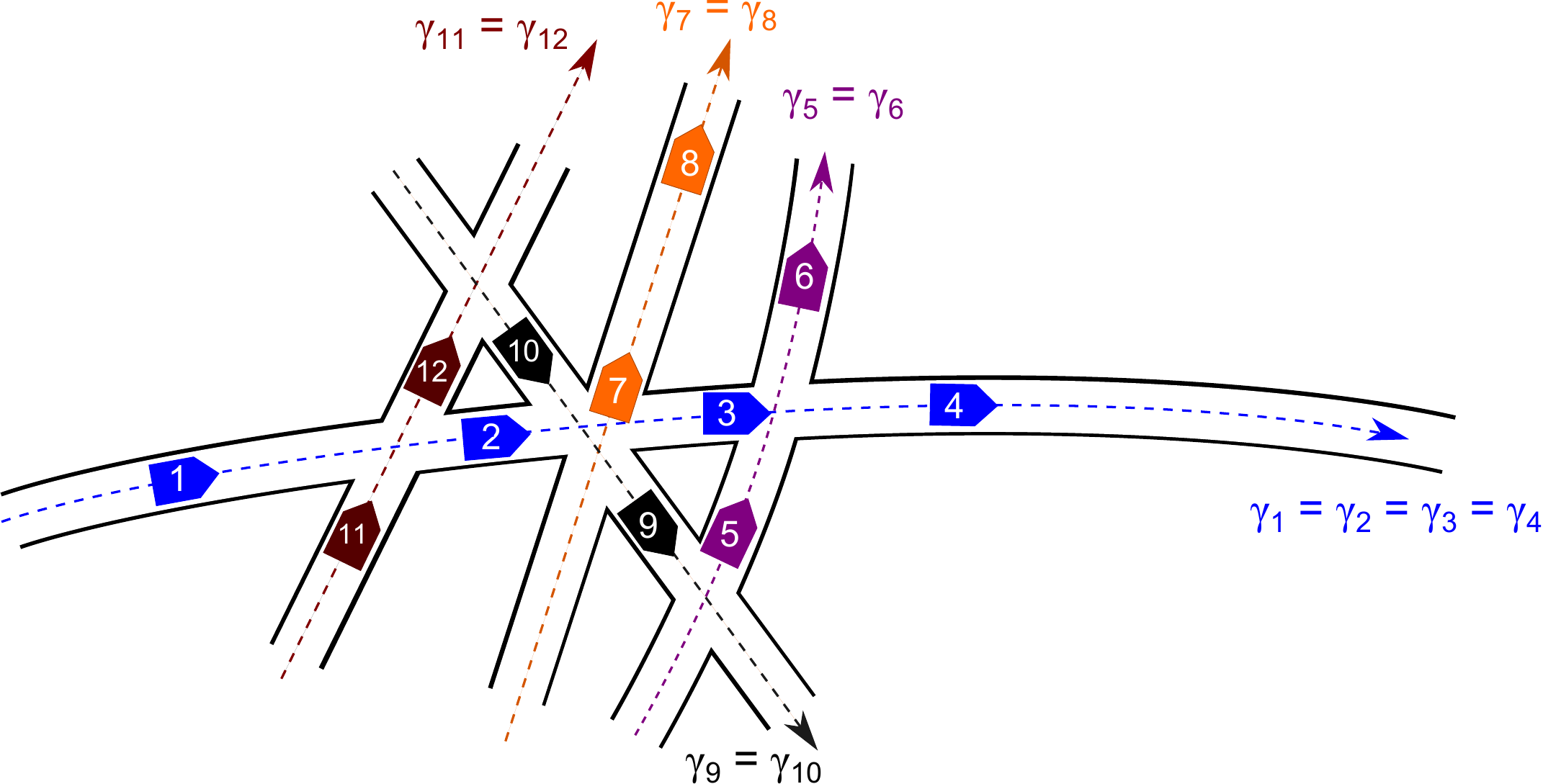}\hfill
\end{center}
\caption{The fixed paths assumption. Every robot travels along an assigned path.}
\label{fig-paths}
\end{figure}
$x:=(x_i)_{i\in\robots}$ indicates the configuration of all robots; $x\in \chi:=\RR^n$ where $n$ denotes the number of robots going through the intersection. The configuration space $\chi$ is known as the coordination space, first introduced in~\cite{ODonnell1989} and which has become a standard tool~\cite{LaValle2006}. This approach is often referred as path-velocity decomposition. It reduces the problem's complexity as each robot has now only one degree of freedom. For an application to autonomous vehicles at road intersections, this additional constraint seems particularly well adapted as the road network is strongly spatially organized (roads and lanes with markings). In the rest of the manuscript,  $\{\e_i\}_{1\le i\le n}$ denotes the canonical basis of $\chi$. Given a subset $A$ of the topological space $\chi$, $\partial A$ refers to the boundary of $A$. We define the Minkowski sum as follows:
\begin{equation}
\forall x^0\in\chi, \forall A\subset\chi,~x^0+A=\left\{x^0+x:x\in A\right\}
\end{equation}
\begin{equation}
\forall A,B\subset\chi,~A+B=\{x+y:x\in A,y\in B\}
\end{equation}
We will use the topology of infinity norm on $\chi \equiv \RR^n$, so the parallelepiped $x^0+(-r,r)^n$ is the open ball of radius $r>0$ centered in $x^0\in\chi$. 

Some configurations must be excluded to avoid collisions between robots (see Figure~\ref{fig-collision-region}). The obstacle region $\chiobs\subset\chi$ is the open set of all collision configurations. Let $\kappa_{ij}\subset \RR^2$ denote the set of configurations $x$ where $i$ and $j$ collide. Let $\chiobs_{ij} \subset\chi$ denote the set of (global) configurations $x$ where $i$ and $j$ collide, we have:
\begin{equation}
\chiobs_{ij}:=\left\{x\in\chi: (x_i,x_j)\in\kappa_{ij}\right\}
\end{equation}
We obviously take $\chiobs_{ii}:=\emptyset$.
\begin{definition}[Obstacle region, Obstacle-free region]
The obstacle region is the set $\chiobs\subset\chi$ of configurations where a collision occurs for some $i,j\in\robots$, i.e., 
\begin{equation}
\chiobs := \cup_{\{i,j\}} \chiobs_{ij}
\end{equation}
$\chifree:=\chi\setminus \chiobs$ denotes the obstacle-free space.
\end{definition}
By construction, $\chiobs_{ij}$ is a cylinder (based on the plane generated by $\e_i$ and $\e_j$), and the obstacle region merely appears as the union of $n(n-1)/2$ cylinders~\cite{LaValle2006} corresponding to as many collision pairs. Every cylinder $\chiobs_{ij}$ is assumed to have an open bounded convex cross-section, i.e., $\kappa_{ij}$ is open and bounded. The boundedness condition on $\chiobs$ is rather technical but ensures the whole intersection lies in a bounded region. In particular, it implies that there exists a lower bound $\xobsmin\in\chifree$ and an upper bound $\xobsmax\in\chifree$ satisfying:
\begin{equation}
\forall i,j\in\robots,\forall x\in\chiobs_{ij},~\xobsmin_i < x_i < \xobsmax_i\text{ and } \xobsmin_j < x_j < \xobsmax_j
\end{equation}
\begin{figure}[p]
\begin{center}
\includegraphics[width=0.7\linewidth]{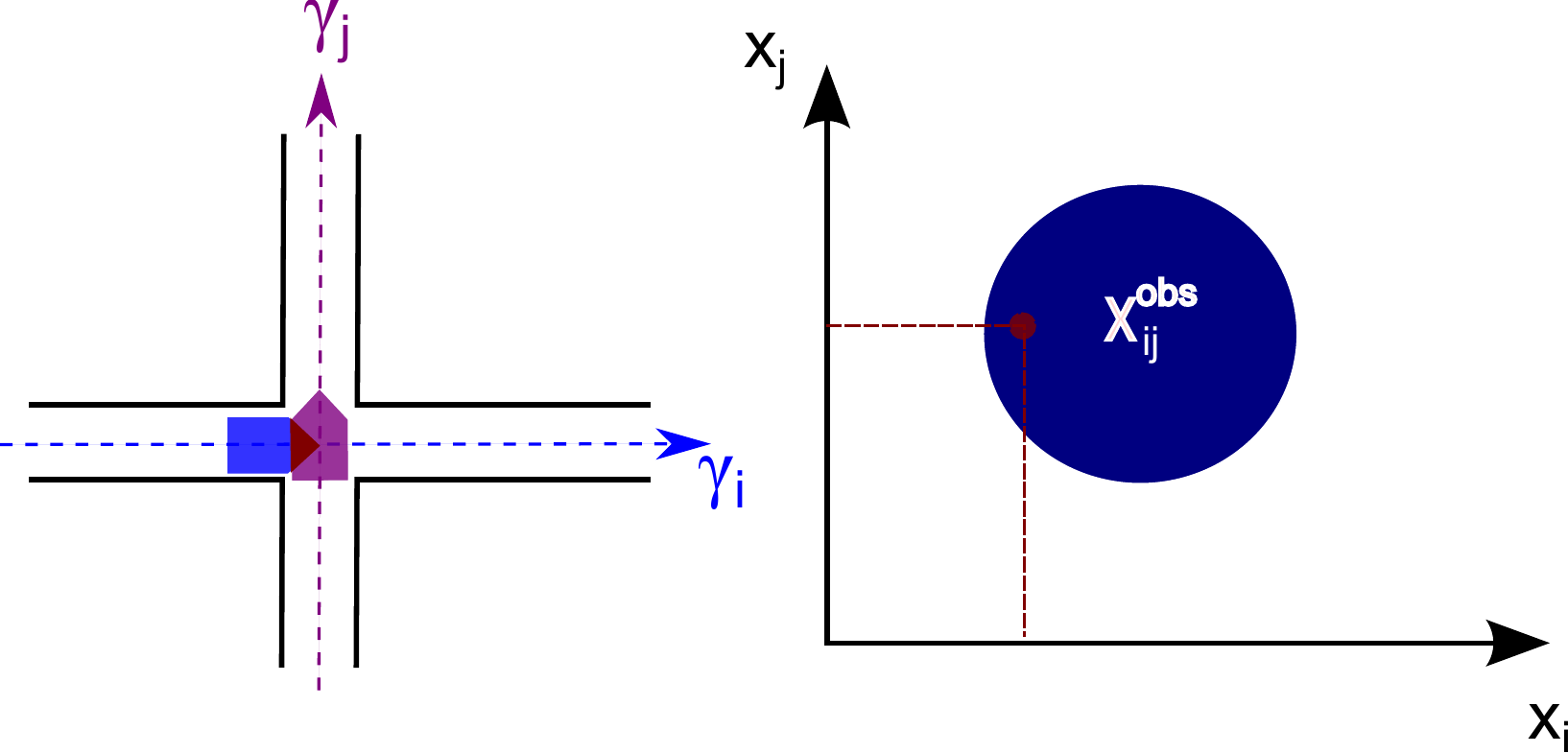}\hfill
\end{center}
\caption{The left drawing depicts two paths with two robots in collision in the current configuration. The right drawing shows the obstacle region $\chiobs_{ij}$ associated to the two paths (more precisely its cross-section along the plane generated by $\e_i$ and $\e_j$) and the collision configuration ${x\in\chiobs_{ij}}$ corresponding to the collision of the left drawing.}
\label{fig-collision-region}
\end{figure}
\begin{figure}[p]
\begin{center}
\raisebox{-0.5\height}{\includegraphics[width=0.35\linewidth]{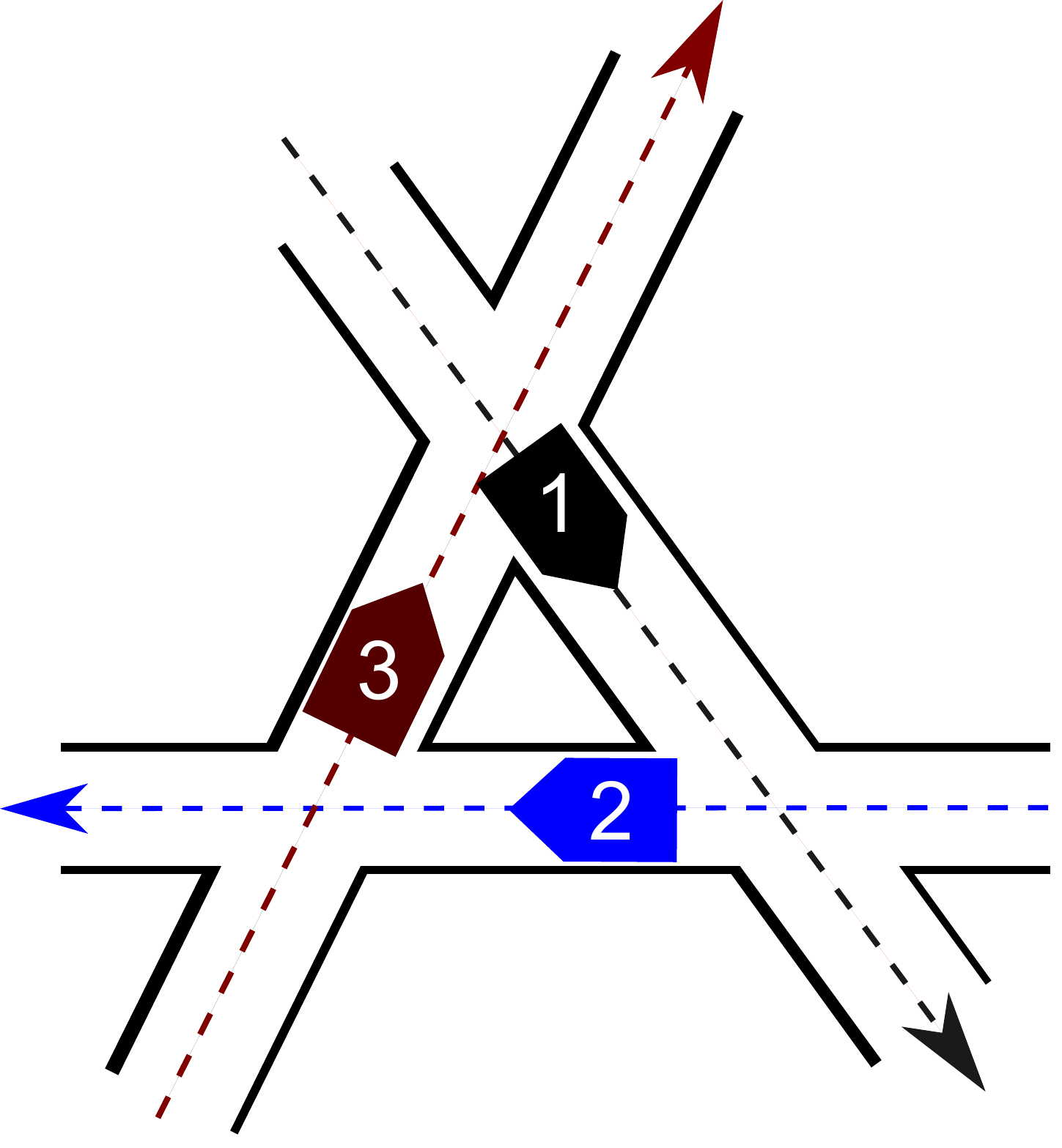}}\hfill
\raisebox{-0.5\height}{\includegraphics[width=0.65\linewidth]{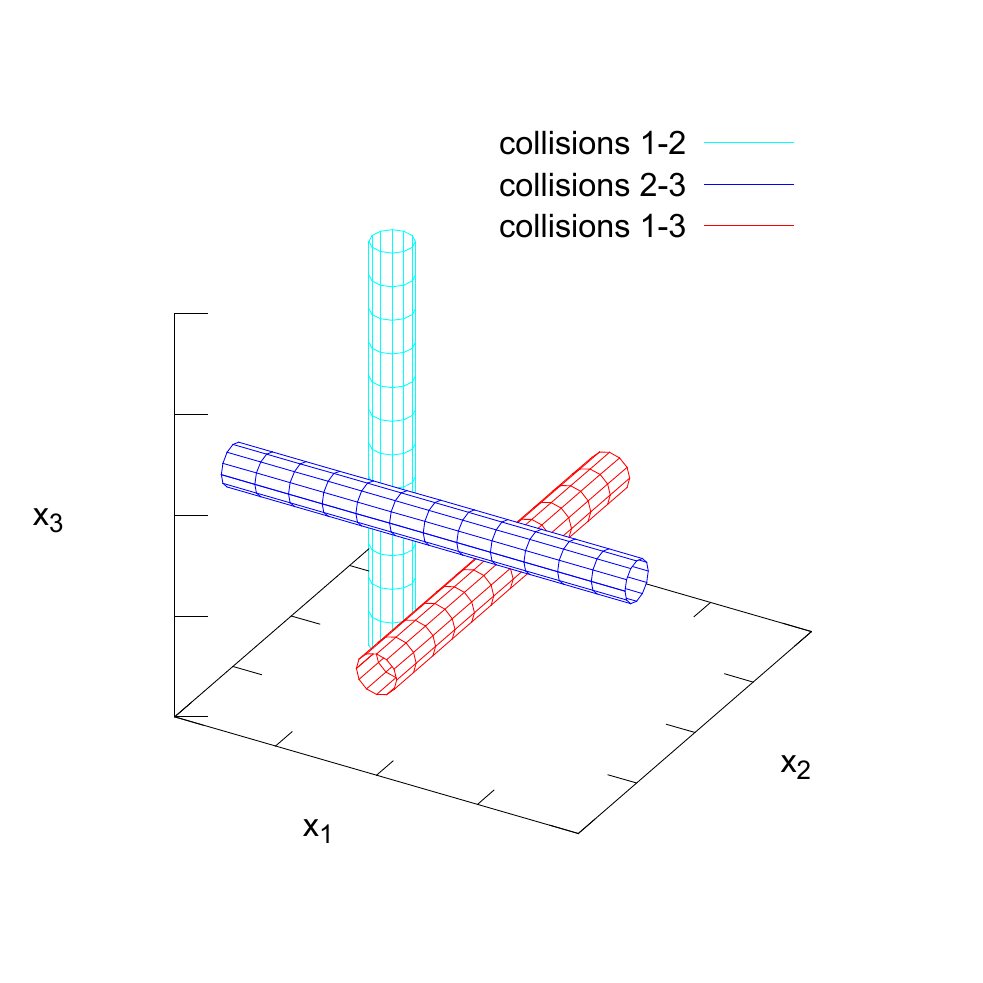}}\hfill
\end{center}
\caption{The right drawing shows the cylindrical structure of the obstacle region for the three-robot system of the left drawing. Each cylinder accounts for the possible collisions between each couple of robots. The right drawing of Figure~\ref{fig-collision-region} depicts the base of such cylinders.}
\label{fig:collision-region-3D}
\end{figure}
A continuous application $\path:[0,1]\to\chi$ will be called a path and we let $\im{\path}$ denote the set of values taken by $\path$:
\begin{equation}
\im{\path}:=\left\{\path(t):t\in[0,1]\right\}
\end{equation}
A partial order $\leq$ for configurations is defined as the product order of $\RR^n$:
\begin{equation}
\forall x,y\in\chi, x\leq y\text{ if } \forall i\in\robots, x_i \leq y_i\\
\end{equation}

\begin{definition}[Feasible path]
A feasible path is a non-decreasing collision-free path $\path:[0,1]\to\chifree$ requiring no coordination beyond its endpoints, i.e., a path satisfying the following conditions:
\begin{enumerate}[(a)]
\item $\path$ is non-decreasing: 
\begin{equation}
\forall t^1,t^2\in[0,1],~ t^1\leq t^2 \Rightarrow \path(t^1)\leq \path(t^2)
\end{equation}
\item $\path$ is collision-free:
\begin{equation}
\im{\path} \subset \chifree
\end{equation}
\item No coordination is required beyond its start point:
\begin{equation}
\left(\path(0)-\RR_+^n\right) \subset \chifree
\end{equation}
\item No coordination is required beyond its endpoint:
\begin{equation}
\left(\path(1)+\RR_+^n\right) \subset \chifree
\end{equation}
\end{enumerate}
\end{definition}
We let $\paths(\chifree)$ denote the set of feasible paths. Note that the two last conditions hold in particular for $\path(0)\equiv\xobsmin$ and $\path(1)\equiv\xobsmax$. The conditions of the above definition are more flexible and do not fix the endpoints. More importantly, we will only consider as feasible motions where robots never move backwards in the intersection area. It is a standard assumption as neither efficiency nor safety can be expected from robots moving backwards at an intersection area. 

More generally, given a subset $C\subset\chi$, we let $\paths(C)$ denote the set of non-decreasing paths satisfying $\im{\path}\subset C$, $(\path(0)-\RR_+^n)\subset C$ and $(\path(1)+\RR_+^n)\subset C$. This notation is coherent with the definition of $\paths(\chifree)$ as the set of feasible paths. Using this notation, $\paths(\chi)$ merely refers to the set of non-decreasing paths as the additional conditions obviously hold for $C\equiv \chi$. In the following, we provide three examples where the obstacle region can be computed analytically.
\begin{figure}[p]
\begin{center}
\includegraphics[width=0.7\linewidth]{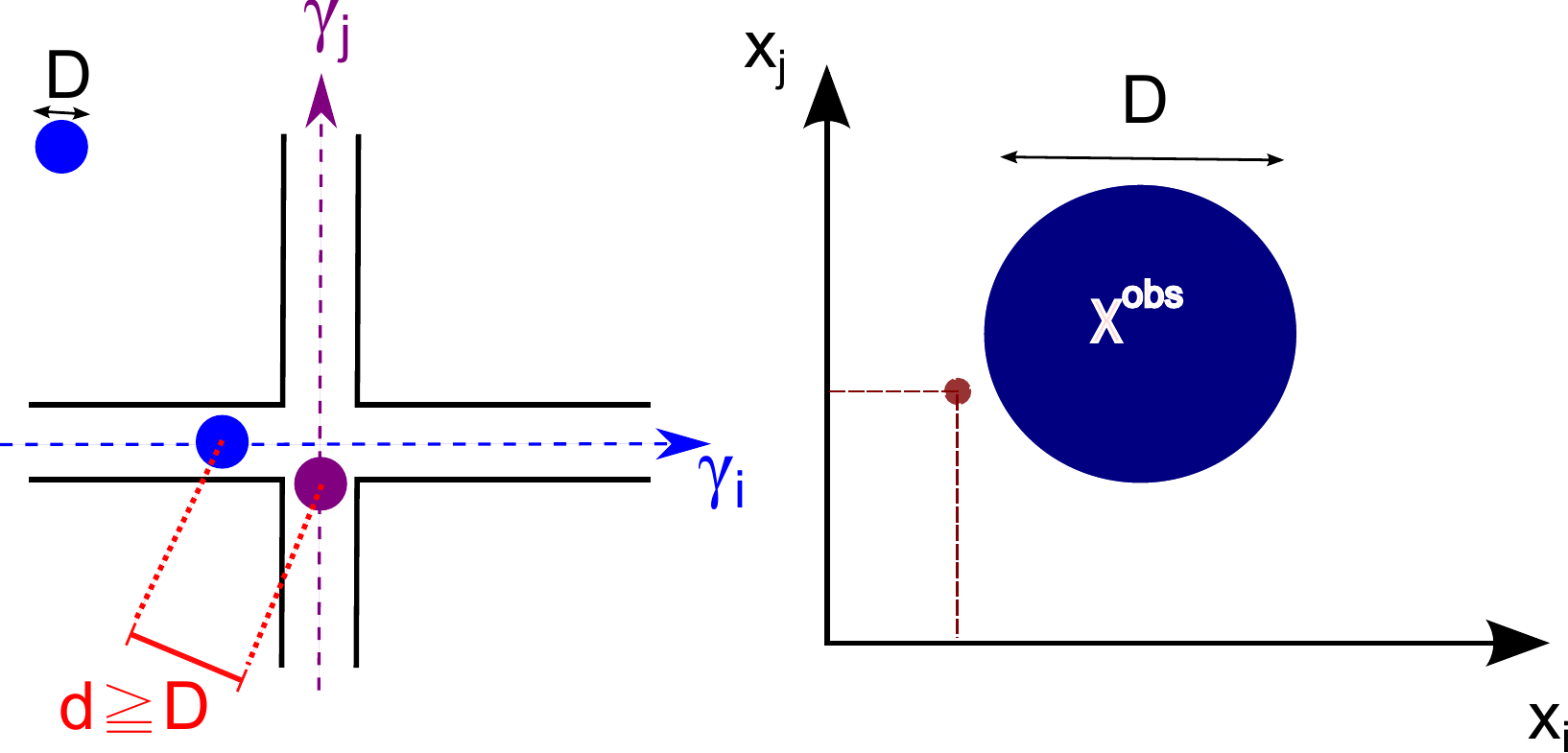}\hfill
\end{center}
\caption{The obstacle region for two circle-shaped robots along straight perpendicular paths.}
\label{fig-example-collision-circle-shaped}
\end{figure}
\begin{figure}[p]
\begin{center}
\includegraphics[width=0.7\linewidth]{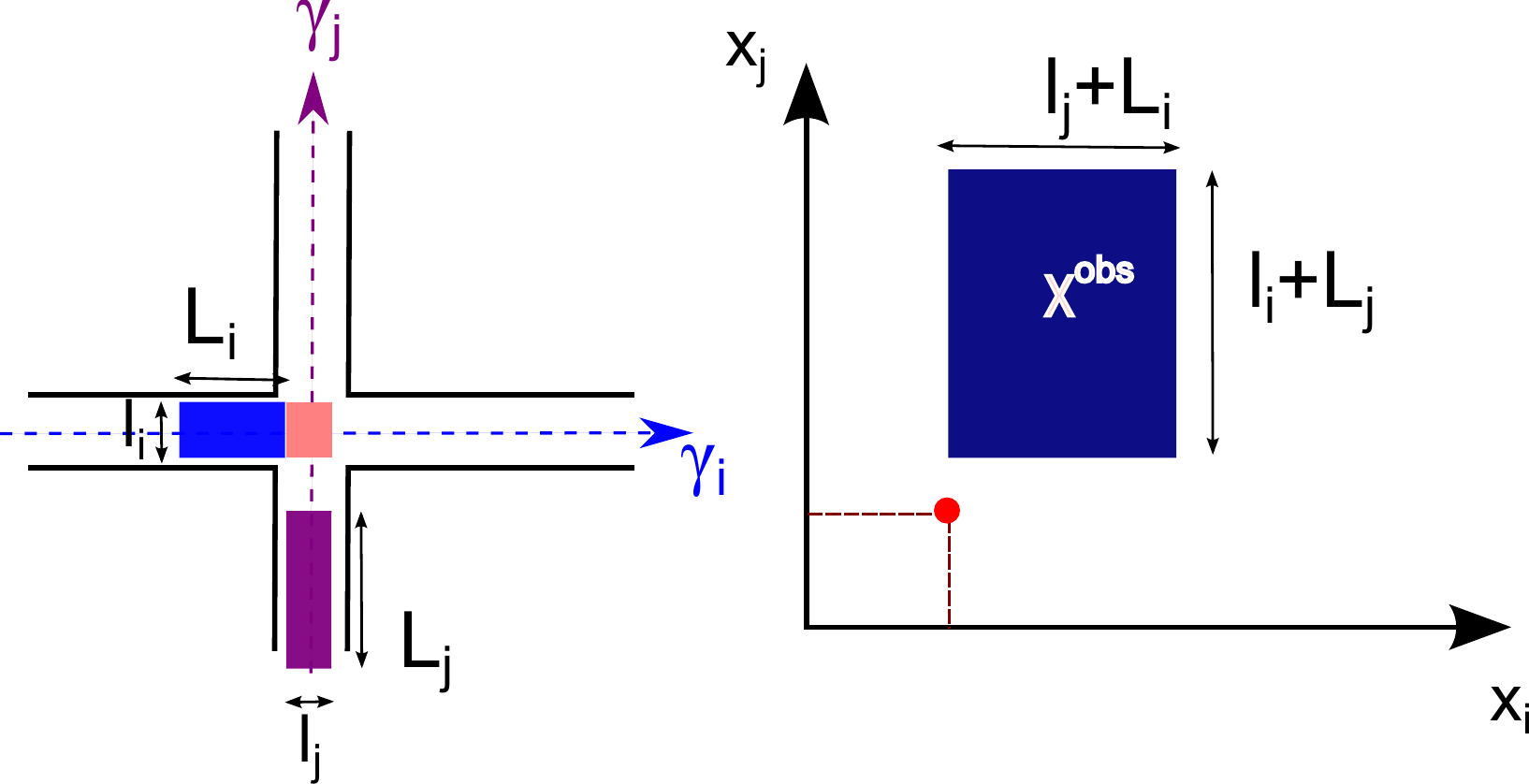}\hfill
\end{center}
\caption{The obstacle region for two rectangular robots along straight perpendicular paths.}
\label{fig-example-collision-rectangular-shaped}
\end{figure}
\begin{figure}[p]
\begin{center}
\includegraphics[width=0.7\linewidth]{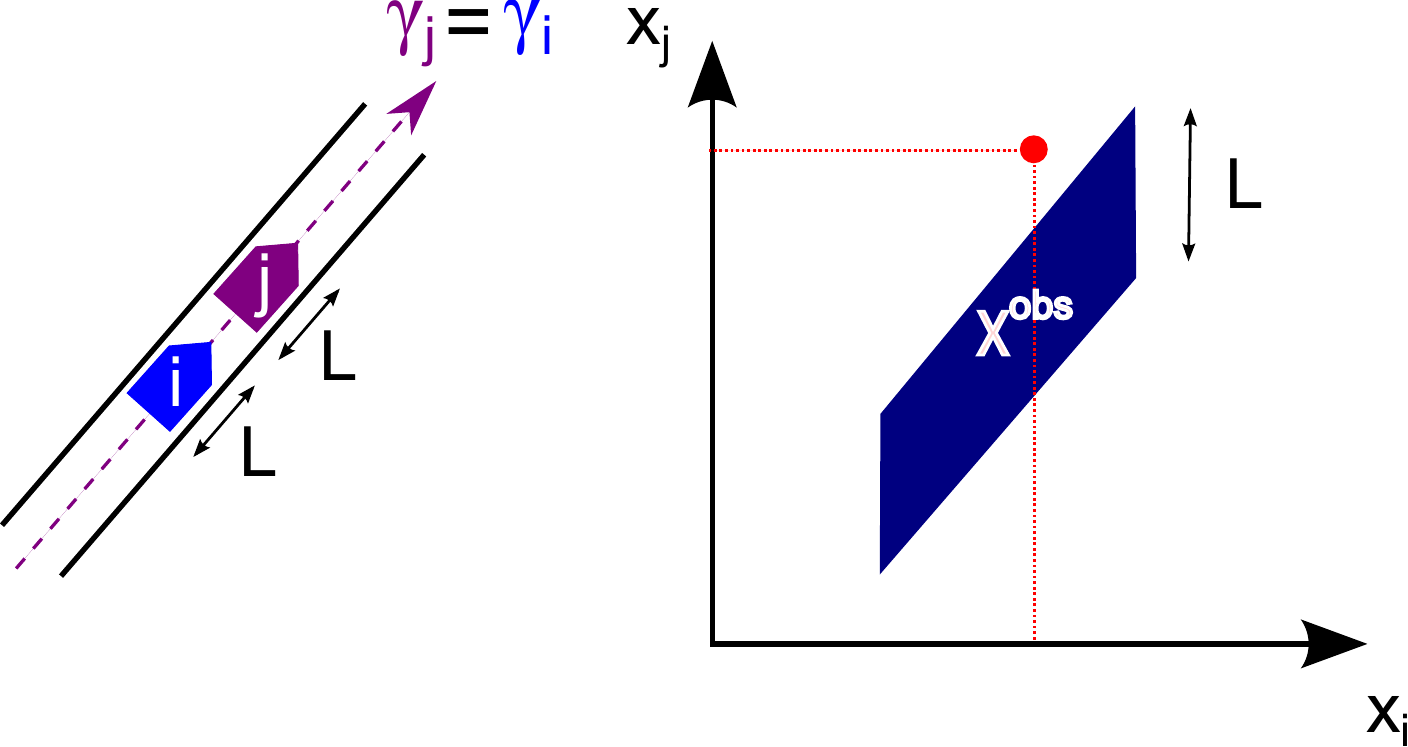}\hfill
\end{center}
\caption{The obstacle region for two robots that follow each other.}
\label{fig-example-car-following}
\end{figure}

\begin{example}[Two straight paths with circle-shaped robots]
Consider two circle-shaped robots of diameter $D$ moving along straight and perpendicular paths. Assume that  the curvilinear coordinate of each robot is $0$ when the center of the robot is exactly at the paths intersection point. Then, the distance between the centers of robots is $d=x_i^2+x_j^2$. As the diameter of robots is $D$, the configuration $(x_i,x_j)$ is collision-free if and only if $d \geq D$, i.e., $x_i^2+x_j^2 \geq D$. As a result, the obstacle region is $\chiobs = \left\{x\in\chi: x_i^2+x_j^2 < D\right\}$ as depicted in Figure~\ref{fig-example-collision-circle-shaped}.
\end{example}

\begin{example}[Two perpendicular paths with rectangular robots]
Consider two rectangular robots $i,j$ of lengths $L_i,L_j$ and widths $l_i,l_j$ along straight perpendicular paths. In the real space, there is a rectangular region of area $l_i \times l_j$ that can be occupied by only one robot, exclusively (see the red box in the left drawing of Figure~\ref{fig-example-collision-rectangular-shaped}). When a robot is at the the entry of this region (robot $i$ in the left drawing of Figure~\ref{fig-example-collision-rectangular-shaped}), it needs to travel the length of the region plus its own length in order to exit this region (robot $i$ needs to travel distance $l_j+L_i$ in order to exit this region). It follows that in the coordination space, the obstacle region is a rectangular region of length $l_j+L_i$ along axis $i$ and $l_i+L_j$ along axis $j$ (see the right drawing of Figure~\ref{fig-example-collision-rectangular-shaped}).
\end{example}

\begin{example}[Two robots along the same straight path]
Finally, consider two robots of length $L$ traveling along the same straight paths as depicted in Figure~\ref{fig-example-car-following} and assume that the same origin is used for the curvilinear coordinate of both robots. There are two options: either robot $i$ follows robot $j$ and collision avoidance requires $x_j\geq x_i+L$, or robot $j$ follows robot $i$ and collision avoidance requires $x_i \geq x_j+L$. Hence, the collision avoidance requirement including both cases is: $\vert x_i-x_j\vert \geq L$, and the obstacle region should be the band $\left\{ x\in\chi: \vert x_i-x_j\vert < L\right\}$. However, we do not aim  to model the collisions in an infinite spatial region. Hence, the band is truncated as depicted in Figure~\ref{fig-example-car-following}. 
\end{example}

\section{The priority relation}
\label{sec:priority-relation}

\subsection{The completed obstacle region}

This subsection shows that the intuitive notion of "assigning priorities" is equivalent to a completion of the obstacle region. It is indeed equivalent to consider as forbidden configurations both collision configurations and configurations that do not respect the assigned priorities, resulting in a completed obstacle region. 

Let $\chiobs_{i\succ j} $ and $\chifree_{i \succ j}$ denote the subsets of $\chi$ defined below:
\begin{eqnarray}
\chiobs_{i\succ j} &:= &\chiobs_{ij} - \RR_+ \e_i + \RR_+ \e_j
\label{eq-fixed-priority-collision-cylinder}\\
\chifree_{i \succ j}&:=&\chi \setminus \chiobs_{i\succ j}
\end{eqnarray}
\begin{figure}[!htbp]
\begin{center}
\includegraphics[width=0.7\linewidth]{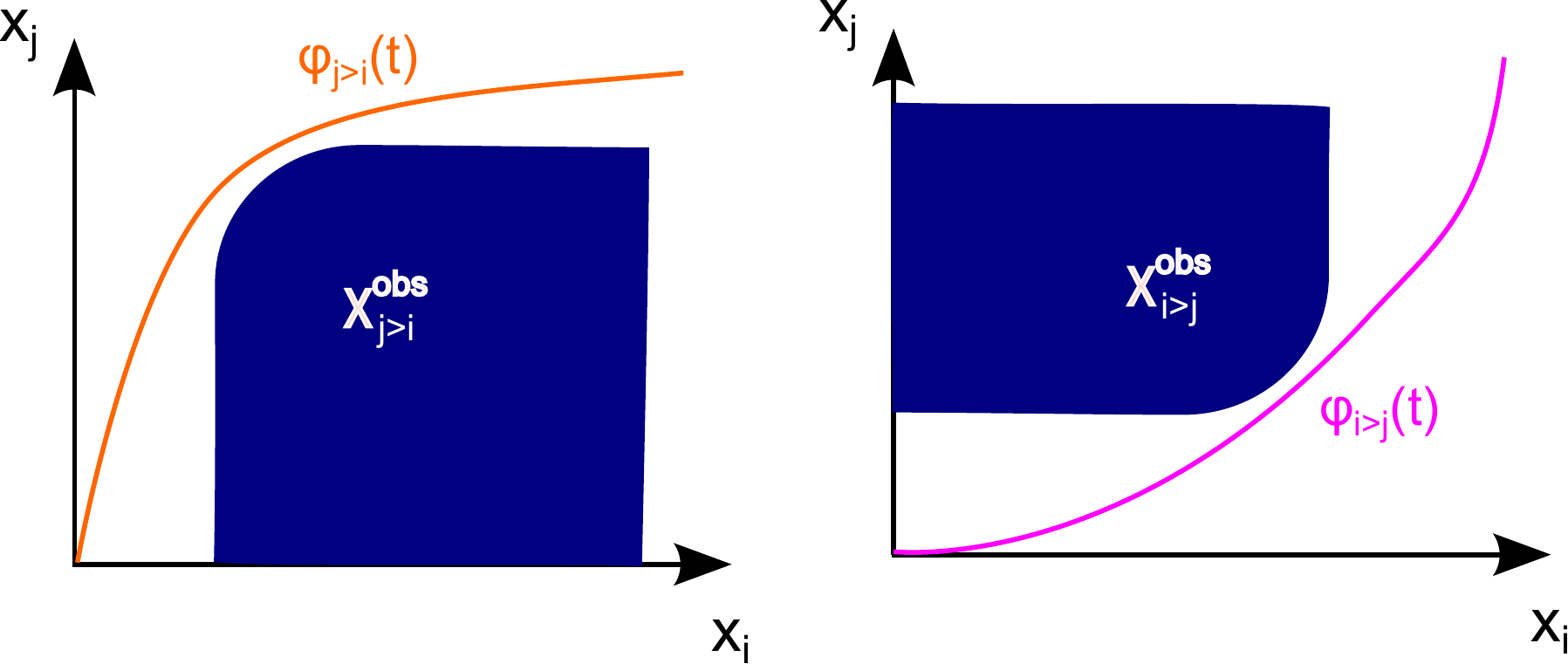}\hfill
\end{center}
\caption{Projection of the completed collision cylinders $\chiobs_{i\succ j}$ and $\chiobs_{j\succ i}$.}
\label{fig-fixed-priority-obstacle-region}
\end{figure}
We also define $\kappa_{i\succ j}\subset\RR^2$ as follows:
\begin{equation}
\kappa_{i\succ j}:=\kappa_{ij} + \RR_-\times\RR_+
\end{equation}
which is the cross-section of $\chiobs_{i\succ j}$, i.e., 
\begin{equation}
\chiobs_{i\succ j}=\left\{ x\in\chi: (x_i,x_j)\in\kappa_{i\succ j} \right\}
\end{equation}

Figure~\ref{fig-fixed-priority-obstacle-region} displays the sets $\chiobs_{i\succ j}$ and $\chiobs_{j \succ i}$. The rationale behind the definition of these sets is that as a feasible path is non-decreasing, it necessarily lies below or above each collision cylinder as depicted in Figure~\ref{fig-fixed-priority-obstacle-region}. This reflects the intuitive notion of priority at intersections. Deciding on which side to pass with respect to each collision cylinder is equivalent to deciding the relative order of robots to go through the intersection. In the sequel, we are going to prove that the definition of the sets $\chiobs_{i\succ j}$ enables to define rigorously the so-called priority relation induced by a feasible path. We start with some geometric properties that will be used in the proofs of the presented results.

\begin{property}[Geometric invariances of $\chiobs_{i\succ j}$ and $\chifree_{i\succ j}$ illustrated in Figure~\ref{fig:fixed-priority-obstacle-region-geometric-invariance}] For all $i,j\in\robots$, the following identities hold:
\begin{eqnarray}
\chiobs_{i\succ j}-\RR_+\e_i+\RR_+\e_j &=& \chiobs_{i \succ j}\label{eq:geometric-invariance-obstacle}\\
\chifree_{i\succ j}+\RR_+\e_i-\RR_+\e_j &=& \chifree_{i \succ j}\label{eq:geometric-invariance-collision-free}
\end{eqnarray}
\label{property:geometric-invariance}
\end{property}
\begin{property}[Invariance through $\min$ and $\max$ operators illustrated in Figure~\ref{fig:fixed-priority-obstacle-region-geometric-invariance}]
\label{property:min-max}
Given $x,y\in\chi$, for all $i,j\in\robots$, the following implications hold:
\begin{eqnarray}
x,y\in\chifree_{i\succ j} &\Rightarrow&  \max\{x,y\} \in \chifree_{i\succ j} \label{eq:property-max}\\
x,y\in\chifree_{i\succ j} &\Rightarrow&  \min\{x,y\} \in \chifree_{i\succ j}\label{eq:property-min}
\end{eqnarray}
\end{property}
\begin{property}[Illustrated in Figure~\ref{fig:property-union-fixed-priority-cylinder}]
For all $i,j\in\robots$ and $y\in\chiobs_{ij}$, we have:
\begin{equation}
\left\{x\in\chi: x_i=y_i\right\} \subset \left( \chiobs_{i\succ j} \cup \chiobs_{j\succ i} \right)
\end{equation}
\label{property-union-fixed-priority-cylinder-1}
\end{property}
\begin{property}[Illustrated in Figure~\ref{fig:property-union-fixed-priority-cylinder}]
For all $i,j\in\robots$, given $x^1\in\chiobs_{j\succ i}$ and $x^2\in\chiobs_{i\succ j}$, we have:
\begin{equation}
\left\{x\in\chi: x^1_i\leq x_i\leq x^2_i \right\} \subset \left(\chiobs_{i\succ j}\cup \chiobs_{j\succ i}\right)
\end{equation}
\label{property-union-fixed-priority-cylinder-2}
\end{property}

\begin{figure}[p]
\begin{center}
\includegraphics[width=1.0\linewidth]{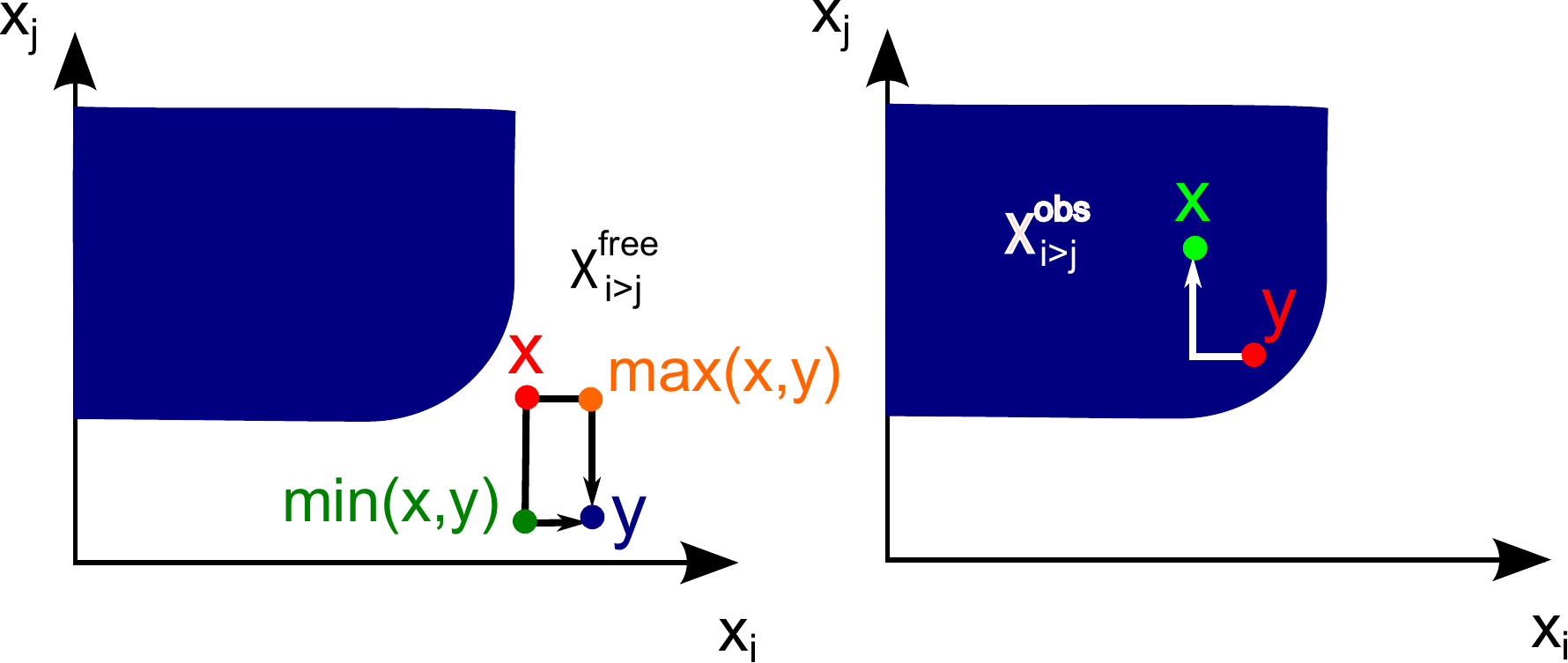}\hfill
\end{center}
\caption{Illustration of Properties~\ref{property:geometric-invariance} and~\ref{property:min-max}.}
\label{fig:fixed-priority-obstacle-region-geometric-invariance}
\end{figure}

\begin{figure}[p]
\begin{center}
\includegraphics[width=1.0\linewidth]{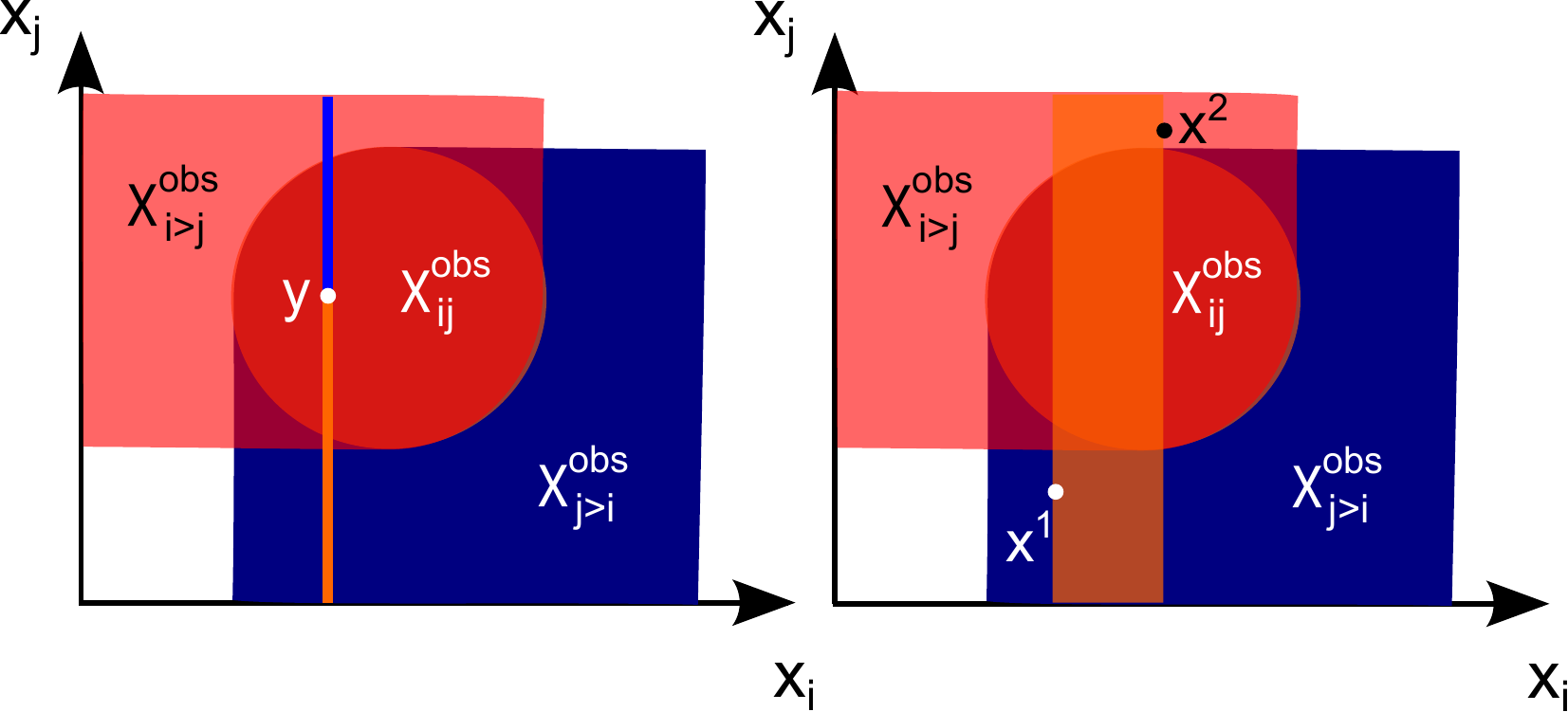}\hfill
\end{center}
\caption{Illustration of Properties~\ref{property-union-fixed-priority-cylinder-1} and~\ref{property-union-fixed-priority-cylinder-2}.}
\label{fig:property-union-fixed-priority-cylinder}
\end{figure}

\begin{proof}[Proof of Property~\ref{property:geometric-invariance}]
Take $i,j\in\robots$. By simple manipulations,
\begin{multline}
\chiobs_{i\succ j}-\RR_+\e_i+\RR_+\e_j =
(\chiobs_{ij}-\RR_+\e_i+\RR_+\e_j)-\RR_+\e_i+\RR_+\e_j=\\
\chiobs_{ij}-(\RR_+ +\RR_+)\e_i+(\RR_+ +\RR_+)\e_j= 
\chiobs_{ij}-\RR_+\e_i+\RR_+\e_j = \chiobs_{i\succ j}
\end{multline}
We have obtained~\eqref{eq:geometric-invariance-obstacle}. Moreover, using the latter result, we have:
\begin{equation}
x\in \chifree_{i\succ j} \Leftrightarrow x\notin \chiobs_{i\succ j} \Leftrightarrow 
x\notin \left( \chiobs_{i\succ j}-\RR_+\e_i+\RR_+\e_j \right)
\end{equation}
Hence, we have:
\begin{multline}
x\in \chifree_{i\succ j} \Leftrightarrow \forall \alpha,\beta \geq 0, x\notin \left(\chiobs_{i\succ j} -\alpha\e_i+\beta \e_j\right) \Leftrightarrow\\
\forall \alpha,\beta \geq 0, \left(x+\alpha\e_i-\beta \e_j\right)\notin \chiobs_{i\succ j} 
 \Leftrightarrow 
 \forall \alpha,\beta \geq 0, \left(x+\alpha\e_i-\beta \e_j\right) \in \chifree_{i\succ j}
\end{multline}
We have obtained~\eqref{eq:geometric-invariance-collision-free}.
\end{proof}

\begin{proof}[Proof of~\eqref{eq:property-max} in Property~\ref{property:min-max}]
Take $i,j\in\robots$, $x,y\in\chifree_{i\succ j}$ and let $z:=\max\{x,y\}$. By definition of $\max$, there are four options:
\begin{itemize}
\item $z_i=x_i$ and $z_j=x_j$: in this case, $(z_i,z_j)=(x_i,x_j)\notin \kappa_{i\succ j}$, so that $z\in\chifree_{i \succ j}$.
\item $z_i=y_i$ and $z_j=y_j$: this is the symmetric case and $y\in\chifree_{i\succ j}$ implies that $z\in\chifree_{i \succ j}$.
\item $z_i=x_i \geq y_i$ and $z_j = y_j$: in this case, we have $z_i \geq y_i$ and $z_j=y_j$. By Property~\ref{property:geometric-invariance}, $y\in\chifree_{i\succ j}$ implies that $z \in \chifree_{i\succ j}$
\item $z_i=y_i \geq x_i$ and $z_j = x_j$: this is the symmetric case and $x\in\chifree_{i\succ j}$ implies that $z \in \chifree_{i\succ j}$
\end{itemize}
\end{proof}

\begin{proof}[Proof of~\eqref{eq:property-min} in Property~\ref{property:min-max}]
Take $i,j\in\robots$, $x,y\in\chifree_{i\succ j}$ and let $z:=\min\{x,y\}$. By definition of $\min$, there are four options:
\begin{itemize}
\item $z_i=x_i$ and $z_j=x_j$: in this case, $(z_i,z_j)=(x_i,x_j)\notin \kappa_{i\succ j}$, so that $z\in\chifree_{i \succ j}$.
\item $z_i=y_i$ and $z_j=y_j$: this is the symmetric case and $y\in\chifree_{i\succ j}$ implies that $z\in\chifree_{i \succ j}$.
\item $z_i=x_i$ and $z_j = y_j \leq x_j$: in this case, we have $z_i = x_i$ and $z_j \leq x_j$. By Property~\ref{property:geometric-invariance}, $x\in\chifree_{i\succ j}$ implies that $z \in \chifree_{i\succ j}$
\item $z_i=y_i$ and $z_j = x_j \leq y_j$: this is the symmetric case and $y\in\chifree_{i\succ j}$ implies that $z \in \chifree_{i\succ j}$
\end{itemize}
\end{proof}

\begin{proof}[Proof of Property~\ref{property-union-fixed-priority-cylinder-1}]
Take $i,j\in\robots$, $y\in\chiobs_{ij}$ and $x\in\left\{x\in\chi: x_i=y_i\right\}$. Either $x_j\geq y_j$ and by Property~\ref{property:geometric-invariance} $y\in\chiobs_{ij}\subset\chiobs_{i\succ j}$ implies that $x\in\chiobs_{i\succ j}$; or, $x_j\leq y_j$ and by Property~\ref{property:geometric-invariance} $y\in\chiobs_{ij}\subset\chiobs_{j\succ i}$ implies that $x\in\chiobs_{j\succ i}$. In both cases, we have $x\in\chiobs_{i\succ j} \cup \chiobs_{j\succ i}$.
\end{proof}

\begin{proof}[Proof of Property~\ref{property-union-fixed-priority-cylinder-2}]
Take $x^1\in\chiobs_{j\succ i}$, $x^2\in\chiobs_{i\succ j}$, and $x\in\chi$ satisfying $x^1_i\leq x_i\leq x^2_i$.  As $x^1\in\chiobs_{j\succ i}$ (which is non-empty open and lower-bounded along axis $i$ with the same bound as $\chiobs_{ij}$), $x^1_i > \inf\{y_i:y\in\chiobs_{ji}\}$ and as $x^2\in\chiobs_{i\succ j}$ (which is non-empty open and upper-bounded along axis $i$ with the same bound as $\chiobs_{ij}$), $x^2_i < \sup\{y_i:y\in\chiobs_{ij}\}$. Hence, we obtain:
\begin{equation}
\inf\{y_i:y\in\chiobs_{ij}\}<x_i<\sup\{y_i:y\in\chiobs_{ij}\}
\end{equation}
As a consequence, there exists $x^0\in\chiobs_{ij}$ such that $x^0_i=x_i$. By Property~\ref{property-union-fixed-priority-cylinder-1}, we obtain $x\in\left(\chiobs_{i\succ j}\cup \chiobs_{j\succ i}\right)$. 
\end{proof}

\subsection{The priority relation}

The definition of the completed obstacle region enables to easily define a priority relation for feasible paths. The fact that a feasible path necessarily and exclusively lies on one side or on the other side of each collision cylinder $\chiobs_{ij}$ is indeed equivalent to intersect, necessarily and exclusively, one of the completed cylinders $\chiobs_{i\succ j}$, or $\chiobs_{j\succ i}$.

\begin{definition}[Priority relation]
The priority relation $\succ$ is a binary relation on the set of robots $\robots$. For all $i,j\in\robots$, $i\succ j$ if $\im{\path} \cap \chiobs_{j\succ i} \neq \emptyset$.
\end{definition}
We say $\succ$ is the priority relation induced by $\path$. The theorem below shows that the relation $\succ$ satisfies basic properties that one can expect from a "priority relation". More precisely, $\succ$ does not define a priority relation between two robots that cannot collide ($\chiobs_{ij}=\emptyset$) and if two robots can potentially collide, a priority relation exists and we have $i\succ j$ or $j\succ i$ exclusively, i.e., if robot $i$ has priority over robot $j$ then robot $j$ does not have priority over robot $i$.

\begin{theorem}[Priority relation properties]
Let $\path\in\phifree$ denote a feasible path and $\succ$ the priority relation induced by $\path$. For all $i,j\in\robots$ such that $\chiobs_{ij}\neq\emptyset$, we have necessarily and exclusively $i \succ j$ or $j \succ i$. For all $i,j\in\robots$ such that $\chiobs_{ij}=\emptyset$, we have $i \not\succ j$.
\label{thm:priority-relation}
\end{theorem}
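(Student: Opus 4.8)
The second assertion is immediate: if $\chiobs_{ij}=\emptyset$ then $\kappa_{ij}=\emptyset$, hence $\kappa_{j\succ i}=\kappa_{ji}+\RR_-\times\RR_+=\emptyset$ and $\chiobs_{j\succ i}=\emptyset$, so $\im{\path}\cap\chiobs_{j\succ i}=\emptyset$ and $i\not\succ j$. For the case $\chiobs_{ij}\neq\emptyset$ I would work in the plane generated by $\e_i$ and $\e_j$: write $\pi(x)=(x_i,x_j)$ and $c:=\pi\circ\path$. Then $c$ is continuous, non-decreasing, and avoids the open bounded convex set $\kappa_{ij}$; moreover conditions (c) and (d) of the feasible-path definition project to $(c(0)-\RR_+^2)\cap\kappa_{ij}=\emptyset$ and $(c(1)+\RR_+^2)\cap\kappa_{ij}=\emptyset$. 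Since the statement only involves the two completed cross-sections $\kappa_{i\succ j}$ (upper-left shadow) and $\kappa_{j\succ i}$ (lower-right shadow), it suffices to analyse this planar picture.

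For existence (at least one of $i\succ j$, $j\succ i$) the plan is to describe the complement $\RR^2\setminus(\kappa_{i\succ j}\cup\kappa_{j\succ i})$. A point lies outside both shadows exactly when it is strictly comparable, for the product order, to every point of $\kappa_{ij}$; using Property~\ref{property-union-fixed-priority-cylinder-1} (the line through any obstacle point with fixed $x_i$ meets $\chiobs_{i\succ j}\cup\chiobs_{j\succ i}$) one checks this complement splits into a piece $U^-$ (strictly below-left of all of $\kappa_{ij}$) and a piece $U^+$ (strictly above-right). Because $\kappa_{ij}$ is non-empty and open its $i$-projection is a genuine interval, so $U^-$ and $U^+$ are at positive distance and form the two connected components of a closed set. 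The projected form of condition (c) forbids $c(0)\in U^+$ (otherwise every point of $\kappa_{ij}$ would lie in $c(0)-\RR_+^2$), and (d) forbids $c(1)\in U^-$. If $c$ never met a shadow it would remain in a single component by connectedness, contradicting one of the two endpoint constraints; hence $c$ meets $\chiobs_{i\succ j}\cup\chiobs_{j\succ i}$ and the corresponding priority holds.

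For exclusivity I would argue by contradiction, assuming $c$ meets both shadows, at times $t_L$ (in $\chiobs_{i\succ j}$) and $t_R$ (in $\chiobs_{j\succ i}$). Since $\im{\path}$ is totally ordered by $\leq$, the two visits are comparable, and the same two-component description is the lever: a non-decreasing path cannot dip into $U^+$ and later reach the lower-right shadow (its $x_j$ would have to decrease), nor into $U^-$ after having been in the upper-left shadow (its $x_i$ would have to decrease). A connectedness argument on the sub-interval between $t_L$ and $t_R$ then forces $c$ to enter $\chiobs_{i\succ j}\cap\chiobs_{j\succ i}$ at some $t_0$. Writing $z=c(t_0)$, the two memberships give $P,P'\in\kappa_{ij}$ with $P_i\le z_i\le P'_i$ and $P'_j\le z_j\le P_j$; convexity of $\kappa_{ij}$ (via Property~\ref{property-union-fixed-priority-cylinder-1}) then produces obstacle points sharing the coordinate $x_i=z_i$ and the coordinate $x_j=z_j$. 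Combining this with monotonicity and the boundary conditions should pin the path onto one fixed side of the convex obstacle and force a crossing of $\chiobs_{ij}$, contradicting collision-freeness.

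I expect this last step to be the main obstacle, precisely because $\chiobs_{i\succ j}\cap\chiobs_{j\succ i}$ is genuinely non-empty and collision-free, so the contradiction cannot be purely set-theoretic: the point is that such an "enclosed" configuration is unreachable only because a non-decreasing curve cannot thread around a convex obstacle while still climbing past it. The convexity of the cross-section $\kappa_{ij}$ and the non-decreasing and boundary hypotheses must therefore be used jointly, and Properties~\ref{property:geometric-invariance}, \ref{property:min-max} and~\ref{property-union-fixed-priority-cylinder-2} are the natural tools for making each monotone crossing argument rigorous.
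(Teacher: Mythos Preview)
Your overall architecture matches the paper's: the empty case is trivial, existence follows from a planar connectedness argument using the endpoint conditions, and exclusivity is by contradiction, reducing to the claim that a feasible path cannot pass through $\chiobs_{i\succ j}\cap\chiobs_{j\succ i}$. The paper packages this last claim as a separate lemma (Lemma~\ref{lemma:south-west-north-east-completion}, ``South-West / North-East completion'') and then runs exactly the connectedness argument you sketch: on $[t^1,t^2]$ the path stays in $\chiobs_{i\succ j}\cup\chiobs_{j\succ i}$ (this is Property~\ref{property-union-fixed-priority-cylinder-2}, not the $U^\pm$ reasoning you give, though yours also works), it lies in $\chiobs_{i\succ j}\setminus\chiobs_{j\succ i}$ at one endpoint and in $\chiobs_{j\succ i}\setminus\chiobs_{i\succ j}$ at the other, so it must hit $\partial(\chiobs_{i\succ j}\setminus\chiobs_{j\succ i})\cap\partial(\chiobs_{j\succ i}\setminus\chiobs_{i\succ j})$, which is disjoint from the open union---contradiction.

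The genuine gap is precisely the step you flag: you have not proved that a feasible path avoids $\chiobs_{i\succ j}\cap\chiobs_{j\succ i}$. Your diagnosis is correct---this set is collision-free, so the contradiction must use monotonicity, the endpoint conditions, and convexity of $\kappa_{ij}$ together---but the argument you outline (``produces obstacle points sharing the coordinate $x_i=z_i$ and $x_j=z_j$'' then ``pin the path onto one side'') is not yet a proof. The paper's device is clean: given $z=c(t_0)\in\chiobs_{i\succ j}\cap\chiobs_{j\succ i}$, pick $x^1\in\kappa_{ij}$ with $x^1_i\ge z_i$, $x^1_j\le z_j$ and $x^2\in\kappa_{ij}$ with $x^2_i\le z_i$, $x^2_j\ge z_j$; by convexity of $\kappa_{ij}$ the segment $\Sigma$ from $x^1$ to $x^2$ lies in $\kappa_{ij}$. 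This segment separates the plane into a lower-left half $K^-$ and an upper-right half $K^+$ (in the sense that any monotone curve passing from one to the other must cross $\Sigma$). Now either $z\in K^-$ (then going from $z$ to $c(1)\in K^+$ forces a crossing of $\Sigma\subset\chiobs_{ij}$) or $z\in K^+$ (then going from $c(0)\in K^-$ to $z$ forces the crossing). Either way the path hits $\chiobs_{ij}$, contradicting feasibility. That convex-segment barrier is the missing idea in your last paragraph.
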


Note that the first statement of the above theorem can be formulated synthetically as: the binary relation $\succ$ is asymmetric. To prove Theorem~\ref{thm:priority-relation}, we start with the following lemma illustrated in Figures~\ref{fig:SW-NE-completion} and~\ref{fig:south-west-real-space} and proved in Appendix~\ref{app:south-west-completion}:
\begin{lemma}[South-West and North-East completion~\cite{ODonnell1989}]
For all feasible paths $\path\in\phifree$,
\begin{equation}
\forall i,j\in\robots, \im{\path}\cap\left(\chiobs_{i\succ j}\cap\chiobs_{j\succ i}\right) = \emptyset
\end{equation}
\label{lemma:south-west-north-east-completion}
\end{lemma}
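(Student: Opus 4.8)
The plan is to reduce everything to the two‑dimensional cross‑section spanned by $\e_i$ and $\e_j$, since $\chiobs_{ij}$, $\chiobs_{i\succ j}$ and $\chiobs_{j\succ i}$ all depend only on the coordinates $(x_i,x_j)$. Writing $\pi:=(\path_i,\path_j)$ for the projected path, feasibility tells me that $\pi$ is non‑decreasing in each coordinate, that $\pi(t)\notin\kappa_{ij}$ for all $t$ (collision‑freeness), and---projecting conditions (c) and (d)---that the closed south‑west quadrant at $\pi(0)$ and the closed north‑east quadrant at $\pi(1)$ both avoid $\kappa_{ij}$. Suppose, for contradiction, that $\path(t_0)=x$ with $(x_i,x_j)\in\kappa_{i\succ j}\cap\kappa_{j\succ i}$. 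Unfolding $\kappa_{i\succ j}=\kappa_{ij}+\RR_-\times\RR_+$ and $\kappa_{j\succ i}=\kappa_{ij}+\RR_+\times\RR_-$, I extract two collision points $p,q\in\kappa_{ij}$ with $p_i\ge x_i,\ p_j\le x_j$ (so $p$ lies weakly south‑east of $x$) and $q_i\le x_i,\ q_j\ge x_j$ (so $q$ lies weakly north‑west of $x$); both differ from $x$ since $x$ is collision‑free.

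Next I would turn $\pi$ into a bi‑infinite monotone curve whose complement has an explicit two‑sided structure. Extend $\pi$ to $\tilde\pi:\RR\to\RR^2$ by appending the diagonal rays $t\mapsto\pi(0)+t(1,1)$ for $t\le 0$ and $t\mapsto\pi(1)+(t-1)(1,1)$ for $t\ge 1$. These rays stay in the south‑west (resp.\ north‑east) quadrant of $\pi(0)$ (resp.\ $\pi(1)$), hence avoid $\kappa_{ij}$ by the projected boundary conditions, so $\tilde\pi$ is continuous, coordinatewise non‑decreasing, avoids $\kappa_{ij}$, and its first coordinate sweeps all of $\RR$. The heart of the argument is a Jordan‑type separation: for each $a\in\RR$ the set $\{t:\tilde\pi_i(t)=a\}$ is a compact interval on which $\tilde\pi_j$ realises exactly a closed interval $[\phi^-(a),\phi^+(a)]$; monotonicity of $\tilde\pi$ gives $\phi^+(a)\le\phi^-(a')$ whenever $a<a'$, so $\phi^-,\phi^+$ are non‑decreasing and $\im{\tilde\pi}=\{(a,v):\phi^-(a)\le v\le\phi^+(a)\}$. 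Consequently $\RR^2\setminus\im{\tilde\pi}$ is the disjoint union of the two open sets $U^-=\{v<\phi^-(a)\}$ and $U^+=\{v>\phi^+(a)\}$.

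It then remains to locate $p$ and $q$. Since $x\in\im{\tilde\pi}$ we have $x_j\in[\phi^-(x_i),\phi^+(x_i)]$. Because $p_i\ge x_i$, monotonicity gives $\phi^+(p_i)\ge\phi^+(x_i)\ge x_j\ge p_j$, and as $p\notin\im{\tilde\pi}$ this forces $p_j<\phi^-(p_i)$, i.e.\ $p\in U^-$. Symmetrically, $q_i\le x_i$ yields $\phi^-(q_i)\le\phi^-(x_i)\le x_j\le q_j$ with $q\notin\im{\tilde\pi}$, so $q\in U^+$. Thus the connected set $\kappa_{ij}$, disjoint from $\im{\tilde\pi}$ and hence contained in $U^-\cup U^+$, meets both halves of this separation---contradicting its connectedness. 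Therefore no such $t_0$ exists and $\im{\path}\cap\left(\chiobs_{i\succ j}\cap\chiobs_{j\succ i}\right)=\emptyset$ (the case $\kappa_{ij}=\emptyset$ being vacuous).

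I expect the main obstacle to be the separation statement of the second paragraph: verifying rigorously that $U^-$ and $U^+$ are genuinely open and exhaust the complement. This is exactly where the positive distance from $x$---and from $p$ and $q$---to the closed set $\im{\tilde\pi}$ is needed, to rule out the envelope band crossing a fixed height inside a small ball and thereby mixing the two sides. Everything else is routine bookkeeping with quadrant inequalities and the monotonicity of $\phi^\pm$.
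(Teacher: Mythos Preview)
Your argument is correct, and the flagged obstacle can indeed be closed: since $\tilde\pi$ is proper (the diagonal tails send $|t|\to\infty$ to infinity), $\im{\tilde\pi}$ is closed; you have already shown $U^-$ is stable under moving right and down, and $U^+$ under moving left and up, which makes each of them path-connected and non-empty, so they are precisely the two connected components of the open set $\RR^2\setminus\im{\tilde\pi}$ and hence open.

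Your route is genuinely different from the paper's. The paper runs the separation the other way around: from $x^0\in\chiobs_{i\succ j}\cap\chiobs_{j\succ i}$ it extracts the same two obstacle points $x^1,x^2\in\kappa_{ij}$ that you call $p,q$, then uses \emph{convexity} of $\kappa_{ij}$ to place the segment $\Sigma$ joining their $(i,j)$-projections entirely inside $\chiobs_{ij}$. This segment, thickened into the north-east and south-west cones $K^\pm$, traps the monotone path; the boundary conditions force $\path(0)\in K^-$ or $\path(1)\in K^+$ to lie on the opposite side from $x^0$, and continuity (via the paper's boundary-crossing Lemma) produces a time where $\path$ meets $\Sigma\subset\chiobs_{ij}$, contradicting collision-freeness. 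In short: the paper lets the \emph{obstacle} separate the plane and shows the \emph{path} must cross it; you let the \emph{path} separate the plane and show the connected \emph{obstacle} would have to straddle both sides. Your version is slightly more general---it only needs $\kappa_{ij}$ connected, not convex---at the price of the Jordan-type bookkeeping with $\phi^\pm$; the paper's version is more elementary but genuinely consumes the convexity hypothesis to build $\Sigma$.
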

\begin{figure}[p]
\begin{center}
\includegraphics[width=1.0\linewidth]{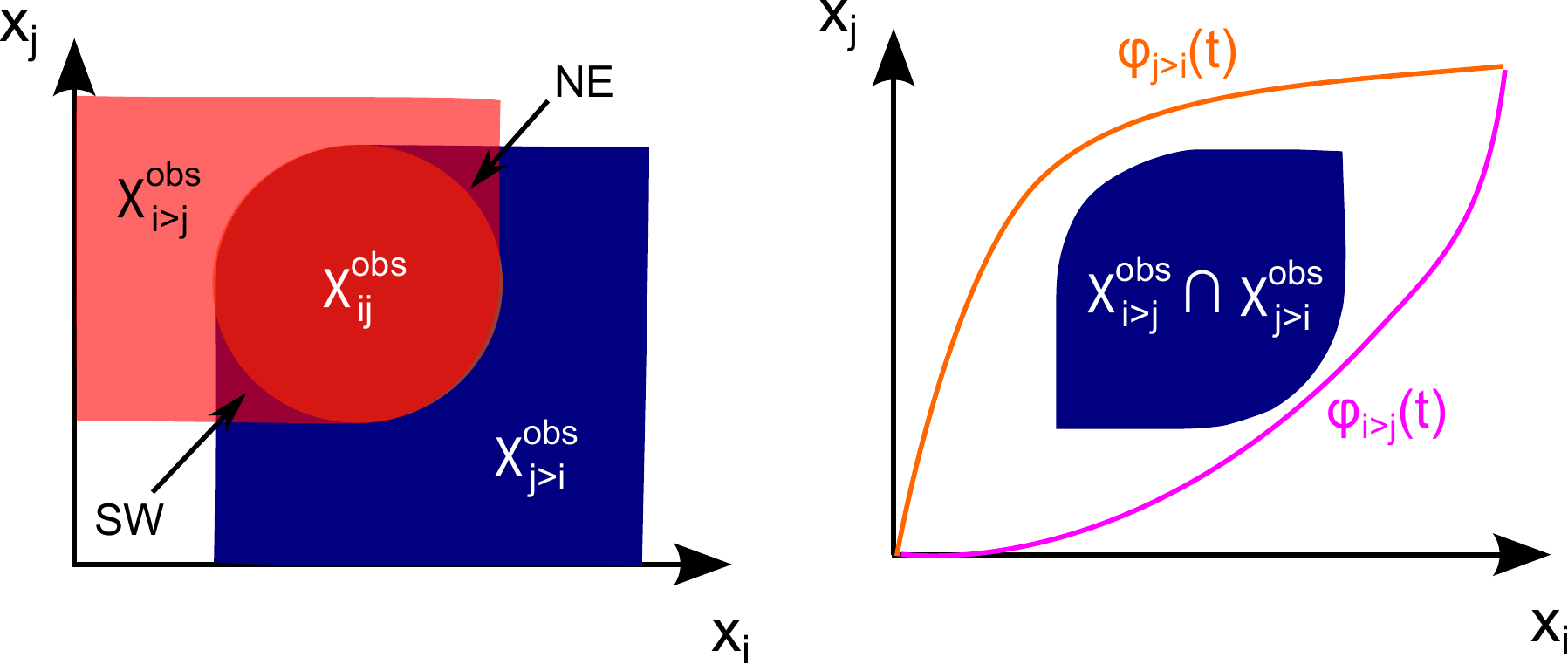}\hfill
\end{center}
\caption{Illustration of Lemma~\ref{lemma:south-west-north-east-completion}. Both $\path_{i\succ j}$ and $\path_{j\succ i}$ are collision-free with regards to $\chiobs_{i\succ j}\cap\chiobs_{j\succ i}$. Compared to $\chiobs_{ij}$, $\chiobs_{i\succ j}\cap\chiobs_{j\succ i}$ additionally contains the south-west (SW) region of the obstacle region and the north-east (NE) region of the obstacle region. Feasible paths do not go through the south-west region, as it necessarily leads to a "deadlock" between robots $i$ and $j$. The north-east region cannot be reached by a feasible (non-decreasing) path.}
\label{fig:SW-NE-completion}
\end{figure}
\begin{figure}[p]
\begin{center}
\includegraphics[width=0.5\linewidth]{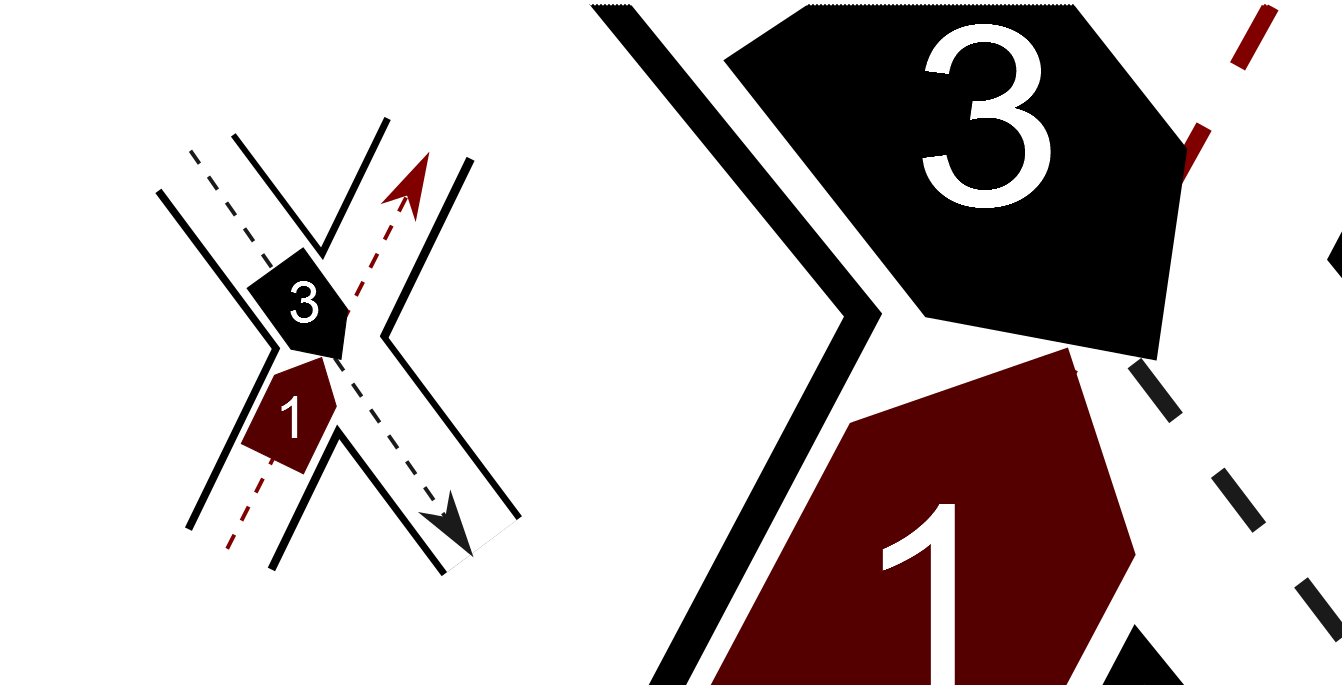}\hfill
\end{center}
\caption{Two robots at a deadlock configuration in the south-west region.}
\label{fig:south-west-real-space}
\end{figure}
Note that it was already noticed in~\cite{ODonnell1989} that south-west completion enables to avoid deadlocks in two-robot systems.

\begin{proof}[Proof of Theorem~\ref{thm:priority-relation}] 
Take a feasible path $\path\in\phifree$ and let $\succ$ denote the priority relation induced by $\path$. Take $i,j\in\robots$ such that $\chiobs_{ij}=\emptyset$. Then, we have $\chiobs_{j\succ i}=\emptyset$, so that $\im{\path}\cap\chiobs_{j\succ i}=\emptyset$, that is $i \not\succ j$. Take $i,j\in\robots$ such that $\chiobs_{ij}\neq\emptyset$ and take $y\in\chiobs_{ij}$. Remember that we have $(\path(0)-\RR_+^n) \subset \chifree$ and $(\path(1)+\RR_+^n) \subset \chifree$. As a consequence, there are two options as depicted in Figure~\ref{fig:priority-relation-property}:
\begin{enumerate}[(a)]
\item either $y\in\im{\path}-\RR_+\e_i+\RR_+\e_j$: it implies that $\im{\path} \cap \chiobs_{j \succ i} \neq \emptyset$;
\item or $y\in\im{\path}-\RR_+\e_j+\RR_+\e_i$: it implies that $\im{\path} \cap \chiobs_{i \succ j} \neq \emptyset$.
\end{enumerate}

\begin{minipage}{\linewidth}
\begin{center}
\includegraphics[width=0.8\linewidth]{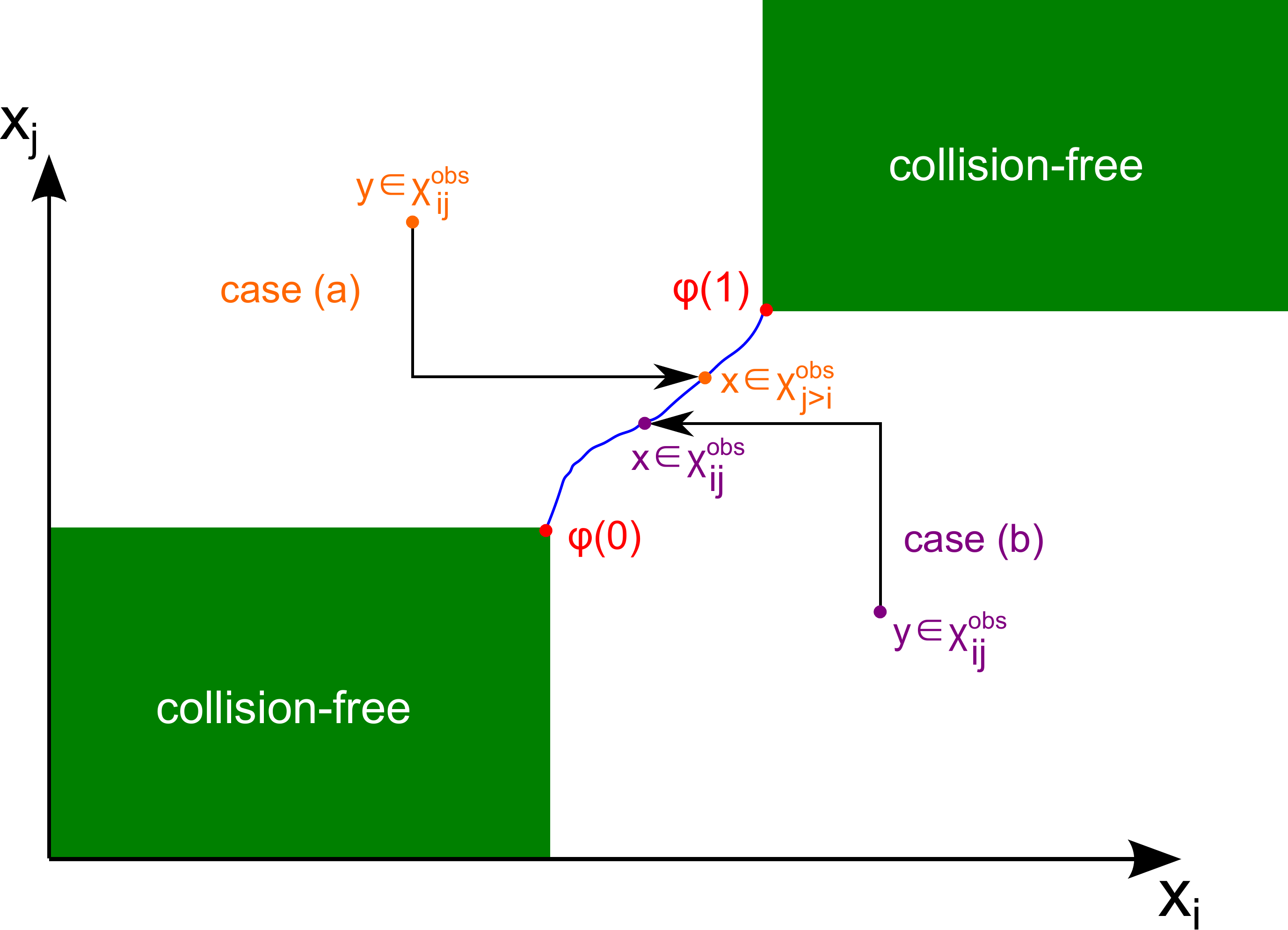}\hfill
\end{center}
\captionof{figure}{The two cases that appear to prove that any path $\path\in\phifree$ necessarily intersects $\chiobs_{i\succ j}$ or $\chiobs_{j\succ i}$ provided $\chiobs_{ij}\neq\emptyset$.}
\label{fig:priority-relation-property}
\end{minipage}
\vspace{0.2cm}

Hence, a feasible path necessarily intersects $\chiobs_{i\succ j}$ or $\chiobs_{j \succ i}$, so we have necessarily $i \succ j$ or $j \succ i$.

Now, we will prove that it is exclusive by contradiction. Take a feasible path $\path$ and assume that for some $t^1\in[0,1]$, $\path(t^1)\in\chiobs_{i \succ j}$ and for some $t^2\in[0,1]$, $\path(t^2)\in\chiobs_{j \succ i}$. Assume arbitrarily that $t^1\leq t^2$ (otherwise, exchange the roles of $i$ and $j$), which implies that $\path(t^1)\leq\path(t^2)$. Consider the subset of $\chi$ defined below:
\begin{equation}
K:=\left\{x\in\chi: x^1_i\leq x_i \leq x^2_i \text{ and } x^1_j\leq x_j \leq x^2_j\right\}
\end{equation}
By Property~\ref{property-union-fixed-priority-cylinder-2}, we have:
\begin{equation}
K \subset \left( \chiobs_{i\succ j} \cup \chiobs_{j\succ i} \right)
\label{eq:inclusion-K}
\end{equation}

As $\path$ is non-decreasing, for all $t\in[t^1,t^2]$, $\path(t)\in K$. If $\path(t)\in\chiobs_{i\succ j}\cap\chiobs_{j \succ i}$ for some $t\in[t^1,t^2]$, $\path$ would not be feasible by Lemma~\ref{lemma:south-west-north-east-completion}. Hence, we have:
\begin{eqnarray}
\path(t^1)&\in&\chiobs_{i\succ j}\setminus\chiobs_{j \succ i} \label{eq:phi-t1}\\
\path(t^2)&\in&\chiobs_{j\succ i}\setminus\chiobs_{i \succ j}\label{eq:phi-t2}
\end{eqnarray}
and for all $t\in[t^1,t^2]$,
\begin{equation}
\path(t)\in  \left(\chiobs_{i\succ j}\setminus\chiobs_{j \succ i}\right) \cup \left(\chiobs_{j\succ i}\setminus\chiobs_{i \succ j}\right)  
\label{eq:phi-in-union-two-cylinders}
\end{equation}
As $\left(\chiobs_{i\succ j}\setminus\chiobs_{j \succ i}\right)\cap\left(\chiobs_{j\succ i}\setminus\chiobs_{i \succ j}\right)=\emptyset$, by continuity of $\path$ (see Lemma~\ref{lemma:two-subsets-frontier} in Appendix~\ref{app:topology-properties}), there exists some $t^0\in[t^1,t^2]$ such that:
\begin{equation}
\path(t^0)\in \partial \left(\chiobs_{i\succ j}\setminus\chiobs_{j \succ i}\right) \cap \partial \left(\chiobs_{j\succ i}\setminus\chiobs_{i \succ j}\right)
\label{eq:phi-at-frontier-two-cylinders}
\end{equation}
As $\chiobs_{i\succ j}$ and  $\chiobs_{j\succ i}$ are open, by Lemma~\ref{lemma:frontier-A-minus-B-cap-B} (see Appendix~\ref{app:topology-properties}), we have $\chiobs_{j\succ i}\cap \partial (\chiobs_{i\succ j}\setminus\chiobs_{j \succ i})=\emptyset$ and $\chiobs_{i \succ j} \cap \partial (\chiobs_{j\succ i}\setminus\chiobs_{i \succ j})= \emptyset$. Hence, we obtain:
\begin{equation}
\partial \left(\chiobs_{i\succ j}\setminus\chiobs_{j \succ i}\right) \cap \partial \left(\chiobs_{j\succ i}\setminus\chiobs_{i \succ j}\right)  \cap \left( \chiobs_{i \succ j} \cup \chiobs_{j \succ i} \right) = \emptyset
\end{equation}
Equations~\eqref{eq:phi-in-union-two-cylinders} and~\eqref{eq:phi-at-frontier-two-cylinders} are therefore absurd as disjoint sets have no element in common.
\end{proof}

\subsection{The priority graph}

As any binary relation, the priority relation admits a graph representation. 
\begin{definition}[Priority graph]
The priority graph induced by a feasible path $\path$ is the oriented graph $G$ whose vertices are $V(G):=\robots$ and such that there is an edge from $i$ to $j$ if $i \succ j$ where $\succ$ denotes the priority relation induced by $\path$. We write $(i,j)\in E(G)$ where $E(G)$ denotes the edge set of the priority graph.
\end{definition}
Two representations of the priority graph are depicted in Figure~\ref{fig:priority-graph}. We let $\Gamma$ denote the application that returns the priority graph $\Gamma(\path)$ induced by a feasible path $\path\in\phifree$. $\Gamma(\path)$ is the graph of the priority relation $\succ$ induced by $\path$. Theorem~\ref{thm:priority-relation} can be rewritten as follows:
\begin{equation}
\forall \path\in\phifree, \Gamma(\path)\in\graphs
\end{equation}
where $\graphs$ is the set of oriented graphs $G$ with vertices $V(G):=\robots$, whose edge set $E(G)$ satisfies:
\begin{equation}
\forall i,j\in\robots,\quad (i,j)\in E(G) \Leftrightarrow
\left\{\begin{matrix}
 \chiobs_{ij} &\neq& \emptyset\\
(j,i)&\notin& E(G)
\end{matrix}\right.
\end{equation}
\begin{figure}[!htbp]
\begin{center}
\includegraphics[width=0.35\linewidth]{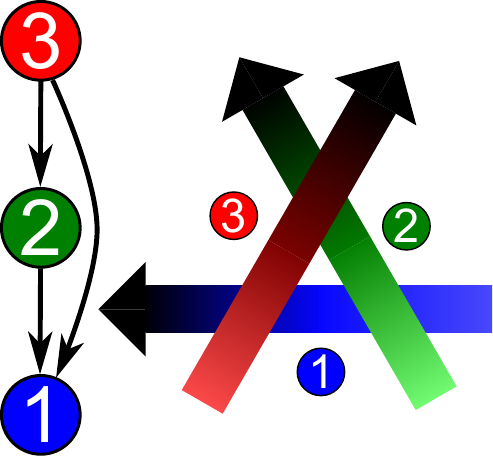} \hspace{2em}
\includegraphics[width=0.35\linewidth]{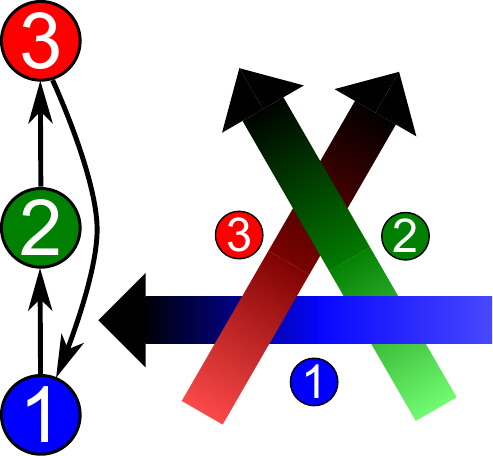}
\end{center}
\caption{Two representations of priority relations. In each drawing relation is represented in two ways: as a complete oriented graph, where orientation yields the priority; and as trajectories over time, foreground being first, background later. The left drawing represents a relation that  is an order (even a total order). The right drawing shows a relation that is not an order.}
\label{fig:priority-graph}
\end{figure}

We say a graph $G$ is a priority graph if $G\in\graphs$. It is natural as a graph $G\in\graphs$ defines a binary relation between robots whose paths intersect, i.e., it defines a priority between all and only robots that need to coordinate. Then, a natural question is: given a priority graph $G$, does a feasible path exist whose induced priority graph is $G$ ? Let $\Gamma^{-1}(G)$ denote the set of feasible paths whose induced priority graph is $G$. The question can then be rephrased as: given a priority graph $G$, do we have $\Gamma^{-1}(G)\neq\emptyset$ ? If there exists some path $\path\in\Gamma^{-1}(G)$, $\path$ should be collision-free with regards to each completed cylinder $\chiobs_{i\succ j}$ for all $(i,j)\in E(G)$. Hence, it is natural to define the completed obstacle region and the collision-free region with regards to a given priority graph $G\in\graphs$ as follows:
\begin{eqnarray}
\chiobs_{G} &:=& \bigcup_{(i,j)\in E(G)} \chiobs_{i \succ j}\\
\chifree_G &:=& \chi \setminus \chiobs_G
\end{eqnarray}
$\{\chiobs_G,\chifree_G\}$ form a partition of $\chi$. By construction, we have $\chiobs\subset\chiobs_G$: respecting the assigned priorities requires remaining in a more constrained space. For all feasible paths $\path\in\phifree$, we have the equivalences:
\begin{equation}
\path\in\Gamma^{-1}(G) \Leftrightarrow \im{\path} \subset \chifree_G \Leftrightarrow \im{\path} \cap \chiobs_G = \emptyset \Leftrightarrow \path\in\phifreeG
\end{equation}
It results that the set of feasible paths is the union of the sets of feasible paths respecting a certain priority graph over all possible priority graphs:
\begin{equation}
\phifree=\bigcup_{G\in \graphs}\phifreeG
\end{equation}
However, $\left\{\phifreeG: G\in\graphs\right\}$ do not form a partition of $\phifree$ as some $\phifreeG$ may be empty. 

The next chapter studies the coordination under assigned priorities, i.e., when the obstacle region is completed with configurations not respecting the assigned priorities, forming the completed obstacle region $\chiobs_G$. In Section~\ref{sec:homotopy}, we will see that each non-empty set of feasible paths respecting a certain priority graph $G$, i.e., each non-empty $\phifreeG$, is a homotopy class of feasible paths continuously deformable into each other. Section~\ref{sec:feasibility} provides a necessary and sufficient condition on $G$ for $\phifreeG$ not to be empty, that is a necessary and sufficient condition on priorities to guarantee that respecting these priorities, all robots can eventually go through the intersection (no deadlock).

\chapter[The coordination space under assigned priorities]{The coordination space\\ under assigned priorities}
\label{chap:priorities-homotopy}
\minitoc

Preliminaries of this work can be found in our conference paper~\cite{Gregoire2012-optimal}.

\paragraph{Sketch of the chapter} The first section of the present chapter demonstrates that respecting assigned priorities does not require robots to follow a particular feasible path in the coordination space. However, the path described by robots in the coordination space needs to remain in a quite large homotopy class of feasible paths continuously deformable into each other. This homotopy class is uniquely encoded by the priority graph. The second section proves that deadlock avoidance can be guaranteed at the priority assignment phase. Either priorities are "feasible" and ensure all robots will eventually go through the intersection provided they respect the assigned priorities; or, the multi robot system will inevitably reach a deadlock configuration.

\section{Priorities: a homotopy invariant}
\label{sec:homotopy}

\subsection{Homotopy classes}

$\paths(\chi)$ is equipped with the topology of pointwise convergence and the notion of homotopic feasible paths is defined as follows. 

\begin{definition}[Homotopic paths]
Given two feasible paths $\path^1$ and $\path^2$, $\path^1$ is homotopic to $\path^2$ if there exists a continuous application $H$ defined on $[0,1]$ such that $H(0)=\path^1$, $H(1)=\path^2$ and for all $\alpha\in[0,1]$, the path $H(\alpha)$ is a feasible path.
\end{definition}
\begin{figure}[!htbp]
\begin{center}
\includegraphics[width=0.7\linewidth]{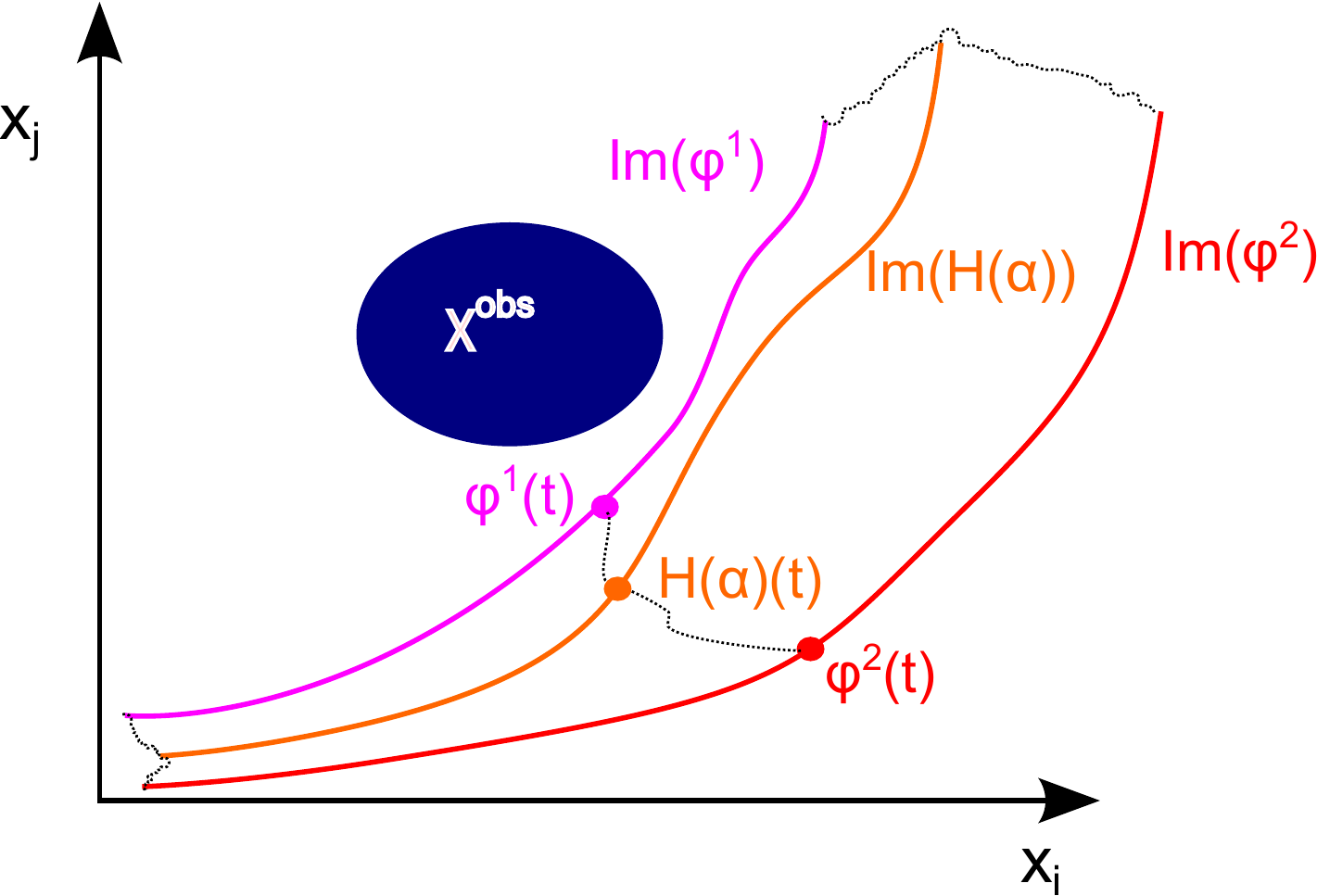}
\end{center}
\caption{Two homotopic paths. As both paths lie below the obstacle region, they can be continuously transformed into each other remaining collision-free along the transformation.}
\label{fig:homotopy}
\end{figure}
We write $\path^1 \sim \path^2$. Said differently, two feasible paths are homotopic if they can be continuously transformed into each other remaining feasible (in particular collision-free and non-decreasing) along the transformation as depicted in Figure~\ref{fig:homotopy}. Homotopy defines an equivalence relation on feasible paths:
\begin{property}[Homotopy: an equivalence relation]
\label{property:homotopy-equivalence-relation}
The homotopy relation~$\sim$ is an equivalence relation on $\phifree$.
\end{property}
\begin{proof}
We have to prove that $\sim$ is an equivalence relation, i.e., that it is (\ref{item:sim-reflexive}) reflexive, (\ref{item:sim-symmetric}) symmetric and  (\ref{item:sim-transitive}) transitive.
\begin{enumerate}[(a)]
\item Take a feasible path $\path\in\phifree$ and consider the constant application $H:\alpha\in[0,1] \mapsto \path$. $H(0)=\path$, $H(1)=\path$ and for all $\alpha\in[0,1]$, the path $H(\alpha)\equiv \path$ is a feasible path. Hence, $\path \sim \path$ and $\sim$ is reflexive. \label{item:sim-reflexive}
\item Take feasible paths $\path,\psi\in\phifree$ and assume that $\path \sim \psi$. Then, there exists $H$ defined on $[0,1]$ such that $H(0)=\path$, $H(1)=\psi$ and for all $\alpha\in[0,1]$, the path $H(\alpha)$ is a feasible path. Consider $G:\alpha\in[0,1]\mapsto H(1-\alpha)$. We have $G(0)=\psi$, $G(1)=\path$ and for all $\alpha\in[0,1]$, the path $G(\alpha)\equiv H(1-\alpha)$ is a feasible path. Hence, $\psi \sim \path$ and $\sim$ is symmetric.
\label{item:sim-symmetric}
\item Take feasible paths $\path^1,\path^2,\path^3\in\phifree$ and assume that $\path^1 \sim \path^2$ and $\path^2 \sim \path^3$. Then, there exists $H^{12}$ defined on $[0,1]$ such that $H^{12}(0)=\path^1$, $H^{12}(1)=\path^2$ and for all $\alpha\in[0,1]$, the path $H^{12}(\alpha)$ is a feasible path and there exists $H^{23}$ defined on $[0,1]$ such that $H^{23}(0)=\path^2$, $H^{23}(1)=\path^3$ and for all $\alpha\in[0,1]$, the path $H^{23}(\alpha)$ is a feasible path. Consider $H$ defined on $[0,1]$ as follows:
\begin{eqnarray}
\forall \alpha \in[0,1/2], H(\alpha)&:=&H^{12}(2\alpha)\\
\forall \alpha \in(1/2,1], H(\alpha)&:=&H^{23}(2(\alpha-1/2))
\end{eqnarray}
$H$ is continuous as $\lim_{\alpha \underset{<}{\to} 1/2}H(\alpha)=H^{12}(1)=\path^2$ and $\lim_{\alpha \underset{>}{\to} 1/2} H(\alpha) = H^{23}(0)=\path^2$. Moreover, $H(0)=\path^1$, $H(1)=\path^3$ and for all $\alpha\in[0,1]$, the path $H(\alpha)$ is a feasible path as it satisfies $H(\alpha)\equiv H^{12}(2\alpha)$ or $H(\alpha)\equiv H^{23}(2(\alpha-1/2))$ which are both feasible paths. Hence, $\path^1 \sim \path^3$ and $\sim$ is transitive. \label{item:sim-transitive}
\end{enumerate}
\end{proof}

As a direct consequence of Property~\ref{property:homotopy-equivalence-relation}, we can define homotopy classes as the equivalence classes induced by this equivalence relation. Let $\homotopy:=\phifree/\sim$ denote the homotopy classes of feasible paths, that is the quotient set of $\phifree$ by the equivalence relation $\sim$. Homotopy classes form a partition of $\phifree$~\cite{Hu1959}.

\begin{figure}[!htbp]
\begin{center}
\includegraphics[width=0.49\linewidth]{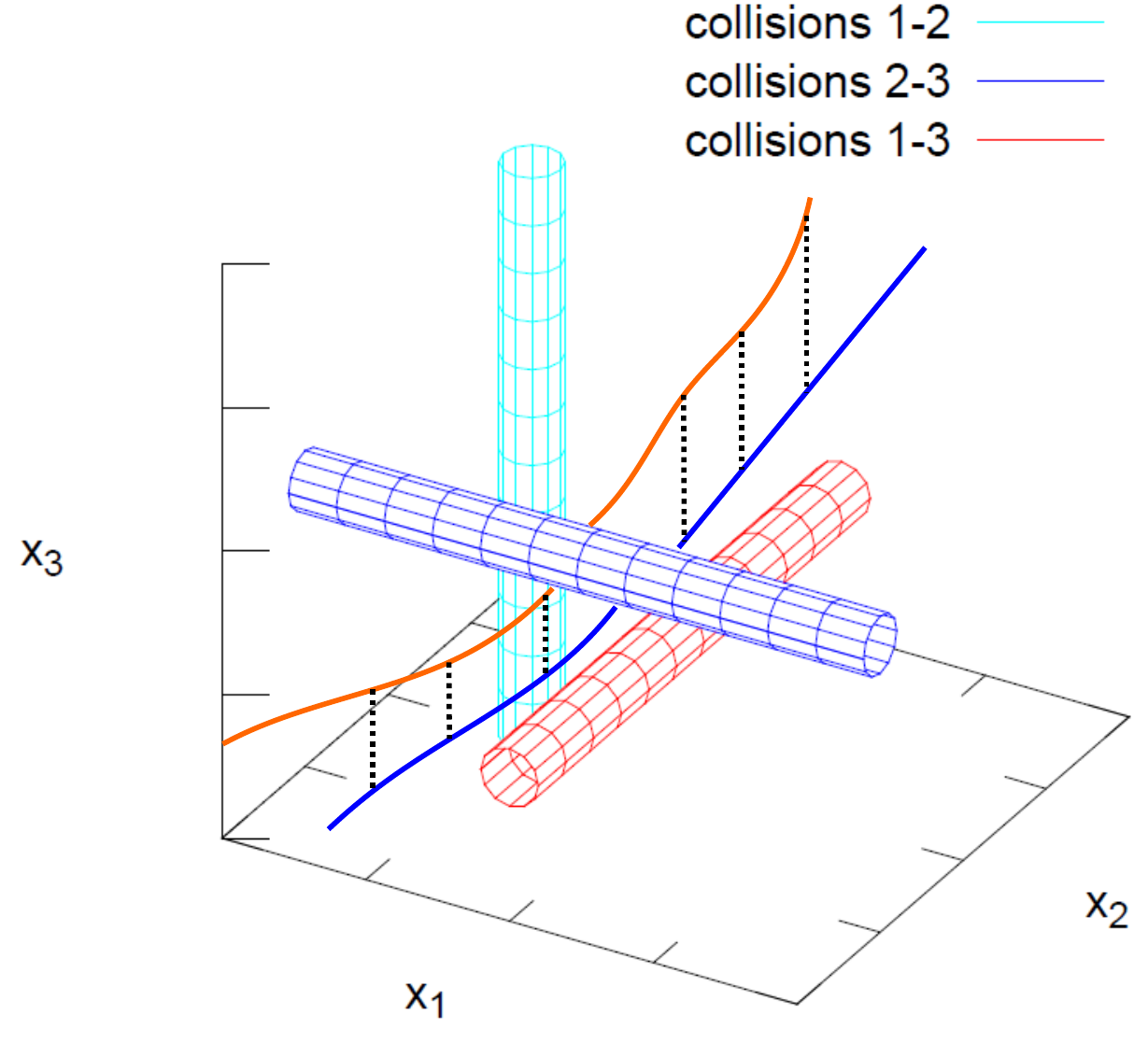}
\includegraphics[width=0.49\linewidth]{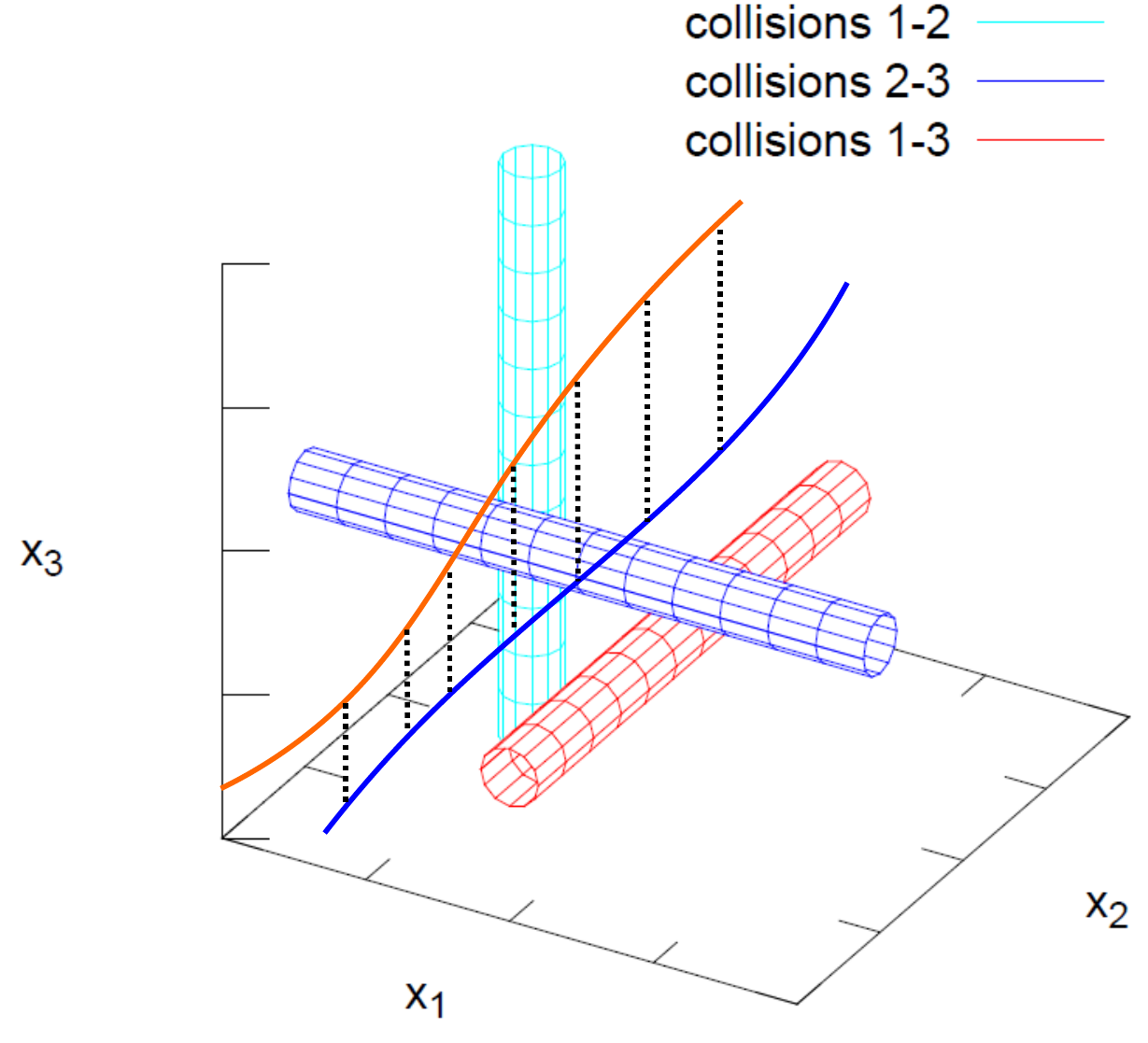}
\end{center}
\caption{Two homotopy classes of feasible paths (and two elements of each class) in a three-dimensional coordination space.}
\label{fig:homotopy-classes-3D}
\end{figure}

The existence of homotopy classes of solutions to the coordination problem was already noticed, e.g., in~\cite{Ghrist2005} (see Figure~\ref{fig:homotopy-classes-3D}). In~\cite{Hu2003}, it is also noticed that there exist a finite number of homotopy classes of solutions to the coordination of multiple agents moving on a plane between fixed points using the concept of braids~\cite{Birman1974}. However, in that work, the geometric paths of agents is not fixed, and optimization is precisely carried out in order to find an optimal trajectory, both spatially and timely. This is not adapted for an application to the coordination of robots on roadways as the two-dimensional space is very constrained and robots have a quite low degree of freedom to choose a geometric path to go through the intersection. It thus appears much more suitable to study the homotopy classes of feasible paths in the coordination space instead of studying the homotopy classes of braids.

\subsection{Invariance of the priority graph}

If previous work already noticed the existence of homotopy classes in multi robot coordination, to our knowledge, no meaningful representative is proposed to encode homotopy classes. In the following, we present the main result of this part: priorities uniquely encode homotopy classes of feasible paths in the coordination space. The existence of a finite number of homotopy classes thus merely appears as the consequence of the finite number of possible priority graphs.

We let $\Gamma(\phifree):=\{\Gamma(\path):\path\in\phifree\}$ denote the set of values taken by the priority graph over all feasible paths. $\Gamma(\phifree)$ is a subset of $\graphs$ containing graphs $G$ such that there exists a feasible path $\path\in\phifree$ satisfying $\Gamma(\path)=G$. The following theorem (illustrated in Figures~\ref{fig:homotopy-priority-graph-unique-representative} and~\ref{fig:homotopy-classes}) shows that priorities and homotopy classes are strongly linked: more precisely, there is a bijective relationship between homotopy classes and "feasible priority graphs" (this term will be precisely defined in Section~\ref{sec:feasibility}). We say the priority graph encodes the homotopy class.

\begin{figure}[!htbp]
\begin{center}
\includegraphics[width=0.8\linewidth]{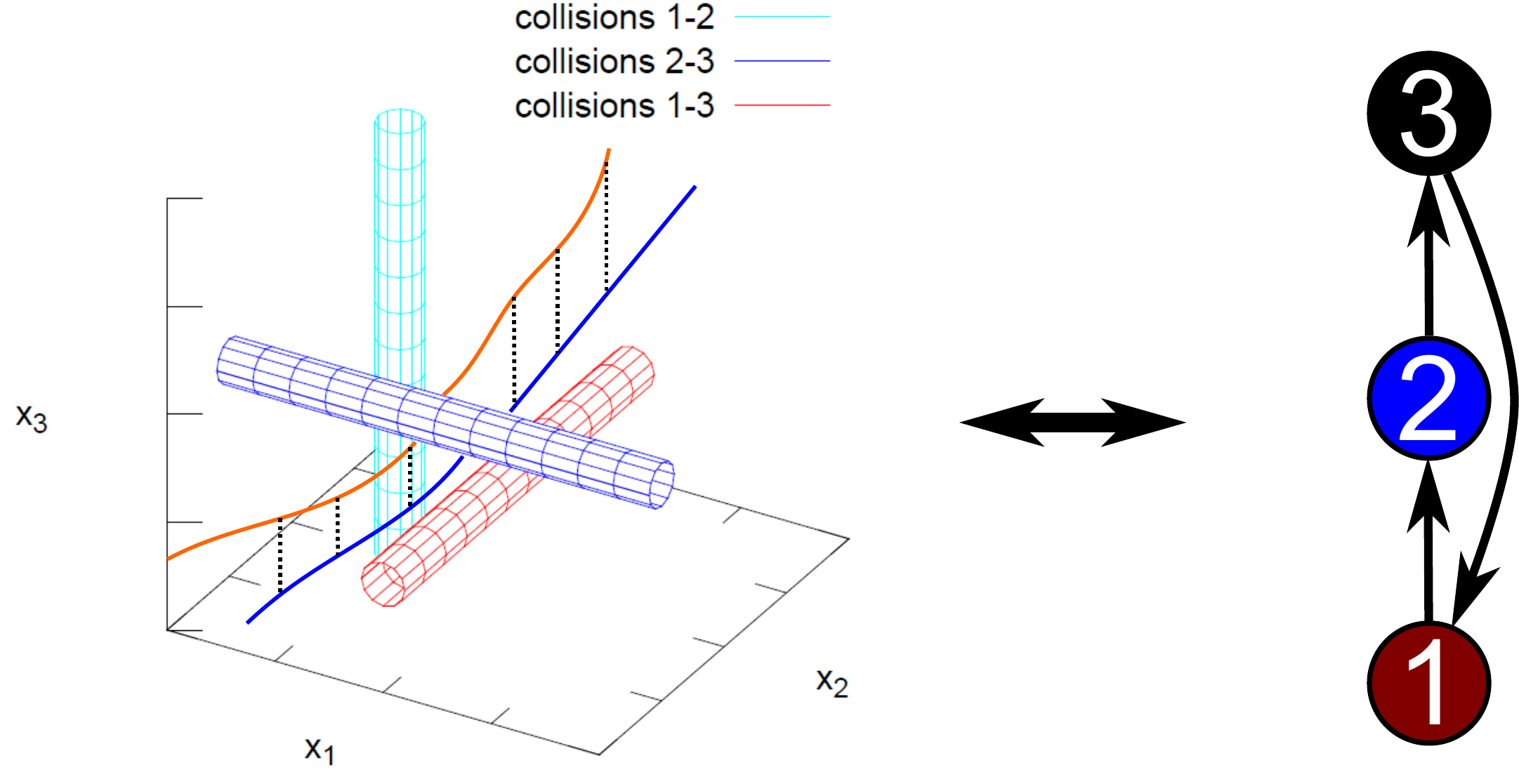}
\end{center}
\caption{A homotopy class of feasible paths in a three-dimensional coordination space and its corresponding unique representative as a priority graph.}
\label{fig:homotopy-priority-graph-unique-representative}
\end{figure}
\begin{figure}[!htbp]
\begin{center}
\includegraphics[width=0.8\linewidth]{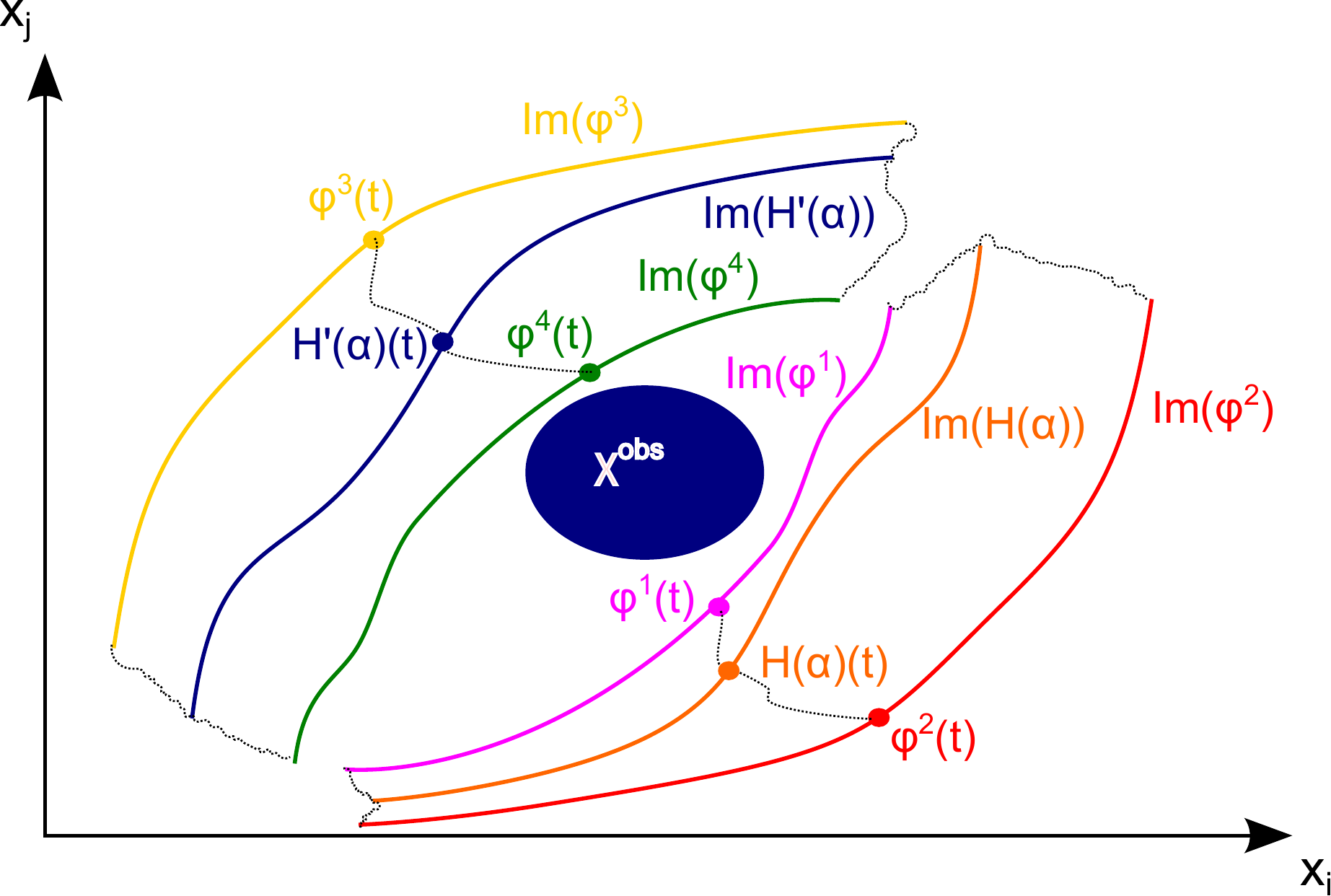}
\end{center}
\caption{In a two-dimensional scenario ($\chi=\RR^2$), provided $\chiobs_{ij}\neq\emptyset$, there are exactly two homotopy classes: all feasible paths lying above the obstacle region form the first homotopy class and all feasible paths lying below the obstacle region form the second homotopy class.}
\label{fig:homotopy-classes}
\end{figure}

\begin{theorem}[Invariance of the priority graph]
The priority graph is an invariant of the homotopy classes of feasible paths that it is distinct for each class: $\homotopy$ is in bijection with $\Gamma(\phifree)$.
\label{thm:invariance-priority-graph}
\end{theorem}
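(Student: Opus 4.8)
The plan is to prove the statement in two halves: first that $\Gamma$ is constant on each homotopy class — so it descends to a well-defined map $\overline{\Gamma}:\homotopy\to\Gamma(\phifree)$, automatically surjective by the very definition of $\Gamma(\phifree)$ — and second that this induced map is injective, i.e.\ two feasible paths with the same priority graph are homotopic. Surjectivity requires no work, so the content is exactly the two implications: homotopic $\Rightarrow$ same graph, and same graph $\Rightarrow$ homotopic.

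For the invariance direction I would fix a homotopy $H$ with $H(0)=\path^1$, $H(1)=\path^2$ and argue that for every pair $i,j\in\robots$ with $\chiobs_{ij}\neq\emptyset$ the ``side'' selected by $H(\alpha)$ cannot jump. Concretely, the set of $\alpha\in[0,1]$ for which $i\succ j$ holds for $H(\alpha)$ is open: if $i\succ j$ at $\alpha_0$ there is a $t_0$ with $H(\alpha_0)(t_0)\in\chiobs_{j\succ i}$, and since $\chiobs_{j\succ i}$ is open while $\alpha\mapsto H(\alpha)(t_0)$ is continuous (in the topology of pointwise convergence each evaluation map is continuous), this membership persists near $\alpha_0$. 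The symmetric set for $j\succ i$ is open as well, and by Theorem~\ref{thm:priority-relation} these two sets partition the connected interval $[0,1]$; hence one is empty and the chosen side, thus the whole priority graph, is constant along $H$.

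For injectivity I would exploit that $\phifreeG=\paths(\chifree_G)$ and that $\chifree_G=\bigcap_{(i,j)\in E(G)}\chifree_{i\succ j}$ is closed under the coordinatewise $\max$ and $\min$ operators, which follows directly from Property~\ref{property:min-max} applied to each factor. Given $\path^1,\path^2\in\phifreeG$, the pointwise maximum $\psi:=\max\{\path^1,\path^2\}$ is then again non-decreasing with image in $\chifree_G$, so $\psi\in\phifreeG$, and it suffices to show $\path^1\sim\psi$ (symmetrically $\path^2\sim\psi$), whence $\path^1\sim\path^2$ by transitivity (Property~\ref{property:homotopy-equivalence-relation}). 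The crucial point is that the straight-line homotopy $(1-\alpha)\path^1+\alpha\psi$ must be avoided: $\chifree_G$ is a non-convex staircase region (each $\chiobs_{i\succ j}$ is convex and ``up-left closed'', so its complement is not convex), and a segment between two ordered free configurations can cross the obstacle. Instead I would use a \emph{sweeping-max} homotopy such as $H(\alpha)(t):=\max\{\path^1(t),\path^2(\min(t,\alpha))\}$, every value of which is a coordinatewise maximum of two points of $\chifree_G$ and hence lies in $\chifree_G$ by the lattice property, while non-decreasingness is preserved because $t\mapsto\min(t,\alpha)$ is non-decreasing. This connects $\psi=H(1)$ to $H(0)(t)=\max\{\path^1(t),\path^2(0)\}$, which is in turn joined to $\path^1$ by lowering the constant floor along $\beta\mapsto\max\{\path^1(t),\path^2(0)-\beta\mathbf 1\}$, using that $\path^2(0)-\RR_+^n\subset\chifree_G$ (the start condition built into $\paths(\chifree_G)$).

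The main obstacle is this injectivity direction, and specifically the feasibility of the interpolating paths: the naive convex homotopy fails, so the entire argument must be routed through the lattice (min/max) structure of $\chifree_G$, which is exactly what Property~\ref{property:min-max} supplies. The remaining care is bookkeeping — verifying that each intermediate path of the sweeping-max homotopy still satisfies the two ``no coordination beyond the endpoints'' conditions and that the family is continuous in the pointwise-convergence topology — but these follow from the same closure-under-$\max$ property together with the endpoint conditions inherited from $\paths(\chifree_G)$.
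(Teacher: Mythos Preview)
Your proposal is correct and follows the same overall strategy as the paper: invariance via openness of each $\chiobs_{j\succ i}$ (so $\Gamma$ is locally constant, hence constant on homotopy classes), and injectivity via the lattice closure of $\chifree_G$ supplied by Property~\ref{property:min-max}, routing through a max-built intermediate path rather than a straight-line interpolation.

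The only substantive difference is the explicit homotopy in the injectivity step. The paper uses a single formula
\[
H(\alpha)=\min\bigl\{\alpha\,\path^2(1)+(1-\alpha)\,\path^1(1),\ \path^1(\bullet+\alpha),\ \max\{\path^1,\path^2\}\bigr\},
\]
which carries $\path^1$ to the symmetric intermediate $\min\{\path^1(1),\path^2(1),\max\{\path^1,\path^2\}\}$, and then invokes symmetry in $\path^1,\path^2$. You instead concatenate a ``sweeping-max'' homotopy $H(\alpha)(t)=\max\{\path^1(t),\path^2(\min(t,\alpha))\}$ with a floor-lowering step $\max\{\path^1(t),\path^2(0)-\beta\mathbf{1}\}$ to reach the (already symmetric) intermediate $\max\{\path^1,\path^2\}$. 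Your two-stage construction is arguably more transparent---each intermediate value is a $\max$ of two configurations in $\chifree_G$, so Property~\ref{property:min-max} applies in one stroke---while the paper's one-shot formula is more compact but needs both the $\min$- and $\max$-closure parts of Property~\ref{property:min-max} and the time-shift device. Both verifications of the endpoint conditions (c) and (d) go through the same $\min/\max$ trick you allude to in your final paragraph.
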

\begin{proof}[Proof of invariance]
First we will prove that the priority graph is an invariant of the homotopy classes of feasible paths. Consider a feasible path $\path\in\phifree$. For all $i,j\in\robots$, $(i,j)\in E(\Gamma(\path))$ if $\path$ intersects $\chiobs_{j \succ i}$ and the set $\chiobs_{j \succ i}$ is open. If a feasible path $\path$ intersects an open set, any feasible path $\psi\in\phifree$ close enough to $\path$ (in the topology of pointwise convergence) also intersects this open set. Hence, we have:
\begin{equation}
\forall i,j\in\robots,~ (i,j)\in E(\Gamma(\path)) \Leftrightarrow (i,j)\in E(\Gamma(\psi))
\end{equation}
provided $\psi$ is close enough to $\path$. Therefore, $\Gamma$ is continuous and since it takes discrete values, it is thus constant in homotopy classes of feasible paths. (We identify $\Gamma$ with the set of applications $g_{ij}:\paths(\chifree)\to\{-1,0,1\}$ satisfying $g_{ij}(\path)=1$ if $i\succ j$, $-1$ if $j\succ i$, and $0$ otherwise.) In conclusion, the priority graph is an invariant of the homotopy classes of feasible paths.
\end{proof}
\begin{proof}[Proof of uniqueness]
To prove uniqueness, consider two feasible paths $\path^1$ and $\path^2$ with the same induced priority graph $G$: $\path^1,\path^2\in\phifreeG$. We have to prove that $\path^1$ and $\path^2$ are homotopic. Consider the following continuous transformation: 
\begin{equation}
H: \alpha\in[0,1] \mapsto \min\left\{ \alpha \path^2(1)+(1-\alpha) \path^1(1), \path^1(\bullet+\alpha), \max\left\{ \path^1, \path^2 \right\} \right\}
\end{equation}
where by convention $\path^1(t+\alpha)\equiv \path^1(1)$ if $t+\alpha\geq 1$. Figure~\ref{fig:homotopy-example} illustrates the proposed transformation in the particular case where the two paths have the same endpoints.

\begin{minipage}{\linewidth}
\begin{center}
\includegraphics[width=1.0\linewidth]{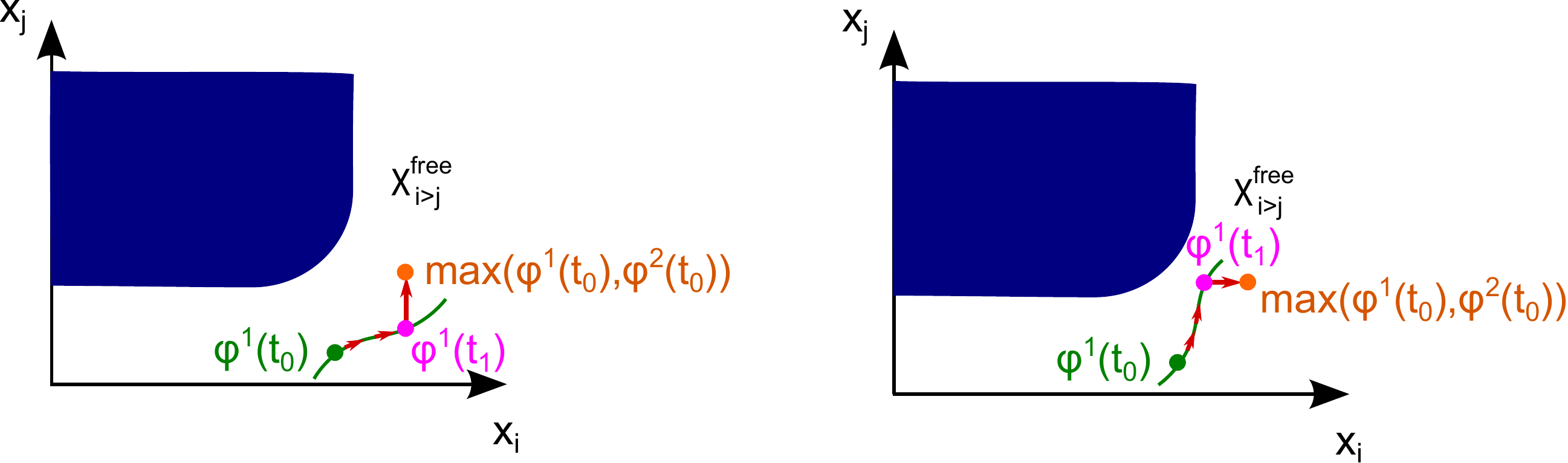}
\end{center}
\captionof{figure}{Illustration of the transformation of $\path^1$ into $\max(\path^1,\path^2)$. At any point of time $t_0$, $\max(\path^1(t_0),\path^2(t_0))$ necessarily lies on the north-east with respect to $\path^1(t_0)$. As a consequence, the two above cases may appear, and in each case, $\path^1(t_0)$ can be continuously transformed into $\max(\path^1(t_0),\path^2(t_0))$ without collision by following the red arrows.}
\label{fig:homotopy-example}
\end{minipage}
\vspace{0.2cm}

$H$ is continuous,
\begin{eqnarray}
H(0) &=& \min\left\{ \path^1(1), \path^1, \max\left\{ \path^1, \path^2 \right\} \right\} = \path^1\\
H(1) &=& \min\left\{ \path^2(1), \path^1(1), \max\left\{ \path^1, \path^2 \right\} \right\}
\end{eqnarray}
Hence, $H$  continuously transforms $\path^1$ into $\min\{ \path^2(1), \path^1(1), \max\{ \path^1, \path^2 \} \}$. Now, we prove that for all $\alpha\in[0,1]$, $H(\alpha)$ is a feasible path. We need to prove that for all $\alpha\in[0,1]$, (\ref{item:continuous}) $H(\alpha)$ is continuous, (\ref{item:start-configuration}) satisfies $(H(\alpha)(0)-\RR_+^n)\subset\chifree$ and (\ref{item:end-configuration}) $(H(\alpha)(1)+\RR_+^n)\subset\chifree$, (\ref{item:non-decreasing}) is non-decreasing, and (\ref{item:collision-free}) is collision-free. 
\begin{enumerate}[(a)]
\item $H(\alpha)$ is continuous as the result of the application of continuous operators $\min$, $\max$ and delay on continuous paths. \label{item:continuous}
\item $\path^1$ and $\path^2$ being feasible, we have $(\path^1(0)-\RR_+^n)\subset\chifree_G$ and $(\path^2(0)-\RR_+^n)\subset\chifree_G$. Hence, we also have $(\max(\path^1(0),\path^2(0))-\RR_+^n)\subset\chifree_G$ by Property~\ref{property:min-max}, which implies that $(H(\alpha)(0)-\RR_+^n)\subset\chifree_G\subset\chifree$. \label{item:start-configuration}
\item $\path^1$ and $\path^2$ being feasible, we have $(\path^1(1)+\RR_+^n)\subset\chifree_G$ and $(\path^2(1)+\RR_+^n)\subset\chifree_G$. Hence, we also have $(\max(\path^1(1),\path^2(1))+\RR_+^n)\subset\chifree_G$ by Property~\ref{property:min-max}, which implies that $(H(\alpha)(1)+\RR_+^n)\subset\chifree_G \subset \chifree$. \label{item:end-configuration}
\item $H(\alpha)$ is non-decreasing as the result of the application of non-decreasing operators $\min$ and $\max$ on non-decreasing paths.
\label{item:non-decreasing}
\item Take $(i,j)\in E(G)$ and $\alpha,t\in[0,1]$. We have $\path^1(t+\alpha)\in\chifree_{i\succ j}$ as $\path^1\in\phifreeG$ and we have also $\max\{\path^1(t),\path^2(t)\}\in\chifree_{i\succ j}$ as $\path^1,\path^2\in\phifreeG$ and using Property~\ref{property:min-max}. Moreover, $(\path^1(1)+\RR_+^n)\subset\chifree_{i\succ j}$ and $(\path^2(1)+\RR_+^n)\subset\chifree_{i\succ j}$ imply that $\alpha \path^2(1)+(1-\alpha) \path^1(1) \in\chifree_{i\succ j}$ (using Property~\ref{property:geometric-invariance}). By Property~\ref{property:min-max}, applying the $\min$ operator on three configurations in $\chifree_{i\succ j}$ returns a configuration in  $\chifree_{i \succ j}$. In conclusion, we have $H(\alpha)(t)\in\chifree_G\subset\chifree$
\label{item:collision-free}
\end{enumerate}
As a result, $\path^1$ is homotopic to $\min\{ \path^2(1), \path^1(1), \max\{ \path^1, \path^2 \} \}$. As $\path^1$ and $\path^2$ have symmetric roles, $\path^2$ is homotopic to $\min\{ \path^1(1), \path^2(1), \max\{ \path^2, \path^1 \} \}$, that is $\min\{ \path^2(1), \path^1(1), \max\{ \path^1, \path^2 \} \}$. Homotopy defining an equivalence relation, $\path^1$ and $\path^2$ are homotopic.
\end{proof}
\begin{proof}[Proof of bijective correspondence]
For each $h\in\homotopy$, take an arbitrary $\path^h\in h$. As the priority graph is invariant in homotopy classes, we have:
\begin{equation}
\Gamma(\phifree)=\left\{ \Gamma(\path^h): h\in\homotopy \right\}
\end{equation}
As the priority graph $\Gamma(\path^h)$ is distinct for each class $h\in\homotopy$, the application $\Psi:h\in\homotopy \mapsto \Gamma(\path^h)$ is a bijection from $\homotopy$ to $\Gamma(\phifree)$. In conclusion, $\homotopy$ is in bijection with $\Gamma(\phifree)$.
\end{proof}

We have proved that all feasible paths sharing the same priorities are continuously deformable into each other. A direct consequence of the above theorem is that there exists a finite number of homotopy classes of feasible paths. When assigning the priority between each pair of robots, there is indeed two possibilities. As there is at most $n(n-1)/2$ collision pairs $i,j$ satisfying $\chiobs_{ij}\neq\emptyset$, there is at most $2^{n(n-1)/2}$ priority graphs. There is thus a finite number of homotopy classes -- at most $2^{n(n-1)/2}$ -- and each homotopy class of feasible paths is uniquely encoded by a priority graph $G\in\graphs$. A natural question is: does any priority graph $G\in\graphs$ encode a (non-empty) homotopy class of feasible paths ? This mathematical question is equivalent to: given assigned priorities, is it possible for robots to go through the intersection, respecting the assigned priorities ?

\section{Feasible priority graphs}
\label{sec:feasibility}

Here, we propose to give a characterization of the set of feasible priority graphs, that we define as graphs $G\in\graphs$ such that there exists a feasible path whose induced priority graph is $G$:
\begin{definition}[Feasible priority graph]
A priority graph $G\in\graphs$ is feasible if and only if $\phifreeG\neq\emptyset$.
\end{definition}
Using the application $\Gamma$, the set of feasible priority graphs can be denoted as $\Gamma(\phifree)$. We start with some examples that show that the existence of a feasible path respecting given priorities is strongly related to the notion of deadlock, and we highlight the role of priority cycles in the formation of deadlocks.

\paragraph{Deadlock examples}

First of all, consider the example on the left drawing of Figure~\ref{fig-deadlock-configurations}. The assigned priorities are $1\succ 2$, $2 \succ 3$ and $3\succ 1$. Hence, the priority graph is cyclic. It is clear that respecting the assigned priorities leads to the deadlock configuration represented in Figure~\ref{fig-deadlock-configurations}. None of the robots can move without colliding. The right drawing of Figure~\ref{fig-deadlock-configurations} gives a similar example with a larger number of robots involved in the priority cycle.

\begin{figure}[p]
\begin{center}
\includegraphics[width=0.7\linewidth]{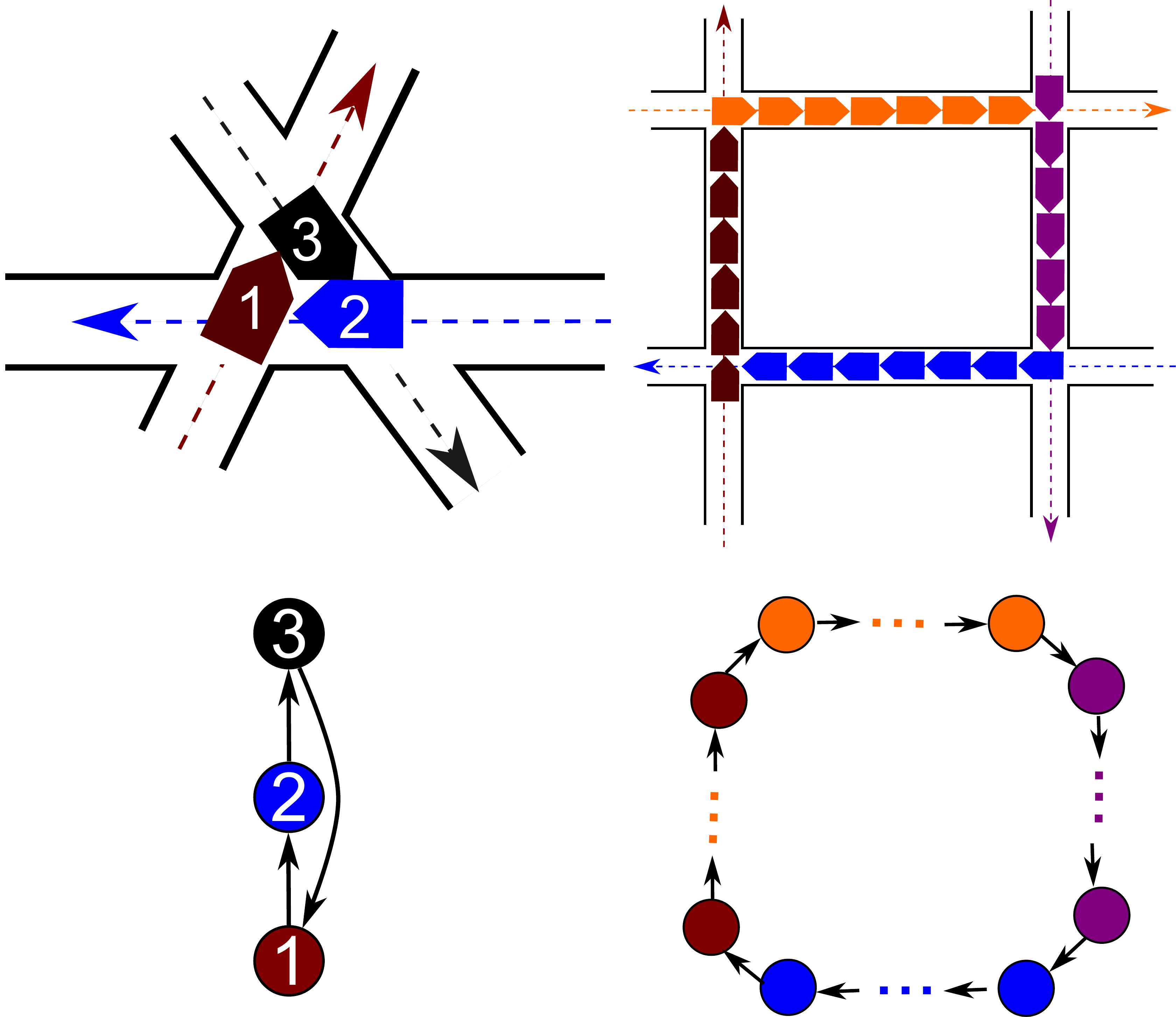}\hfill
\end{center}
\caption{Two examples of deadlock configurations. On the left side, 3 robots are involved in the deadlock. On the right side, the deadlock is caused by a priority cycle involving much more robots. In both examples, none of the robots can move without colliding. It is a deadlock configuration.}
\label{fig-deadlock-configurations}
\end{figure}
\begin{figure}[p]
\begin{center}
\includegraphics[width=0.7\linewidth]{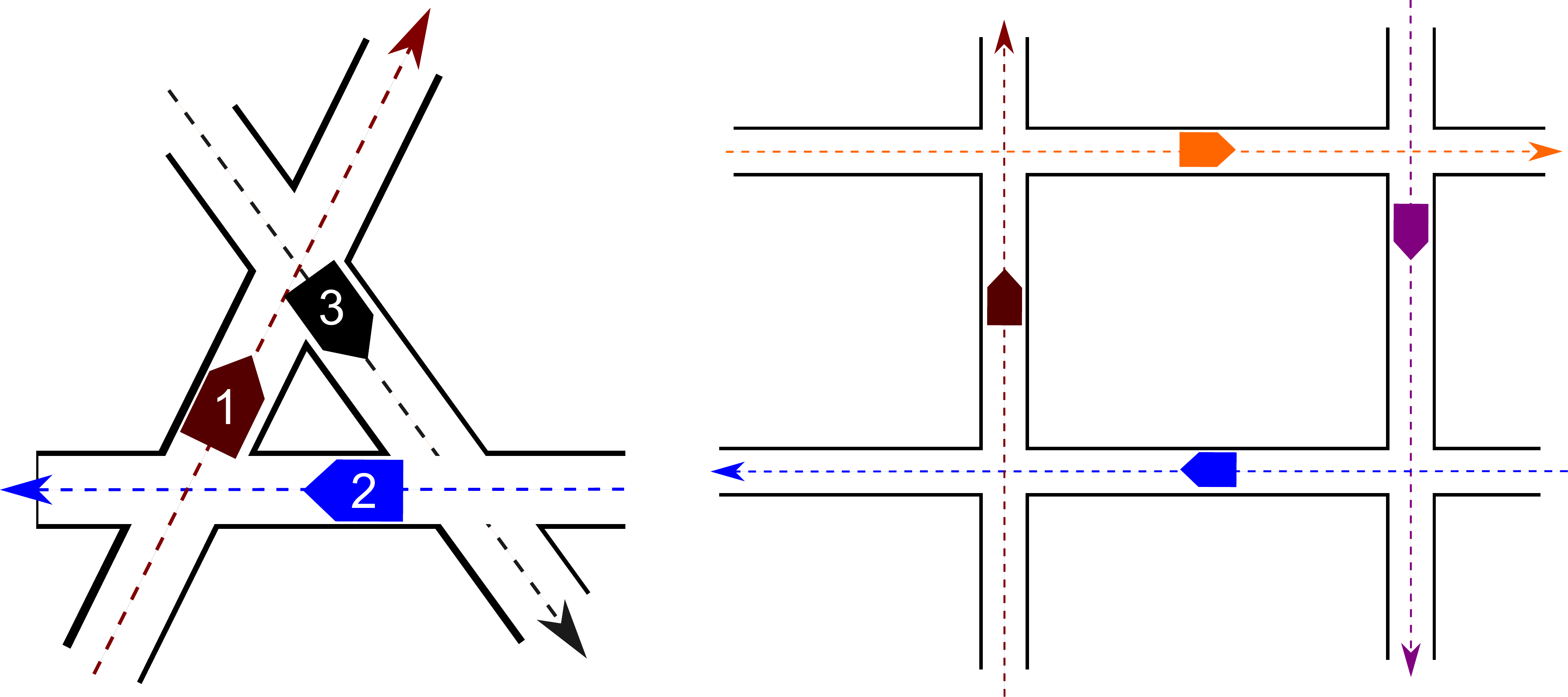}\hfill
\end{center}
\caption{Cyclic deadlock-free examples}
\label{fig-cyclic-deadlock-free-example}
\end{figure}

\paragraph{Cyclic deadlock-free examples}

According to the above example, it is clear that cycles in the priority graph have a strong role in the formation of deadlocks. Now, consider the example in the left drawing of Figure~\ref{fig-cyclic-deadlock-free-example}. Again, the assigned priorities, $1\succ 3$, $2 \succ 1$ and $3\succ 1$, are cyclic. However, it is clear that there exists a feasible path respecting the priorities and all robots will eventually go through the intersection. The right drawing of Figure~\ref{fig-cyclic-deadlock-free-example} provides a similar cyclic deadlock-free example involving four robots. 

The above examples justify the motivation to obtain a characterization of priority graphs such that there exists a feasible path respecting the given priorities. This characterization refines the role of cycles in the formation of deadlocks.

\paragraph{A singular deadlock-free priority graph}

Before providing such a characterization, we expose a last example where the priority graph $G$ is feasible in that there exists a feasible path whose priority graph is $G$; however, all feasible paths respecting these priorities are in contact with the boundary of $\chiobs_G$. Figure~\ref{fig:singular-example} depicts such an example. It is very likely that such priorities should not be considered as feasible in practice as they require a very precise control. Note also that it is a singularity caused by the (arbitrary) openness of the obstacle region. 

\begin{figure}[!htbp]
\begin{center}
\includegraphics[width=0.33\linewidth]{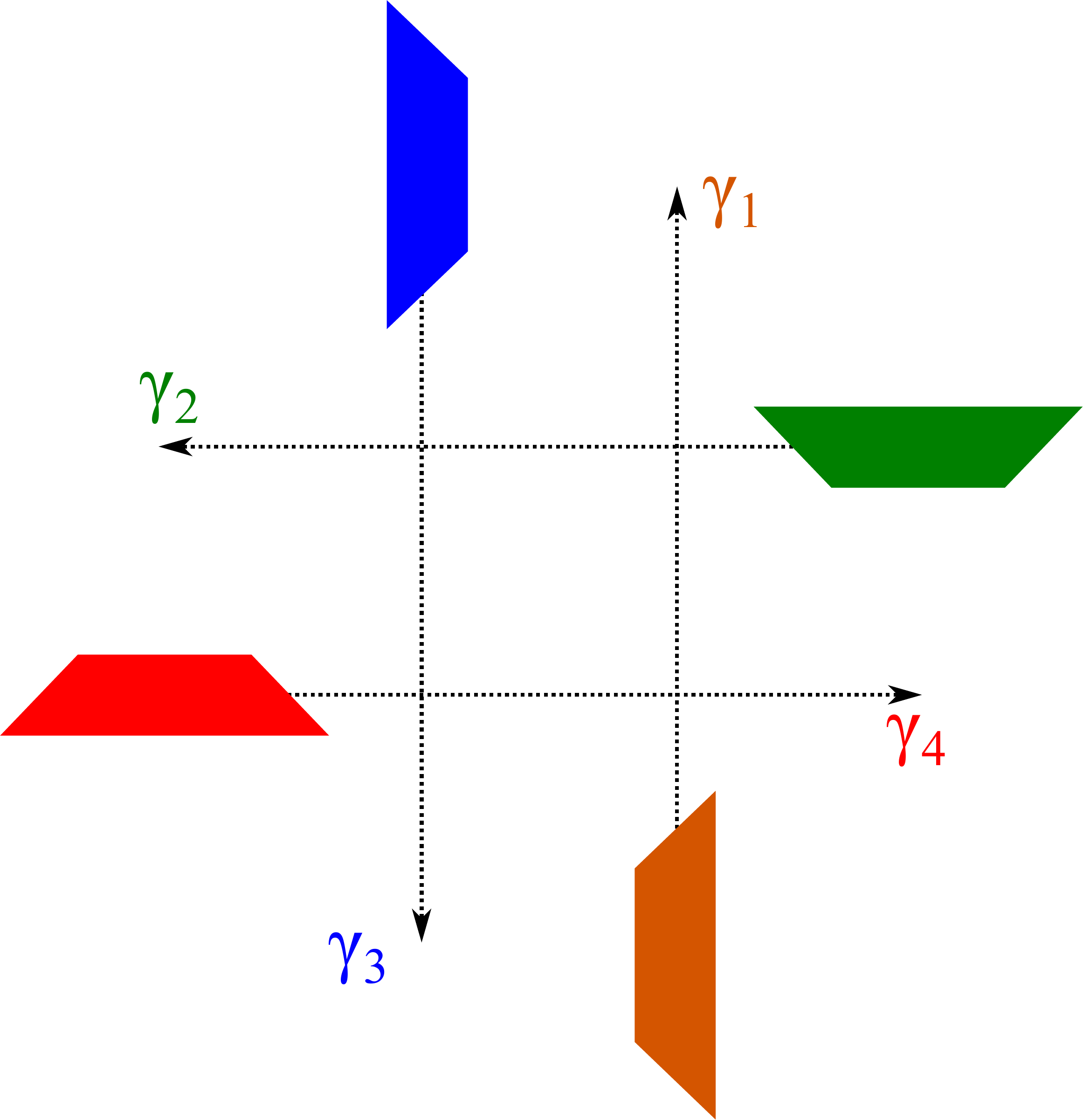}\hfill
\includegraphics[width=0.33\linewidth]{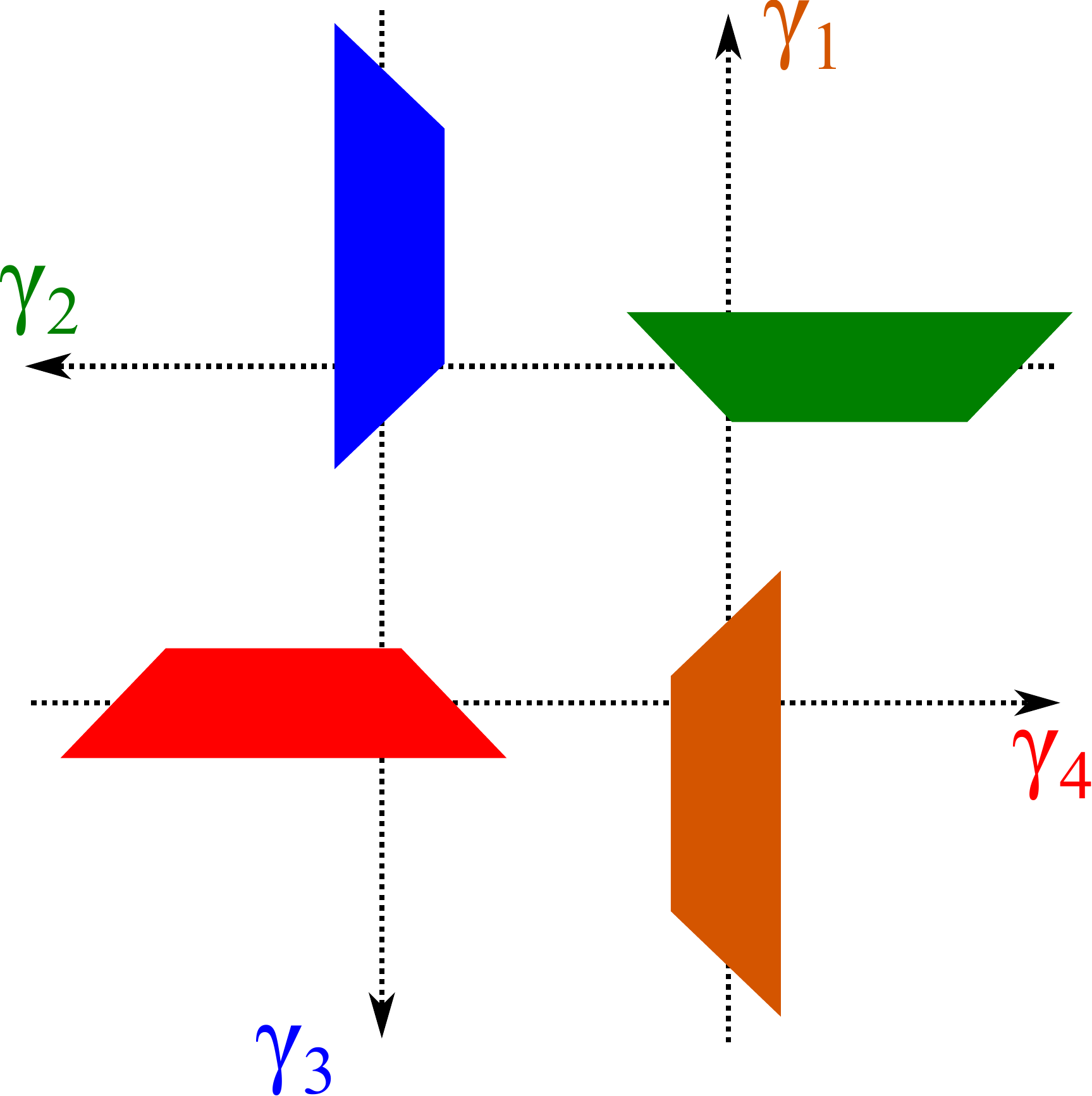}\hfill
\includegraphics[width=0.33\linewidth]{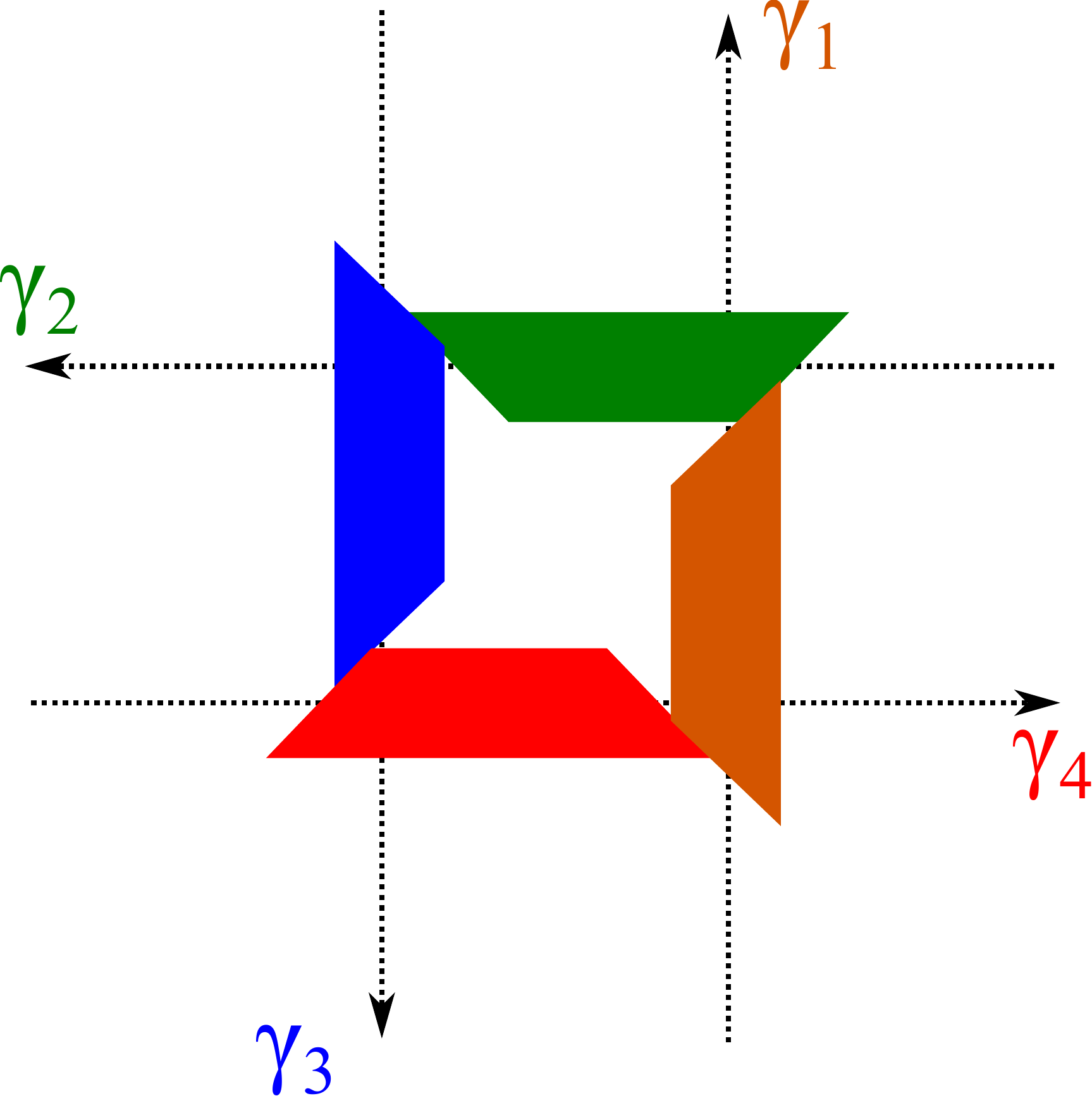}\hfill
\includegraphics[width=0.33\linewidth]{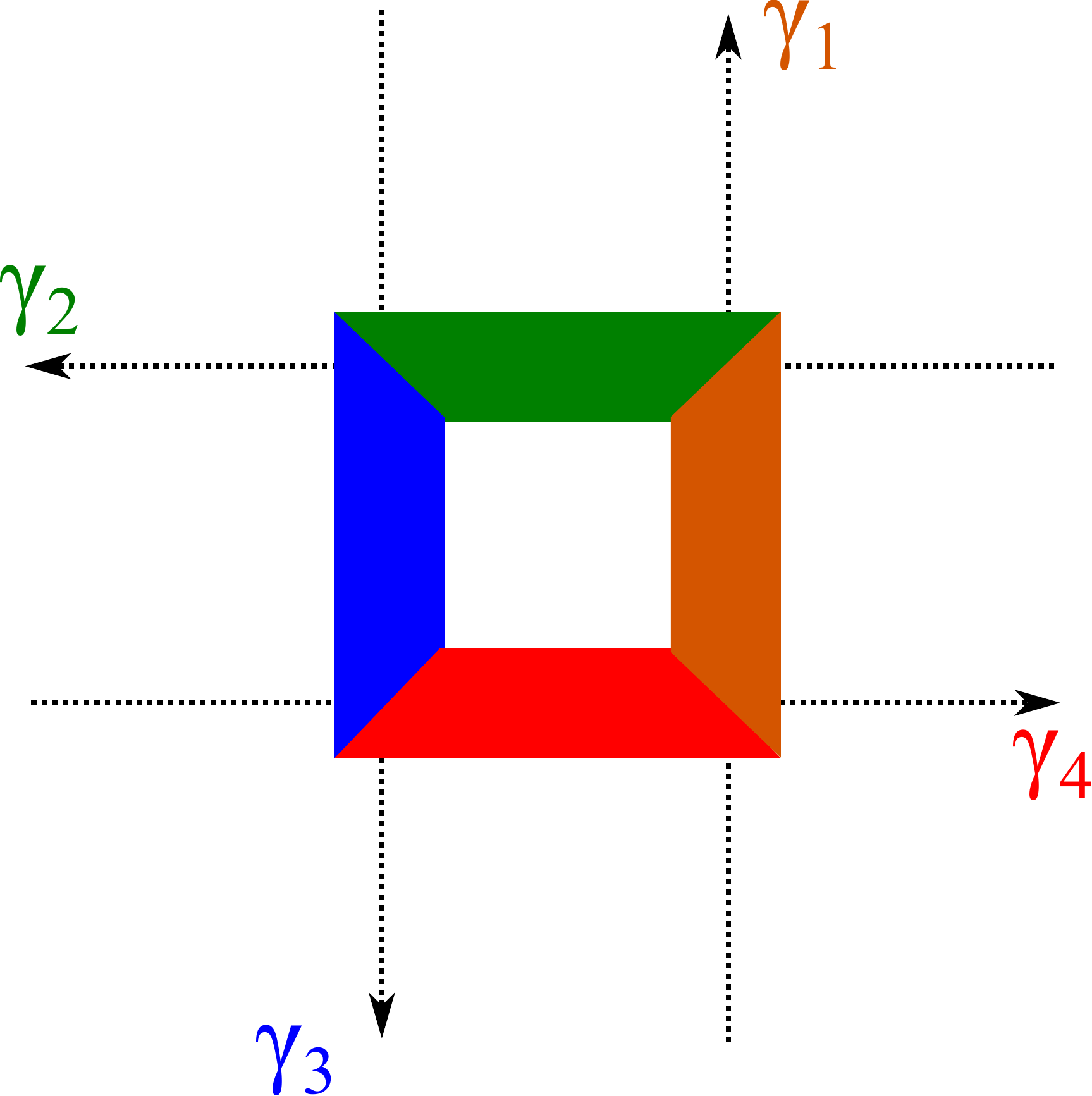}\hfill
\includegraphics[width=0.33\linewidth]{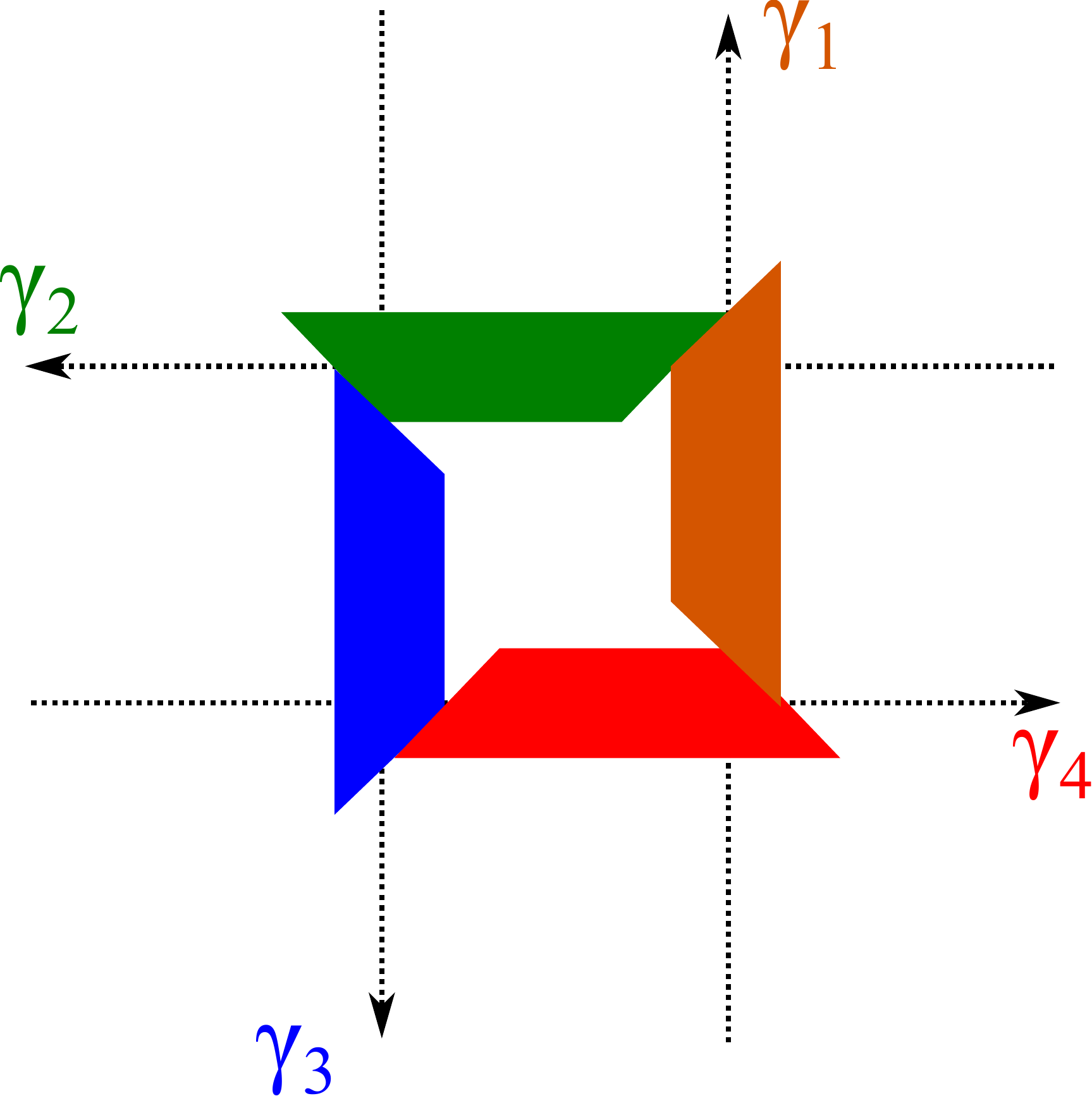}\hfill
\includegraphics[width=0.33\linewidth]{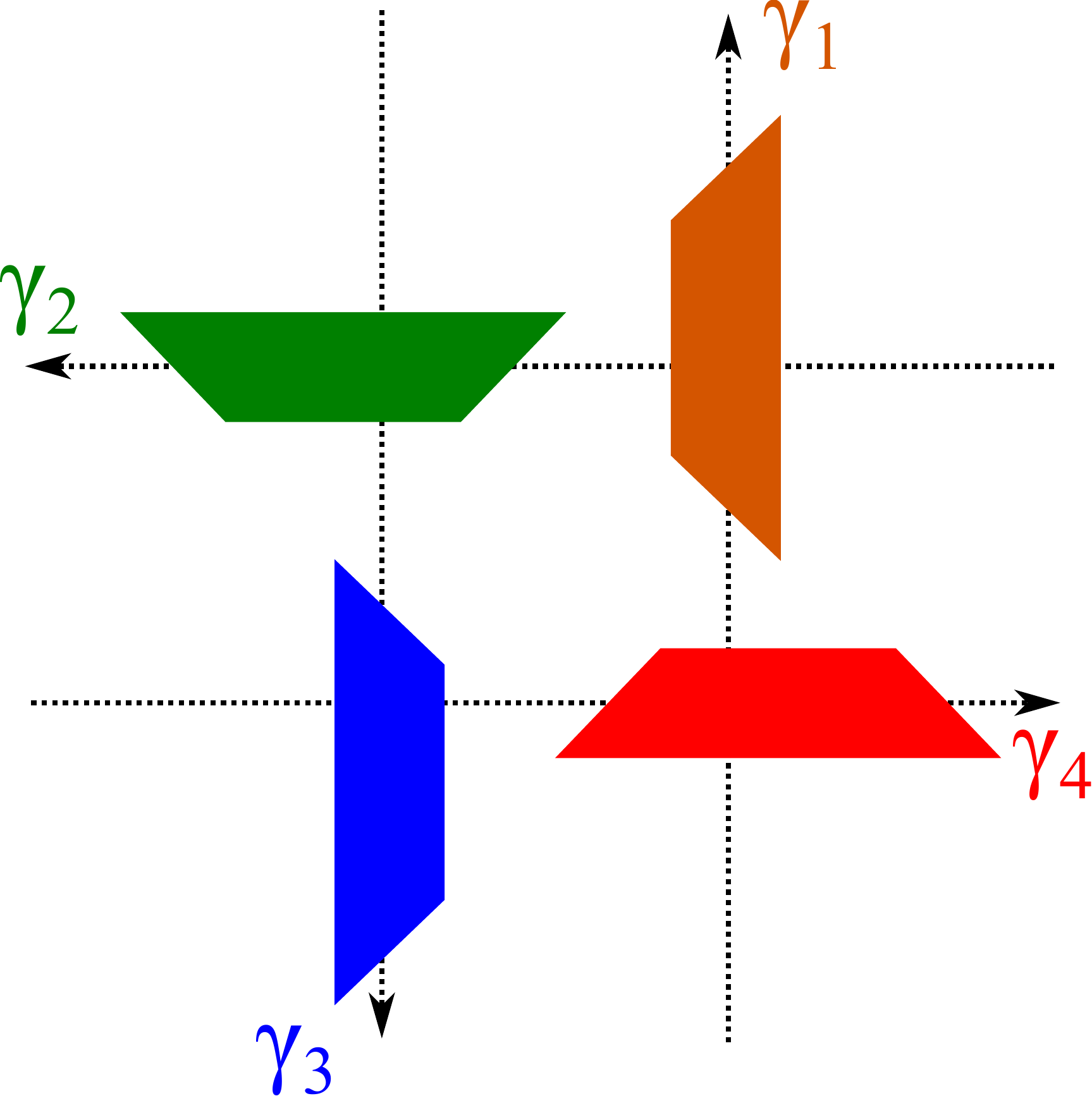}\hfill
\end{center}
\caption{In this (quite imaginative) scenario, robots maintaining constant velocity do not collide and slide on each other to go through the intersection. It is a very singular scenario. There exists a feasible path satisfying the cyclic priorities $2\succ 1$, $3 \succ 2$, $4 \succ 3$ and $1\succ 4$, but with absolutely no "safety margin" as robots need to slide on each other.}
\label{fig:singular-example}
\end{figure}

\subsection{Sufficient condition for priorities feasibility}

The above examples tend to indicate that no deadlock can occur under acyclic priorities. It is not surprising as deadlocks usually involve a "circular wait"~\cite{Coffman1971}. In many circumstances, imposing acylic priorities is not a problematic constraint and demonstrates some benefits including deadlock avoidance (see Part~\ref{part:priority-based-coordination}). That is why we start by providing a sufficient condition for priorities feasibility stating that acyclic priorities ensure deadlock avoidance. 

\begin{theorem}[Sufficient condition for priorities feasibility]
All acyclic priority graphs are feasible. 
\end{theorem}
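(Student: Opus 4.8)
The plan is to prove feasibility by producing a single witness. By definition $\phifreeG=\paths(\chifree_G)$ is the set of non-decreasing paths $\path$ with $\im{\path}\subset\chifree_G$ and $(\path(0)-\RR_+^n),(\path(1)+\RR_+^n)\subset\chifree_G$, so it suffices to construct one such path when $G$ is acyclic. The starting point is that an acyclic oriented graph admits a topological ordering: there is a bijection $\sigma:\{1,\dots,n\}\to\robots$ such that every edge $(i,j)\in E(G)$ (that is, $i\succ j$) satisfies $\sigma^{-1}(i)<\sigma^{-1}(j)$. Intuitively, $\sigma$ prescribes an order in which the robots may cross the intersection one at a time, each robot clearing its conflict zones before the next one enters.

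Concretely, I would let the robots traverse the intersection strictly sequentially in the order $\sigma(1),\dots,\sigma(n)$, from $\path(0)=\xobsmin$ to $\path(1)=\xobsmax$. Reserve the time slot $[(k-1)/n,\,k/n]$ for robot $\sigma(k)$: on that slot the coordinate $x_{\sigma(k)}$ increases linearly from $\xobsmin_{\sigma(k)}$ to $\xobsmax_{\sigma(k)}$ while all other coordinates are held fixed, so each robot sits at $\xobsmin$ before its slot and at $\xobsmax$ afterwards. This $\path$ is continuous (piecewise linear) and non-decreasing, since each coordinate is monotone on its slot and constant elsewhere.

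It then remains to check the three membership conditions. The endpoint conditions are immediate: every point of $\chiobs_{i\succ j}$ has $i$-coordinate $<\xobsmax_i$ and $j$-coordinate $>\xobsmin_j$ (because $\kappa_{ij}\subset(\xobsmin_i,\xobsmax_i)\times(\xobsmin_j,\xobsmax_j)$ and $\kappa_{i\succ j}=\kappa_{ij}+\RR_-\times\RR_+$), so $\xobsmin-\RR_+^n$ and $\xobsmax+\RR_+^n$ both avoid $\chiobs_G$. The heart of the argument is $\im{\path}\subset\chifree_G$, i.e.\ that $\path$ misses $\chiobs_{i\succ j}$ for each edge $(i,j)\in E(G)$. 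Fix such an edge; since $i$ precedes $j$ in $\sigma$, at every time $t$ either robot $i$ has already finished ($x_i=\xobsmax_i$) or robot $j$ has not yet started ($x_j=\xobsmin_j$). As membership in $\chiobs_{i\succ j}$ requires $x_i<\xobsmax_i$ and $x_j>\xobsmin_j$ simultaneously, the path never enters $\chiobs_{i\succ j}$; unioning over edges yields $\im{\path}\cap\chiobs_G=\emptyset$. Collision-freeness comes for free, since $\chiobs\subset\chiobs_G$ gives $\chifree_G\subset\chifree$.

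The only delicate point is this avoidance step, and it hinges entirely on locating $\chiobs_{i\succ j}$ inside the half-spaces $\{x_i<\xobsmax_i\}$ and $\{x_j>\xobsmin_j\}$, which follows from the boundedness of the cross-section $\kappa_{ij}$ together with the defining shift by $\RR_-\times\RR_+$. Everything else---continuity, monotonicity, and the observation that a lone witnessing path already certifies $\phifreeG\neq\emptyset$---is routine, so the topological-order construction settles the theorem.
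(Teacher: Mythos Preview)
Your proof is correct and follows essentially the same approach as the paper: both use a topological ordering of the acyclic priority graph to let robots traverse the intersection one at a time from $\xobsmin$ to $\xobsmax$, each in its own time slot $[(k-1)/n,k/n]$. You supply more detail than the paper (which simply asserts the constructed path ``is feasible and takes values in $\chifree_G$''), in particular the explicit verification that $\chiobs_{i\succ j}\subset\{x_i<\xobsmax_i\}\cap\{x_j>\xobsmin_j\}$, but the underlying idea is identical.
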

The proof of the above theorem relies on the fact that under acyclic priorities, a simple feasible path respecting the acyclic priorities can be constructed by letting robots go through the intersection one by one. 
\begin{proof}
Take an acyclic priority graph $G\in\graphs$. To prove that $G$ is feasible, we are going to exhibit a particular feasible path whose induced priority graph is $G$. As $G$ is acylic, it admits a topological ordering of its nodes $\robots$. Consider a relabeling of robots along this topological ordering, i.e., robot $1$ is the maximal element of this topological ordering, ... robot $i$ is the $i$th element of the topological ordering, ... and robot $n$ is the minimal element of the topological ordering. Consider the path $\path$ constructed as follows. $\path(0):=\xobsmin$ and for all $i\in\{1\cdots n\}$, within time interval $[(i-1)/n, i/n]$, robot $i$ moves forward from $\xobsmin_i$ to $\xobsmax_i$ (for example $\path_i$ is linear in that time interval and takes values $[\xobsmin_i,\xobsmax_i]$) while other robots $j\neq i$ do not move ($\path_j$ constant in that time interval). This path is feasible and takes values in $\chifree_G$.
\end{proof}

\paragraph{Note to the reader} The two following subsections intend to treat the case of cyclic yet feasible priorities. They are quite technical and the reader having no particular interest in this case can go directly to Subsection~\ref{subsec:absence-deadlocks}. It will not affect the understanding of the rest of the thesis.

\subsection{Safety margin}

As shown in the examples presented previously, only considering feasibility as a binary question is quite insufficient in practice as some priority graphs are feasible but require robots to slide on each other, i.e., to travel through very risky configurations. That is why we propose a notion of feasibility endowed with the notion of safety margin. 

We say that the priority graph is feasible with a (safety) margin $r\geq 0$ if there exists a feasible path $\path\in\phifreeG$ keeping a distance $r$ from the obstacle region $\chiobs_G$ (in infinity norm). Given a path $\path\in\paths(\chi)$, $d(\path,\chiobs_G)$ is defined as follows:
\begin{eqnarray}
d(\path,\chiobs_G)&:=&\sup\left\{ r\geq 0: \forall t\in[0,1], \forall x\in\chiobs_G, \Vert  \path(t)-x\Vert_\infty \geq r  \right\} 
\end{eqnarray}

When $d(\path,\chiobs_G)\geq 0$, we say that $\path$ is safe with regards to $\chiobs_G$ with a margin $d(\path,\chiobs_G)$. The use of the distance of infinity norm makes sense since it means that a path is safe with regards to $\chiobs_G$ with a margin $r\geq 0$ if robots traveling along this path with an individual precision of $r$ will not collide (with regards to $\chiobs_G$). We have indeed the following equivalence:
\begin{equation}
\left[\forall x\in\chiobs_G,~\Vert\path(t)-x\Vert_\infty \geq r \right] \Leftrightarrow \left[\left(\path(t)+[-r,r]^n\right) \subset \chifree_G\right]
\end{equation}

It is direct that the set of paths $\path\in\paths(\chifree_G)$ satisfying $d(\path,\chiobs_G)\geq r$ is precisely $\paths(\chifree_G \ominus [-r,r]^n)$ where $\ominus$ denotes the erosion operator, i.e.,
\begin{equation}
\chifree_G \ominus [-r,r]^n := \left\{ x\in\chifree_G : x+[-r,r]^n \subset \chifree_G \right\}
\end{equation}
$\chifree_G\ominus[-r,r]^n$ is the erosion of $\chifree_G$ with the structuring element $[-r,r]^n$~\cite{Serra1983} (see Figure~\ref{fig:obstacle-region-dilatation-erosion}).
\begin{figure}[!htbp]
\begin{center}
\includegraphics[width=1.0\linewidth]{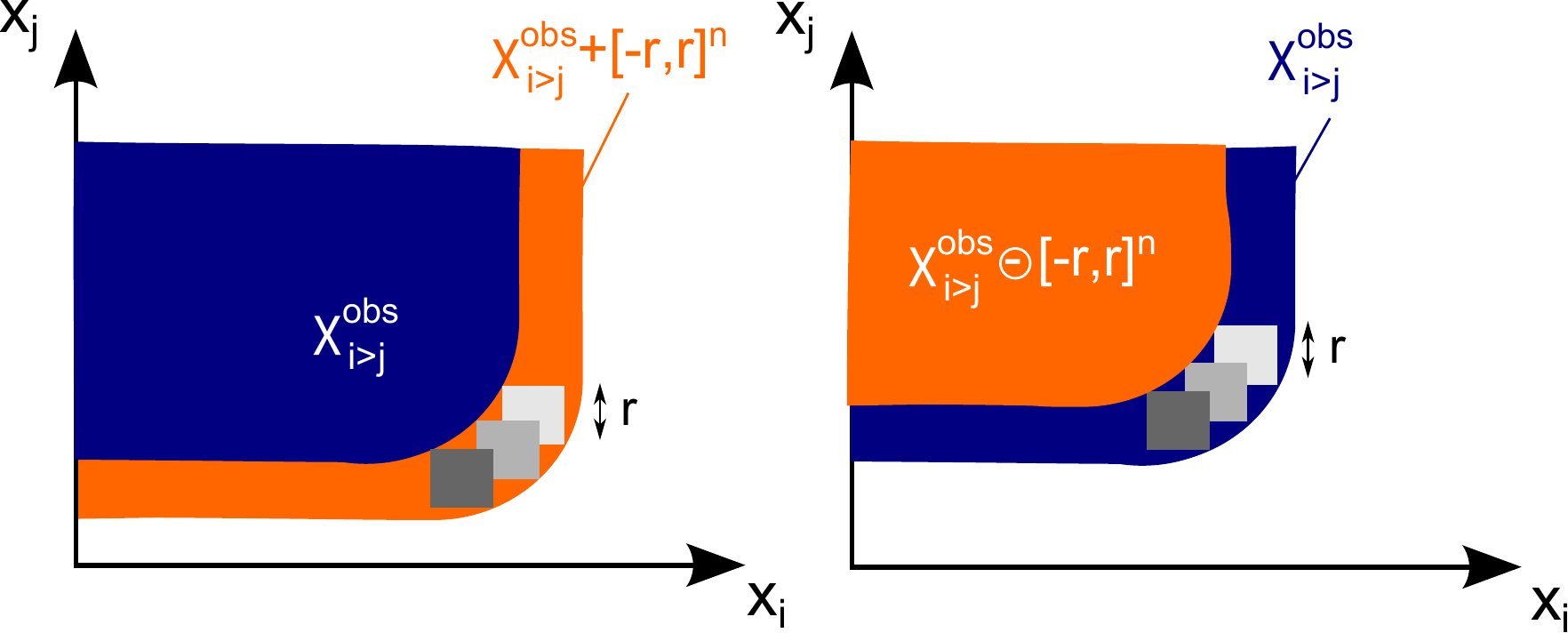}\hfill
\end{center}
\caption{The left drawing represents the dilatation of $\chiobs_{i\succ j}$ with the structuring element $[-r,r]^n$. The right drawing show the erosion of $\chiobs_{i\succ j}$ with the structuring element $[-r,r]^n$.}
\label{fig:obstacle-region-dilatation-erosion}
\end{figure}
The form of the structuring element is due to the use of the infinity norm (it is the closed ball of radius $r$ with regards to the infinity norm). $\paths(\chifree_G\ominus[-r,r]^n)$ denotes the set of feasible paths whose priority graph is $G$ and keeping a distance $r$ from $\chiobs_G$. It is natural to define a safety  margin associated to the priority graph $G$ as follows:
\begin{equation}
\rho_G:=\begin{cases}
~~\max\left\{r\geq 0:\paths(\chifree_G\ominus[-r,r]^n)\neq\emptyset\right\}&\text{ if } \paths(\chifree_G)\neq\emptyset\\
-\min\left\{r> 0:\paths(\chifree_G+[-r,r]^n)\neq\emptyset\right\} & \text{ else.}
\end{cases}
\end{equation}
\begin{itemize}
\item When $\paths(\chifree_G)\neq\emptyset$, $\rho_G\in\RR_+\cup\{+\infty\}$ denotes the maximal distance between $\chiobs_G$ and feasible paths whose priority graph is $G$.
\item When $\paths(\chifree_G)=\emptyset$, $\rho_G<0$. The value taken by $\rho_G<0$ can be interpreted as an indicator of how far the priority graph $G$ is from being feasible.
\end{itemize}
The use of the maximal (resp. minimal) element is justified by Lemma~\ref{lemma:existence-paths-with-maximal-margin} proved in Appendix~\ref{app:existence-paths-with-maximal-margin} that stipulates that the upper (resp. lower) bound is attained, i.e., there exists a path with maximal margin. We refer to $\rho_G$ as the safety margin of $G$. This definition is coherent as a feasible priority graph has necessarily a non-negative safety margin since $\paths(\chifree_G) \neq \emptyset$.

\subsection{Characterization of feasible priorities}

Before providing a formal characterization of feasible priority graphs, a geometric interpretation is provided about why in certain circumstance cyclic priorities are not feasible.
\begin{figure}[!htbp]
\begin{center}
\includegraphics[width=0.5\linewidth]{collision-region-3D}\hfill
\includegraphics[width=0.5\linewidth]{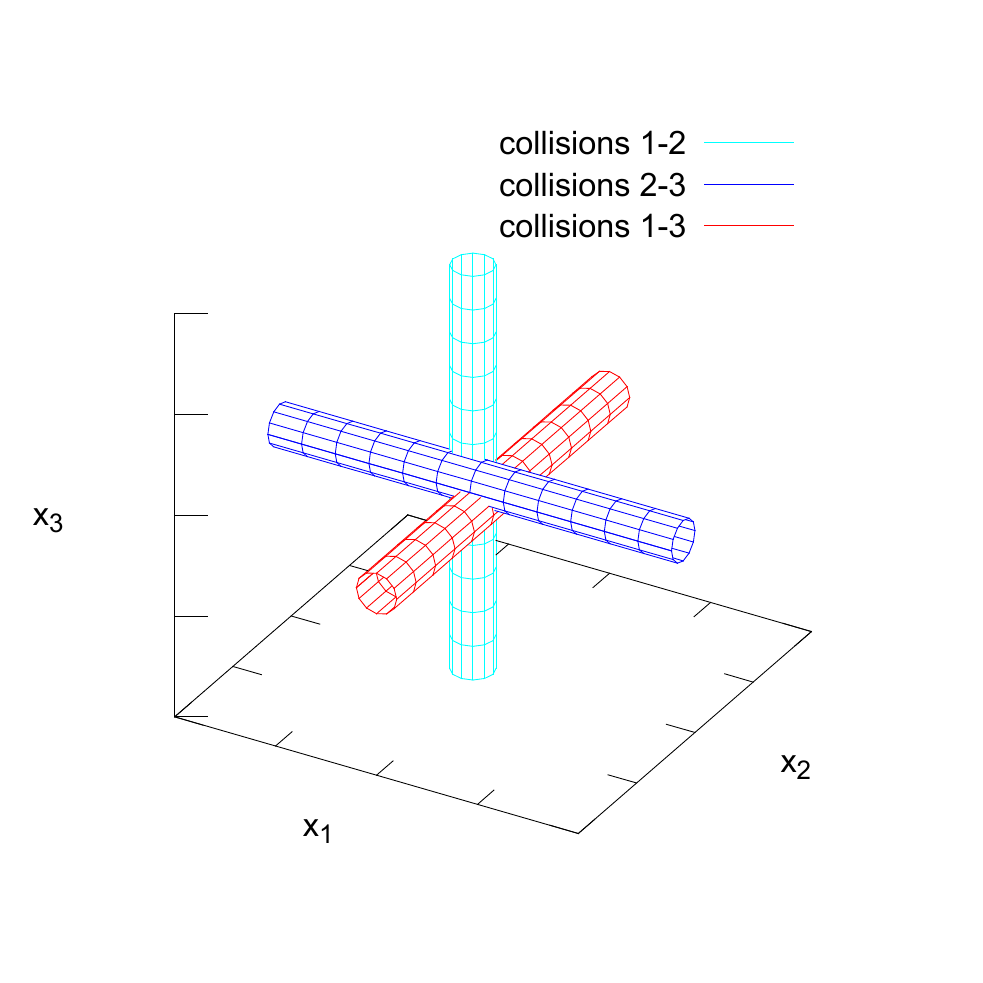}\hfill
\end{center}
\caption{The left drawing depicts the obstacle cylinders in the coordination space for the three-robot system of Figure~\ref{fig-cyclic-deadlock-free-example} in a deadlock-free configuration. The right drawing depicts the obstacle cylinders in the coordination space for the three-robot system of Figure~\ref{fig-deadlock-configurations} stuck in a deadlock. }
\label{fig:interpretation-priorities-feasibility}
\end{figure}
In Figure~\ref{fig:interpretation-priorities-feasibility}, the obstacle cylinders in the coordination space are depicted for both the cyclic deadlock-free example of Figure~\ref{fig-cyclic-deadlock-free-example} and the deadlock example of Figure~\ref{fig-deadlock-configurations}. The main difference is that in the deadlock case, cylinders intersect with each other. In contrast, in the deadlock-free case, cylinders do not intersect each other, there is a certain distance between each cylinder. Thanks to this distance between cylinders, the multi robot system can decide, independently for each collision cylinder, on which side to travel. On the contrary, if there is not a sufficient distance between cylinders, theses decisions are not independent. It is very clear on the right drawing of Figure~\ref{fig:interpretation-priorities-feasibility} that if a feasible path lies above the obstacle cylinder $\chiobs_{23}$ (the blue one) and below $\chiobs_{13}$ (the red one), then it also must lie on the right relative to the obstacle cylinder $\chiobs_{12}$ (the cyan one). In terms of priorities, it means that if $3\succ 2$ and $1 \succ 3$, then we must have $1 \succ 2$, i.e., the cycle $2 \succ 1 \succ 3 \succ 2$ is forbidden.

The definition of the completed obstacle regions $\chiobs_{i\succ j}$ enables to provide a very synthetic characterization of feasible priority graphs. It confirms and refines the role of priority cycles in the formation of deadlocks. In particular, Condition~\eqref{eq:condition-thm-feasibility} gives a necessary and sufficient condition for a priority cycle to be feasible. We let $\cycles(G)$ denote the elementary cycles of a priority graph $G\in\graphs$.

\begin{theorem}[Characterization of feasible priority graphs]
\label{thm:feasible}
A priority graph $G\in\graphs$ is feasible if and only if for all elementary cycles $\C$ in $G$, we have:
\begin{equation}
\bigcap_{(i,j)\in E(\C)}\chiobs_{i\succ j}=\emptyset
\label{eq:condition-thm-feasibility}
\end{equation}
If Condition~\eqref{eq:condition-thm-feasibility} holds, the safety margin is given by:
\begin{equation}
\rho_G=\max\left\{ r\geq 0: \forall \C\in\cycles(G), \bigcap_{(i,j)\in E(\C)} \left( \chiobs_{i\succ j} + [-r,r]^n\right)=\emptyset \right\}
\end{equation}
\end{theorem}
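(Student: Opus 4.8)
The plan is to prove the equivalence by a direct geometric argument for necessity, by an explicit construction of a monotone path for sufficiency, and then to read off the margin formula by observing that fattening the cylinders preserves their structure. For necessity I would argue by contraposition: suppose some elementary cycle $\C$, say $i_1\succ i_2\succ\cdots\succ i_m\succ i_1$, satisfies $\bigcap_{(i,j)\in E(\C)}\chiobs_{i\succ j}\neq\emptyset$, pick $z$ in this intersection, and show no feasible path lies in $\chifree_G$. Reading the two edges of $\C$ incident to each cycle-robot $i_k$ through the cross-section description $\kappa_{i\succ j}=\kappa_{ij}+\RR_-\times\RR_+$ yields $\xobsmin_{i_k}<z_{i_k}<\xobsmax_{i_k}$. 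Given any $\path\in\phifreeG$, the cone conditions $(\path(0)-\RR_+^n)\subset\chifree_G$ and $(\path(1)+\RR_+^n)\subset\chifree_G$ together with Property~\ref{property:geometric-invariance} then force $\path_{i_k}(0)<z_{i_k}<\path_{i_k}(1)$ for every cycle-robot. Defining $\tau_k:=\inf\{t:\path_{i_k}(t)\ge z_{i_k}\}$, I would show $\tau_k<\tau_{k+1}$ for each edge $(i_k,i_{k+1})$: at time $\tau_{k+1}$ one has $\path_{i_{k+1}}(\tau_{k+1})=z_{i_{k+1}}$, so if $\path_{i_k}(\tau_{k+1})\le z_{i_k}$ then, since $z\in\chiobs_{i_k\succ i_{k+1}}$ and $\kappa_{i_k\succ i_{k+1}}$ is invariant under lowering its first coordinate, the point $\path(\tau_{k+1})$ would land in the open set $\chiobs_{i_k\succ i_{k+1}}\subset\chiobs_G$, a contradiction; hence $\path_{i_k}(\tau_{k+1})>z_{i_k}$, whence $\tau_k<\tau_{k+1}$ by monotonicity. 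Chaining these strict inequalities around $\C$ gives $\tau_{i_1}<\cdots<\tau_{i_1}$, which is absurd.

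For sufficiency I would construct a feasible path under the assumption that every elementary cycle has empty intersection. The natural candidate is the maximal non-decreasing ``sweeping'' motion: starting from $\path(0)=\xobsmin\in\chifree_G$, let every coordinate advance at unit rate, projecting the velocity so as to remain in the closed set $\chifree_G$ (robot $i$ is held back exactly when advancing would enter some $\chiobs_{k\succ i}$ with $k\succ i$, which is precisely the priority semantics). The core claim is that this motion is never permanently stuck, hence carries each coordinate past $\xobsmax_i$, so that reparametrisation yields an element of $\phifreeG$. To establish it I would suppose the trajectory halts at a configuration $x^\star\in\chifree_G$ admitting no nonzero non-decreasing feasible direction. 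Every robot that cannot advance is blocked, against the boundary of some $\chiobs_{k\succ i}$, by a strictly higher-priority robot $k$ that is itself blocked; following this finite assignment produces an elementary cycle $\C$ of $G$ with $x^\star\in\bigcap_{(i,j)\in E(\C)}\overline{\chiobs_{i\succ j}}$.

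The delicate point, and what I expect to be the main obstacle, is that this only places $x^\star$ in the \emph{closed} cylinders, whereas the hypothesis concerns the \emph{open} ones: the singular scenario of Figure~\ref{fig:singular-example} exhibits a graph whose closed cyclic intersection is nonempty yet whose open one is empty, for which a boundary-hugging feasible path nevertheless exists. The resolution is to show that \emph{genuine} stuckness (no admissible non-decreasing direction at all, including sliding along $\partial\chiobs_G$) forces the open intersection $\bigcap_{(i,j)\in E(\C)}\chiobs_{i\succ j}$ to be nonempty. Here I would analyse the admissible cone at $x^\star$ using Property~\ref{property:geometric-invariance} and the covering Properties~\ref{property-union-fixed-priority-cylinder-1} and~\ref{property-union-fixed-priority-cylinder-2}, together with the stability of $\chifree_G$ under $\min$ and $\max$ (an immediate consequence of Property~\ref{property:min-max}), to turn the blocking cycle into a common interior point. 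A more robust alternative is to prove sufficiency first for strictly fattened cylinders, where stuckness does yield an interior point, and then pass to the limit by Arzel\`a--Ascoli on the bounded monotone approximants, extracting a feasible path for the original data.

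Finally, the margin formula follows from the feasibility equivalence itself with little extra work. For $r\ge0$ the dilation identity $\chiobs_G+[-r,r]^n=\bigcup_{(i,j)\in E(G)}\bigl(\chiobs_{i\succ j}+[-r,r]^n\bigr)$ exhibits each summand as the completed cylinder over the fattened cross-section $\kappa_{ij}+[-r,r]^2$, which is again open, bounded and convex; the equivalence therefore applies verbatim to the eroded free space and gives $\paths(\chifree_G\ominus[-r,r]^n)\neq\emptyset$ if and only if $\bigcap_{(i,j)\in E(\C)}\bigl(\chiobs_{i\succ j}+[-r,r]^n\bigr)=\emptyset$ for every $\C\in\cycles(G)$. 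Taking the supremum over admissible $r$, attained by Lemma~\ref{lemma:existence-paths-with-maximal-margin}, then identifies $\rho_G$ with the stated quantity.
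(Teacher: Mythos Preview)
Your approach aligns closely with the paper's. For sufficiency and for the margin formula, you and the paper follow the same ``fatten and pass to the limit'' strategy: establish the result for strictly dilated obstacles (where the local priority graph is acyclic everywhere, so a greedy forward sweep works), then obtain the borderline case by a compactness argument. The paper's limit step is carried out via Lemma~\ref{lemma:existence-paths-with-maximal-margin}, which uses Tychonoff compactness of $\prod_{t\in[0,1]}\prod_i[\xobsmin_i,\xobsmax_i]$ under the product (pointwise-convergence) topology together with Cantor's intersection theorem on the nested closed sets $\paths(\chifree_G+[-r,r]^n)\cap\phibounded(\chi)$. Your invocation of Arzel\`a--Ascoli is not quite the right tool here: the reparametrised sweeping paths are $T_\epsilon$-Lipschitz with $T_\epsilon\to\infty$ as $\epsilon\to0$ (the paper's explicit bound is $T_\epsilon\le\lceil\sum_i(x_i^{\mathrm{goal}}-\xobsmin_i)/\epsilon\rceil$), so equicontinuity of the family is not guaranteed. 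Helly's selection theorem for bounded monotone maps, or the paper's Tychonoff route, closes this gap cleanly; the remainder of your argument then goes through.

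Your necessity argument is genuinely different from the paper's and is correct. The paper constructs the orthant $\orthant=\{x:\forall j\in V(\C),\,x_j\le z_j\}$, shows that its upper faces $\bigcup_{j\in V(\C)}K_j$ lie in $\chiobs_G$, and observes that any feasible path must start inside $\orthant$ and end outside, hence cross $\partial\orthant\subset\chiobs_G$. Your hitting-time argument is more dynamical: defining $\tau_k$ as the first time the $i_k$-th coordinate reaches $z_{i_k}$, you extract the strict cyclic chain $\tau_1<\tau_2<\cdots<\tau_m<\tau_1$ from the priority constraint along each edge of $\C$. Both arguments ultimately rest on the same invariance (Property~\ref{property:geometric-invariance}); the orthant picture is more geometric and makes the ``trapping region'' explicit, while your ordering argument is arguably more transparent and avoids the topological lemma on boundary crossings.
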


A complete proof of the above theorem is provided in Appendix~\ref{app:thm-feasible}. In the following, we prove that~\eqref{eq:condition-thm-feasibility} is a necessary condition for priority graph feasibility, and we also provide a slightly stronger sufficient condition for priority graph feasibility.
\renewcommand{\Cfree}{\chifree}
\renewcommand{\Cobs}{\chiobs}
\begin{proof}[Proof of the necessary condition]
We will prove the necessary condition by contraposition. Take $G\in\graphs$ and assume that there is an elementary cycle $\C$ of $G$ such that the subset $\bigcap_{(i,j)\in E(\C)} \Cobs_{i\succ j}$ is not empty, and let $x^1$ be an element of this set. We have to prove that $\paths(\Cfree_G)=\emptyset$. To this end, we are going to build a hyper-orthant in the coordination space capturing any path of $\paths(\Cfree_G)$.  For each $j\in V(\C)$, consider $K_j := \{x\in\chi: x_j=x_j^1\text{ and } x_i \leq x_i^1\text{ for } i \neq j\}$. By Property~\ref{property:geometric-invariance} we have:
\begin{equation}
x^1\in \Cobs_{i\succ j}
\Longrightarrow
K_j \subset \Cobs_{i\succ j}
\end{equation}
As $\forall (i,j)\in E(\C), x^1\in\Cobs_{i\succ j}$, applying the latter result yields: 
\begin{equation}
\forall (i,j)\in E(\C), K_j \subset \Cobs_{i\succ j}
\end{equation}
As a consequence, $\bigcup_{(i,j)\in E(\C)} K_j \subset \Cobs_G$, and as $\C$ is a cycle, every vertex $j\in V(\C)$ is involved in some edge $(i,j)\in E(\C)$, so that we have: 
\begin{equation}
\bigcup_{j \in V(\C)} K_j \subset \Cobs_G \label{eq:faces-parallelepiped-included-in-chiobs-corpus}
\end{equation}
In the coordination space restricted to the coordinates which appear in $\C$, $\bigcup_{j \in V(\C)} K_j$ is the set of upper faces (that is the boundary) of the orthant (depicted in Figure~\ref{fig-closed-hyper-orthant}):
\begin{equation}
\orthant:=\{x\in\chi: \forall j\in V(\C), x_j\leq x_j^1\}
\end{equation}

And we have by Equation~\eqref{eq:faces-parallelepiped-included-in-chiobs-corpus}:
\begin{equation}
\partial \orthant = \bigcup_{j \in V(\C)} K_j \subset \Cobs_G
\end{equation}

Now, we will prove that $\paths(\Cfree_G)$ is empty by contradiction. Assume it is not empty and take an element $\path$ of it. We must have $\left(\path(0)-\RR_+^n\right) \subset \Cfree_G $, which implies that: 
\begin{equation}
\path(0) \in \orthant
\end{equation}
and $\left(\path(1)+\RR_+^n\right) \subset \Cfree_G$, which implies:
\begin{equation}
\path(1) \notin \orthant
\end{equation}
Since $\path$ is continuous, by Lemma~\ref{lemma:two-subsets-frontier} (see Appendix~\ref{app:topology-properties}), there exists $t\in[0,1]$ such that $\path(t)\in\partial \orthant= \bigcup_{j \in V(\C)} K_j \subset\Cobs_G$ and $\path$ intersects $\Cobs_G$. It is is contradiction with $\path\in\paths(\Cfree_G)$.

\begin{minipage}{\linewidth}
\begin{center}
\includegraphics[width=0.7\linewidth]{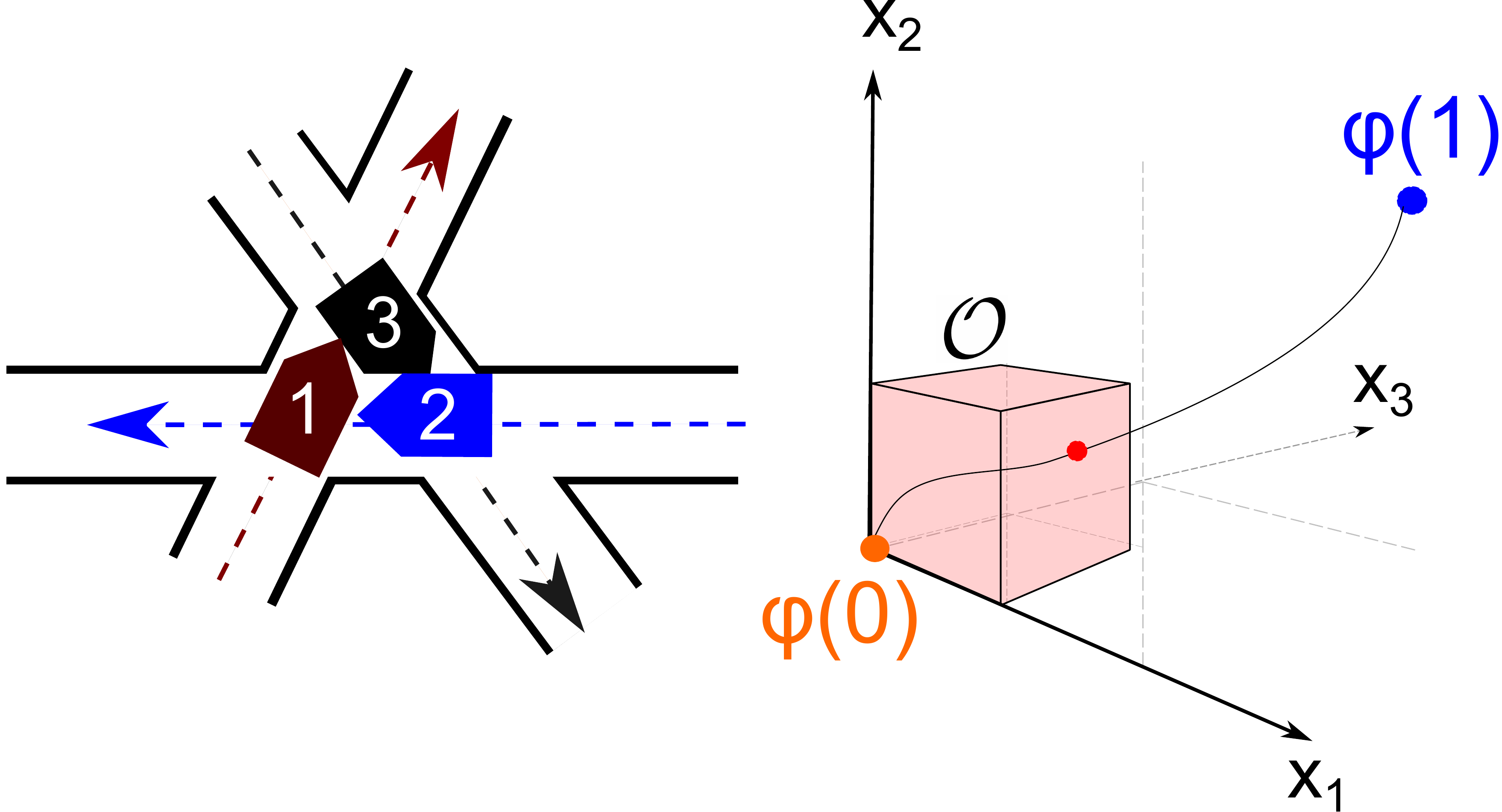}\hfill
\end{center}
\captionof{figure}{The hyper-orthant used in the proof of the necessary condition of Theorem~\ref{thm:feasible} (in a three-dimensional scenario). }
\label{fig-closed-hyper-orthant}
\end{minipage}
\end{proof}

In order to provide a constructive proof of the existence of feasible paths taking values in $\Cfree_G$ under certain conditions, we first introduce the concept of local priority graph. Given a radius $r \geq 0$ and a configuration $x\in\chi$, the local priority graph at configuration $x$ with radius $r \geq 0$ is the sub-graph $G_{|x,r}$ of $G$ with the same vertices and whose edge set is defined below:
\begin{equation}
E(G_{|x,r}):=\left\{(i,j)\in E(G): x \in \left(\Cobs_{i \succ j} + [-r,r]^n \right) \right\}
\end{equation}
\begin{figure}[p]
\begin{center}
\includegraphics[width=0.7\linewidth]{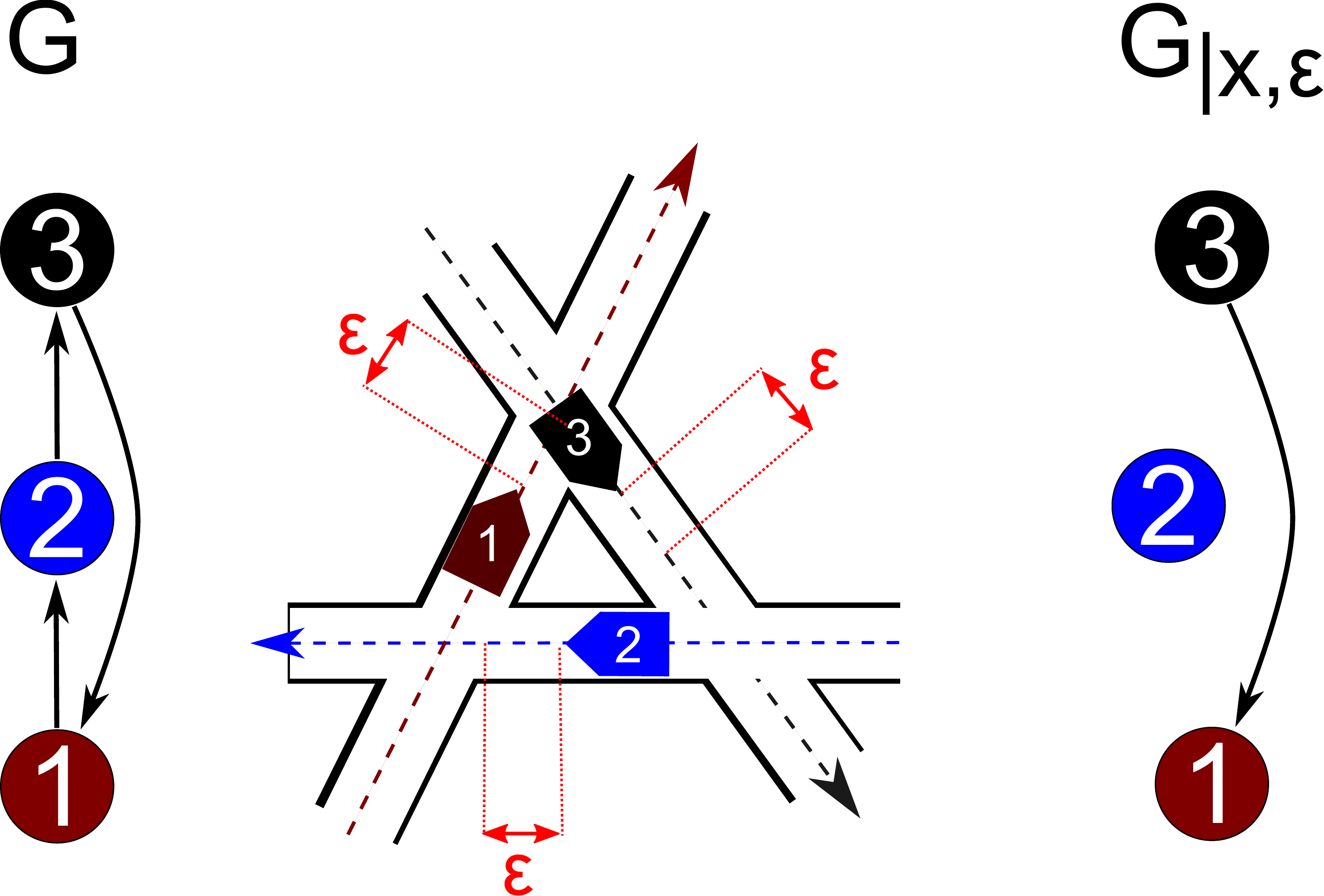}\hfill
\end{center}
\caption{Computation of the local priority graph for a three-robot system. Note that due to the geometry of paths (in particular their relative position), for small enough radius $\epsilon>0$, the local priority graph is acycle at all configurations.}
\label{fig:local-priority-graph}
\end{figure}
\begin{figure}[p]
\begin{center}
\includegraphics[width=0.7\linewidth]{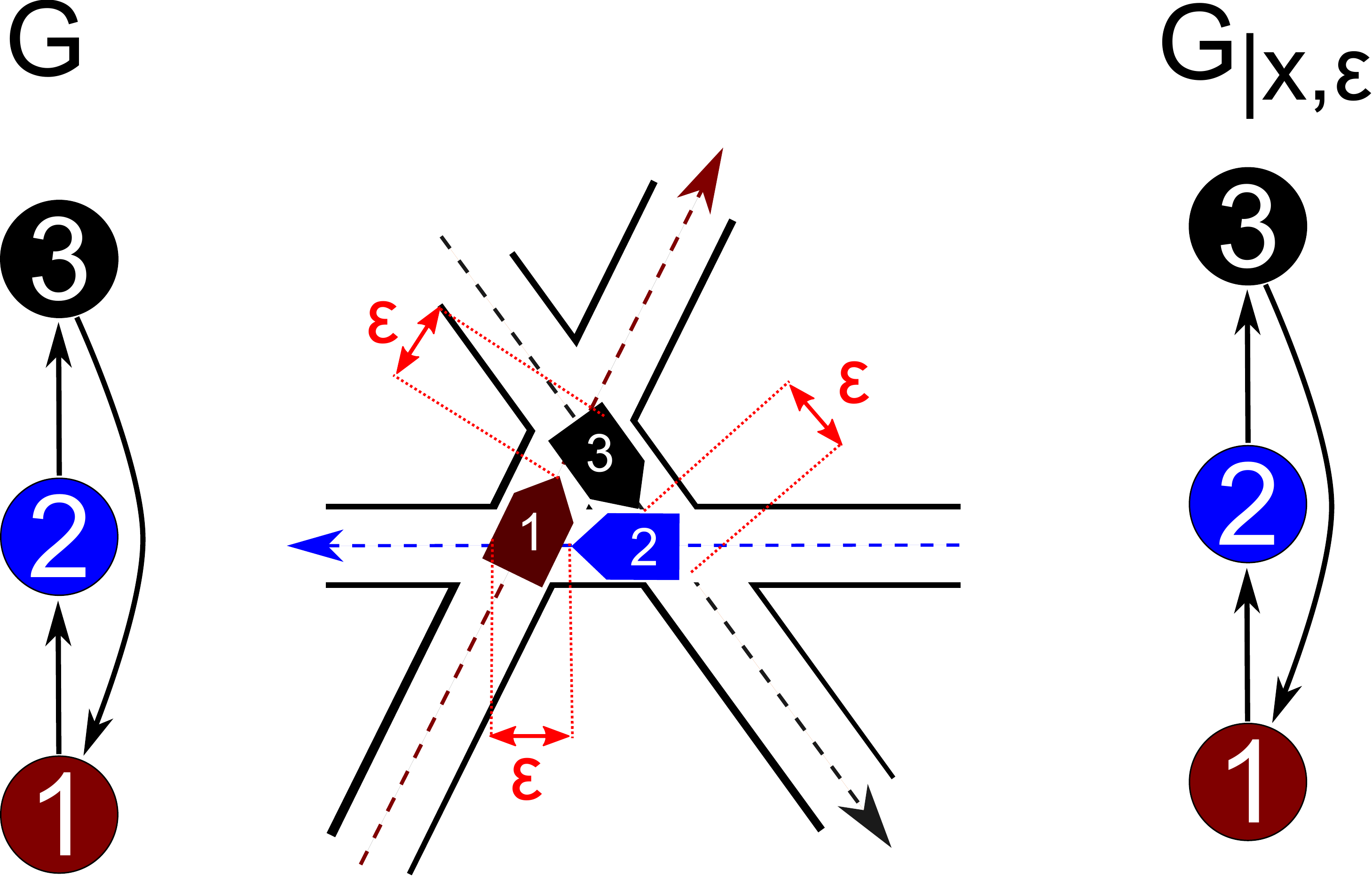}\hfill
\end{center}
\caption{Computation of the local priority graph for a three-robot system in a deadlock configuration. Note that the local priority graph is cyclic at the deadlock configuration, even for very small radius.}
\label{fig:local-priority-graph-deadlock}
\end{figure}
As depicted in Figure~\ref{fig:local-priority-graph}, computing the local priority graph at a given configuration $x$ with a given radius $r\geq 0$ consists in copying $G$ and keeping only edges $(i,j)\in E(G)$ such that $x$ belongs to the dilatation of $\Cobs_{i\succ j}$  by the structuring element $[-r,r]^n$, i.e., we keep only edges $(i,j)$ such that the distance (with the infinity norm) from $x$ to $\chiobs_{i\succ j}$ is strictly lower than $r$.

It is interesting to notice that in the deadlock-free example of Figure~\ref{fig:local-priority-graph}, the depicted local priority graph is acyclic. By contrast, at the deadlock configuration of Figure~\ref{fig:local-priority-graph-deadlock}, even for very small radius, the local priority graph is cyclic. 

\begin{lemma}[Sufficient condition for locally acyclic priority graph] Consider a priority graph $G\in\graphs$ satisfying for all elementary cycles $\C$ in $G$:
\begin{equation}
\bigcap_{(i,j)\in E(\C)}\left(\Cobs_{i\succ j}+[-\epsilon,\epsilon]^n\right)=\emptyset
\label{eq:condition-acyclic-local-priority-graph-corpus}
\end{equation}
for some $\epsilon>0$, then $G_{|x,\epsilon}$ is acyclic at all configurations $x\in\chi$.
\label{lemma:sufficient-condition-locally-acyclic-priority-graph-corpus}
\end{lemma}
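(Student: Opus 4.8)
The plan is to prove the statement by contradiction, turning the conclusion ``$G_{|x,\epsilon}$ is acyclic'' into the assumption that some cycle exists and then exhibiting an elementary cycle of $G$ that violates the hypothesis~\eqref{eq:condition-acyclic-local-priority-graph-corpus}. The whole argument is a direct translation of the edge-membership condition defining the local priority graph, so I expect no analytic work at all.

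First I would record the structural fact underlying everything: by its very definition $E(G_{|x,\epsilon})\subseteq E(G)$, and the two graphs share the same vertex set $\robots$, so $G_{|x,\epsilon}$ is a subgraph of $G$. Consequently every (directed) cycle of $G_{|x,\epsilon}$ is also a cycle of $G$, and in particular every elementary cycle of $G_{|x,\epsilon}$ is an elementary cycle of $G$, hence an element of $\cycles(G)$. So, assuming for contradiction that $G_{|x,\epsilon}$ is cyclic at some configuration $x\in\chi$, it contains an elementary cycle $\C$ (every cycle contains an elementary one), and this $\C$ belongs to $\cycles(G)$.

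The heart of the argument is then reading off what it means for the edges of $\C$ to lie in $G_{|x,\epsilon}$. For each $(i,j)\in E(\C)$, the membership $(i,j)\in E(G_{|x,\epsilon})$ means precisely that $x\in\left(\chiobs_{i\succ j}+[-\epsilon,\epsilon]^n\right)$. Since this holds simultaneously for \emph{every} edge of $\C$, the single configuration $x$ lies in $\bigcap_{(i,j)\in E(\C)}\left(\chiobs_{i\succ j}+[-\epsilon,\epsilon]^n\right)$, which is therefore non-empty. As $\C\in\cycles(G)$, this directly contradicts hypothesis~\eqref{eq:condition-acyclic-local-priority-graph-corpus}. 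Hence no such $x$ exists and $G_{|x,\epsilon}$ is acyclic for every $x\in\chi$.

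There is no real obstacle here: the geometric content is immediate once one observes that all edges of a cycle must be present in $G_{|x,\epsilon}$ at the \emph{same} point $x$, which forces that point into the very intersection assumed empty. The only points worth stating carefully are the two standard graph-theoretic facts used above -- that every cycle contains an elementary cycle, and that an elementary cycle of a subgraph is an elementary cycle of the ambient graph -- since the hypothesis~\eqref{eq:condition-acyclic-local-priority-graph-corpus} is phrased in terms of the elementary cycles of $G$ rather than of $G_{|x,\epsilon}$.
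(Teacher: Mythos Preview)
Your proof is correct and follows essentially the same contradiction argument as the paper: a cycle in $G_{|x,\epsilon}$ forces $x$ into the intersection $\bigcap_{(i,j)\in E(\C)}(\chiobs_{i\succ j}+[-\epsilon,\epsilon]^n)$, contradicting the hypothesis. Your additional care in passing from an arbitrary cycle to an elementary cycle of $G$ is a welcome clarification that the paper leaves implicit.
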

\begin{proof}
Take $G\in\graphs$ and assume Equation~\eqref{eq:condition-acyclic-local-priority-graph-corpus} is satisfied for all elementary cycles $\C$ in $G$. By construction, we have:
\begin{equation}
E(G_{|x,\epsilon})=\left\{(i,j)\in E(G): x \in \left(\Cobs_{i \succ j}+[-\epsilon,\epsilon]^n\right)  \right\}
\end{equation}
The existence of a cycle $\C$ in $G_{|x,\epsilon}$ would imply that $x\in\cap_{(i,j)\in E(\C)}\left(\Cobs_{i\succ j}+[-\epsilon,\epsilon]^n\right)$, and would therefore contradict Equation~\eqref{eq:condition-acyclic-local-priority-graph-corpus} for this cycle.
\end{proof}

It is of high interest to know that the local priority graph with radius $\epsilon>0$ is acyclic at all configurations $x\in\chi$. Indeed, when this condition is satisfied, whatever the current configuration $x\in\Cfree_G$ of the system, it is always possible to find a robot $i\in\robots$ which can move forward the distance $\epsilon>0$ without colliding, which enables to construct a feasible path in $\Cfree_G$ by iterations. Based on this idea, we propose now to give a slightly stronger sufficient condition for the existence of feasible paths satisfying a given priority graph $G\in\graphs$. We prove in the sequel that a sufficient condition for $\paths(\chiobs_G)\neq\emptyset$ is that for all elementary cycles $\C$ in $G$:
\begin{equation}
\bigcap_{(i,j)\in E(\C)}\left(\Cobs_{i\succ j}+[-\epsilon,\epsilon]^n\right)=\emptyset
\label{eq:condition-acyclic-local-priority-graph-corpus2}
\end{equation}
for some $\epsilon>0$. It is a slightly stronger condition than in Theorem~\ref{thm:feasible} as $\epsilon>0$ (instead of $\epsilon \equiv 0$).

\begin{proof}[Proof of the sufficient condition under a slightly stronger assumption]
Take $G\in\graphs$ and assume that we have $\epsilon>0$ such that for all elementary cycles $\C$ in $G$, Equation~\eqref{eq:condition-acyclic-local-priority-graph-corpus2} holds. We will provide a constructive proof of the existence of a path $\path\in\paths(\Cfree_G)$. By Lemma~\ref{lemma:sufficient-condition-locally-acyclic-priority-graph-corpus}, the local priority graph $G_{|x,\epsilon}$ is acyclic at all configurations $x\in\chi$. Let $x^\mathrm{goal}\in\chifree$ denote the desired final configuration defined componentwise as: $x^\mathrm{goal}_i=\xobsmax_i+\epsilon$ (satisfying $\left(x^\mathrm{goal}+\RR_+^n\right) \subset \Cfree_G$). We define the finite time flow $\phi(x,t)$ starting at initial condition $x$ component-wise as follows for $t\in[0,1]$ and $j\in\robots$:
\begin{equation}
\phi_j(t,x):=
\begin{cases}
x_j & \text{if } \exists (i,j)\in 
E(G_{|x,\epsilon})
\\
\min(x^\mathrm{goal}_j, x_j+ t \epsilon) & \text{else.}
\end{cases}
\end{equation}
First we prove that the flow starting from an initial configuration in $\Cfree_G$ remains in $\Cfree_G$. Consider $x\in \Cfree_G$ and $(i,j)\in E(G)$. By construction of $\phi$, we have for all $t\in[0,1]$:
\begin{equation}
\phi_i(t,x) \geq x_i
\label{eq:flow-collision-free-eq1-corpus}
\end{equation}
For $j$, consider the two following options:
\begin{itemize}
\item 
$(i,j)\in E(G_{|x,\epsilon})$. Then, we have for all $t\in[0,1]$:
\begin{equation}
\phi_j(t,x) = x_j
\label{eq:flow-collision-free-eq2-corpus}
\end{equation}
By Property~\ref{property:geometric-invariance}, since Equations~\eqref{eq:flow-collision-free-eq1-corpus} and~\eqref{eq:flow-collision-free-eq2-corpus} hold, $x\in\Cfree_{i \succ j}$ implies that $\phi(t,x)\in\Cfree_{i \succ j}$.
\item 
$(i,j)\notin E(G_{|x,\epsilon})$. Then, we have for all $t\in[0,1]$: 
\begin{equation}
\phi_j(t,x) = \min(x^\mathrm{goal}_j, x_j+ t \epsilon) \leq x_j+ \epsilon
\label{eq:flow-collision-free-eq3-corpus}
\end{equation}
Moreover, by construction of the local priority graph, 
$(i,j)\notin E(G_{|x,\epsilon})$
 is equivalent to:
\begin{equation}
x \notin \Cobs_{i\succ j}+[-\epsilon,\epsilon]^n
\end{equation}
which implies that:
\begin{equation}
x+\epsilon\e_j \in \Cfree_{i\succ j}
\label{eq:x-plus-epsilon-in-chi-free-corpus}
\end{equation}
By Property~\ref{property:geometric-invariance}, since Equations~\eqref{eq:flow-collision-free-eq1-corpus} and~\eqref{eq:flow-collision-free-eq3-corpus} hold, Equation~\eqref{eq:x-plus-epsilon-in-chi-free-corpus} implies that $\phi(t,x)\in \Cfree_{i\succ j}$.
\end{itemize}
In conclusion, for all $x\in\Cfree_G$ and $t\in[0,1]$, $\phi(t,x)\in\Cfree_G$. Now, consider the path $\path(t)$ defined iteratively as follows:
\begin{eqnarray}
\path(0)&:=&\xobsmin\\
\forall p\in\NN, \forall t\in[0,1], \path(p+t)&:=&\phi(t,\path(p))
\end{eqnarray}
$\path(0)\in \Cfree_G$ and by induction, $\path$ takes values in $\Cfree_G$. It is non-decreasing as $\phi_j(t,x)\geq x_j$, and we are going to prove that it reaches $x^\mathrm{goal}$ in finite time. The local priority graph at configuration $x$ only contains edges $(i,j)$ such that $x_j < x_j^\mathrm{goal}$ ($x_j=x_j^\mathrm{goal}=\xobsmax_j+\epsilon$ implies that $x\notin(\chiobs_{i\succ j}+[-\epsilon,\epsilon]^n)$). Since the local priority graph is acyclic, for all $x\in\chi$ with $x < x^\mathrm{goal}$, there exists a maximal element $j\in\robots$ satisfying $x_j<x^\mathrm{goal}_j$ and $\forall i\in\robots$, $(i,j)\notin E(G_{|x,\epsilon})$.
 By construction of $\phi$, it results that for all $p\in\NN$, if $\path(p)\neq x^\mathrm{goal}$, then there exists at least one robot $j$ such that $\path_j(p+1)=\min(x_j^\mathrm{goal}, x_j+\epsilon)$, i.e., robot $j$ travels a distance $\epsilon$ or reaches its goal configuration in time interval $[p,p+1]$. The distance to travel considering all robots is finite: $\sum_{i\in\robots}x_i^\mathrm{goal}-x_i^0$. As a result, $x^\mathrm{goal}$ is reached in finite time $T$, $\path(T)=x^\mathrm{goal}$ and $T$ satisfies:
\begin{equation}
T \leq \left\lceil \frac{\sum_{i\in\robots}x_i^\mathrm{goal}-\xobsmin_i}{\epsilon} \right\rceil 
\end{equation}
where $\lceil . \rceil$ denotes the ceiling function. Rescaling time by a factor $1/T$ yields a path $\tilde\path\in\paths(\Cfree_G)$.
\end{proof}

\subsection{Absence of deadlocks}
\label{subsec:absence-deadlocks}

To this point, we have proved a necessary and sufficient condition for $\paths(\Cfree_G)\neq\emptyset$, i.e., for the existence of feasible paths whose priority is graph $G$. In the following, we prove that provided $\paths(\Cfree_G)\neq\emptyset$, there is no deadlock configuration in $\Cfree_G$. This means that for all configurations $x\in\Cfree_G$, there exists a path $\path\in\paths(\Cfree_G)$ going through configuration $x$. It is a very valuable result as a direct consequence is that provided the assigned priorities are feasible, there will be no deadlock, as long as priorities are "respected", i.e., as long the configuration of the system remains in $\Cfree_G$. 

\begin{theorem}[Absence of deadlocks]
\label{thm:no-deadlock}
Given $G\in\graphs$ satisfying $\paths(\Cfree_G)\neq\emptyset$, for all $x\in\Cfree_G$, there exists $\path\in\paths(\Cfree_G)$ going through $x$.
\end{theorem}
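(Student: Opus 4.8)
The plan is to manufacture, from a \emph{single} feasible path, a new feasible path that is forced to pass through the prescribed configuration $x$. The whole argument rests on two ingredients already available: the stability of $\chifree_G=\bigcap_{(i,j)\in E(G)}\chifree_{i\succ j}$ under the coordinatewise $\min$ and $\max$ operators (Property~\ref{property:min-max}, applied to each completed cylinder and intersected over the edges of $G$), and the two cone conditions $(\path(0)-\RR_+^n)\subset\chifree_G$, $(\path(1)+\RR_+^n)\subset\chifree_G$ that characterize membership in $\paths(\chifree_G)$. The intuition is that taking the $\min$ with $x$ lets a feasible path be dragged down to approach $x$ from below, while taking the $\max$ with $x$ lets it climb away from $x$ from above, both operations keeping the path in $\chifree_G$.

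First I would record a box observation and use it to normalize the endpoints. Starting from the hypothesis $\paths(\chifree_G)\neq\emptyset$, pick $\psi_0\in\paths(\chifree_G)$. By the start cone condition every configuration $\leq\psi_0(0)$ lies in $\chifree_G$, and symmetrically every configuration $\geq\psi_0(1)$ lies in $\chifree_G$. I then extend $\psi_0$ to a feasible path $\psi\in\paths(\chifree_G)$ whose endpoints \emph{straddle} $x$, i.e.\ $\psi(0)\leq x\leq\psi(1)$: prepend the straight segment from $\min\{x,\psi_0(0)\}$ to $\psi_0(0)$ (which stays below $\psi_0(0)$, hence in $\chifree_G$) and append the straight segment from $\psi_0(1)$ to $\max\{x,\psi_0(1)\}$ (which stays above $\psi_0(1)$, hence in $\chifree_G$), then reparametrize to $[0,1]$. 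This extension is non-decreasing and still satisfies both cone conditions, since its new endpoints $\min\{x,\psi_0(0)\}\leq\psi_0(0)$ and $\max\{x,\psi_0(1)\}\geq\psi_0(1)$ only shrink the respective cones.

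With such a $\psi$ in hand, I define the candidate path by gluing a ``$\min$-half'' to a ``$\max$-half'':
\begin{equation*}
\path(t):=\begin{cases}\min\{x,\psi(2t)\}, & t\in[0,1/2],\\ \max\{x,\psi(2t-1)\}, & t\in[1/2,1],\end{cases}
\end{equation*}
and verify the defining properties of $\paths(\chifree_G)$. Continuity at the seam holds because $\min\{x,\psi(1)\}=x=\max\{x,\psi(0)\}$ (using $\psi(0)\leq x\leq\psi(1)$); this same identity gives $\path(1/2)=x$, so $\path$ goes through $x$. Each half takes values in $\chifree_G$ by Property~\ref{property:min-max}, since both $x$ and $\psi(\cdot)$ lie in $\chifree_G$. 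Monotonicity within each half follows from monotonicity of $\psi$ and of $\min/\max$, and across the seam from the fact that the first half is $\leq x$ while the second is $\geq x$. The endpoint cone conditions transfer verbatim from $\psi$, because $\path(0)=\psi(0)$ and $\path(1)=\psi(1)$.

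I expect the only genuinely delicate point to be the junction at $x$: the global non-decreasingness and the continuity of the concatenation both hinge on having arranged $\psi(0)\leq x\leq\psi(1)$ so that the $\min$-half and the $\max$-half meet exactly at $x$. Everything else is routine bookkeeping against the definition of $\paths(\chifree_G)$ and Property~\ref{property:min-max}; in particular no cycle/feasibility analysis is needed here, as the hypothesis $\paths(\chifree_G)\neq\emptyset$ already supplies the seed path $\psi_0$.
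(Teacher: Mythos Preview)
Your proof is correct and follows essentially the same route as the paper's: extend a seed path so that its endpoints straddle $x$, then concatenate $\min\{x,\psi\}$ with $\max\{x,\psi\}$ and invoke Property~\ref{property:min-max} to stay in $\chifree_G$. Your write-up is in fact more explicit than the paper's about the parametrization, the seam at $x$, and the verification of the cone conditions at the new endpoints.
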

\begin{proof}
Take a priority graph $G\in\graphs$, $x\in\Cfree_G$, assume $\paths(\Cfree_G)\neq\emptyset$ and take $\path\in\paths(\Cfree_G)$. First of all, note that concatenating $\path$ with the segment joining $\path(0)$ and $\min(x,\path(0))$ and with the segment joining $\path(1)$ with $\max(x,\path(1))$ gives a path in $\paths(\Cfree_G)$ starting from a configuration lower than or equal to $x$ and ending at a configuration greater than or equal to $x$. Hence, assume without loss of generality that $\path(0)\leq x$ and $\path(1)\geq x$. Define $\tilde{\path}^1:=\max(x,\path)$ and $\tilde{\path}^2:=\min(x,\path)$. These paths take values in $\chifree_G$ by Property~\ref{property:min-max}. The concatenation of $\tilde{\path}^1$ and $\tilde{\path}^2$ gives a path $\tilde{\path}\in\paths(\Cfree_G)$  going through $x$.
\end{proof}

\renewcommand{\Cfree}{C^{\mathrm{free}}}
\renewcommand{\Cobs}{C^{\mathrm{obs}}}

\chapter*{Conclusions}

The present part proposed a novel tool in multi robot coordination: the priority graph. It enables to go one step ahead in the understanding of the structure of the solutions to the coordination problem. Previous work noticed the existence of homotopy classes of feasible paths in the coordination space~\cite{Ghrist2005}. Our results demonstrate that priorities uniquely encode these homotopy classes. The existence of a finite number of homotopy classes of feasible paths then merely appears as the consequence of the finiteness of possible priority graphs. Assigning priorities plans a high-level coordination strategy represented by the priority graph describing the relative order of robots through the intersection. Under assigned priorities, the path of robots in the coordination space is just required to remain in a homotopy class of feasible paths continuously deformable into each other. Respecting assigned priorities is weaker than following a particular feasible path as a large (homotopy) class of feasible paths induce the same priorities. The size of the homotopy class provides some freedom of action. Therefore, priorities appear a relevant resource to guide robots through an intersection area.  A key asset of planning priorities is that well-chosen feasible priorities -- in particular, acyclic priorities -- completely solves the deadlock avoidance problem (see Threorem~\ref{thm:feasible}). Theorem~\ref{thm:no-deadlock} proves that provided feasible priorities are respected, robots will never be stuck in a deadlock configuration. It is key to solve the deadlock avoidance problem at the planning level as deadlocks are difficult to avoid in a reactive manner.

The results of the present part are quite conceptual with little care about the dynamics model and control issues. It does not specify how to use priorities to control robots. In traditional planning, the control part consists of executing the plan by tracking the planned reference trajectory. This is known as the trajectory tracking problem~\cite{Jiang1997,Lee2001,Micaelli1993,Soetanto2003,Yang1999}. The reference trajectory configures the control law which tries to minimize the tracking error (e.g., using a linear-quadratic regulator~\cite{Todorov2006}). In priority-based coordination, there is no reference trajectory to track, the plan is merely the priority graph. The next part of the thesis assumes that priorities are assigned and it aims at building control laws configured by the priority graph ensuring that priorities are respected and that all robots eventually go through the intersection. From the point of view of the present part, control laws proposed in the next part ensure that the resulting path described by robots in the coordination space belongs to the homotopy class encoded by the assigned priorities.

\part{Priority preserving control}
\label{part:priorities-to-guide-robots}

\chapter*{Introduction}

\begin{minipage}{\linewidth}

Previous work noticed the combinatorial complexity of multi robot control (see, e.g.,~\cite{Colombo2012}). In~\cite{Colombo2012}, the expected application is a driver assistance system to avoid crashes between human driven vehicles just in time. It is thus completely right to try to find a particular schedule to avoid the crash. In this thesis, we are in a much different context and we assume that robots are in a safe state when approaching the intersection, and a lot of different schedules -- more precisely, a lot of different priority graphs -- are possible to safely coordinate robots. The present part assumes that feasible priorities are assigned, that the assigned priorities are compatible with the initial state of the robots, and focuses on how to use the assigned priorities to guide robots through the intersection. As priorities are assigned, there is no combinatorial problem, and so-called priority preserving control can be carried out in polynomial time. In traditional planning, the plan is a reference trajectory which configures a control law in charge of tracking the reference trajectory. In priority-based coordination, the plan is the priority graph, so the control law is configured by the priority graph and is in charge of ensuring priority preservation (no collision occurs and priorities are respected). Ensuring priority preservation is much weaker than tracking a reference trajectory, so robots retain some freedom of action. The proposed control law guarantees liveness, i.e., following the control law, all robots eventually go through the intersection. The freedom of action enabled by planning only priorities is highlighted, as under the presented control law, robots may brake at any point of time without violating priorities, in particular without colliding. This robustness property is quite novel among existing coordination systems and is highly valuable as it is very likely to happen that robots need to brake to handle some unexpected event (e.g., a pedestrian crossing the road, a loss of communication abilities, a congestion at the exit of the intersection). Finally, the proposed control scheme in Chapters~\ref{chap:control-acceleration} and~\ref{chap:control-uncertainty} is decentralized as the output of the control law can be computed on each robot independently without an agreement through communication links.

\paragraph{Sketch of the part} The assigned priorities are assumed to be acyclic. Under this assumption, Chapter~\ref{chap:optimal-control-velocity} provides a priority preserving control law for robots controlled in velocity; Chapter~\ref{chap:control-acceleration} examines the case of robots controlled in acceleration; and in Chapter~\ref{chap:control-uncertainty}, robustness of priority preserving control with respect to bounded noise is illustrated. The reader is referred to Appendix~\ref{app:control-extension-all-feasible-priorities} for an extension of the results of this part to feasible cyclic priority graphs under mild assumptions.

\paragraph{Note to the reader} The two first chapters of the present part are independent. However, it is advised to start with the first chapter for a gradual understanding of the proposed method. The last chapter is not necessary to the understanding of the rest of the thesis. The reader without special interest in considering uncertainty concerns can skip Chapter~\ref{chap:control-uncertainty} and go directly to Part~\ref{part:priority-based-coordination}. 

\end{minipage}

{\pagestyle{plain}
\clearpage
\topskip0pt
\vspace*{\fill}
\includegraphics[height=1.0\linewidth,angle=-90,trim=160 60 160 60, clip]{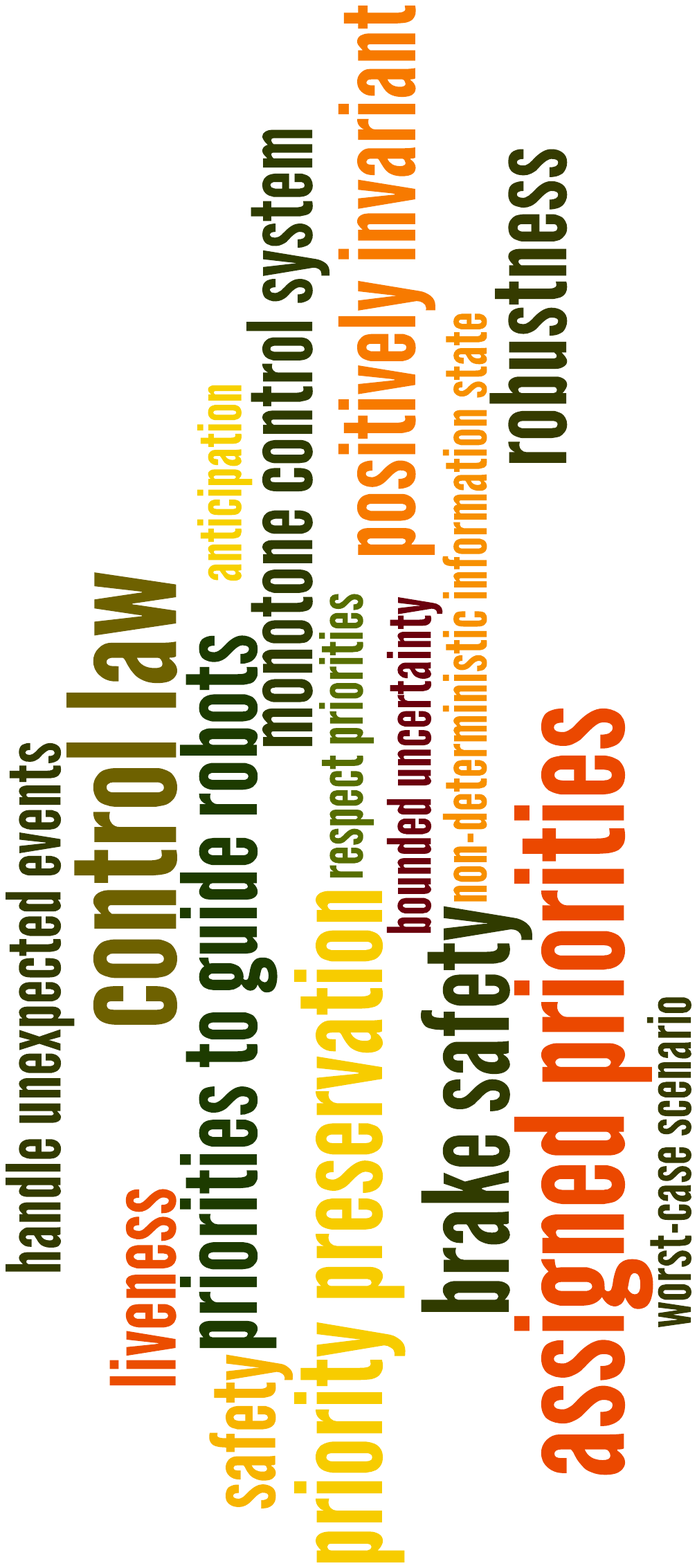}
\vspace*{\fill}
\parttoc}

\chapter[Priority preserving control in the absence of inertia]{Priority preserving control \\in the absence of inertia}
\label{chap:optimal-control-velocity}
\minitoc

In the present chapter, the velocity of the robots is assumed to be controlled, and a control law aimed at coordinating multiple robots with assigned priorities is proposed.

\paragraph{Sketch of the chapter} Section~\ref{sec:control-velocity-model} exposes the dynamics model and shows that the resulting system is a monotone control system~\cite{Angeli2003}. Section~\ref{sec:control-velocity-control-law} constructs a priority preserving control law. Optimality and liveness properties are provided.

\section{A monotone control system}
\label{sec:control-velocity-model}

Each robot $i$ is modeled as a first-order control system with state $x_i\in\RR$, whose evolution is described by the differential equation:
\begin{equation}
\dot{x_i}(t) = \vbf_i(t) \label{eq-diff-control-velocity}
\end{equation}
where $\vbf_i:\RR_+ \to V_i$ is the control of robot $i$. We let $V_i:=\{0,\vmax_i\}$ be the set of feasible control values. The control is assumed to be updated in discrete time every $\dt>0$: 
\begin{equation}
\forall k\in\NN, \forall t\in[k\dt, (k+1)\dt), \vbf_i(t) \equiv \vbf_i(k\dt)
\end{equation}
The time interval $[k\dt, (k+1)\dt)$ will be referred to as (time) slot $k$. For the sake of simplicity we let $\dt := 1$ in the sequel. We let $\vcontrols_i$ denote the set of controls $\vbf_i:\RR_+ \to V_i$ piecewise constant on intervals $[k,k+1)$, $k\in\NN$. We let $t \mapsto \phi_i(t,x_i,\vbf_i)$ denote the flow of the system starting at initial configuration $x_i\in \RR$  with control $\vbf_i \in \vcontrols_i$ as depicted in Figure~\ref{fig:velocity-control}. 

\begin{figure}[!htbp]
\begin{center}
\includegraphics[width=1.0\linewidth]{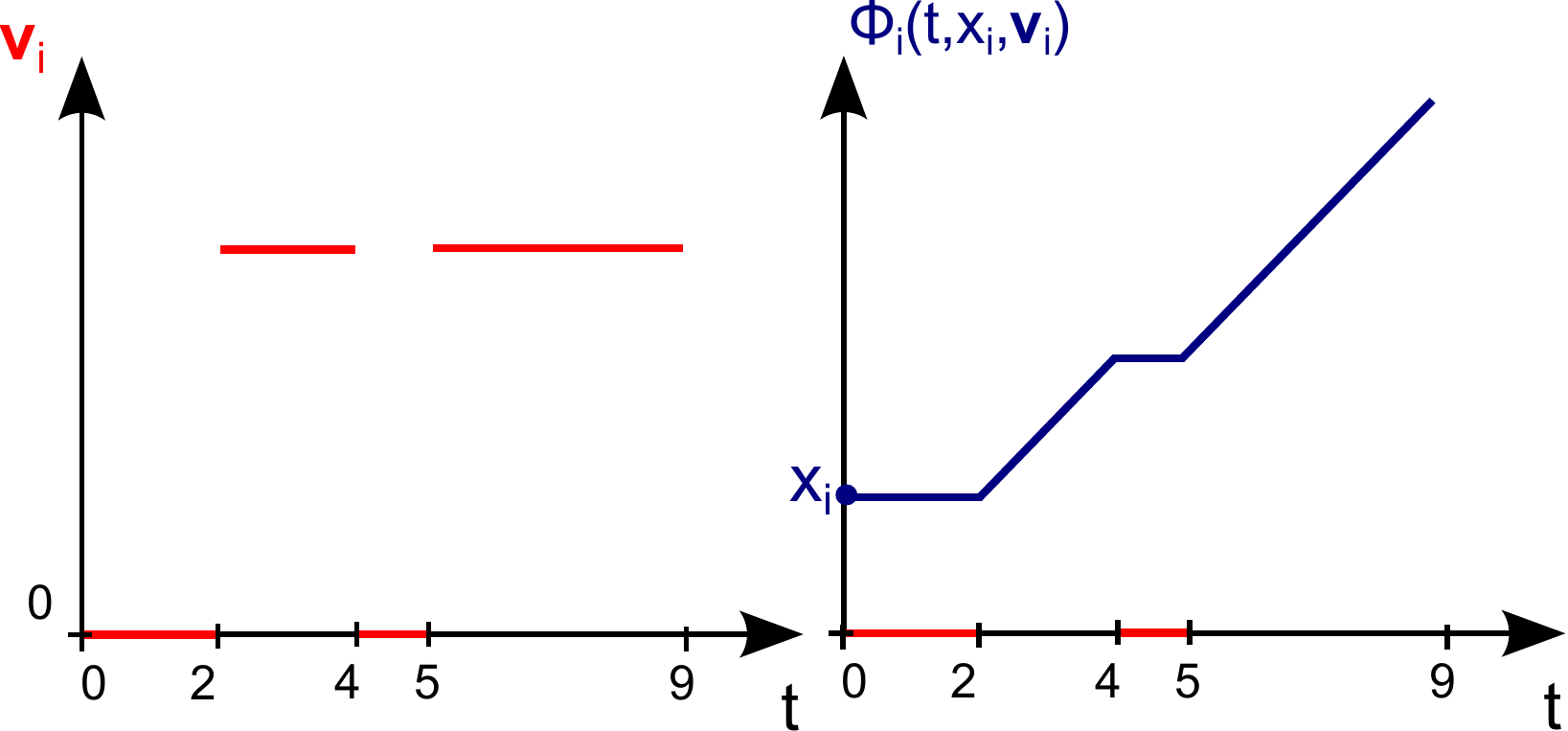}\hfill
\end{center}
\caption{An example of piecewise constant velocity control $\vbf_i$ (left) and the corresponding flow $t \mapsto \phi_i(t,x_i,\vbf_i)$ starting from the initial configuration $x_i$ (right).}
\label{fig:velocity-control}
\end{figure}

We also define the vectorial control $\vbf:=(\vbf_i)_{i\in\robots}\in \vcontrols:=\prod_{i\in\robots}\vcontrols_i$, and the vectorial flow: $\phi(t,x,\vbf):=(\phi_i(t,x_i,\vbf_i))_{i\in\robots}$. We let $\vmax:=(\vmax_i)_{i\in\robots}$ and we define the constant control $\vmaxbf(t):=\vmax$. We introduce partial orders as follows:
\begin{eqnarray}
\forall \vbf_i^1,\vbf_i^2\in \vcontrols_i, \vbf_i^1 \preceq \vbf_i^2 &\text{if}& \forall t\geq 0, \vbf_i^1(t) \leq \vbf_i^2(t)\\
\forall \phi^1,\phi^2:\RR_+\to \chi, \phi^1 \preceq \phi^2 &\text{if}& \forall t\geq 0,\phi^1(t) \preceq \phi^2(t)
\end{eqnarray}
The control system~\eqref{eq-diff-control-velocity} is a monotone control system~\cite{Angeli2003} with regards to the relative orders defined above. More precisely, the following key property holds:

\begin{property}[Order preservation]
The flow $t \mapsto \phi_i(t,x_i,\vbf_i)$ is order-preserving with regards to $x_i$ and $\vbf_i$.
\end{property}

Note that in our open loop model, control $\vbf_i$ only acts on robot $i$, that is, $\vbf$ is a collection of independent controls: it does not achieve any kind of coordination between the robots. The control law introduced in the sequel is precisely aiming at coordinating the robots to avoid collisions and respect priorities.

\section{The proposed control law}
\label{sec:control-velocity-control-law}

Now, we propose to build a control law $f^G:\chi\to V$ such that starting from an initial collision-free configuration, the flow of the system controlled by the control law $f^G$ is ensured to remain in $\chifree_G$ (thus being collision-free and respecting priorities $G$). In other words, using the terminology of~\cite{Kerrigan2000}, $\chifree_G$ shall be positively invariant for the system under control law $f^G$.

The rationale for our control law is as follows. Each robot $i\in\robots$ moves forward, unless moving forward violates the priority with regards to some robot $j\in\robots$ with $(j,i)\in E(G)$. In the coordination space, violating such a priority means that the configuration of the system would collide with $\chiobs_{j\succ i}$. The control law can then be formulated synthetically component-wise:
\begin{equation}
f_i^G(x):= \begin{cases}
0&\text{ if } \exists(j,i)\in E(G),\exists t\in[0,1] \text{ s.t. } \left(x+t\left(\vmax_i\e_i+f_j^G(x)\e_j\right)\right)\in\chiobs_{j\succ i} \\
\vmax_i&\text{ else.}
\end{cases}
\label{eq:definition-control-law-velocity}
\end{equation} 
First of all, note that $f^G$ appears in both the left-hand side and the right-hand side in Equation~\eqref{eq:definition-control-law-velocity}. Hence, it is not obvious that Equation~\eqref{eq:definition-control-law-velocity} effectively defines a control law which is stated by the following theorem. Note that a decentralized version of the proposed control law could be used alternatively by considering the worst case scenario for each robots $(j,i)\in E(G)$, i.e., when robot $j$ stops (see the decentralized control law of Chapter~\ref{chap:control-acceleration}). However, the optimality result that we obtain in the present chapter would not hold anymore.
\begin{theorem}[Control law existence]
Given an acyclic priority graph $G$, Equation~\eqref{eq:definition-control-law-velocity} uniquely defines a control law $f^G:\chi\to V$. 
\label{thm:control-law-existence}
\end{theorem}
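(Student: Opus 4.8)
The plan is to exploit the acyclicity of $G$ to untangle the apparent circularity in Equation~\eqref{eq:definition-control-law-velocity}. The key observation is that the right-hand side defining $f_i^G(x)$ only ever references the components $f_j^G(x)$ for robots $j$ with $(j,i)\in E(G)$, that is, for robots that have priority over $i$. Since $G$ is acyclic, it admits a topological ordering of its vertices, and I would relabel the robots $1,\dots,n$ along this ordering so that $(j,i)\in E(G)$ implies that $j$ precedes $i$ (higher-priority robots first, exactly as in the relabeling used in the proof of the sufficient condition for feasibility). With this labeling, the value $f_i^G(x)$ depends only on the values $f_j^G(x)$ for indices $j$ appearing strictly earlier in the ordering.

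First I would establish the base case. A source of the topological ordering --- a robot $i$ with no $j$ satisfying $(j,i)\in E(G)$ --- has an empty set of constraints in the first branch of Equation~\eqref{eq:definition-control-law-velocity}, so the existential condition is vacuously false and $f_i^G(x)=\vmax_i$ is forced, with no reference to any other component. This unambiguously defines the first component.

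Then I would proceed by strong induction along the topological ordering. Assuming $f_j^G(x)$ is uniquely determined for every $j$ preceding $i$, the right-hand side of Equation~\eqref{eq:definition-control-law-velocity} for $f_i^G(x)$ involves only these already-fixed quantities $f_j^G(x)$ with $(j,i)\in E(G)$, each of which precedes $i$. Hence the truth value of the existential condition is completely determined, and $f_i^G(x)$ is forced to equal either $0$ or $\vmax_i$ accordingly. This simultaneously exhibits a solution (existence) and shows it is the only admissible one (uniqueness), completing the induction.

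The main obstacle --- really the only subtlety --- is making rigorous that the dependence structure respects the topological order, i.e., that no component $f_i^G$ depends on itself or on a later component. This follows directly from inspecting Equation~\eqref{eq:definition-control-law-velocity}: the only components of $f^G(x)$ entering the definition of $f_i^G(x)$ are the $f_j^G(x)$ indexed by $j$ with $(j,i)\in E(G)$, and acyclicity guarantees these all sit strictly earlier in the ordering. No topological or geometric properties of $\chiobs_{j\succ i}$ are needed for existence and uniqueness; the result is purely a consequence of the triangular dependence induced by the acyclic priority structure.
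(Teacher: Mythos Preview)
Your proposal is correct and follows essentially the same approach as the paper: both exploit the acyclicity of $G$ to obtain a topological ordering along which the components $f_i^G(x)$ can be computed iteratively, since $f_i^G(x)$ depends only on $f_j^G(x)$ for predecessors $j$ with $(j,i)\in E(G)$. Your version is simply more explicit, spelling out the base case (sources get $\vmax_i$) and the strong induction step, whereas the paper compresses this into a single sentence.
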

\begin{proof}
The priority graph is assumed to be acyclic. Hence, there exists a topological ordering of the graph such that for every edge $(j,i)\in E(G)$, $j$ comes before $i$ in the ordering. Following the topological order induced by $G$, it is possible to compute $f_i^G(x)$ for all $i\in\robots$ iteratively. As a result,  Equation~\eqref{eq:definition-control-law-velocity} uniquely defines a control law $f^G:\chi\to V$.
\end{proof}
Figure~\ref{fig:control-law-velocity-example} and~\ref{fig:trajectory-velocity-control-example} show the evolution of a three-robot system under control law $f^G$ under acyclic priorities. It is clear in Figure~\ref{fig:trajectory-velocity-control-example} that the control law belongs to the  "bug"  family, emanating  from the work  of~\cite{Lumelsky1987}. Indeed, the robots go at maximum speed until they are too close to the boundary of the obstacle region. Then, they follow the boundary with a certain distance as long as necessary.
\begin{figure}[p]
\begin{center}
\includegraphics[width=1.0\linewidth]{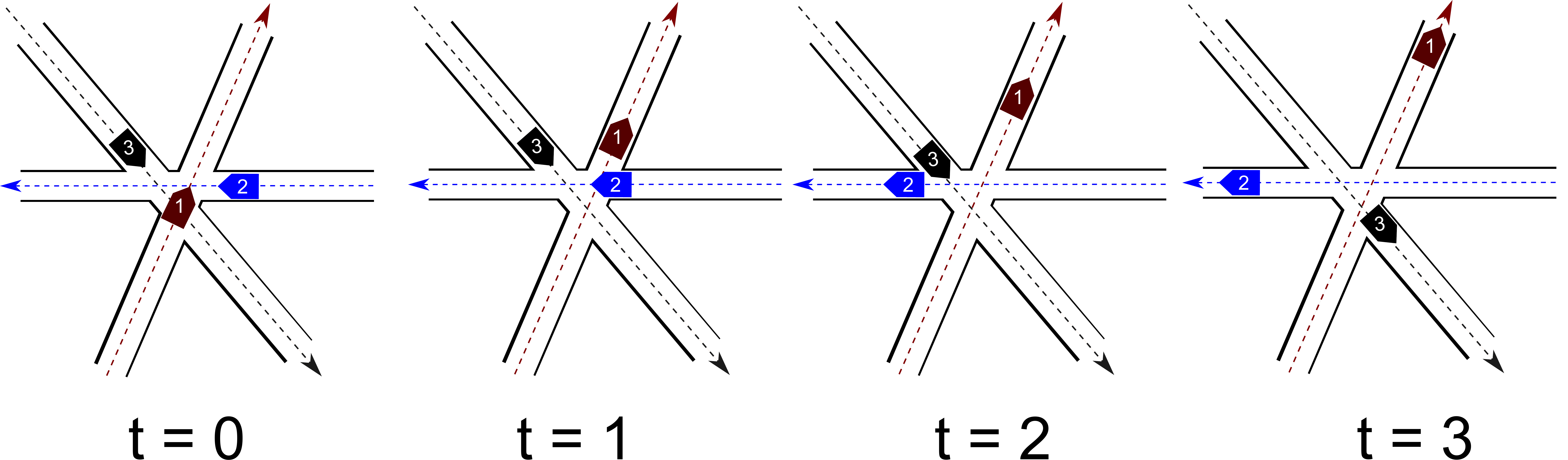}\hfill
\end{center}
\caption{A three-robot system with acyclic assigned priorities: $1\succ 2$, $2\succ 3$, and $1\succ 3$. Robots are controlled under control law $f^G$. The drawings show the evolution of the robots along their paths.}
\label{fig:control-law-velocity-example}
\end{figure}
\begin{figure}[p]
\begin{center}
\includegraphics[width=1.0\linewidth]{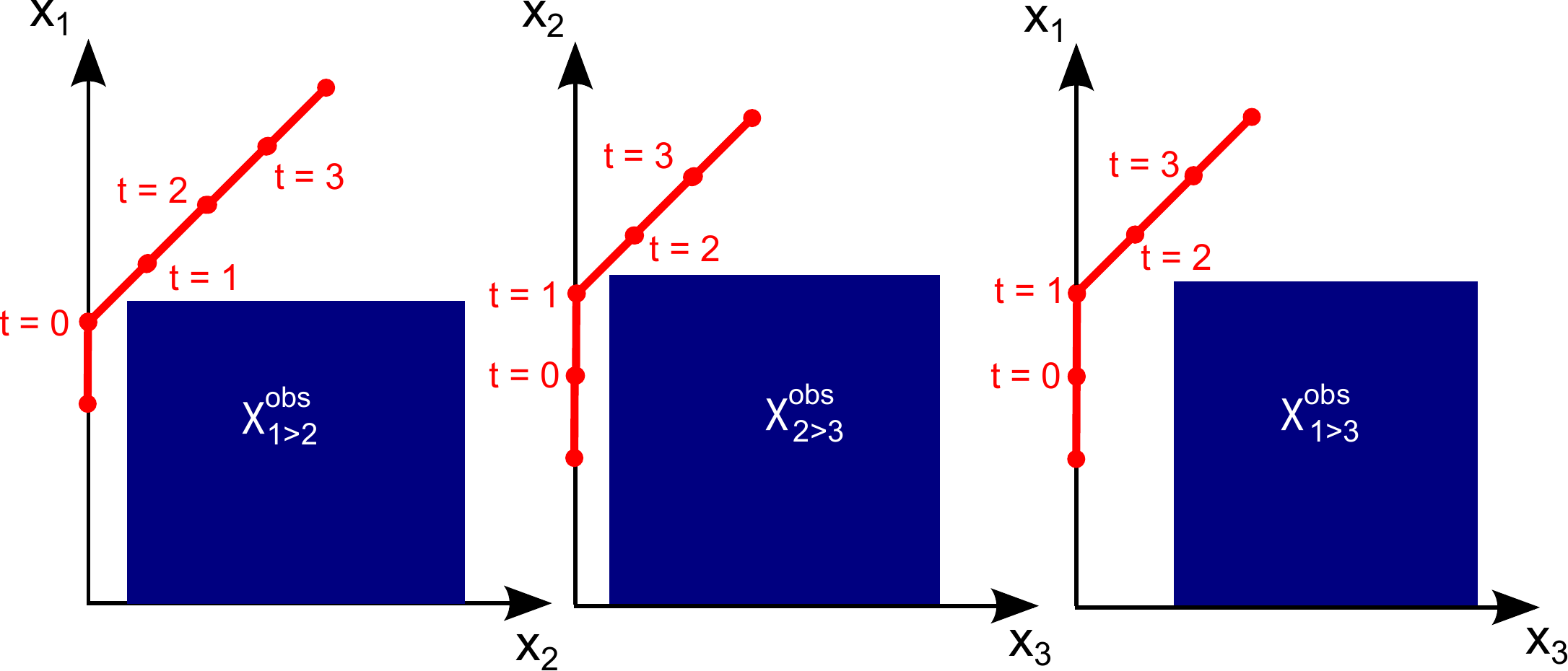}\hfill
\end{center}
\caption{Trajectory in the coordination space for the three-robot system under control law $f^G$ of Figure~\ref{fig:control-law-velocity-example}.}
\label{fig:trajectory-velocity-control-example}
\end{figure}

Now, we need to introduce the following notation. Given a feedback control law $f:\chi\to V$,  with a slight abuse of notation we let $t \mapsto \phi(t,x,f)$ denote the vectorial flow of the system starting at initial condition $x\in \chi$ and controlled by $\vbf\in \vcontrols$ satisfying:
\begin{equation}
\forall k\in\NN, \vbf(k) \equiv f(\phi(k,x,\vbf))
\end{equation}

\subsection{Priority preservation}

First of all, we prove the key property of our control law that is the safety guarantee. More precisely, starting from a configuration in $\chifree_G$, the system under control law $f^G$ is ensured to remain in $\chifree_G$, i.e., priorities $G$ are preserved. Following the terminology of~\cite{Kerrigan2000}, $\chifree_G$ is positively invariant under control law $f^G$ as stated in the following theorem:
\begin{theorem}[Priority preservation]
Given an acylic priority graph $G$, $\chifree_{G}$ is positively invariant for the system under control law $f^G$, i.e., 
\begin{equation}
\forall x\in \chifree_{G}, \forall t\geq 0, \phi(t,x,f^G) \in \chifree_{G}
\end{equation}
\label{thm:safety-robust-control-map-velocity}
\end{theorem}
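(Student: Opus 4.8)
The plan is to reduce the statement to a single-slot invariance claim and then induct over slots. Since the control is piecewise constant on the intervals $[k,k+1)$ and $\dt=1$, the flow is affine on each slot: writing $x^k:=\phi(k,x,f^G)$ and $v:=f^G(x^k)$, the dynamics $\dot{x_i}=\vbf_i$ give $\phi(k+t,x,f^G)=x^k+tv$ for $t\in[0,1]$. Hence positive invariance for all $t\ge 0$ follows by induction once I establish the single-slot claim: if $x\in\chifree_G$ and $v:=f^G(x)$, then $x+tv\in\chifree_G$ for all $t\in[0,1]$. The base case is $x^0=x\in\chifree_G$, and the inductive step applies the single-slot claim to $x^k$ and uses $x^{k+1}=x^k+v$ (the value of the affine flow at $t=1$) to relaunch the induction.

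To prove the single-slot claim, recall $\chifree_G=\chi\setminus\bigcup_{(j,i)\in E(G)}\chiobs_{j\succ i}$, so it suffices to show, for each edge $(j,i)\in E(G)$, that $x+tv\notin\chiobs_{j\succ i}$ for all $t\in[0,1]$. The crucial simplification is that $\chiobs_{j\succ i}$ depends only on the coordinates $i$ and $j$; therefore only the components $v_i=f_i^G(x)$ and $v_j=f_j^G(x)$ are relevant to this particular cylinder, and the motion of all other robots can be ignored. I then split on the value $f_i^G(x)\in\{0,\vmax_i\}$.

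If $f_i^G(x)=\vmax_i$, then the ``else'' branch of Equation~\eqref{eq:definition-control-law-velocity} was taken, which is exactly the negation of the existence of an edge $(j,i)\in E(G)$ and a $t\in[0,1]$ with $x+t(\vmax_i\e_i+f_j^G(x)\e_j)\in\chiobs_{j\succ i}$. Thus for every incident edge $(j,i)$ and every $t\in[0,1]$ we have $x+t(\vmax_i\e_i+f_j^G(x)\e_j)\notin\chiobs_{j\succ i}$. In the coordinates $i,j$ this test configuration coincides with $x+tv$ (both have $i$-component $x_i+t\vmax_i$ and $j$-component $x_j+tf_j^G(x)$), and since the cylinder constrains only those two coordinates, $x+tv\notin\chiobs_{j\succ i}$, as required. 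If instead $f_i^G(x)=0$, robot $i$ is stationary over the slot while $v_j=f_j^G(x)\ge 0$, so $x+tv$ has a constant $i$-component and a nondecreasing $j$-component. Since $x\in\chifree_G\subset\chifree_{j\succ i}$, the invariance of $\chifree_{j\succ i}$ under increasing coordinate $j$ (Property~\ref{property:geometric-invariance}) gives $x+tv\in\chifree_{j\succ i}$, i.e. $x+tv\notin\chiobs_{j\succ i}$.

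The only genuinely delicate point is the bookkeeping in the first case: I must verify that the ``test'' configuration used in the definition of $f^G$ agrees with the actual slot trajectory in precisely the two coordinates that define $\chiobs_{j\succ i}$, and that the negated existential in Equation~\eqref{eq:definition-control-law-velocity} ranges over all incident edges simultaneously so that the conclusion holds for each of them. Everything else is routine: acyclicity is invoked only to guarantee that $f^G$ is well defined (Theorem~\ref{thm:control-law-existence}), which I assume here, and the geometric argument in the second case is exactly Property~\ref{property:geometric-invariance}.
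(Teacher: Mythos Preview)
Your proof is correct and follows essentially the same approach as the paper's: reduce to a single-slot invariance by induction, fix an edge $(j,i)\in E(G)$, split on whether $f_i^G(x)=0$ or $f_i^G(x)=\vmax_i$, use Property~\ref{property:geometric-invariance} in the first case and the definition of the control law in the second, and conclude via the cylindrical structure of $\chiobs_{j\succ i}$. The paper's write-up is slightly terser but the logic is identical.
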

\begin{proof}
Take an acyclic priority graph $G$ and an initial configuration $x\in\chifree_G$. By induction, it is sufficient to prove that the flow satisfies: 
\begin{equation}
\forall t\in[0,1], \phi(t,x,f^G)\in\chifree_G
\end{equation}
By construction, for all $i\in\robots$ and for all $(j,i)\in E(G)$, we have two options:
\begin{itemize}
\item either $f_i^G(x)=0$. For all $t\in[0,1]$, we have: 
\begin{eqnarray}
x_i+ t f_i^G(x) &=& x_i\\
x_j+t f_j^G(x) &\geq& x_j
\end{eqnarray}
Hence, by Property~\ref{property:geometric-invariance}, $x\in\chifree_{j \succ i}$ implies that for all $t\in[0,1]$, $\left(x+t \left(f_i^G(x)\e_i+f_j^G(x)\e_j\right)\right) \in \chifree_{j \succ i}$.
\item or $f_i^G(x)=\vmax_i$. Then, by construction of the control law, we have for all $t\in[0,1]$, $\left(x+t\left(f_i^G(x)\e_i+f_j(x)^G\e_j\right)\right) \equiv \left(x+t\left(\vmax_i\e_i+f_j^G(x)\e_j\right)\right) \in\chifree_{j \succ i}$.
\end{itemize}
Hence, in both cases, we obtain: 
\begin{equation}
x+t\left(f^G_i(x) \e_i+f^G_j(x)\e_j\right) \in \chifree_{j \succ i}
\label{eq:eq1-pf-safety-control-law-velocity}
\end{equation}
Moreover, we have:
\begin{eqnarray}
\phi_i(t,x,f^G)&=&x_i+t f^G_i(x) \\\
\phi_j(t,x,f^G)&=&x_j+t f^G_j(x)
\end{eqnarray}
As a result, Equation~\eqref{eq:eq1-pf-safety-control-law-velocity} implies that $\phi(t,x,f^G)\in\chifree_{j\succ i}$ for all $(j,i)\in E(G)$ and $t\in[0,1]$, i.e., $\phi(t,x,f^G)\in\chifree_G$ for all $t\in[0,1]$.
\end{proof}

Given a configuration $x\in\chifree_G$ and a priority graph $G$, we say $\vbf$ is a collision-free control for the pair $(x,G)$ if the flow starting from $x\in\chifree_G$ remains in $\chifree_G$. We write $\vbf\in\vcontrols_G^\free(x)$ defined as follows:
\begin{equation}
\vcontrols_G^\free(x):=\left\{\vbf\in\vcontrols: \phi(\RR_+,x,\vbf)\subset\chifree_G\right\}
\end{equation}

\subsection{Optimality}

First of all, we define the notion of optimality under assigned priorities used in the sequel. Given a priority graph $G$ and a control law $f$, we say $f$ is optimal for the priority graph $G$ if for all configurations $x\in\chifree_G$ and for all controls $\vbf\in\vcontrols_G^\free(x)$, we have:
\begin{equation}
\forall t\geq 0, \phi(t,x,f) \geq \phi(t,x,\vbf)
\end{equation}
In other words, the control law is optimal if for each robot, it maximizes the distance travelled through time while respecting priorities $G$. Note that this kind of optimality is even stronger than the family of Pareto optimality. Pareto optimality would state that it is impossible to make any individual robot travel farther without making at least one robot travel less. By contrast, our optimality result states that even if other robots travel less, it's impossible to make one robot travel farther while respecting the assigned priorities, i.e., all individual objectives are optimized. As a consequence, the obtained trajectory is optimal for a whole set of utility functions, more precisely, all utility functions which grow with the distance traveled by robots. For example, it minimizes the average exit time of robots, it also minimizes the maximum exit time of robots (the time at which the last robot exits the intersection). However, it is important to note that the optimality result is conditioned on the assigned priorities. Note that the trajectory resulting from the application of the proposed control law corresponds to the left-greedy optimal trajectory of References~\cite{Ghrist2005,Ghrist2006}, where it is noticed that it is a local optimum, in that it is optimal over trajectories belonging to the same homotopy class. Hence, obtaining a globally optimal trajectory would require exploring all feasible priorities, i.e., exploring all homotopy classes. 
\begin{theorem}[Optimality]
Given an acyclic priority graph $G$, the control law $f^G$ is optimal for the priority graph $G$, in the sense that for all controls $\vbf\in\vcontrols_G^\free(x)$, we have:
\begin{equation}
\forall t\geq 0, \phi(t,x,f^G) \geq \phi(t,x,\vbf)
\end{equation}
\label{thm:control-law-optimality}
\end{theorem}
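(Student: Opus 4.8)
The plan is to show that the flow under $f^G$ dominates, coordinate by coordinate, every collision-free flow, by induction along a topological ordering of the acyclic graph $G$ (which exists exactly as in the proof of Theorem~\ref{thm:control-law-existence}). Fix $x\in\chifree_G$ and $\vbf\in\vcontrols_G^\free(x)$, and abbreviate $\phi^\star:=\phi(\cdot,x,f^G)$ and $\phi:=\phi(\cdot,x,\vbf)$. Both trajectories remain in $\chifree_G$: $\phi^\star$ by the priority preservation theorem (Theorem~\ref{thm:safety-robust-control-map-velocity}) and $\phi$ by the assumption $\vbf\in\vcontrols_G^\free(x)$. I want $\phi^\star(t)\ge\phi(t)$ for all $t\ge0$. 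I would process the robots so that each robot $j$ with $(j,i)\in E(G)$ is treated before $i$. The base case is a source $i$, for which $f^G_i\equiv\vmax_i$; the claim $\phi^\star_i\ge\phi_i$ is then immediate from the order-preservation property of the monotone control system (Section~\ref{sec:control-velocity-model}), since the constant control $\vmax_i$ dominates every admissible control $\vbf_i$.

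Because the controls are piecewise constant on the unit slots, I would first reduce the continuous statement to one at integer times. Suppose $\phi^\star_i(k)\ge\phi_i(k)$ holds at every integer $k$. On a slot $[k,k+1]$ each coordinate is affine with slope in $\{0,\vmax_i\}$. If $f^G_i(\phi^\star(k))=\vmax_i$, then $\phi^\star_i$ has the maximal slope $\vmax_i$, which is at least the slope of $\phi_i$, so domination persists throughout the slot. If $f^G_i(\phi^\star(k))=0$, then $\phi^\star_i$ is constant on the slot, hence $\phi^\star_i(t)=\phi^\star_i(k+1)\ge\phi_i(k+1)\ge\phi_i(t)$, the last inequality holding because $\phi_i$ is non-decreasing. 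Thus within-slot domination follows for free from integer-time domination. This reduction is the key device for taming the discrete update: it lets me avoid comparing the two flows at mismatched instants inside a slot.

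It then remains to prove the inductive step at integer times: assuming $\phi^\star(k)\ge\phi(k)$ and $\phi^\star_j\ge\phi_j$ (for all $t$) for every predecessor $j$ of $i$, show $\phi^\star_i(k+1)\ge\phi_i(k+1)$. The only nontrivial case is $f^G_i(\phi^\star(k))=0$ together with $\vbf_i(k)=\vmax_i$, where I must establish $\phi_i(k+1)=\phi_i(k)+\vmax_i\le\phi^\star_i(k)$. Here I would invoke the definition~\eqref{eq:definition-control-law-velocity} of the control law together with Property~\ref{property:geometric-invariance}: $f^G_i(\phi^\star(k))=0$ means that advancing robot $i$ from $\phi^\star(k)$ while its blocking predecessor $j$ moves at $f^G_j$ drives the configuration into the open completed cylinder $\chiobs_{j\succ i}$. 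Since membership in $\chiobs_{j\succ i}$ depends only on the pair $(x_i,x_j)$ and this set is invariant under increasing $x_i$ and decreasing $x_j$, and since under $\vbf$ the predecessor $j$ is no further ahead than under $f^G$ (the induction hypothesis $\phi_j\le\phi^\star_j$), the obstruction is at least as binding for $\vbf$: a full forward step of robot $i$ under $\vbf$ would bring its trajectory into $\chiobs_{j\succ i}$, contradicting $\im{\phi}\subset\chifree_G$.

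The main obstacle is precisely this last geometric bookkeeping in the presence of the slot discretisation: the instant at which robot $i$ under $\vbf$ reaches a prescribed position need not coincide with the instant used to detect the obstruction under $f^G$, so a naive pointwise comparison of the two trajectories fails. The crux is to combine the monotone dependence of $\chiobs_{j\succ i}$ on $(x_i,x_j)$ with the domination $\phi_j\le\phi^\star_j$ of the blocking predecessor to argue that the constraint felt by $i$ under $\vbf$ is no weaker than the one that already forces $f^G_i=0$; together with $\phi_i(k)\le\phi^\star_i(k)$ this rules out a full forward step and yields $\phi_i(k+1)\le\phi^\star_i(k)$. Once integer-time domination is closed for robot $i$, the reduction above upgrades it to all $t\ge0$, and the topological induction finishes the argument. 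The stated optimality consequences (minimising the average and the maximal exit time, and indeed every utility increasing in the distances travelled) then follow at once, since $\phi^\star$ dominates every admissible flow coordinatewise.
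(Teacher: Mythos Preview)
Your overall strategy—induction along a topological ordering of $G$, after reducing to integer times via the piecewise-affine structure—is a legitimate alternative to the paper's contrapositive argument (which instead isolates the first slot where some robot is overtaken and then selects a \emph{maximal} overtaken robot so that its blocking predecessor is not overtaken). Both routes, however, hinge on one elementary observation that your write-up misses.

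In the sub-case $f^G_i(\phi^\star(k))=0$, $\vbf_i(k)=\vmax_i$, you invoke Property~\ref{property:geometric-invariance} to transport the obstruction from $\phi^\star$ to $\phi$. You correctly note $\phi_j(k+t)\le\phi^\star_j(k)+t\,f^G_j(\phi^\star(k))$ from the predecessor hypothesis, and that $\chiobs_{j\succ i}$ is preserved when the $j$-coordinate decreases. But the same property requires the $i$-coordinate to \emph{increase}, whereas your hypothesis only gives $\phi_i(k)\le\phi^\star_i(k)$, hence $\phi_i(k+t)=\phi_i(k)+t\vmax_i\le\phi^\star_i(k)+t\vmax_i$—the wrong direction. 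Concretely, if $\phi_i(k)$ is several steps behind $\phi^\star_i(k)$, robot $i$ under $\vbf$ can perfectly well advance by $\vmax_i$ without entering $\chiobs_{j\succ i}$; your claim that ``the obstruction is at least as binding for $\vbf$'' is then simply false, and no contradiction arises.

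The repair is to use the binary velocity set $V_i=\{0,\vmax_i\}$ explicitly: both $\phi_i(k)$ and $\phi^\star_i(k)$ lie on the lattice $x_i+\vmax_i\mathbb{Z}$, so $\phi_i(k)\le\phi^\star_i(k)$ forces either $\phi_i(k)\le\phi^\star_i(k)-\vmax_i$ (whence $\phi_i(k+1)\le\phi^\star_i(k)=\phi^\star_i(k+1)$ immediately, no geometry needed) or $\phi_i(k)=\phi^\star_i(k)$. Only in the equality sub-case is there anything to prove, and there your invariance argument goes through verbatim because the $i$-coordinates now coincide exactly. The paper makes precisely this equality observation (Equations~\eqref{eq:phi-i-equality-k}--\eqref{eq:velocity-strictly-greater}), extracting it instead from the minimality of the first overtaking slot together with $V_i=\{0,\vmax_i\}$; it is the crux in both arguments.
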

\begin{proof}
We will prove Theorem~\ref{thm:control-law-optimality} by contraposition. Take an acyclic priority graph $G$, an initial condition $x\in\chifree$ and a control $\vbf\in\vcontrols$, Assume that there exists $i\in\robots$ and $t\geq 0$ such that $\phi_i(t,x,f^G) < \phi_i(t,x,\vbf)$. We have to prove that $\phi(\RR_+,x,\vbf)\cap\chiobs_G\neq\emptyset$. Consider $I:=\{t \geq 0: \exists i\in\robots: \phi_i(t,x,f^G) < \phi_i(t,x,\vbf)\}$. By assumption, $I\neq\emptyset$, then $I$ is a lower-bounded non-empty subset of $\RR$, so that $t^0=\inf I$ exists. Let $k^0$ be the unique $k\in\NN$ such that $t^0\in[k,k+1)$. By definition of $t^0$ and as the velocity control is piece-wise constant, we have:
\begin{equation}
\forall j\in\robots, \phi_j(k^0,x,f^G) \geq \phi_j(k^0,x,\vbf)\label{eq:not-already-overtaking-k}
\end{equation}
and there exists $i\in\robots$ such that:
\begin{equation}
\phi_i(k^0+1,x,f^G) < \phi_i(k^0+1,x,\vbf) \label{eq:overtaking-k-plus-1}
\end{equation}
As $V_i=\{0,\vmax_i\}$ (binary velocity control), Equations~\eqref{eq:overtaking-k-plus-1} and~\eqref{eq:not-already-overtaking-k} imply that:
\begin{equation}
\phi_i(k^0,x,f^G) = \phi_i(k^0,x,\vbf) \label{eq:phi-i-equality-k}
\end{equation}
\begin{equation}
\vbf_i(k)=\vmax_i>f^G_i(x^0)=0\label{eq:velocity-strictly-greater}
\end{equation}
where $x^0:=\phi(k^0,x,f^G)$. As $f^G_i(x^0)=0$, by construction of the control law $f^G$, there is necessarily an edge $(j,i)$ in the graph $G$ satisfying:
\begin{equation}
\left(x^0+t\left(\vmax_{i}\e_{i}+f_j^G(x^0)\e_j\right)\right) \in \chiobs_{j\succ i} \label{eq:collision-proof-optimality}
\end{equation}
for some $t\in[0,1]$. Assume additionally that $i$ is chosen to be a maximal element of the (acyclic) sub-graph of $G$ containing only vertices satisfying Equation~\eqref{eq:overtaking-k-plus-1}. Then, as $(j,i)\in E(G)$, $j$ does not satisfy Equation~\eqref{eq:overtaking-k-plus-1} and we have:
\begin{equation}
\phi_j(k^0+t,x,f^G) \geq\phi_j(k^0+t,x,\vbf)
\label{eq:j-not-overtaking}
\end{equation}
Combining Equations~\eqref{eq:phi-i-equality-k} and~\eqref{eq:j-not-overtaking}, we obtain:
\begin{eqnarray}
x^0_i+t\vmax_i &=\phi_i(k^0,x,\vbf)+t\vmax_i=& \phi_i(k^0+t,x,\vbf) \label{eq:ineq-i-to-prove-collision}\\
x^0_j+t f_j^G(x^0) &=  \phi_j(k^0+t,x,f^G) \geq& \phi_j(k^0+t,x,\vbf)\label{eq:ineq-j-to-prove-collision}
\end{eqnarray}
By Property~\ref{property:geometric-invariance}, as Equations~\eqref{eq:ineq-i-to-prove-collision} and~\eqref{eq:ineq-j-to-prove-collision} are satisfied, Equation~\eqref{eq:collision-proof-optimality} implies that:
\begin{equation}
\phi(k^0+t,x,\vbf)\in\chiobs_{j\succ i} \subset \chiobs_G
\end{equation}
In conclusion, 
\begin{equation}
\phi(\RR_+,x,\vbf) \cap \chiobs_G \neq \emptyset
\end{equation}
\end{proof}

The above theorem is illustrated in Figure~\ref{fig:trajectory-velocity-control-example-optimality}.
\begin{figure}[!htbp]
\begin{center}
\includegraphics[width=1.0\linewidth]{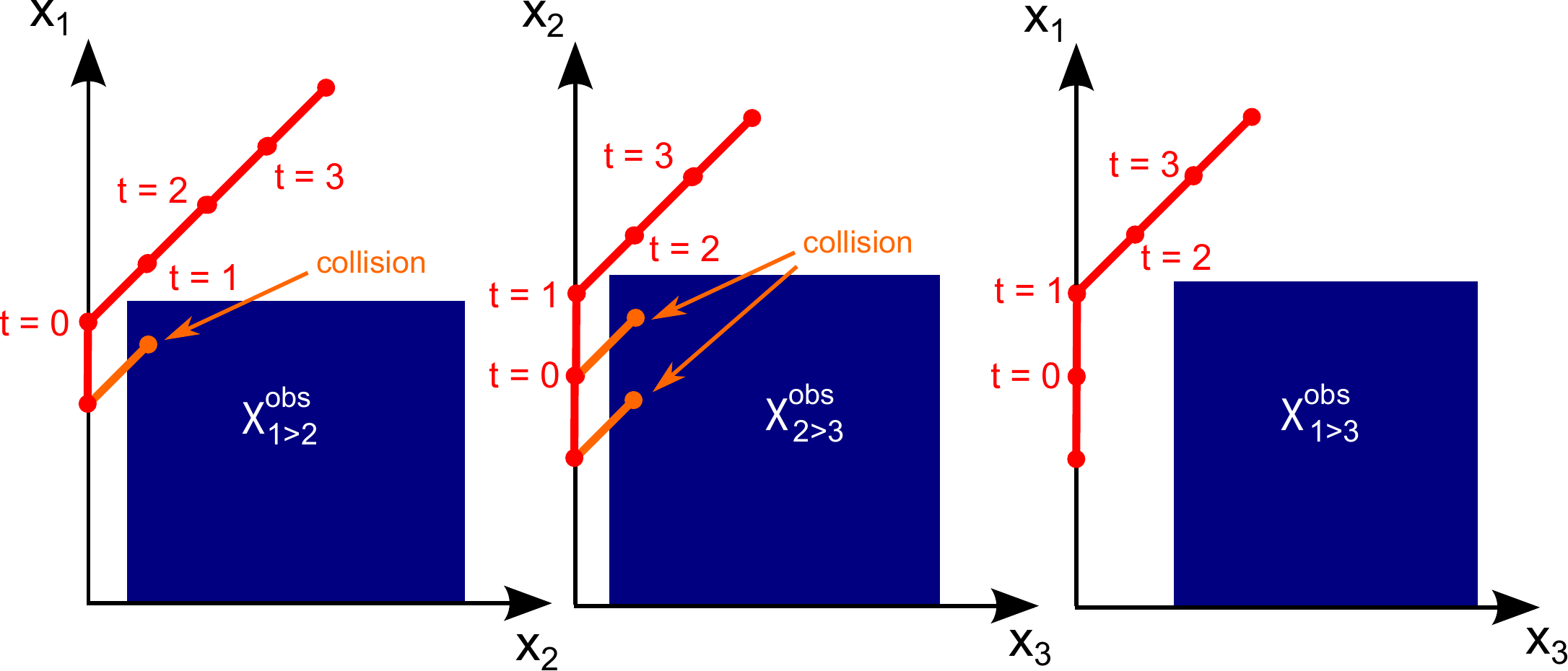}\hfill
\end{center}
\caption{Illustration of the optimality of the control law for a three-robot system.}
\label{fig:trajectory-velocity-control-example-optimality}
\end{figure}
It is clear that if at some point of time one robot tries to go faster than prescribed by the control law, a collision would occur. For example, in the left drawing, one can see that if robot $3$ tries to move forward at time $t=0$ instead of stopping as prescribed by the control law, a collision occurs (see the orange segment). 

\subsection{Liveness}

A key property in motion planning is liveness, i.e., the guarantee that every robot eventually reaches its goal. In the particular case of the problem studied here, every robot is expected to exit the obstacle region. Hence, liveness is guaranteed if every robot $i\in\robots$ eventually reaches the region $\chigoal:=\xobsmax+\RR_+^n$.

\begin{theorem}[Liveness]
Given an acyclic priority graph $G$ and a configuration $x^0\in\chifree_G$, there exists $T>0$ such that:
\begin{equation}
\phi(T,x^0,f^G)\in\chigoal
\end{equation}
\label{thm:liveness-control-law-velocity}
\end{theorem}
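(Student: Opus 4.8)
The plan is to exploit the acyclicity of $G$ through a topological ordering of the robots together with the boundedness of the obstacle region. Relabel the robots $1,\dots,n$ so that $(j,i)\in E(G)$ implies $j<i$; this is possible precisely because $G$ is acyclic. I will prove by induction on $i$ that there exists a time $T_i$ such that $\phi_k(t,x^0,f^G)\ge\xobsmax_k$ for all $k\le i$ and all $t\ge T_i$. Taking $T:=T_n$ then gives $\phi(T,x^0,f^G)\ge\xobsmax$ componentwise, i.e. $\phi(T,x^0,f^G)\in\chigoal$, which is the claim.

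The geometric heart of the argument is the following clearing property: if $(j,i)\in E(G)$ and $x_j\ge\xobsmax_j$, then the edge $(j,i)$ cannot switch robot $i$'s control off. Indeed, $\chiobs_{j\succ i}=\chiobs_{ij}-\RR_+\e_j+\RR_+\e_i$ extends the bounded set $\chiobs_{ij}$ only toward smaller values of the $j$-coordinate, so every point of $\chiobs_{j\succ i}$ has $j$-coordinate strictly below $\xobsmax_j$ (because $y_j<\xobsmax_j$ for all $y\in\chiobs_{ij}$). Since $f^G_j(x)\ge 0$ and $t\ge 0$, any point $x+t(\vmax_i\e_i+f^G_j(x)\e_j)$ has $j$-coordinate at least $x_j\ge\xobsmax_j$ and therefore lies outside $\chiobs_{j\succ i}$. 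By the very definition of $f^G$, the edge $(j,i)$ thus never triggers the blocking clause. This is exactly the place where the boundedness of $\chiobs$ is indispensable.

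For the base case $i=1$, robot $1$ has no incoming edge, so it is never blocked and $f_1^G\equiv\vmax_1$; hence $\phi_1(t,x^0,f^G)=x_1^0+t\,\vmax_1$ grows without bound and crosses $\xobsmax_1$ after finitely many slots, remaining beyond it thereafter since the flow is non-decreasing. For the inductive step, assume the statement for $i-1$. For every $t\ge T_{i-1}$ all predecessors $j<i$ of $i$ satisfy $\phi_j(t,x^0,f^G)\ge\xobsmax_j$, and monotonicity of the flow guarantees this persists. The only edges that can block robot $i$ are the $(j,i)\in E(G)$, all of which have $j<i$; by the clearing property none of them triggers the blocking clause once $t\ge T_{i-1}$, so $f_i^G=\vmax_i$ at every slot boundary past $T_{i-1}$. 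Consequently $\phi_i$ advances by $\vmax_i>0$ each slot and reaches, then stays beyond, $\xobsmax_i$ after finitely many additional slots; setting $T_i$ to that time closes the induction.

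The main obstacle is establishing the clearing property cleanly, that is, translating ``the higher-priority robot has left its conflict zone'' into ``the completed cylinder $\chiobs_{j\succ i}$ no longer constrains the lower-priority robot''; this is precisely what the existence of the global bound $\xobsmax$ delivers, and without it a lower-priority robot could in principle be delayed indefinitely. A secondary point to monitor is the time discretization: because $\vbf$ is piecewise constant on the unit slots, the clearing condition is only needed at the slot boundaries where $f^G$ is re-evaluated, which the argument already supplies. Priority preservation (positive invariance of $\chifree_G$ under $f^G$) ensures the whole trajectory stays in $\chifree_G$, so $f^G$ is well defined all along and the induction is consistent.
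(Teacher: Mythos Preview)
Your proof is correct and follows essentially the same approach as the paper: exploit acyclicity via a topological ordering, show the top robot is never blocked, and then inductively argue that once all higher-priority predecessors have passed $\xobsmax$ the next robot runs at $\vmax$ and exits in finitely many slots. The paper's own proof picks extremal vertices one by one rather than fixing a full topological order upfront, and it dismisses the key step with the phrase ``collisions occurring only with non exited robots''; your explicit clearing property (that $\chiobs_{j\succ i}$ lies entirely in $\{x_j<\xobsmax_j\}$ by boundedness of $\chiobs_{ij}$) is exactly the justification that sentence is hiding, so your write-up is in fact more complete.
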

The idea of the proof is that under acyclic priorities there is always at least one non exited robot able to move forward at maximum velocity until it exits the intersection.
\begin{proof}
Take an acyclic priority graph $G$. Consider the trajectory of the robots under control law $f^G$. $G$ being acyclic, there exists an extremal vertex $i_1\in\robots$ such that for all $j\in\robots$, $(j,i_1)\notin E(G)$. As a result, under the control law $f^G$, robot $i_1$ will always travel at maximal velocity and it will exit the intersection (it will reach position $\xobsmax_i$) in finite time $T_1$.

Now, assume that at time $T_m$, robots $i_1 \cdots i_m$ have exited the intersection and $m<n$ (there remain some robots). $G$ being acyclic, there exists an extremal element for the remaining robots denoted $i_{m+1}\in\robots\setminus \{i_1 \cdots i_m\}$ such that for all $j\in\robots\setminus \{i_1 \cdots i_m\}$, $(j,i_{m+1})\notin E(G)$. Collisions occurring only with non exited robots, for $t\geq T_m$ $j$ will always be at maximum velocity and it will exit the intersection in finite time at instant $T_{m+1}\geq T_m$. 

Iterating this process yields a sequence $(T_1 \cdots T_n)$ and all robots have exited the intersection at time $T:=T_n$.
\end{proof}

\chapter[Priority preserving control under kinodynamic constraints]{Priority preserving control \\under kinodynamic constraints}
\label{chap:control-acceleration}
\minitoc

In the present chapter, the acceleration of the robots is assumed to be controlled, and a control law is proposed aiming at coordinating multiple robots with assigned priorities under second-order kinodynamic constraints. The method is inspired by References~\cite{DelVecchio2009,Colombo2012,Kowshik2011,Verma2012,Hafner2011} dealing with the coordination of a small number of vehicles without explicit notion of priorities. Preliminaries of the presented results are presented in our conference paper~\cite{Gregoire2013-dynamic-constraints}, and a more accomplished version in our article~\cite{Gregoire2013-priority-based}.

\paragraph{Sketch of the chapter} Similarly to the previous chapter, the first section exposes the second-order dynamics model and shows that the resulting system is a monotone control system. The second section constructs a priority preserving control law. Liveness is provably guaranteed and a robustness property is provided stating that a robot may brake at any moment with neither colliding, nor violating priorities.

\section{A monotone control system}

Each robot $i$ is modeled as a second-order control system with state $s_i=(x_i,v_i)\in S_i:=\RR \times [0,\vmax_i]$, whose evolution is described by the differential equation:
\begin{eqnarray}
\dot{x_i}(t) &= & v_i(t)
\label{eq-diff-state1-deterministic}
\\
\dot{v_i}(t) & =& \ubf_i(t) ~\delta(\ubf_i(t),v_i(t))
\label{eq-diff-state2-deterministic}
\end{eqnarray}
where $\ubf_i:\RR_+ \to U_i$ is the control of robot $i$ and $\vmax_i$ denotes the non-negative speed limit for robot $i$. We let $U_i:=[\uL_i,\uH_i]$ be the set of feasible control values. $\uL_i<0$ represents the maximum brake control value and $\uH_i>0$ represents the maximum throttle control value. $\delta$ is a binary function merely ensuring that $v_i\in [0,\vmax_i]$ at all times, that is,
$\delta(\ubf_i(t),v_i(t))=1$  except for $v_i(t)=0$ and $\ubf_i(t)<0$, and for $v_i(t)=\vmax_i$ and $\ubf_i(t)>0$, where it vanishes.

The control is assumed to be updated in discrete time every $\dt>0$: 
\begin{equation}
\forall k\in\NN, \forall t\in[k\dt, (k+1)\dt), \ubf_i(t) \equiv \ubf_i(k\dt)
\end{equation}
The time interval $[k\dt, (k+1)\dt)$ will be referred to as (time) slot $k$. For the sake of simplicity we let $\dt := 1$ in the sequel. We let $\controls_i$ denote the set of controls $\ubf_i:\RR_+ \to U_i$ piecewise constant on intervals $[k,k+1)$, $k\in\NN$. We let $t \mapsto \phi_i(t,s_i,\ubf_i)$ denote the flow of the system starting at initial condition $s_i\in S_i$  with control $\ubf_i \in \controls_i$. We also define the vectorial state $s:=(s_i)_{i\in\robots}\in S$, the vectorial control $\ubf:=(\ubf_i)_{i\in\robots}\in \controls:=\prod_{i\in\robots}\controls_i$, and the vectorial flow: $\phi(t,s,\ubf):=(\phi_i(t,s_i,\ubf_i))_{i\in\robots}$. We let $\uL:=(\uL_i)_{i\in\robots}$, $\uH:=(\uH_i)_{i\in\robots}$ and we define the constant controls $\uLbf(t):=\uL$ and $\uHbf(t):=\uH$. 

We define projection operators as follows: given a state $s=(x,v)=(s_i)_{i\in\robots}=((x_i,v_i))_{i\in\robots}$, we let $\pi_x(s):=x$, $\pi_{x,i}(s):=\pi_{x,i}(s_i):=x_i$, $\pi_v(s):=v$, and $\pi_{v,i}(s):=\pi_{v,i}(s_i):=v_i$. We also define projected flows as follows: $\phi_x=\pi_x \circ \phi$, $\phi_{x,i}=\pi_{x,i} \circ \phi$, $\phi_v=\pi_v \circ \phi$ and $\phi_{v,i}=\pi_{v,i} \circ \phi$. Figure~\ref{fig:control-in-acceleration} depicts the projected flow $t \mapsto \phi_{x,i}(t,s_i,\ubf_i)$ for a particular control $\ubf_i$.
\begin{figure}[!htbp]
\begin{center}
\includegraphics[width=1.0\linewidth]{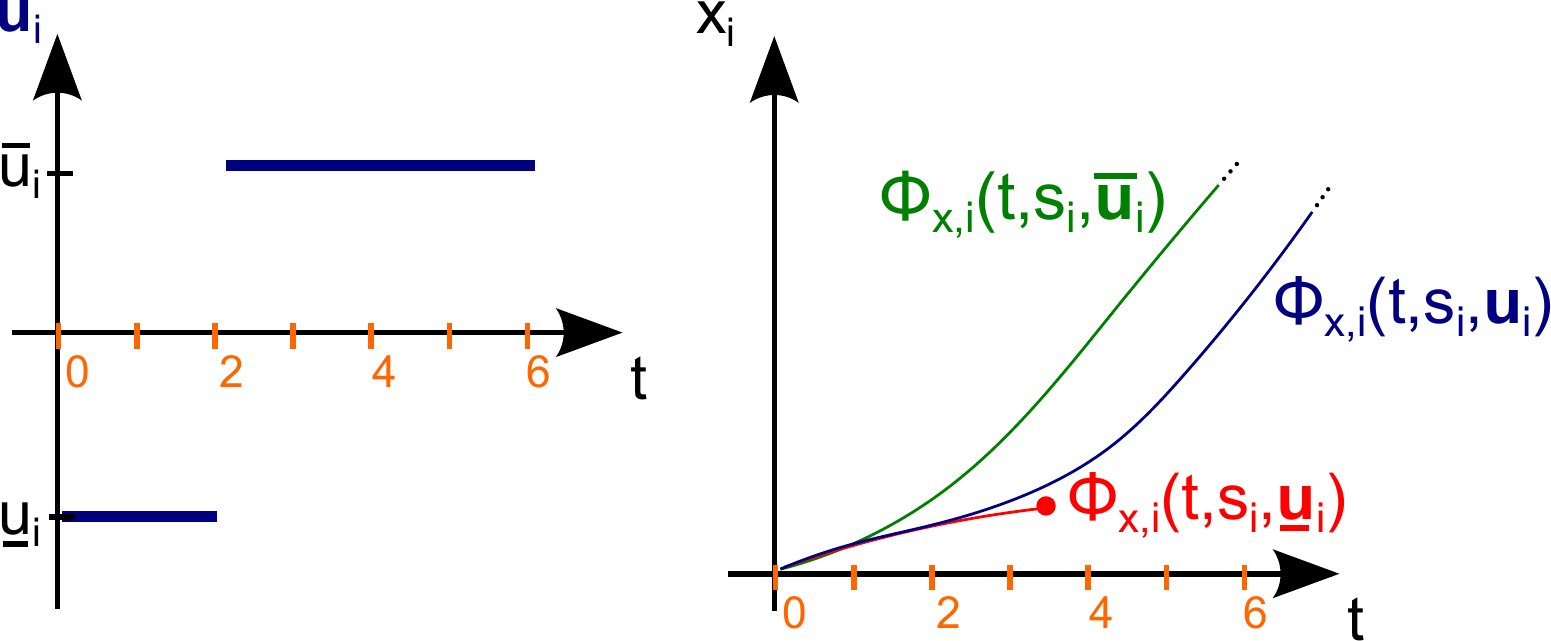}\hfill
\end{center}
\caption{An example of piecewise constant control $\ubf_i$ (left) and the corresponding projected flow $t \mapsto \phi_{x,i}(t,s_i,\ubf_i)$ starting from the initial configuration $s_i$ (right).}
\label{fig:control-in-acceleration}
\end{figure}

We introduce partial orders as follows:
\begin{eqnarray}
\forall \ubf_i^1,\ubf_i^2\in \controls_i, \ubf_i^1 \preceq \ubf_i^2 &\text{if}& \forall t\geq 0, \ubf_i^1(t) \leq \ubf_i^2(t)\\
\forall s_i^1=(x_i^1,v_i^1),s_i^2=(x_i^2,v_i^2)\in S_i, s_i^1 \preceq s_i^2 &\text{if}& x_i^1 \leq x_i^2 \text{ and } v_i^1 \leq v_i^2\label{orderr:eq}\\
\forall \phi^1,\phi^2:\RR_+\to S, \phi^1 \preceq \phi^2 &\text{if}& \forall t\geq 0,\phi^1(t) \preceq \phi^2(t)
\end{eqnarray}
The control system~\eqref{eq-diff-state1-deterministic}-\eqref{eq-diff-state2-deterministic}  is a monotone control system~\cite{Angeli2003} with regards to the relative orders defined above as easily seen in Figure~\ref{fig:control-in-acceleration} (in this example, we have $\uLbf_i \preceq \ubf_i \preceq \uHbf_i$). More precisely, the following key property holds:

\begin{property}[Order preservation]
The flow $t \mapsto \phi_i(t,s_i,\ubf_i)$ is order-preserving with regards to $s_i$ and $\ubf_i$.
\end{property}

Note that in our open loop model, control $\ubf_i$ only acts on robot $i$, that is, $\ubf$ is a collection of independent controls: it does not achieve any kind of coordination between the robots. The control law introduced in the sequel is precisely aiming at coordinating the robots to avoid collisions and respect priorities.

\section{The proposed decentralized control law}
\label{law:sec}

In the absence of inertia as in Chapter~\ref{chap:optimal-control-velocity}, robots can stop instantly to respect priorities. With second-order dynamics, robots cannot stop instantly anymore and need to anticipate, taking into account their brake distance, to effectively respect priorities. The idea proposed here is to constrain the multi robot system to remain in so-called brake safe states where robots can always safely brake without colliding. 

Define the set of brake safe states as follows:
\begin{equation}
B_{G}:=\{ s\in S: \phi_x\left(\RR_+,s,\uLbf\right) \subset \chifree_G\} \subset S
\end{equation} 
According to the above definition, a state $s\in S$ is brake safe if, starting at initial condition $s$ under maximum brake control, the system remains in $\chifree_G$ (see Figure~\ref{fig:brake-safety}). In particular, a state $(x,0)$ with $x\in\chifree_G$ is brake safe, so $B_G$ is not empty provided $\chifree_G$ is not empty. Figure~\ref{fig:brake-safety} illustrates brake safety in the coordination space and Figure~\ref{fig:brake-safety-examples} attempts to represent the concept in the real space.
\begin{figure}[p]
\begin{center}
\includegraphics[width=0.5\linewidth]{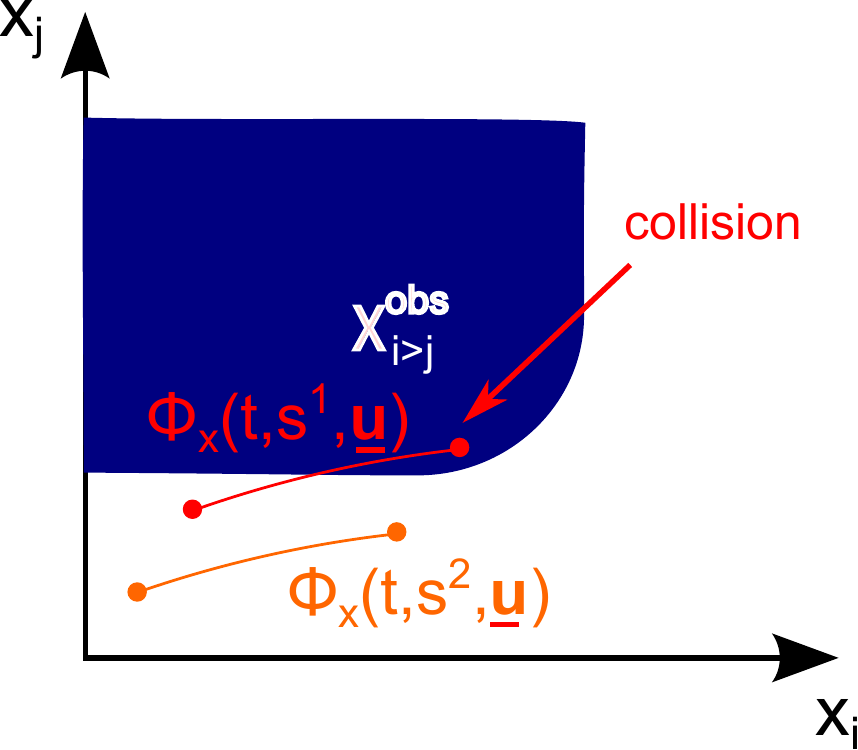}\hfill
\end{center}
\caption{Illustration of brake safety in the coordination space for a two-robot scenario with the assigned priority $(i,j)\in E(G)$. The flow starting from $s^1$ (resp. $s^2$) under control $\uLbf$ is constructed. $s^1$ is not brake safe as this flow collides $\chiobs_{i\succ j}$. $s^2$ is brake safe as the flow is collision-free with regards to $\chiobs_{i\succ j}$.}
\label{fig:brake-safety}
\end{figure}
\begin{figure}[p]
\begin{center}
\includegraphics[width=0.8\linewidth]{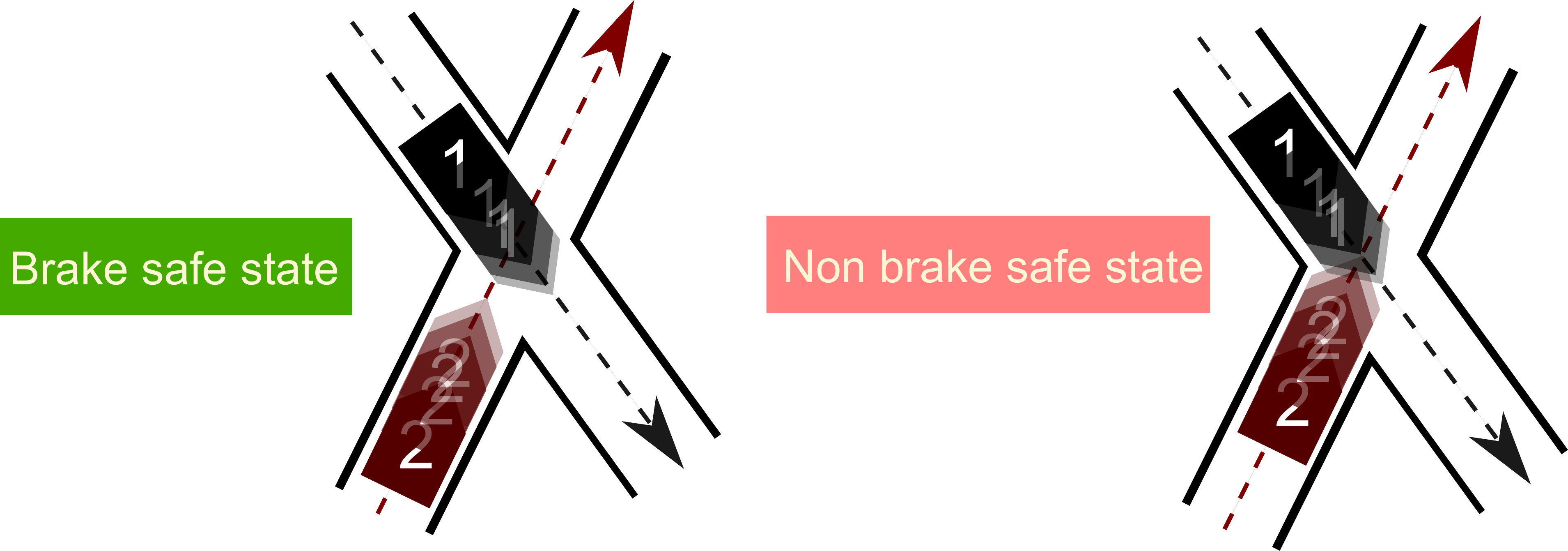}\hfill
\end{center}
\caption{Illustration of brake safety in the real space for a two-robot scenario. Robots with lower opacity are used to represent the flow under maximum brake command. In the left drawing, the two robots stop without colliding when applying maximum brake command: they are in a brake safe state. In the right drawing, a collision occurs when the two robots brake maximally: they are not in a brake safe state.}
\label{fig:brake-safety-examples}
\end{figure}
Brake safety is more conservative than remaining in the escape set proposed in~\cite{DelVecchio2009}, which includes all states from which there exists at least one control (not necessarily $\uLbf$) avoiding future collisions. It is also more conservative than not entering an inevitable collision state as defined in~\cite{Fraichard2004,Bouraine2011} where neither the geometric path in $\RR_2$ nor the control to avoid collisions are fixed. The idea behind this quite conservative approach is twofold:
\begin{enumerate}[(a)]
\item designing a decentralized control law: the output of the control law proposed in the following can be computed independently on each robot. It is much different from approaches where collision-free trajectories are computed, either in a centralized manner (see, e.g.,~\cite{Colombo2012} where decentralization is considered as a possible extension), or with some agreement with message-passing through communication links (see, e.g.,~\cite{Bekris2007}).
\item demonstrating robustness regarding unexpected deceleration of some robots: we believe that this is a highly valuable property as many unpredictable events requiring a robot to brake may happen in real applications.
\end{enumerate}
Importantly, note that checking whether a state $s \in S$ is brake safe consists in computing a finite time single flow $t\mapsto\phi(t,s,\uL)$ and checking for collisions with respect to each completed obstacle region $\chiobs_{i\succ j}$ for all $(i,j)\in E(G)$, yielding a quadratric complexity.

We propose to build a control law $g^G:S\to U$ such that starting from an initial brake safe state in $B_G$, the flow of the system controlled by the control law $g^G$ is ensured to remain in $B_G$ (thus being collision-free and respecting priorities $G$). In other words, using the terminology of~\cite{Kerrigan2000}, $B_G$ shall be positively invariant for the system under control law $g^G$. 

The rationale for our control law is as follows. Consider a robot $i$ and a robot $j$ that has priority over $i$. Given an initial configuration of the two robots, the worst-case scenario is when $j$ brakes whereas $i$ accelerates in the next time slot. If the trajectory of the system in the next time slot under this worst-case scenario is collision-free and if the reached state is brake safe, robot $i$ may accelerate in any case. Otherwise, it is required to brake. This is formalized below.

Let $\ubf_i^\imp\in \controls_i$ denote the impulse control for robot $i$ defined by (see Figure~\ref{fig:control-utilde}): 
\begin{equation}
\ubf_i^\imp(k) := 
\begin{cases}
\uH_i & \text{if } k=0\\
\uL_i & \text{if } k\geq 1
\end{cases}
\label{eq:impulse-control}
\end{equation}
\begin{figure}[!htbp]
\begin{center}
\includegraphics[width=0.7\linewidth]{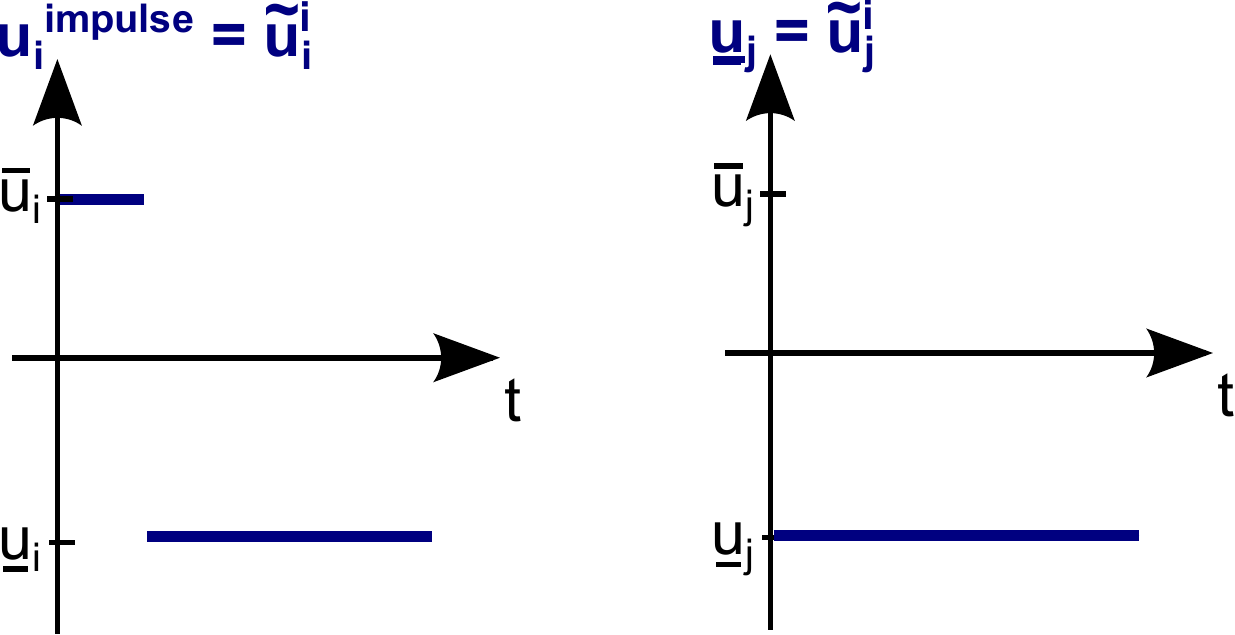}\hfill
\end{center}
\caption{The control $\tilde{\ubf}^i$ used in the formulation of the control law. For $j\neq \i$ (right drawing), $\tilde{\ubf}^i_j$ is simply the maximum brake command $\uLbf_j$. For $j=i$ (left drawing), $\tilde{\ubf}^i_i$ is the impulse control $\ubf_i^\imp$.}
\label{fig:control-utilde}
\end{figure}
Now let $\tilde{\ubf}^i$ denote the worst-case vectorial control with regards to $i$ defined componentwise by (see Figure~\ref{fig:control-utilde}): 
\begin{equation}
\tilde{\ubf}^i_j:=\begin{cases}
 \ubf_i^\imp & \text{if } j=i\\
\uLbf_j & \text{if } j\neq i
\end{cases}
\end{equation}
The control law can then be formulated synthetically:
\begin{equation}
g_i^G(s):=\begin{cases}
\uL_i & \text{if } \exists (j,i)\in E(G), \exists t\geq  0 \text{ s.t. } \phi_x(t,s,\tilde{\ubf}^i) \in \chiobs_{j\succ i} \\
\uH_i & \text{ else.}
\end{cases}
\label{eq-control-map}
\end{equation}
\begin{figure}[!htbp]
\begin{center}
\includegraphics[width=1.0\linewidth]{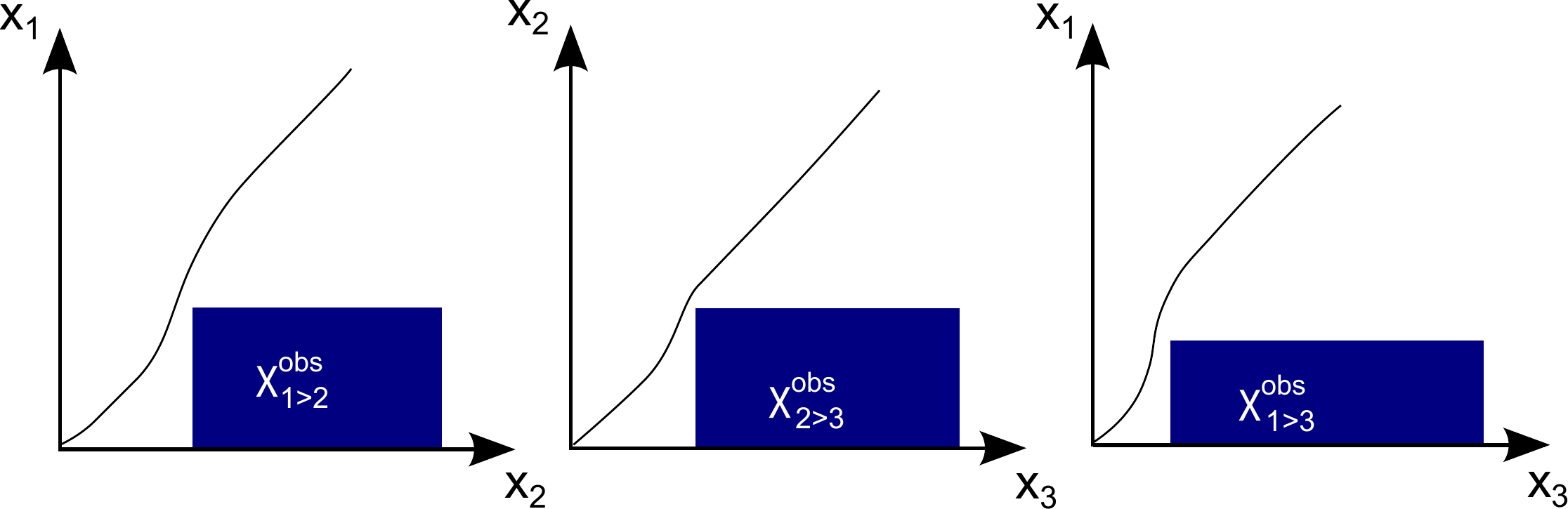}\hfill
\end{center}
\caption{Look of the trajectory for a three-robot system with acyclic assigned priorities $1\succ 2$, $2\succ 3$ and $1\succ 3$ under control law $g^G$. }
\label{fig:trajectory-acceleration-control-example}
\end{figure}
This simply means that robot $i$ always keeps a safe distance so that if a higher-priority robot $j$ suddenly brakes, robot $i$ may apply the maximum brake command until possibly stop without violating the priority. To this purpose, robot $i$ looks at the state that would be reached if it accelerates while the higher-priority robot $j$ brakes. If the simulated reached state is brake safe, $i$ may accelerate; otherwise, it must brake (see the two cases in Figures~\ref{fig:control-law-case1} and~\ref{fig:control-law-case1}). The look of the trajectory under control law $g^G$ is depicted in Figure~\ref{fig:trajectory-acceleration-control-example}. Note that, as for brake safety checking, computing the output of the control law is of quadratic complexity for the same reasons: it requires to compute a finite time single flow and to check for collisions. Note also that each component $g_i^G(s)$ can be computed independently for each $i\in\robots$, which means that the proposed control law is decentralized.

\begin{figure}[p]
\begin{center}
\includegraphics[width=0.7\linewidth]{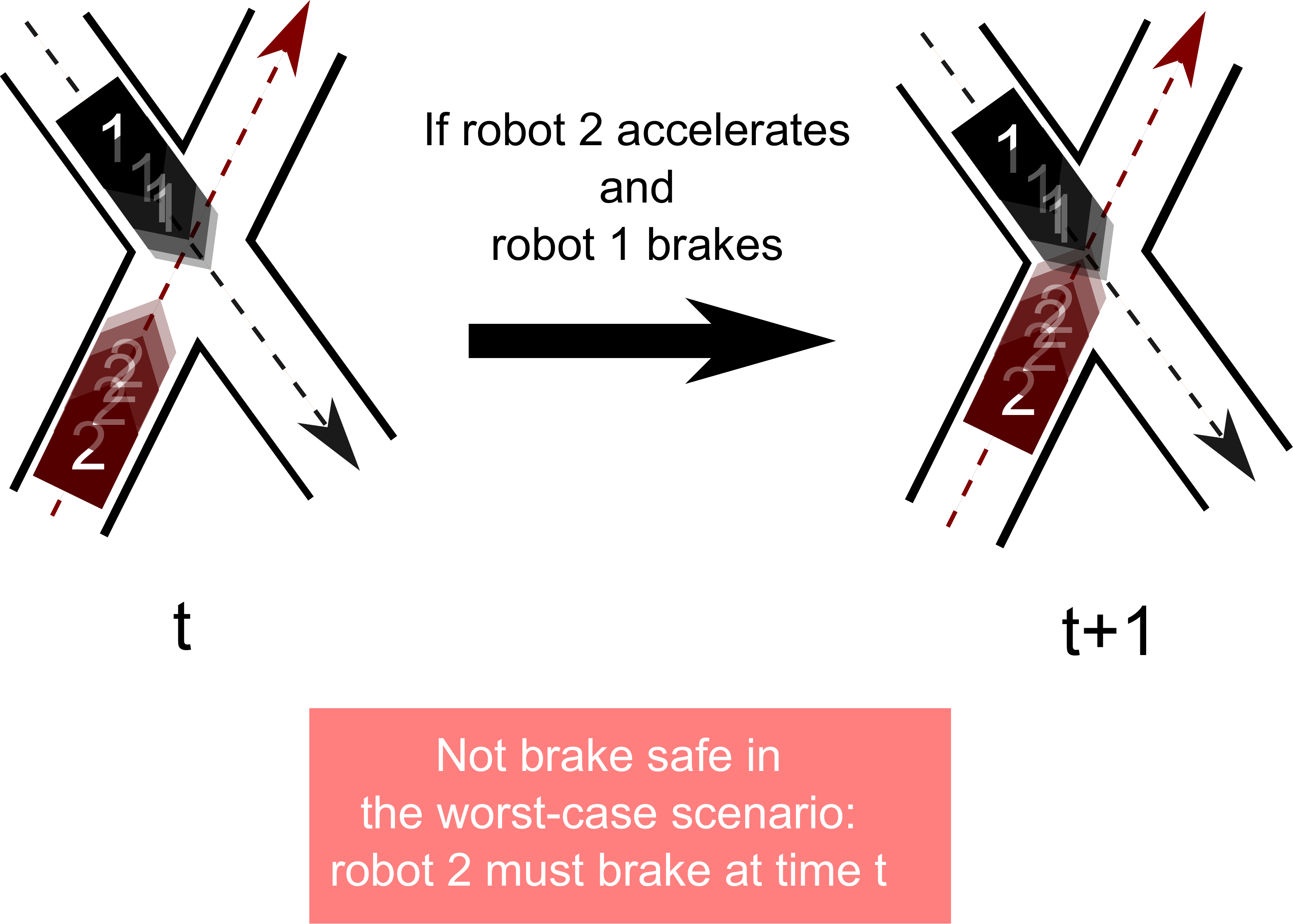}\hfill
\end{center}
\caption{In this setting where robot 1 is assumed to have priority, the worst-case scenario under which robot 2 accelerates and robot 1 brakes leads to a new state which is not brake safe. In this case, the control law requires robot 2 to brake. Robots with lower opacity are used to represent the brake trajectory.}
\label{fig:control-law-case1}
\end{figure}
\begin{figure}[p]
\begin{center}
\includegraphics[width=0.7\linewidth]{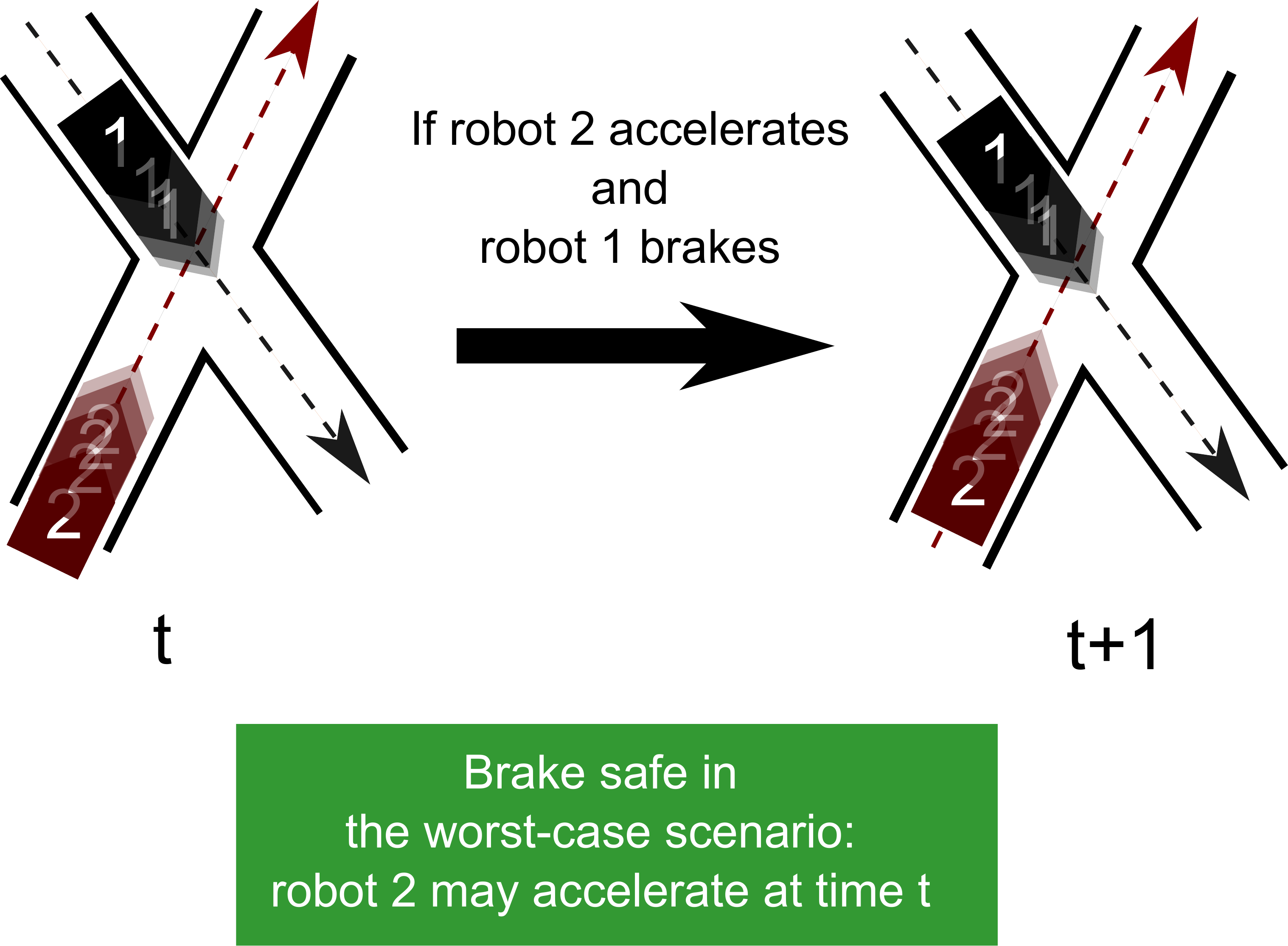}\hfill
\end{center}
\caption{In this setting where robot 1 is assumed to have priority, the worst-case scenario under which robot 2 accelerates and robot 1 brakes leads to a new state which is brake safe. In this case, the control law allows robot 2 to accelerate. Robots with lower opacity are used to represent the brake trajectory.}
\label{fig:control-law-case2}
\end{figure}

\subsection{Priority preservation}

Now, we need to introduce the following notation. Given a feedback control law $g:S\to U$,  with a slight abuse of notation we let $t \mapsto \phi(t,s,g)$ denote the vectorial flow of the system starting at initial condition $s\in S$ and controlled by $\ubf\in \controls$ satisfying:
\begin{equation}
\forall k\in\NN, \ubf(k) \equiv g(\phi(k,s,\ubf))
\end{equation}
First of all, we prove the key property of our control law that is the safety guarantee. More precisely, starting from a brake safe state in $B_G$, the system under control law $g^G$ is ensured to remain in $B_G$, i.e., priorities $G$ are preserved and the system is always in a brake safe state. Following the terminology of~\cite{Kerrigan2000}, $B_G$ is positively invariant under control law $g^G$ as stated in the following theorem:
\begin{theorem}[Priority preservation]
Given a priority graph $G\in\graphs$, the set of brake safe states $B_G$ is positively invariant (in discrete time) for the system under control law $g^G$, i.e.,
\begin{equation}
\forall s\in B_G, \forall k\in\NN, \phi(k,s,g^G) \in B_G
\label{eq:invariance-B-G}
\end{equation}
Moreover, the configuration of the system remains in $\chifree_G$ through time, i.e.,
\begin{equation}
\forall s\in B_G, \forall t\geq 0, \phi_x(t,s,g^G)\in \chifree_G
\label{eq:no-collision-during-time-slots}
\end{equation}
\label{thm-safe-control-map}
\end{theorem}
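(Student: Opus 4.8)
The plan is to prove both claims by a one-slot invariance argument combined with induction on $k$, leaning throughout on two tools: the order-preservation (monotonicity) of the flow in the state and in the control, and the geometric invariance of the completed cylinders, Property~\ref{property:geometric-invariance}. In the form $\chifree_{j\succ i}+\RR_+\e_j-\RR_+\e_i=\chifree_{j\succ i}$, this invariance says that from a point of $\chifree_{j\succ i}$ one may push the $j$-coordinate forward and/or the $i$-coordinate backward and remain in the free region. Since $\chiobs_{j\succ i}$ depends only on coordinates $i$ and $j$, the positions of the other robots are irrelevant when testing membership, which is exactly why the worst-case control $\tilde{\ubf}^i$ is allowed to brake every robot other than $i$ without any loss of generality.

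For the discrete invariance~\eqref{eq:invariance-B-G}, by induction it suffices to show that $s\in B_G$ implies $s':=\phi(1,s,g^G)\in B_G$, i.e. that for every edge $(j,i)\in E(G)$ the brake trajectory $t\mapsto\phi_x(t,s',\uLbf)$ avoids $\chiobs_{j\succ i}$. I would fix such an edge and split on the value of $g_i^G(s)$. If $g_i^G(s)=\uH_i$, robot $i$ accelerates on slot $0$ and then brakes, so its trajectory matches the impulse component of $\tilde{\ubf}^i$, giving $\phi_{x,i}(t,s',\uLbf)=\phi_{x,i}(1+t,s,\tilde{\ubf}^i)$; meanwhile $g_j^G(s)\geq\uL_j$, so by monotonicity $\phi_{x,j}(t,s',\uLbf)\geq\phi_{x,j}(1+t,s,\tilde{\ubf}^i)$. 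Because $g_i^G(s)=\uH_i$ guarantees by the control law that $\phi_x(1+t,s,\tilde{\ubf}^i)\in\chifree_{j\succ i}$, Property~\ref{property:geometric-invariance} (equal in $x_i$, larger in $x_j$) yields $\phi_x(t,s',\uLbf)\in\chifree_{j\succ i}$. If instead $g_i^G(s)=\uL_i$, robot $i$ keeps braking, so $\phi_{x,i}(t,s',\uLbf)=\phi_{x,i}(1+t,s,\uLbf)$ and again $\phi_{x,j}(t,s',\uLbf)\geq\phi_{x,j}(1+t,s,\uLbf)$ by monotonicity; since $s\in B_G$ gives $\phi_x(1+t,s,\uLbf)\in\chifree_{j\succ i}$, the same invariance closes the case. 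Hence $s'\in B_G$.

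For the continuous-time claim~\eqref{eq:no-collision-during-time-slots}, I would argue slot by slot: fix a slot, set $s^k:=\phi(k,s,g^G)$ which lies in $B_G$ by the first part, and prove $\phi_x(t,s^k,g^G)\in\chifree_G$ for $t\in[0,1)$. The reasoning mirrors the previous one but within a single slot rather than across the shift. For an edge $(j,i)$: if $g_i^G(s^k)=\uH_i$, robot $i$ matches the slot-$0$ part of $\tilde{\ubf}^i$ and robot $j$ dominates its braking trajectory, so Property~\ref{property:geometric-invariance} together with the defining no-collision condition gives $\phi_x(t,s^k,g^G)\in\chifree_{j\succ i}$; if $g_i^G(s^k)=\uL_i$, robot $i$ brakes and brake safety of $s^k$ gives the same directly. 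Taking the union over all edges yields $\phi_x(t,s^k,g^G)\in\chifree_G$ on the whole slot, and concatenating slots gives the claim for all $t\geq 0$.

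The main obstacle, and really the crux, is the bookkeeping that matches the actual braking trajectory issued from the reached state to the worst-case trajectory $\tilde{\ubf}^i$ hard-coded in the control law. Everything depends on three facts lining up at once: robot $i$'s accelerate-then-brake motion is exactly the impulse control, robot $j$'s actual motion dominates the all-brake motion by monotonicity, and the one-sided invariance of $\chifree_{j\succ i}$ converts ``equal in $i$, larger in $j$'' into preservation of membership. The delicate part is keeping every inequality pointing the right way: the control law tests the configuration most adverse to $(j,i)$ (robot $i$ farthest ahead, robot $j$ farthest behind), so I must verify that the reached-state brake trajectory is componentwise no worse than this tested extreme — which is precisely what order-preservation of the flow supplies.
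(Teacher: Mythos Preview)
Your proposal is correct and follows essentially the same approach as the paper: a one-slot induction, a case split on whether $g_i^G(s)$ equals $\uH_i$ or $\uL_i$, and the combination of order-preservation with Property~\ref{property:geometric-invariance} to transfer membership in $\chifree_{j\succ i}$ from the worst-case trajectory $\tilde{\ubf}^i$ (in the accelerate case) or the all-brake trajectory (in the brake case) to the actual trajectory. The only organizational difference is that the paper packages this argument as the proof of the slightly more general Theorem~\ref{thm-robustness-brake-application} (any control $\ubf$ with $\ubf(k)\leq g^G(\phi(k,s,\ubf))$) and then obtains Theorem~\ref{thm-safe-control-map} as the limiting case $\ubf(k)=g^G(\phi(k,s,\ubf))$; your direct proof specializes this but uses the identical mechanism.
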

The above theorem asserts that under control law $g^G$, provided the system starts in a brake safe state, the sequence of future states at the beginning of each time slot is a sequence of brake safe states (see Equation~\eqref{eq:invariance-B-G}). Moreover, the flow of the system remains in $\chifree_G$ in continuous time (see Equation~\eqref{eq:no-collision-during-time-slots}), i.e., no collision occurs and priorities are preserved. It is a direct consequence of Theorem~\ref{thm-robustness-brake-application} and appears as a limiting case. 

\subsection{Robustness}

The control law $g_i^G$ returns the maximum control value that robot $i$ can safely apply, but it is in fact always safe to apply a lower control value, including letting all robots brake as much as possible, i.e., leading to an emergency stop. This property stated in Theorem~\ref{thm-robustness-brake-application} below is very valuable because for applications in intelligent transportation systems, even without considering extreme situations such as emergency stops, it is very usual that a vehicle needs to brake because of an unpredictable event such as a pedestrian crossing the road, or a loss of sensing/communication abilities.

\begin{theorem}[A broad class of priority preserving controls]
Given a priority graph $G\in\graphs$, an initial condition $s\in B_G$, and a control $\ubf \in \controls$ that satisfies: 
\begin{equation}
\forall k\in\NN, \ubf(k) \leq g^G(\phi(k,s,\ubf))
\label{eq-inequality-control-map}
\end{equation}
The set of brake safe states $B_G$ is positively invariant (in discrete time), i.e.,
\begin{equation}
\forall k\in\NN, \phi(k,s,\ubf) \in B_G
\end{equation}
Moreover, the configuration of the system remains in $\chifree_G$ through time, i.e.,
\begin{equation}
\forall t\geq 0, \phi_x(t,s,\ubf)\in\chifree_G
\end{equation}
\label{thm-robustness-brake-application}
\end{theorem}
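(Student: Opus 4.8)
The plan is to prove both conclusions simultaneously by induction on the slot index $k$, reducing everything to a single statement about one time slot. Suppose inductively that $s^0 := \phi(k,s,\ubf) \in B_G$ (the base case $k=0$ is the hypothesis $s\in B_G$), write $u := \ubf(k) \leq g^G(s^0)$ for the control applied on slot $k$, and let $\ubf'$ be the auxiliary control that applies $u$ on slot $0$ and then the maximal brake $\uLbf$ on every subsequent slot. The crucial observation is that proving
\begin{equation}
\phi_x(\RR_+, s^0, \ubf') \subset \chifree_G
\label{eq:plan-key}
\end{equation}
yields both conclusions at once: restricting \eqref{eq:plan-key} to $t\in[0,1]$ gives $\phi_x(t,s^0,u)\in\chifree_G$ along the slot (continuous-time priority preservation), while restricting to $t\geq 1$ says precisely that braking from $\phi(1,s^0,u)=\phi(k+1,s,\ubf)$ stays in $\chifree_G$, i.e.\ $\phi(k+1,s,\ubf)\in B_G$, which closes the induction.

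To establish \eqref{eq:plan-key} I would argue edge by edge, since $\chifree_G=\bigcap_{(i,j)\in E(G)}\chifree_{i\succ j}$ and each cylinder $\chiobs_{j\succ i}$ depends only on the two coordinates $i$ and $j$. Fix an edge $(j,i)\in E(G)$; the goal is $\phi_x(\RR_+,s^0,\ubf')\subset\chifree_{j\succ i}$, and I split on the value of $g_i^G(s^0)$. If $g_i^G(s^0)=\uH_i$, then by definition of the control law the worst-case trajectory $\phi_x(\cdot,s^0,\tilde{\ubf}^i)$ avoids $\chiobs_{j\succ i}$; comparing $\ubf'$ to $\tilde{\ubf}^i$ componentwise gives $\ubf'_i\preceq\ubf_i^\imp=\tilde{\ubf}^i_i$ (since $u_i\leq\uH_i$) and $\ubf'_j\succeq\uLbf_j=\tilde{\ubf}^i_j$ (since $u_j\geq\uL_j$ by feasibility). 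Order preservation of the flow then yields $\phi_{x,i}(t,s^0,\ubf')\leq\phi_{x,i}(t,s^0,\tilde{\ubf}^i)$ and $\phi_{x,j}(t,s^0,\ubf')\geq\phi_{x,j}(t,s^0,\tilde{\ubf}^i)$, and lowering the $i$-coordinate while raising the $j$-coordinate keeps a point in $\chifree_{j\succ i}$ by the geometric invariance of Property~\ref{property:geometric-invariance}. If instead $g_i^G(s^0)=\uL_i$, then $u_i=\uL_i$ is forced (as $U_i=[\uL_i,\uH_i]$), so $\ubf'_i=\uLbf_i$; here I would compare $\ubf'$ to the full brake $\uLbf$, using $s^0\in B_G$ to guarantee $\phi_x(\cdot,s^0,\uLbf)\subset\chifree_G\subset\chifree_{j\succ i}$, and again invoke $\ubf'_j\succeq\uLbf_j$ together with Property~\ref{property:geometric-invariance} to conclude. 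In both cases $\phi_x(\RR_+,s^0,\ubf')\subset\chifree_{j\succ i}$, and intersecting over all edges gives \eqref{eq:plan-key}.

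The main obstacle I anticipate is the bookkeeping needed to line up the comparison controls so that monotonicity points in the right direction for each robot of a pair \emph{simultaneously}: the lower-priority robot $i$ must be bounded above in position (hence compared to the maximally advancing impulse/worst case) while the higher-priority robot $j$ must be bounded below (hence compared to full braking), and both bounds must hold along the \emph{same} auxiliary trajectory $\ubf'$. The clean resolution is the case split on $g_i^G(s^0)$, which selects the correct reference trajectory ($\tilde{\ubf}^i$ versus $\uLbf$) in each case; everything else is a direct application of order preservation and the sign structure of Property~\ref{property:geometric-invariance}. Notice that the hypothesis $\ubf(k)\leq g^G(\phi(k,s,\ubf))$ is used only through its $i$-component for the lower-priority end $i$ of each edge, while the $j$-component only needs feasibility $u_j\geq\uL_j$; since every robot is the target of its incoming edges, the componentwise hypothesis is exactly what is consumed. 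A minor point to verify is that the velocity-clamping factor $\delta$ does not break order preservation, but this is already subsumed in the stated monotonicity of the control system.
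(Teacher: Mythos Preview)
Your proof is correct and follows the same underlying strategy as the paper: induct on the slot index, split on the value of $g_i^G$ at the current state, and in each case compare the actual trajectory to a reference trajectory ($\tilde{\ubf}^i$ when $g_i^G(s^0)=\uH_i$, the full brake $\uLbf$ when $g_i^G(s^0)=\uL_i$) via order preservation and Property~\ref{property:geometric-invariance}.

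The one noteworthy difference is organizational. The paper treats the two inductive obligations separately: it first shows $\phi_x(t,s,\ubf)\in\chifree_G$ for $t\in[0,1]$, and then, in a second pass, shows that $s^1:=\phi(1,s,\ubf)$ is brake safe, repeating the same case split on $g_i^G(s)$ a second time. Your device of introducing the auxiliary control $\ubf'$ (apply $u$ on slot~$0$, then $\uLbf$ forever) collapses both obligations into the single claim $\phi_x(\RR_+,s^0,\ubf')\subset\chifree_G$, so the case split is performed only once. This is a genuine streamlining---the paper's two passes are really the restrictions of your single argument to $[0,1]$ and $[1,\infty)$---but the mathematical content is identical.
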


\begin{proof}
Take a priority graph $G\in\graphs$, an initial condition $s\in B_G$ and a control $\ubf\in\controls$ satisfying Equation~\ref{eq-inequality-control-map}. By induction, it is sufficient to prove that the flow is collision-free for $t\in[0,1]$ and the reached state $\phi(1,s,\ubf)$ is brake safe. Now, we prove that the flow of Theorem~\ref{thm-robustness-brake-application} does not intersect $\chiobs_G$ for $t\in[0,1]$. Take arbitrary $t\in[0,1]$: we have to prove that for all $(j,i)\in E(G)$, $\phi_x(t,s,\ubf)\in\chifree_{j \succ i}$. By construction of $g^G$, for each robot $i$, there are two cases:
\begin{itemize}
\item $g_i^G(s)=\uL_i$: in this case,
\begin{equation}
\phi_i(t,s,\ubf)=\phi_i(t,s,\uLbf)
\label{eq:pf-discrete-case1-eq1}
\end{equation}
and by order-preservation, for all robots $j$ such that $(j,i)\in E(G)$ we have: 
\begin{equation}
\phi_j(t,s,\ubf) \geq \phi_j(t,s,\uLbf)
\label{eq:pf-discrete-case1-eq2}
\end{equation}
Since $s$ is brake safe, $\phi_x(t,s,\uLbf)\in\chifree_{j \succ i}$. Hence, by Property~\ref{property:geometric-invariance}, Equations~\eqref{eq:pf-discrete-case1-eq1} and~\eqref{eq:pf-discrete-case1-eq2} ensure that $\phi_x(t,s,\ubf)\in\chifree_{j \succ i}$ as well.
\item $g_i^G(s)=\uH_i$: by construction of the control law, $\phi_x(t,s,\tilde{\ubf}^i) \in \chifree_{G}$. By order-preservation, using $\tilde{\ubf}^i_i(0)=\uH_i$, we obtain:
\begin{equation}
\phi_i(t,s,\tilde{\ubf}^i)=\phi_i(t,s,\uHbf) \geq \phi_i(t,s,\ubf)
\label{eq:pf-discrete-case2-eq1}
\end{equation}
For all robots $j$ such that $(j,i)\in E(G)$, using $\tilde{\ubf}^i_j(0)=\uL_j$, we have:
\begin{equation}
\phi_j(t,s,\tilde{\ubf}^i)=\phi_j(t,s,\uLbf) \leq   \phi_j(t,s,\ubf)
\label{eq:pf-discrete-case2-eq2}
\end{equation}
Since $\phi_x(t,s,\tilde{\ubf}^i) \in \chifree_{G}$, $\phi_x(t,s,\tilde{\ubf}^i)\in\chifree_{j \succ i}$, and by Property~\ref{property:geometric-invariance}, Equations~\eqref{eq:pf-discrete-case2-eq1} and~\eqref{eq:pf-discrete-case2-eq2} ensure that $\phi_x(t,s,\ubf)\in\chifree_{j \succ i}$ as well.
\end{itemize}

As a final step, we prove that the reached state $s^1:=\phi(1,s,\ubf)$ is brake safe. Take arbitrary $t \geq 0$: we have to prove that for all $(j,i)\in E(G)$, $\phi_x(t,s^1,\uLbf)\in\chifree_{j \succ i}$. As previously, there are two cases: 
\begin{itemize}
\item $g_i^G(s)=\uL_i$: then, $s^1_i=\phi_i(1,s,\uLbf)$ and we have: 
\begin{equation}
\phi_i(t,s^1,\uLbf) = \phi_i(1+t,s,\uLbf)
\label{eq:pf-continuous-case1-eq1}
\end{equation}
Moreover, by order-preservation, for all $j$ such that $(j,i)\in E(G)$: $s_j^1 \geq \phi_j(1,s,\uLbf)$. As a result, by order-preservation:
\begin{equation}
\phi_j(t,s^1,\uLbf) \geq \phi_j(1+t,s,\uLbf)
\label{eq:pf-continuous-case1-eq2}
\end{equation}
Since $s$ is brake safe, $\phi_x(1+t,s,\uLbf)\in\chifree_{j \succ i}$. Hence, by Property~\ref{property:geometric-invariance}, Equations~\eqref{eq:pf-continuous-case1-eq1} and~\eqref{eq:pf-continuous-case1-eq2} ensure that $\phi_x(t,s^1,\uLbf)\in\chifree_{j \succ i}$ as well.

\item $g_i^G(s)=\uH_i$: then, by construction of the control law, $\phi_x(1+t,s,\tilde{\ubf}^i) \in \chifree_G$. Define $\tilde s^1:=\phi(1,s,\tilde{\ubf}^i)$. We have $\tilde{\ubf}^i(1+\tau)=\uL$ for $\tau\geq 0$. As a result, $\phi(1+t,s,\tilde{\ubf}^i)=\phi(t,\tilde{s}^1,\uLbf)$. Since $\phi_x(1+t,s,\tilde{\ubf}^i) \in \chifree_G$,  $\phi_x(t,\tilde s^1,\uLbf)\in \chifree_G$.

By order-preservation, using $\tilde{\ubf}^i_i(0)=\uH_i$, we obtain:
\begin{equation}
\tilde s^1_i = \phi_i(1,s,\tilde{\ubf}^i) = \phi_i(1,s,\uHbf)  \geq   \phi_i(1,s,\ubf) = s^1_i
\label{eq:pf-continuous-case2-eq1-intermediate-equation}
\end{equation}
For all robots $j$ such that $(j,i)\in E(G)$, using $\tilde{\ubf}^i_j(0)=\uL_j$, we have:
\begin{equation}
\tilde s^1_j = \phi_j(1,s,\tilde{\ubf}^i) = \phi_j(1,s,\uLbf)   \leq \phi_j(1,s,\ubf) = s^1_j
\label{eq:pf-continuous-case2-eq2-intermediate-equation}
\end{equation}
Hence, by order-preservation, Equations~\eqref{eq:pf-continuous-case2-eq1-intermediate-equation} and~\eqref{eq:pf-continuous-case2-eq2-intermediate-equation} imply:
\begin{eqnarray}
\phi_i(t,\tilde{s}^1,\uLbf) & \geq & \phi_i(t,s^1,\uLbf)\label{eq:pf-continuous-case2-eq1}\\
\phi_j(t,\tilde{s}^1,\uLbf) & \leq & \phi_j(t,s^1,\uLbf)\label{eq:pf-continuous-case2-eq2}
\end{eqnarray}
Since $\phi_x(t,\tilde s^1,\uLbf)\in \chifree_G$, $\phi_x(t,\tilde s^1,\uLbf)\in \chifree_{j \succ i}$, and by Property~\ref{property:geometric-invariance}, Equations~\eqref{eq:pf-continuous-case2-eq1} and~\eqref{eq:pf-continuous-case2-eq2} ensure that $\phi_x(t,s^1,\uLbf)\in \chifree_{j \succ i}$ as well.
\end{itemize}
\end{proof}

To illustrate the interest of Theorem~\ref{thm-robustness-brake-application}, given priorities $G$ and an initial condition $s\in B_G$ consider the two examples below.

\begin{example}[Individual brake application]
Consider a control $\ubf \in\controls$ satisfying: 
\begin{eqnarray}
\forall k\in\NN,  \ubf_i(k)&=&\begin{cases}
\uL_i & \text{ if } k\in K \\
g_i^G(\phi(k,s,\ubf)) & \text{ else.}
\end{cases} \label{eq-example-control}\\
\forall j\in\robots, j\neq i, \ubf_j(k)&=&g_j^G(\phi(k,s,\ubf))
\end{eqnarray}
$i\in\robots$ is a particular robot and $K \subset \NN$ is a subset of slots. Under the control described above, the system is perfectly controlled by the control law, except during slots $K$ where the particular robot $i$ brakes while other robots are still perfectly controlled by the control law. Such a scenario may arise, for instance, in case of a momentary communication/sensing failure for one robot: if the current state is not available, the control law cannot be applied, and a brake maneuver is performed instead. The condition of Theorem~\ref{thm-robustness-brake-application} is clearly respected since for $j\neq i$, $\ubf_j(k)=g_j^G(\phi(k,s,\ubf))\leq g_j^G(\phi(k,s,\ubf))$, and  $\ubf_i(k)=g_i^G(\phi(k,s,\ubf))\leq g_i^G(\phi(k,s,\ubf))$ or $\ubf_i(k)=\uL_i \leq g_i^G(\phi(k,s,\ubf))$. Hence, the flow $t \mapsto \phi(t,s,\ubf)$ is collision-free and preserves priorities $G$. This illustrates that the control law is robust with regards to an individual brake application of a particular robot for an arbitrary long time, yielding a deviated but still collision-free flow respecting the assigned priorities.
\end{example}

\begin{example}[Simultaneous brake application]
Consider a control $\ubf \in\controls$ satisfying: 
\begin{equation}
\forall k\in\NN,  \ubf(k)=\begin{cases}
\uL & \text{ if } k\in K \\
g^G(\phi(k,s,\ubf)) & \text{ else.}
\end{cases}
\end{equation}
Again, $K \subset \NN$ is a subset of slots. Under the control described above, the system is perfectly controlled by the control law, except during slots $K$ where all robots brake simultaneously. It may arise in case of a global failure requiring an emergency brake to be performed. Again, the condition of Theorem~\ref{thm-robustness-brake-application} is clearly respected since $\ubf(k)=g^G(\phi(k,s,\ubf)) \leq g^G(\phi(k,s,\ubf))$ or $\ubf(k)=\uL \leq g^G(\phi(k,s,\ubf))$. It illustrates that the control law is robust with regards to a simultaneous brake application of all robots for an arbitrary long time, yielding again a deviated but still collision-free flow respecting the assigned priorities.
\end{example}

\subsection{Liveness}

As in the case of velocity control, we aim at guaranteeing liveness, i.e., the guarantee that every robot $i\in\robots$ eventually reaches the region $\chigoal:=\xobsmax+\RR_+^n$. 

\begin{theorem}[Liveness]
Given an acyclic priority graph $G$ and an initial brake safe state $s\in B_G$, there exists $T>0$ such that:
\begin{equation}
\phi_x(T,s,g^G)\in\chigoal
\end{equation}
\label{thm:liveness-control-law}
\end{theorem}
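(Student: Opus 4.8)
The plan is to establish liveness for the second-order (acceleration-controlled) system by reducing it to the velocity-control liveness result (Theorem~\ref{thm:liveness-control-law-velocity}) through the same inductive, topological-ordering argument, while carefully handling the inertia that now prevents robots from stopping or starting instantaneously. The key structural fact I would exploit is that $G$ is acyclic, so there is a topological ordering $i_1, \dots, i_n$ of the robots such that no robot has a predecessor before it; I would peel off robots one at a time in this order.

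First I would treat the maximal robot $i_1$, which has no incoming edge $(j, i_1) \in E(G)$. By construction of the control law $g^G$ in Equation~\eqref{eq-control-map}, the "brake" case can only be triggered when some $(j, i_1) \in E(G)$ exists; since there is none, we have $g_{i_1}^G(s) = \uH_{i_1}$ at every state. Hence robot $i_1$ always applies maximum throttle, so by the dynamics~\eqref{eq-diff-state1-deterministic}--\eqref{eq-diff-state2-deterministic} its velocity increases to $\vmax_{i_1}$ within finite time and then $\phi_{x,i_1}$ grows at a strictly positive rate; therefore $\phi_{x,i_1}(T_1, s, g^G) \geq \xobsmax_{i_1}$ for some finite $T_1$, and in fact it enters and stays in $\xobsmax_{i_1} + \RR_+$.

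Next I would set up the induction: suppose robots $i_1, \dots, i_m$ have all exited (their configurations exceed $\xobsmax$ componentwise) by some finite time $T_m$. Consider $i_{m+1}$, the maximal element among the remaining robots in the acyclic order. Every edge $(j, i_{m+1}) \in E(G)$ has $j \in \{i_1, \dots, i_m\}$, so $j$ has already exited. The essential point is that once a higher-priority robot $j$ has passed $\xobsmax_j$, it can no longer contribute a collision constraint against $i_{m+1}$: by the boundedness of the obstacle region and the definition of $\chiobs_{j \succ i_{m+1}}$ together with Property~\ref{property:geometric-invariance}, once $\phi_{x,j}$ exceeds $\xobsmax_j$ the worst-case trajectory $\phi_x(t, s, \tilde{\ubf}^{i_{m+1}})$ stays in $\chifree_{j \succ i_{m+1}}$. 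Thus for $t \geq T_m$ the control law returns $g_{i_{m+1}}^G = \uH_{i_{m+1}}$, robot $i_{m+1}$ accelerates to maximum speed, and exits in finite time $T_{m+1}$. Iterating yields $T := T_n$ with $\phi_x(T, s, g^G) \in \chigoal$.

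\textbf{The hard part} will be the inertia-related subtlety that I glossed over in the velocity case: I must verify rigorously that after all higher-priority robots exit, the brake-safety lookahead in $g^G$ genuinely evaluates to the throttle branch for robot $i_{m+1}$ \emph{for all sufficiently large times}, not just that no collision occurs instantaneously. This requires showing that the worst-case impulse-then-brake trajectory $\phi_x(t, s, \tilde{\ubf}^{i_{m+1}})$ eventually clears each completed cylinder $\chiobs_{j \succ i_{m+1}}$ permanently, which rests on the boundedness assumption $\xobsmin_j < x_j < \xobsmax_j$ on $\chiobs_{j \succ i_{m+1}}$ along axis $j$ and the fact that an exited robot's position only increases. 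I would also need a short argument that a robot applying $\uH_i$ from any brake-safe state actually reaches and maintains positive velocity (using the $\delta$ saturation function and $\uH_i > 0$), so that forward progress is strictly bounded below once the throttle branch is active — this is where the second-order dynamics genuinely differ from the first-order case and deserve explicit care.
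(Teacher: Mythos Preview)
Your proposal is correct and follows essentially the same approach as the paper: peel off robots one at a time along a topological ordering of the acyclic priority graph, arguing that the current maximal remaining robot always receives $\uH$ (since all its higher-priority neighbours have exited past $\xobsmax$ and hence impose no constraint in the control law) and therefore exits in finite time. In fact your write-up is more careful than the paper's, which dispatches the inertia subtleties you flag with the single phrase ``collisions occurring only with non exited robots''; the extra verifications you plan (that an exited $j$ permanently clears $\chiobs_{j\succ i_{m+1}}$ and that constant $\uH_i$ yields strictly positive progress) are exactly what is needed to make the paper's sketch rigorous.
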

Again, the idea of the proof is that under acyclic priorities, there is always a non exited robot able to travel at maximum throttle command until it exits the intersection.
\begin{proof}
Take an acyclic priority graph $G$. Consider the trajectory of the robots under control law $g^G$. $G$ being acyclic, there exists an extremal vertex $i_1\in\robots$ such that for all $j\in\robots$, $(j,i_1)\notin E(G)$. As a result, under the control law $g^G$, robot $i_1$ will always accelerate as much as possible and it will exit the intersection (it will reach position $\xobsmax_i$) in finite time $T_1$.

Now, assume that at time $T_m$, robots $i_1 \cdots i_m$ have exited the intersection and $m<n$ (there remain some robots). $G$ being acyclic, there exists an extremal element for the remaining robots denoted $i_{m+1}\in\robots\setminus \{i_1 \cdots i_m\}$ such that for all $j\in\robots\setminus \{i_1 \cdots i_m\}$, $(j,i_{m+1})\notin E(G)$. Collisions occurring only with non exited robots, for $t\geq T_m$ $j$ will always accelerate and it will exit the intersection in finite time at instant $T_{m+1}\geq T_m$. 

Iterating this process yields a sequence $(T_1 \cdots T_n)$ and all robots have exited the intersection at time $T:=T_n$.
\end{proof}

\chapter[Robustness with respect to bounded noise]{Robustness with respect\\ to bounded noise}
\label{chap:control-uncertainty}
\minitoc

In the plan-as-program approach, a low-level controller is assumed to be able to follow the planned trajectory. Uncertainty is taken into account at the control phase. This is known as the trajectory tracking problem. Many trajectory tracking systems have been proposed for different robot dynamics models~\cite{Jiang1997,Lee2001,Micaelli1993,Soetanto2003,Yang1999}. In~\cite{VanDenBerg2011}, a linearisation of the robot dynamics model around the tracked trajectory enables to obtain a linear-quadratic regulator~\cite{Todorov2006}, and under Gaussian models of uncertainty, the a-priori distribution of the trajectory around the tracked trajectory can be computed. It makes possible to compare several possible motion planning strategies in terms of collision probability, and to select one of them based on some criteria/cost function. However, even when a-priori knowledge on uncertainty is used as in~~\cite{VanDenBerg2011} to plan the trajectory, the trajectory tracking approach is still quite decoupled as the reference trajectory remains unchanged as new information comes in during the execution of the plan. It can result in undesirable behaviors particularly in case of large deviation from the reference trajectory.

In priority-based coordination, there is no reference trajectory to track. There are assigned priorities to preserve which is ensured by a control law configured by the assigned priorities (see previous chapters of the present part). In this setting, information on uncertainty can be used as an additional resource to take into account when acting, i.e., as an additional input for the control law. In~\cite{LaValle1996-uncertainty,LaValle1998-uncertainty-objective-based}, the information space approach is proposed. The information state at time $t$ contains all the information history up to date $t$. Under probabilistic uncertainty, the current information state can be considered as the distribution of the current state of the system conditionally to current history. Under non deterministic uncertainty, the current information state is the set of all possible "true" current states of the robot: one can see the non-deterministic information state as a "bubble" of possible current "true" states. In this approach, the action of robots is a function of the current information state: the control law takes into account the uncertainty on the estimated current state to decide action. The approach has since become standard (see, e.g., \cite{Petti2005-reactive-planning, Ghaemi2014}). This chapter espouses the information space approach.

Priority preserving control under bounded noise in the coordination space is considered. As in Chapter~\ref{chap:control-acceleration}, priorities guide the action of robots by configuring the control law. However, in this chapter, the control law does not take the current state of the robot as input as it is not known. We take a model of bounded uncertainty which enables to apply the non-deterministic information space approach~\cite{LaValle1996-uncertainty,LaValle1998-uncertainty-objective-based}. The control law takes as input the non-deterministic information state of the robots, i.e., the set of all possible current positions and velocities of robots along their paths. Uncertainty is only considered from the coordination space point of view. Uncertainty on the path following assumption (lateral control) and more generally realistic models of uncertainty based on real sensors/actuators models are beyond the ambition of the present thesis (it is mentioned as a perspective in the concluding part). The present chapter only aims at providing elements demonstrating the robustness abilities of the proposed priority preserving approach in the presence of bounded noise.

\paragraph{Sketch of the chapter} Section~\ref{sec:control-model-uncertainty} exposes the second-order dynamics model with bounded noise. Section~\ref{sec:evolution-information-state} defines the so-called non-deterministic information state for our particular multi robot system and provides the equations describing its evolution through time. The last section of the chapter builds a priority preserving control law taking into account uncertainty information by considering the worst-case scenario. It is guaranteed that for all possible errors/perturbations in sensing/control, no collision occurs, priorities are respected and all robots eventually go through the intersection. Additionally, the brake safety property of Chapter~\ref{chap:control-acceleration} stating that robots may safely brake at any point of time without violating priorities still holds.

\section{Control model with bounded noise}
\label{sec:control-model-uncertainty}

We slightly modify the model of Chapter~\ref{chap:control-acceleration} to account for bounded control noise. Each robot $i$ is modeled as a second-order control system with state $s_i=(x_i,v_i)\in S_i:=\RR \times [0,\vmax_i]$, whose evolution is described by the differential equation:
\begin{eqnarray}
\dot{x_i}(t) &= & v_i(t)+\mathbf{1}_{v_i(t)=\vmax_i} \dbf^v_i(t)
\label{eq-diff-state1-non-deterministic}
\\
\dot{v_i}(t) & =& (\ubf_i(t)+\dbf^u_i(t)) ~\delta(\ubf_i(t)+\dbf^u_i(t),v_i(t))
\label{eq-diff-state2-non-deterministic}
\end{eqnarray}
with the same notations as in Chapter~\ref{chap:control-acceleration} and with $\mathbf{1}_C$ returning $1$ if condition $C$ holds, $0$ else. Basically, $\dbf^v$ models the uncertainty on maintaining maximum velocity and $\dbf^u$ models the uncertainty on the brake command. $\dbf=(\dbf^v,\dbf^u)$ is the overall exogenous control uncertainty signal. We assume that control uncertainty is bounded and we let $D:=\prod_{i\in\robots} D_i$ with $D_i:=[\dmin_i,\dmax_i]$ and $\dmin_i\in\RR_-^2$ and $\dmax_i\in\RR_+^2$. We let $\dsignals_i$ denote the set of uncertainty controls $\dbf_i$ taking values in $D_i$ and $\dsignals:=\prod_{i\in\robots} \dsignals_i$. We let $t \mapsto \phi_i(t,s_i,\ubf_i,\dbf_i)$ denote the flow of the system starting at initial condition $s_i\in S_i$  with control $\ubf_i \in \controls_i$ and uncertainty control $\dbf_i$. As in Chapter~\ref{chap:control-acceleration}, projected flows are defined as follows: $\phi_x:=\pi_x\circ\phi$ and $\phi_v:=\pi_v\circ\phi$. We introduce a partial order for uncertainty signals as follows:
\begin{equation}
\forall \dbf_i^1,\dbf_i^2\in \dsignals_i, \dbf_i^1 \preceq  \dbf_i^2 \text{ if } \forall t\geq 0, \dbf_i^1(t) \leq \dbf_i^2 (t) 
\end{equation}

\begin{property}[Order preservation]
The flow $t \mapsto \phi_i(t,s_i,\ubf_i,\dbf_i)$ is order-preserving with regards to $s_i$,  $\ubf_i$ and $\dbf_i$.
\end{property}

Finally, we make the following assumptions for all $i\in\robots$:
\begin{eqnarray}
\dmin_i^v+\vmax_i&>&0 \label{eq:assumption-uncertainty-velocity}\\
\uH_i+\dmin_i^u&>&0\label{eq:assumption-uncertainty-acceleration}\\
\uL_i+\dmax_i^u&<&0\label{eq:assumption-uncertainty-brake}
\end{eqnarray}
Basically, it means that:
\begin{itemize}
\item even with uncertainty on maintaining maximum velocity, the velocity is always positive;
\item even with uncertainty on control, when a robot applies maximum throttle command, it will effectively accelerate;
\item and when a robot applies maximum brake command, it will effectively brake.
\end{itemize}

\section{Evolution of the non-deterministic information state}
\label{sec:evolution-information-state}

We let $2^A$ denote the power set of any set $A$. We assume that we have observations at the beginning of every time slot. We model observations as a signal $\ybf:\RR_+^* \to 2^S$ satisfying $\ybf(t)=S$ if $t\notin\NN$, and $\ybf(k)$ is a parallelepiped:
\begin{equation}
\forall k\in\NN^*_+, \ybf(k)=\prod_{i\in\robots}\ybf_i(k)=\prod_{i\in\robots} \ybf^x_i(k) \times \ybf^v_i(k)
\end{equation}
$\ybf^x_i(k)$ denotes the observation on the position of robot $i$ at time slot $k$ and $\ybf^v_i(k)$ the observation on the velocity of robot $i$ at time slot $k$. An observation $\ybf(k)$ provides a set of possible (true) states given the sensors information. $\ybf(t)=S$ if $t\notin\NN$ means that there is no observation data at time $t\notin\NN$. We let $\Ybf_i$ denote the set of observation signals $\ybf_i$ satisfying the above assumptions and $\Ybf:=\prod_{i\in\robots} \Ybf_i$.

We remind that the non-deterministic information state at time $t$ provides the set of possible (true) states at time $t$: if the current non-deterministic information state is $\hat s\in 2^S$, the current (true) state $s\in S$ satisfies $s\in\hat s$. The evolution of the non-deterministic information state accounts for both the uncertainty on control and on sensing. Given $k\in\NN$, for $t\in (k,k+1)$, the uncertainty on control (through $\dbf$) increases the size of possible states as time goes by. At time $k+1$, a new observation is available, and the new state of the system necessarily belongs to the range given by the observation. 

The above statements lead to the non-deterministic information state flow $t\mapsto \hat\phi(t,\hat{s},\ubf,\ybf)$ associated to an initial condition $\hat s$, a control $\ubf$ and an observation signal $\ybf\in\Ybf$ defined as follows.:
\begin{equation}
\hat \phi(0,\hat{s},\ubf,\ybf):=\hat{s}
\end{equation}
\begin{equation}
\forall k\in\NN, \forall t\in(0,1),
\hat\phi(k+t,\hat{s},\ubf,\ybf):=\left\{\phi(t,s,\ubf,\dbf):\dbf\in\dsignals, s\in \hat\phi(k,\hat{s},\ubf,\ybf)\right\} 
\end{equation}
\begin{equation}
\forall k\in\NN,
\hat\phi(k+1,\hat{s},\ubf,\ybf):=\left\{\phi(1,s,\ubf,\dbf):\dbf\in\dsignals, s\in \hat\phi(k,\hat{s},\ubf,\ybf)\right\} \cap \ybf(k+1) \label{eq:non-deterministic-flow-intersection-with-observation}
\end{equation}
The evolution of the non-deterministic information state flow is illustrated in Figure~\ref{fig:evolution-non-deterministic-information-state}. By order-preservation, as $D$ and $y(k)$ are parallelepipeds, and as the intersection of two parallelepipeds is a parallelepiped, it is clear that starting from an initially parallelepipedic non-deterministic information state $\hat{s}$, $\hat\phi(t,\hat{s},\ubf,\ybf)$ is a parallelepiped at any point of time.  Note that it makes its computation easier as only two extremal points need to be computed. As for deterministic flows, projected flows are defined as follows: $\hat\phi_x:=\pi_x\circ\hat\phi$ and $\hat\phi_v:=\pi_v\circ\hat\phi$.
\begin{figure}[!htbp]
\begin{center}
\includegraphics[width=1.0\linewidth]{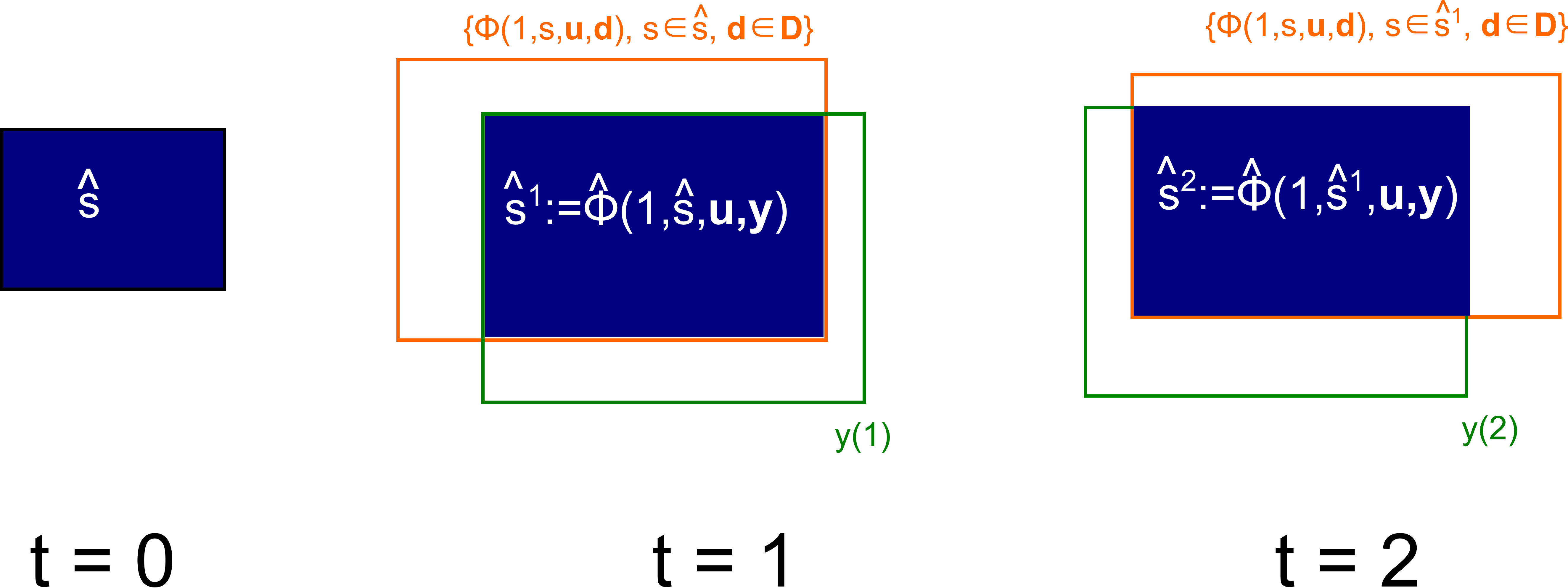}\hfill
\end{center}
\caption{Evolution of the non-deterministic information state. For $t\in(0,1)$, the "size" of the non-deterministic information state grows until a new observation is available at $t=1$. The intersection with $y(1)$ at $t=1$ enables to take into account the new observation and reduces the "size" of the updated non-deterministic information state $\hat s^1$.}
\label{fig:evolution-non-deterministic-information-state}
\end{figure}

We let $\mathbf{S}$ denote the constant observation signal $\ybf(t)\equiv S$, i.e., there is no observation data for all $t\geq 0$. We define the set of brake safe non-deterministic states $\hat B_G$ as follows:
\begin{eqnarray}
\hat B_G &:=& \left\{ \hat s \in 2^\chi: \forall t\geq 0, \hat \phi_x(t, \hat s, \uLbf, \mathbf{S}) \in 2^{\chifree_G} \right\}\\
&=&\left\{ \hat s \in 2^\chi: \forall s\in\hat s, \forall \dbf\in\dsignals, \forall t\geq 0, \phi_x(t, s, \uLbf, \dbf) \in \chifree_G \right\}
\end{eqnarray}

A non-deterministic state $\hat s\in 2^\chi$ is brake safe if starting from any true state $s\in\hat s$, the flow under maximum brake command $\uLbf$ is collision-free for all possible control uncertainty signals $\dbf\in\dsignals$.

\section{The proposed decentralized control law}

We are going to build a control law for the non-deterministic system. The control law maps the current information state of the system to the control to apply, i.e., it is a map $\hat g: 2^S \to U$. We let $t\mapsto \hat\phi(t,\hat{s},\hat g,\ybf)$ denote the flow $t\mapsto \hat\phi(t,\hat{s},\ubf,\ybf)$ where $\ubf$ satisfies:
\begin{equation}
\forall k\in\NN, \ubf(k)\equiv \hat{g}(\hat\phi(k,\hat{s},\ubf,\ybf))
\end{equation}

For all $i\in\robots$, we define the operator $\supi$ and the uncertainty signal $\tilde{\dbf}^i$ as follows: 

\begin{eqnarray}
\forall j\in\robots,  \supi_j(\hat s)&:=&\begin{cases}
\sup \hat s_i& \text{if } j=i\\
\inf \hat s_j & \text{if } j\neq i
\end{cases}\\
\forall j\in\robots,   \tilde{\dbf}^i_j&:=&\begin{cases}
\dbfmax_i & \text{if } j=i\\
\dbfmin_j & \text{if } j\neq i
\end{cases}
\end{eqnarray}
Basically, $\supi(\hat s)$ represents the worst-case possible true state of the system for collisions with regards to $\chiobs_{j\succ i}$. Similarly, $\tilde{\dbf}^i_j$ is the worst-case possible disturbance with regards to $\chiobs_{j\succ i}$. The rationale of the proposed control law is to apply the same control law as without uncertainty but considering the worst-case scenario (both worst-case disturbance and worst-case true state). The control law can be formulated synthetically as follows:
\begin{equation}
\hat g_i^G(\hat s):=\begin{cases}
\uL_i & \text{if } \exists (j,i)\in E(G), \exists t\geq  0 \text{ s.t. } \phi_x(t,\supi(\hat{s}),\tilde{\ubf}^i, \tilde{\dbf}^i) \in \chiobs_{j\succ i} \\
\uH_i & \text{ else.}
\end{cases}
\label{eq-control-map-uncertainty}
\end{equation}

\subsection{Priority preservation}

As previously, we first focus on the most important property: safety, i.e., priority preservation.

\begin{theorem}[Priority preservation]
Given a priority graph $G\in\graphs$, the set of brake safe non-deterministic information states $\hat B_G$ is positively invariant (in discrete time) for the non-deterministic system under control law $\hat g^G$, i.e.,
\begin{equation}
\forall \ybf\in\Ybf, \forall \hat s\in \hat B_G, \forall k\in\NN, \hat\phi(k,\hat{s},\hat g^G,\ybf) \in \hat B_G
\label{eq:invariance-B-G-uncertainty}
\end{equation}
Moreover, the non-deterministic system remains in $2^{\chifree_G}$ through time, i.e.,
\begin{equation}
\forall \ybf\in\Ybf, \forall \hat s\in \hat B_G, \forall t\geq 0, \hat\phi_x(t,\hat{s},\hat g^G,\ybf)\in 2^{\chifree_G}
\label{eq:no-collision-during-time-slots-uncertainty}
\end{equation}
\label{thm-safe-control-map-uncertainty}
\end{theorem}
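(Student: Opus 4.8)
The plan is to mirror the deterministic arguments behind Theorem~\ref{thm-robustness-brake-application} and Theorem~\ref{thm-safe-control-map}, upgrading every pointwise statement to one about the whole non-deterministic information state by exploiting monotonicity together with the fact, noted in the text, that $\hat\phi(t,\hat s,\ubf,\ybf)$ is a parallelepiped at all times. I would argue by induction on the slot index $k$. Since the observation at $t=k+1$ only shrinks the reached set (it enters as an intersection in Equation~\eqref{eq:non-deterministic-flow-intersection-with-observation}) and any subset of brake-safe states is itself brake safe, it suffices to establish the two claims for a single slot: for $\hat s\in\hat B_G$ the flow stays in $2^{\chifree_G}$ on $t\in[0,1]$, and the reached state $\hat s^1:=\hat\phi(1,\hat s,\hat g^G,\ybf)$ lies in $\hat B_G$.

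The engine of the whole proof is a worst-case reduction. By Property~\ref{property:geometric-invariance} we have $\chifree_{j\succ i}+\RR_+\e_j-\RR_+\e_i=\chifree_{j\succ i}$, so a configuration stays collision-free with respect to $\chiobs_{j\succ i}$ when coordinate $i$ is decreased or coordinate $j$ is increased. Combined with order-preservation of the flow in state, control and disturbance, this shows that, among all true states $s\in\hat s$ and all disturbances $\dbf\in\dsignals$, the configuration most dangerous for the cylinder $\chiobs_{j\succ i}$ is attained at the corner $\supi(\hat s)$ (robot $i$ at its supremum, every other robot at its infimum) driven by the worst-case disturbance $\tilde{\dbf}^i$ ($\dbfmax_i$ on $i$, $\dbfmin_j$ elsewhere). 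This is exactly the state and disturbance probed in the definition~\eqref{eq-control-map-uncertainty} of $\hat g^G$, which is why testing the single flow $t\mapsto\phi_x(t,\supi(\hat s),\tilde{\ubf}^i,\tilde{\dbf}^i)$ controls the entire parallelepiped.

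With this reduction in hand I would treat each edge $(j,i)\in E(G)$ and split on the value of $\hat g_i^G(\hat s)$, just as in the deterministic case. If $\hat g_i^G(\hat s)=\uL_i$, robot $i$ brakes, so for any $s\in\hat s$ and any $\dbf$ the $i$-coordinate of the flow coincides with the pure-brake flow while the $j$-coordinate dominates it; brake safety of $s$ inherited from $\hat s\in\hat B_G$, together with Property~\ref{property:geometric-invariance}, places the flow in $\chifree_{j\succ i}$. If $\hat g_i^G(\hat s)=\uH_i$, the definition of $\hat g^G$ guarantees $\phi_x(t,\supi(\hat s),\tilde{\ubf}^i,\tilde{\dbf}^i)\in\chifree_G$ for all $t\ge 0$, and since $\tilde{\ubf}^i_i$ starts with $\uH_i$ and equals $\uL_i$ afterwards while $\tilde{\ubf}^i_j=\uLbf_j$, the inequalities $s_i\le\supi_i(\hat s)$, $s_j\ge\supi_j(\hat s)$, $\dbf_i\le\dbfmax_i$, $\dbf_j\ge\dbfmin_j$ push the true flow into $\chifree_{j\succ i}$ by the same invariance. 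The reached-state claim $\hat s^1\in\hat B_G$ follows by running the identical two cases on the brake flow emanating from a reached true state $s^1=\phi(1,s,\ubf,\dbf)$, using that the tail of the impulse and of the worst-case controls is precisely $\uLbf$.

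The main obstacle I anticipate is the bookkeeping across the slot boundary in the reached-state step: brake safety of $\hat s$ quantifies over all disturbance signals from time $0$, whereas the brake flow out of $s^1$ uses a fresh disturbance after time $1$, so one must concatenate the pre- and post-slot disturbances and observe that $\dsignals$ is closed under such concatenation and that the impulse control's tail coincides with $\uLbf$; only then does $\phi_x(1+t,\supi(\hat s),\tilde{\ubf}^i,\tilde{\dbf}^i)$ dominate the brake flow from $s^1$ coordinate-wise in the right directions. Everything else is a faithful transcription of the deterministic proof, with $\supi(\hat s)$ and $\tilde{\dbf}^i$ playing the roles that the single state and the absence of disturbance played there.
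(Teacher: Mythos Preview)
Your proposal is correct and matches the paper's argument essentially step for step: the paper derives Theorem~\ref{thm-safe-control-map-uncertainty} as the equality case of the more general Theorem~\ref{thm-robustness-brake-application-uncertainty}, whose proof proceeds exactly as you outline---induction on slots, the observation-intersection only shrinks the reached set, a per-edge case split on $\hat g_i^G(\hat s)\in\{\uL_i,\uH_i\}$, and the reached-state step handled by concatenating the first-slot control/disturbance with a brake tail (the paper's $\ubf^2,\dbf^2$). The concatenation bookkeeping you flag as the main obstacle is precisely where the paper spends its effort, so you have correctly identified both the structure and the one delicate point.
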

As in Chapter~\ref{chap:control-acceleration}, the above theorem is a limit case of Theorem~\ref{thm-robustness-brake-application-uncertainty} proved in the sequel.
\begin{figure}[!htbp]
\begin{center}
\includegraphics[width=1.0\linewidth]{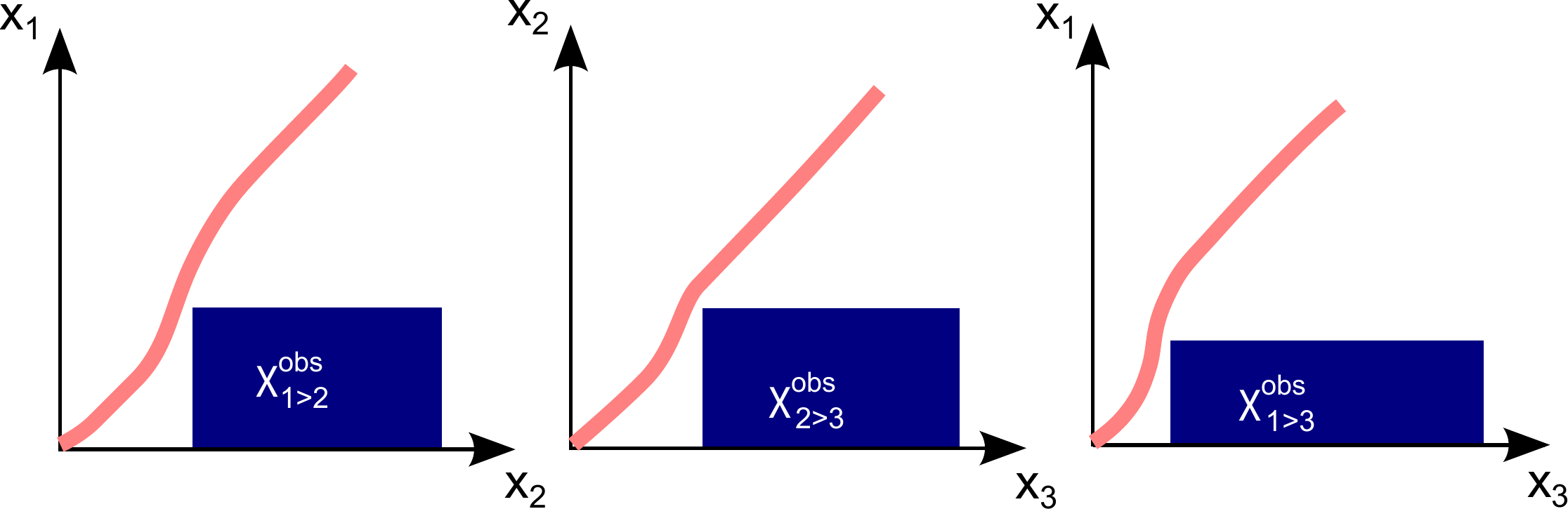}\hfill
\end{center}
\caption{Look of the trajectory for a three-robot system with acyclic assigned priorities $1\succ 2$, $2\succ 3$ and $1\succ 3$ under control law $g^G$. A band is used to represent the set of all possible real configurations through time.}
\label{fig:trajectory-acceleration-control-example-with-uncertainty}
\end{figure}

\subsection{Robustness}

As in Chapter~\ref{chap:control-acceleration}, the control law $\hat g_i^G$ returns the maximum control value that robot $i$ can safely apply, but it is in fact always safe to apply a lower control value. Hence, we obtain the same robustness property that is highly valuable for applications in autonomous cars.

\begin{theorem}[A broad class of priority preserving controls]
Given a priority graph $G\in\graphs$, an initial condition $\hat s\in \hat B_G$, an observation signal $\ybf\in\Ybf$ and a control $\ubf \in \controls$ that satisfies: 
\begin{equation}
\forall k\in\NN, \ubf(k) \leq \hat g^G(\hat \phi(k,\hat s,\ubf,\ybf))
\label{eq-inequality-control-map-uncertainty}
\end{equation}
The set of non-deterministic brake safe states $\hat B_G$ is positively invariant (in discrete time), i.e.,
\begin{equation}
\forall k\in\NN, \hat \phi(k,\hat s,\ubf,\ybf) \in \hat B_G
\end{equation}
Moreover, the configuration of the non-deterministic system remains in $2^{\chifree_G}$ through time, i.e.,
\begin{equation}
\forall t\geq 0, \hat \phi_x(t,\hat s,\ubf,\ybf)\in 2^{\chifree_G}
\end{equation}
\label{thm-robustness-brake-application-uncertainty}
\end{theorem}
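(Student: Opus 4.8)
The plan is to mirror the proof of Theorem~\ref{thm-robustness-brake-application}, upgrading every step to the set-valued setting by replacing the deterministic worst case with the extremal data $\supi(\hat s)$, $\tilde{\ubf}^i$ and $\tilde{\dbf}^i$ that define $\hat g^G$ in~\eqref{eq-control-map-uncertainty}. As in the deterministic case the statement reduces, by induction on the slot index, to a single-slot claim: assuming $\hat\phi(k,\hat s,\ubf,\ybf)\in\hat B_G$, it suffices to show (i) the flow stays in $2^{\chifree_G}$ on $[k,k+1]$ and (ii) the reached information state $\hat\phi(k+1,\hat s,\ubf,\ybf)$ again lies in $\hat B_G$. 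First I would record two reductions that neutralize the set-valued bookkeeping. Since $\hat\phi(k+1,\ldots)$ is obtained by intersecting the propagated set with the observation $\ybf(k+1)$, it is a subset of $\{\phi(1,s,\ubf,\dbf):s\in\hat\phi(k,\ldots),\,\dbf\in\dsignals\}$, so it suffices to verify brake safety for points of this larger form; and every point of $\hat\phi_x(k+t,\ldots)$ with $t\in(0,1)$ has the form $\phi_x(t,s,\ubf,\dbf)$ with $s\in\hat\phi(k,\ldots)$ and $\dbf\in\dsignals$. Both (i) and (ii) thus become pointwise statements about ordinary trajectories, and I may take $k=0$ with $\hat s\in\hat B_G$.

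For part (i), fix an edge $(j,i)\in E(G)$, a true state $s\in\hat s$, a disturbance $\dbf\in\dsignals$ and $t\in[0,1]$; the goal is $\phi_x(t,s,\ubf,\dbf)\in\chifree_{j\succ i}$, and I would split on $\hat g_i^G(\hat s)$. If $\hat g_i^G(\hat s)=\uL_i$, hypothesis~\eqref{eq-inequality-control-map-uncertainty} and $\ubf_i(0)\geq\uL_i$ force $\ubf_i(0)=\uL_i$, so robot $i$ coincides with the brake trajectory $\phi_i(t,s,\uLbf,\dbf)$, while $\ubf_j(0)\geq\uL_j$ gives $\phi_j(t,s,\ubf,\dbf)\geq\phi_j(t,s,\uLbf,\dbf)$ by order-preservation; since $\hat s\in\hat B_G$ yields $\phi_x(t,s,\uLbf,\dbf)\in\chifree_{j\succ i}$, Property~\ref{property:geometric-invariance} (raising the $j$-coordinate, fixing the $i$-coordinate) closes the case. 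If $\hat g_i^G(\hat s)=\uH_i$, then by~\eqref{eq-control-map-uncertainty} the worst-case trajectory $\phi_x(\cdot,\supi(\hat s),\tilde{\ubf}^i,\tilde{\dbf}^i)$ never meets $\chiobs_{j\succ i}$; comparing it with $\phi_x(t,s,\ubf,\dbf)$ via order-preservation — using $s_i\leq\sup\hat s_i$, $\ubf_i(0)\leq\uH_i=\tilde{\ubf}^i_i(0)$, $\dbf_i\leq\dbfmax_i=\tilde{\dbf}^i_i$ to push robot $i$ back, and $s_j\geq\inf\hat s_j$, $\ubf_j(0)\geq\uL_j=\tilde{\ubf}^i_j(0)$, $\dbf_j\geq\dbfmin_j=\tilde{\dbf}^i_j$ to push robot $j$ forward — Property~\ref{property:geometric-invariance} again applies.

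For part (ii) I would show that each $s^1=\phi(1,s,\ubf,\dbf)$ is brake safe. The key tool is the cocycle property of the flow: continuing a brake maneuver with disturbance $\dbf'$ from $s^1$ equals one longer brake maneuver from the earlier initial condition driven by the componentwise concatenation of $\dbf$ and $\dbf'$, which still lies in $\dsignals$ because $D$ is a box. When $\hat g_i^G(\hat s)=\uL_i$, robot $i$ already brakes on $[0,1]$, so $\phi_i(t,s^1,\uLbf,\dbf')$ equals such a longer brake trajectory from $s$ while $\phi_j(t,s^1,\uLbf,\dbf')$ dominates one; brake safety of $\hat s$ together with Property~\ref{property:geometric-invariance} gives $\phi_x(t,s^1,\uLbf,\dbf')\in\chifree_{j\succ i}$. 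When $\hat g_i^G(\hat s)=\uH_i$, I set $\tilde s^1:=\phi(1,\supi(\hat s),\tilde{\ubf}^i,\tilde{\dbf}^i)$ and note that $\ubf_i^\imp$ and the $\uLbf_j$ make $\tilde{\ubf}^i$ full braking after the first slot, so the control-law condition plus the cocycle step give $\phi_x(t,\tilde s^1,\uLbf,\tilde{\dbf}^i)\in\chifree_{j\succ i}$ for all $t$; the extremal choices order $s^1$ against $\tilde s^1$ (robot $i$ behind, robot $j$ ahead) and the disturbances ($\dbf'_i\leq\dbfmax_i$, $\dbf'_j\geq\dbfmin_j$), and order-preservation followed by Property~\ref{property:geometric-invariance} finishes it.

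The main obstacle is bookkeeping rather than a conceptual hurdle: for each edge $(j,i)$ one must consistently push robot $i$ as far forward as the data allow ($\sup$, $\uH_i$, $\dbfmax_i$) and robot $j$ as far back as possible ($\inf$, $\uL_j$, $\dbfmin_j$), and invoke the correct monotonicity direction of Property~\ref{property:geometric-invariance} each time. The two genuinely new ingredients relative to Theorem~\ref{thm-robustness-brake-application} are the observation intersection (dispatched by the subset remark) and the disturbance concatenation in the cocycle step; both are routine once stated. Assumptions~\eqref{eq:assumption-uncertainty-velocity}--\eqref{eq:assumption-uncertainty-brake} are used only to guarantee the qualitative monotone behaviour (positive velocity, genuine acceleration and braking) underlying order-preservation.
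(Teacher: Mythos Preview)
Your proposal is correct and follows essentially the same approach as the paper's proof: the same induction reduction to one slot, the same subset remark to discard the observation intersection, the same pointwise reformulation via $s\in\hat s$ and $\dbf\in\dsignals$, the same two-case split on $\hat g_i^G(\hat s)$ in both parts, and the same concatenation/cocycle trick to reduce brake safety of the reached state to a comparison with $\phi(1+t,\supi(\hat s),\tilde{\ubf}^i,\tilde{\dbf}^i)$. The only cosmetic difference is that in the $\uH_i$ sub-case of part~(ii) you pass through the intermediate state $\tilde s^1$ (as in the deterministic Theorem~\ref{thm-robustness-brake-application}) whereas the paper compares directly at time $1+t$ from the initial data; also note that Assumptions~\eqref{eq:assumption-uncertainty-velocity}--\eqref{eq:assumption-uncertainty-brake} are not actually needed here (order-preservation holds regardless), they are used only for liveness.
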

\begin{proof}
Take a priority graph $G\in\graphs$, an initial condition $\hat s\in \hat B_G$ and a control $\ubf\in\controls$ satisfying Equation~\eqref{eq-inequality-control-map-uncertainty}. By induction, it is sufficient to prove that the flow remains in $2^{\chifree_G}$ for $t\in[0,1]$ and the reached state $\hat \phi(1,\hat s,\ubf,\ybf) \in \hat B_G$. First, we prove that the flow of Theorem~\ref{thm-robustness-brake-application-uncertainty} does not intersect $2^{\chiobs_G}$ for $t\in[0,1]$. Take arbitrary $t\in[0,1]$ and $(j,i)\in E(G)$: we have to prove that $\hat\phi_x(t,\hat s,\ubf,\ybf)\in 2^{\chifree_{j \succ i}}$. By construction of $\hat \phi$, it is equivalent to prove that for all $\dbf\in\dsignals$ and $s\in\hat s$, $\phi_x(t,s,\ubf,\dbf)\in \chifree_{j \succ i}$. By construction of $\hat g^G$, there are two cases:
\begin{itemize}
\item $\hat g_i^G(\hat s)=\uL_i$: in this case,
\begin{equation}
\phi_i(t,s,\ubf, \dbf)=\phi_i(t, s,\uLbf, \dbf) 
\label{eq:pf-discrete-case1-eq1-uncertainty}
\end{equation}
and by order-preservation: 
\begin{equation}
\phi_j(t, s,\ubf,\dbf) \geq \phi_j(t,s,\uLbf, \dbf) 
\label{eq:pf-discrete-case1-eq2-uncertainty}
\end{equation}
Since $\hat s\in \hat B_G$, $\phi_x(t, s,\uLbf,\dbf)\in \chifree_{j \succ i}$. Hence, by Property~\ref{property:geometric-invariance}, Equations~\eqref{eq:pf-discrete-case1-eq1-uncertainty} and~\eqref{eq:pf-discrete-case1-eq2-uncertainty} ensure that $\phi_x(t, s,\ubf,\dbf)\in \chifree_{j \succ i}$ as well.
\item $g_i^G(s)=\uH_i$: by construction of the control law, $\phi_x(t,\supi(\hat s),\tilde{\ubf}^i, \tilde{\dbf}^i) \in \chifree_{G}$. By order-preservation, using $\tilde{\ubf}^i_i(0)=\uH_i$, $\tilde{\dbf}^i_i=\dbfmax_i$ and $\supi_i(\hat s)=\sup \hat s_i$, we obtain:
\begin{equation}
\phi_i(t,\supi(\hat s),\tilde{\ubf}^i, \tilde{\dbf}^i)=\phi_i(t,\sup \hat s, \uHbf, \dbfmax) \geq \phi_i(t,s, \ubf, \dbf)
\label{eq:pf-discrete-case2-eq1-uncertainty}
\end{equation}
Again, by order-preservation, using $\tilde{\ubf}^i_j(0)=\uL_j$, $\tilde{\dbf}^i_j=\dbfmin_j$ and $\supi_j(\hat s)=\inf \hat s_j$, we have:
\begin{equation}
\phi_j(t,\supi(\hat s),\tilde{\ubf}^i, \tilde{\dbf}^i)=\phi_j(t,\inf \hat s, \uLbf, \dbfmin) \leq \phi_j(t,s, \ubf, \dbf)
\label{eq:pf-discrete-case2-eq2-uncertainty}
\end{equation}
Since $\phi_x(t,\supi(\hat s),\tilde{\ubf}^i, \tilde{\dbf}^i) \in \chifree_{j \succ i}$, by Property~\ref{property:geometric-invariance}, Equations~\eqref{eq:pf-discrete-case2-eq1-uncertainty} and~\eqref{eq:pf-discrete-case2-eq2-uncertainty} ensure that $\phi_x(t,s,\ubf, \dbf)\in\chifree_{j \succ i}$.
\end{itemize}

As a final step, we  prove that the reached state $\hat \phi(1,\hat s,\ubf,\ybf)$ is brake safe. It is sufficient to prove that:
\begin{equation}
\hat s^1:=\left\{\phi(1,s,\ubf,\dbf):\dbf\in\dsignals, s\in \hat s\right\} \in \hat B_G
\end{equation}
$\hat \phi(1,\hat s,\ubf,\ybf)$ is indeed a subset of $\hat s^1$ by construction of $\hat \phi$ (see Equation~\eqref{eq:non-deterministic-flow-intersection-with-observation}). Take arbitrary $t \geq 0$ and $(j,i)\in E(G)$: we have to prove that for all $s^1\in\hat s^1$ and  $\dbf^1\in\dsignals$, we have $\phi_x(t,s^1,\uLbf, \dbf^1)\in \chifree_{j \succ i}$. 

Take arbitrary $s^1\in\hat s^1$ and $\dbf^1\in\dsignals$. We have $s^1=\phi(1,s,\ubf,\dbf)$ with $\dbf\in\dsignals$ and $s\in \hat s$. Consider $\dbf^2\in\dsignals$ and $\ubf^2\in\controls$ satisfying $\dbf^2(0)=\dbf(0)$, $\ubf^2(0)=\ubf(0)$ and for all $k\in\NN$, $\dbf^2(k+1)=\dbf^1(k)$ and $\ubf^2(k+1)=\uL$.  By construction, we have: $\phi(t,s^1,\uLbf,\dbf^1)=\phi(1+t,s,\ubf^2,\dbf^2)$.

As a result, we have to prove that $\phi_x(1+t,s,\ubf^2, \dbf^2)\in \chifree_{j \succ i}$. As previously, there are two cases: 
\begin{itemize}
\item $g_i^G(s)=\uL_i$: then, we have $\ubf^2_i=\uLbf_i$, so that:
\begin{equation}
\phi_i(1+t,s,\ubf^2, \dbf^2) = \phi_i(1+t,s,\uLbf, \dbf^2)
\label{eq:pf-continuous-case1-eq1-uncertainty}
\end{equation}
Moreover, by order-preservation, we have:
\begin{equation}
\phi_j(1+t,s,\ubf^2, \dbf^2) \geq \phi_j(1+t,s,\uLbf, \dbf^2)
\label{eq:pf-continuous-case1-eq2-uncertainty}
\end{equation}
Since $s$ is brake safe, $\phi_x(1+t,s,\uLbf,\dbf^2)\in\chifree_{j \succ i}$. Hence, by Property~\ref{property:geometric-invariance}, Equations~\eqref{eq:pf-continuous-case1-eq1-uncertainty} and~\eqref{eq:pf-continuous-case1-eq2-uncertainty} ensure that $\phi_x(1+t,s,\ubf^2, \dbf^2)\in\chifree_{j \succ i}$ as well.

\item $g_i^G(s)=\uH_i$: then by construction of the control law, $\phi_x(1+t,\supi(\hat{s}),\tilde{\ubf}^i, \tilde{\dbf}^i) \in \chifree_G$. Using $\ubf^2_i \leq \tilde{\ubf}^i_i$, $\supi_i(\hat{s})=\sup\hat s_i$ and  $\tilde{\dbf}^i_i=\dbfmax_i$, by order-preservation, we have:
\begin{equation}
\phi_i(1+t,\supi(\hat{s}),\tilde{\ubf}^i, \tilde{\dbf}^i) =  \phi_i(1+t,\sup \hat s,\tilde{\ubf}^i, \dbfmax) \geq \phi_i(1+t,s,\ubf^2, \dbf^2)
\label{eq:pf-continuous-case2-eq1-uncertainty}
\end{equation}
Moreover, by order preservation, using $\tilde{\ubf}^i_j=\uLbf_j$, $\supi_j(\hat{s})=\inf\hat s_j$ and  $\tilde{\dbf}^i_j=\dbfmin_j$, we have:
\begin{equation}
\phi_j(1+t,\supi(\hat{s}),\tilde{\ubf}^i, \tilde{\dbf}^i) =  \phi_j(1+t,\inf \hat s,\uLbf, \dbfmin) \leq \phi_j(1+t,s,\ubf^2, \dbf^2)
\label{eq:pf-continuous-case2-eq2-uncertainty}
\end{equation}
Since $\phi_x(1+t,\supi(\hat{s}),\tilde{\ubf}^i, \tilde{\dbf}^i) \in \chifree_{j \succ i}$, by Property~\ref{property:geometric-invariance}, Equations~\eqref{eq:pf-continuous-case2-eq1-uncertainty} and~\eqref{eq:pf-continuous-case2-eq2-uncertainty} ensure that $\phi_x(1+t,s,\ubf^2, \dbf^2)\in\chifree_{j \succ i}$ as well.
\end{itemize}
\end{proof}

\subsection{Liveness}

Despite uncertainty, the proposed control still ensures all robots will eventually go through the intersection. As in previous chapters, robots are expected to eventually reach the region $\chigoal:=\xobsmax+\RR_+^n$. 

\begin{theorem}[Liveness]
Given an acyclic priority graph $G$, an initial brake safe non-deterministic state $\hat s\in \hat B_G$ and an observation signal $\ybf\in\Ybf$, there exists $T>0$ such that:
\begin{equation}
\hat \phi_x(T,\hat s,\hat g^G, \ybf)\in 2^{\chigoal}
\end{equation}
\label{thm:liveness-control-law-uncertainty}
\end{theorem}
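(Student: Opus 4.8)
The plan is to follow the same inductive, topological-order argument used for the deterministic liveness results (Theorem~\ref{thm:liveness-control-law}), but now carried out on the non-deterministic information state rather than on a single trajectory. Since $G$ is acyclic it admits a topological ordering, and I would exhibit robots exiting the intersection one after another along this order, each in finite time. The new ingredient compared with the deterministic case is that I must show \emph{every} possible true state of a robot exits, i.e.\ that the whole parallelepiped $\hat\phi(t,\hat s,\hat g^G,\ybf)$ eventually has its $i$-th position coordinate above $\xobsmax_i$; here the standing assumptions~\eqref{eq:assumption-uncertainty-velocity}--\eqref{eq:assumption-uncertainty-brake} are what guarantees genuine forward progress despite the worst-case noise.

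For the base case, take an extremal vertex $i_1$ of $G$, i.e.\ a robot with no incoming edge. By the definition~\eqref{eq-control-map-uncertainty} of $\hat g^G$, the braking condition is vacuous for $i_1$ (there is no $(j,i_1)\in E(G)$), so $\hat g_{i_1}^G\equiv\uH_{i_1}$ at every slot. I would then bound the \emph{lower} corner of the parallelepiped in coordinate $i_1$: under throttle $\uH_{i_1}$ and worst-case acceleration noise, assumption~\eqref{eq:assumption-uncertainty-acceleration} gives $\dot v_{i_1}\ge \uH_{i_1}+\dmin_{i_1}^u>0$ until the speed saturates at $\vmax_{i_1}$, after which assumption~\eqref{eq:assumption-uncertainty-velocity} gives $\dot x_{i_1}\ge \vmax_{i_1}+\dmin_{i_1}^v>0$. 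Hence after a bounded transient the infimal $x_{i_1}$-coordinate advances at a uniform positive rate, so it exceeds $\xobsmax_{i_1}$ in finite time $T_1$. I must also check that the observation updates cannot undo this: intersecting with $\ybf(k+1)$ in~\eqref{eq:non-deterministic-flow-intersection-with-observation} can only shrink the set, and never lowers the infimum of any coordinate, so the lower corner is monotone non-decreasing across slots.

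For the inductive step I would argue exactly as in Theorem~\ref{thm:liveness-control-law}. Once the infimum of the $i_1$-coordinate exceeds $\xobsmax_{i_1}$, monotonicity of the flow (velocities stay non-negative) keeps robot $i_1$ exited forever, and $i_1$ stops constraining anyone: since every cross-section $\kappa_{i_1 j}$ is bounded with $x_{i_1}<\xobsmax_{i_1}$, Property~\ref{property:geometric-invariance} gives $x\notin\chiobs_{i_1\succ i}$ whenever $x_{i_1}\ge\xobsmax_{i_1}$, so in the worst-case test $\phi_x(t,\supi(\hat s),\tilde{\ubf}^i,\tilde{\dbf}^i)\in\chiobs_{i_1\succ i}$ the offending coordinate never re-enters the cylinder. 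Consequently the induced subgraph on the remaining robots is again acyclic, its extremal vertex $i_{m+1}$ has all its potential predecessors among the already-exited robots (hence inactive), so it too receives $\uH_{i_{m+1}}$ and exits in finite time $T_{m+1}$. Iterating $n$ times yields $T:=T_n$ with $\hat\phi_x(T,\hat s,\hat g^G,\ybf)\in 2^{\chigoal}$.

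The main obstacle I anticipate is the set-valued bookkeeping rather than any single dynamical estimate: one has to verify carefully that the lower corner of the information-state parallelepiped is monotone under \emph{both} the continuous noisy flow and the observation intersection, and that an exited robot's infimal coordinate staying above $\xobsmax$ is exactly what deactivates the worst-case collision test in $\hat g^G$. Once these two monotonicity facts are in place, the finite-time progress estimate coming from the assumptions and the acyclic induction close the argument just as in the deterministic setting.
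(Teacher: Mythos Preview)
Your approach is correct and is the natural extension of the deterministic liveness proofs (Theorems~\ref{thm:liveness-control-law-velocity} and~\ref{thm:liveness-control-law}) to the non-deterministic information state: exploit a topological ordering of the acyclic graph, show that an extremal robot always receives full throttle and hence its infimal position coordinate crosses $\xobsmax_i$ in finite time, then peel off exited robots one by one. The two monotonicity facts you isolate are exactly the right ones, and both hold: the observation intersection in~\eqref{eq:non-deterministic-flow-intersection-with-observation} can only raise infima, and once $\inf\hat\phi_{x,i_1}\ge\xobsmax_{i_1}$ the worst-case test in~\eqref{eq-control-map-uncertainty} for edge $(i_1,i)$ starts robot $i_1$ at $\supi_{i_1}(\hat s)=\inf\hat s_{i_1}$ with non-negative velocity, so the $i_1$-coordinate never re-enters the bounded cylinder.

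The paper, however, takes a different route. It proves the result (in Appendix~\ref{app:liveness-control-law-uncertainty}) under the weaker hypothesis that $G$ is merely feasible with sufficient margin, which includes cyclic priority graphs; acyclicity of $G$ itself is then unavailable for an induction. Instead the argument proceeds by contradiction: assume the set $I(t)$ of non-exited robots never empties, let $\Imin=\bigcap_t I(t)$, show via auxiliary lemmas that every robot in $\Imin$ eventually applies $\uL_i$ forever and its supremal velocity vanishes, and then invoke the \emph{local} priority graph $G_{|x,\rho_G}$ (which is acyclic at every configuration by Lemma~\ref{lemma:sufficient-condition-locally-acyclic-priority-graph-corpus}) to exhibit a cycle among the robots of $\Imin$, a contradiction. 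Your direct induction is simpler and entirely adequate for the acyclic statement as phrased; the paper's contradiction argument is more elaborate but buys the extension to feasible cyclic priorities where no global topological order exists.
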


A proof of the above theorem is provided in Appendix~\ref{app:liveness-control-law-uncertainty} (under weaker assumptions).

\chapter*{Conclusions}

{\pagestyle{empty}

Part~\ref{part:priority-framework} suggested to use priorities as a plan to guide robots. In traditional planning, the plan is a reference trajectory to track and the trajectory tracking problem is a well-established problem with many existing solutions consisting in devising a control law configured by the reference trajectory in charge of tracking. However, in priority-based coordination, there is no reference trajectory. The plan is the priority graph and there is no standard tool to control robots under assigned priorities. Devising such tools has been the topic of the present part. The proposed control laws are configured by the assigned priorities, and guarantee priority preservation and liveness (all robots eventually go through the intersection). First of all, as priorities are assigned, the combinatorial complexity of multi robot control (see~\cite{Colombo2012}) is avoided, and computing the output of the control laws proposed in this part is of polynomial complexity. Moreover, in contrast with a trajectory tracking approach, the priority preservation approach retains some freedom of action at the control phase, as there is a large class of trajectories respecting the assigned priorities (instead of only one reference trajectory). In particular, the proposed control law ensures that robots -- only one, several, or even all -- may brake at any time without violating priorities. In Chapter~\ref{chap:optimal-control-velocity}, we have proposed priority preserving control for robots controlled in velocity. The trajectory resulting from the application of the proposed control law is optimal for the assigned priorities, recovering the existence of a left-greedy optimal trajectory in a given homotopy class noticed in~\cite{Ghrist2005}. Chapter~\ref{chap:control-acceleration} demonstrates that the presence of inertia can be handled using the notion of brake safety that merely consists in some kind of anticipation. The proposed control law is decentralized and demonstrates a remarkable robustness regarding unexpected deceleration of robots. The final chapter of the present part has given some elements to take into account uncertainty in priority-based coordination. Under bounded uncertainty, the idea is to consider the worst-case scenario which is well defined when priorities are assigned. Priority preservation and liveness can still be guaranteed as in the deterministic case. This chapter demonstrates the ability of the priority-based approach to handle uncertainty in a reactive manner. For example, if the current uncertainty on the position of robots is very large, under priority preserving control, all robots will brake and eventually stop safely, and will not restart until a sufficiently small uncertainty enables to go safely through the intersection. By contrast, tracking a reference planned trajectory when the uncertainty on position is very large would likely result in collisions. Hence, with a plan execution approach, if the uncertainty becomes very large, the designer should anticipate by providing an emergency maneuver to execute. Then, a new planning phase should be carried out before restarting. With priority-based coordination, such change in uncertainty -- even a complete lost of sensing capabilities -- can be handled in a reactive manner. 
\cleardoublepage}

\part{Priority-based coordination}
\label{part:priority-based-coordination}

\chapter*{Introduction}

\begin{minipage}{\linewidth}
Part~\ref{part:priority-framework} suggested using priorities to guide robots and Part~\ref{part:priorities-to-guide-robots} provided solutions to control robots under assigned priorities. To this point, many aspects of the design of a coordination system at intersections have been left behind. Most importantly, the multi robot coordination system is an open system as robots arrive and exit the intersection through time. Hence, priorities need to be assigned dynamically. Moreover, priority assignment and control under assigned priorities need to be executed in parallel. This part has a more engineering flavor, it specifies the system architecture, how priority assignment and priority preserving control are integrated and how they interface. The proposed approach is inspired from drivers' behavior at signalized intersections. Before entering the intersection, the driver follows the preceding vehicles without colliding, and as long as the traffic signal does not give him/her the right of way, the driver does not go through the intersection. Once the vehicle is given the right of way (green signal), the driver goes through the intersection. However, the driver still retains some reactive abilities and will hopefully not enter the intersection if other vehicles are blocked and/or a pedestrian crosses the road. \parindent2em

In priority-based coordination, the so-called control area is a region of space that robots should not enter unless they have been accepted and assigned a priority with respect to other accepted robots. We adopt a three-layer architecture~\cite{Gat1998}, particularly adapted to the approach considering plans as a resource to guide action. The reactive quality of the system is ensured by a behavior-based layer. Robots' behaviors include 'follow geometric path', 'move forward', 'do not enter the control area', 'respect priorities', 'avoid pedestrians'. The entry of the control area is managed by a central agent, the intersection controller. The intersection controller assigns priorities, yet it does not assign a precise trajectory for the accepted robots. It constitutes the deliberative layer of the system, processing time-consuming tasks reasoning about the future. Finally, robots have a sequencing layer in charge of activating/deactivating/configuring behaviors. The robustness property of the control law ensuring robots may safely brake at any point of time is shown to be of high interest in the proposed architecture. It is indeed possible for, e.g., behavior 'avoid pedestrians' to require the robot to brake to avoid a detected pedestrian, without conflicting with behavior 'respect priorities', as the control law ensures that it is always priority preserving to brake at any point of time (see Theorem~\ref{thm-robustness-brake-application}). Preliminaries of the presented results can be found in our article~\cite{Gregoire2013-priority-based}.

\paragraph{Sketch of the part} Chapter~\ref{chap:coordination-system} describes the system architecture and how priorities may be assigned. Chapter~\ref{chap:simulations} provides simulation results demonstrating safety and robustness of priority-based coordination.
\end{minipage}

{\pagestyle{plain}
\clearpage
\topskip0pt
\vspace*{\fill}
\includegraphics[height=1.0\linewidth,angle=-90,trim=160 60 160 60, clip]{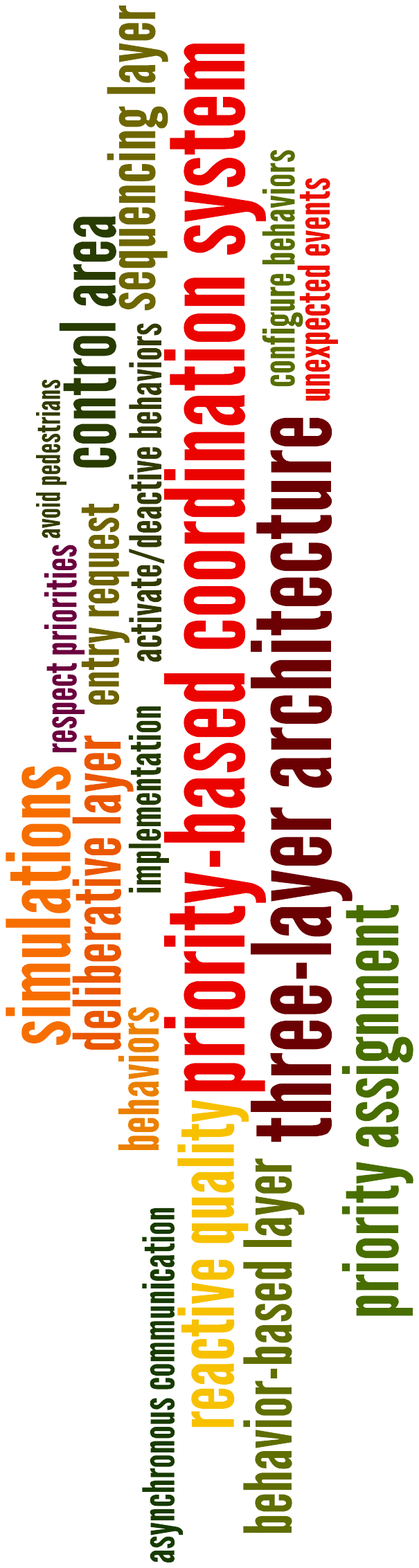}
\vspace*{\fill}
\parttoc}

\chapter[Overall priority-based coordination system]{Overall priority-based\\ coordination system}
\label{chap:coordination-system}
\minitoc

\paragraph{Sketch of the chapter} The first section presents the proposed three-layer architecture and provides details on how layers interact. The second chapter focuses on priority assignment: a simple and easily implementable priority assignment policy is described, and some adaptations in order to guarantee request processing liveness and queues stability are discussed.

\section{Three-layer architecture}

For its ability to design systems with reactive qualities yet retaining planning capabilities, a three-layer architecture is proposed. As noticed in~\cite{Gat1998}, such an architecture organizes control algorithms according to whether their internal state reflects the present, the past, or predictions of the future. Figure~\ref{fig:three-layer-architecture} gives a quick overview of the proposed architecture detailed in the sequel.

\begin{figure}[p]
\begin{center}
\includegraphics[width=1.0\linewidth]{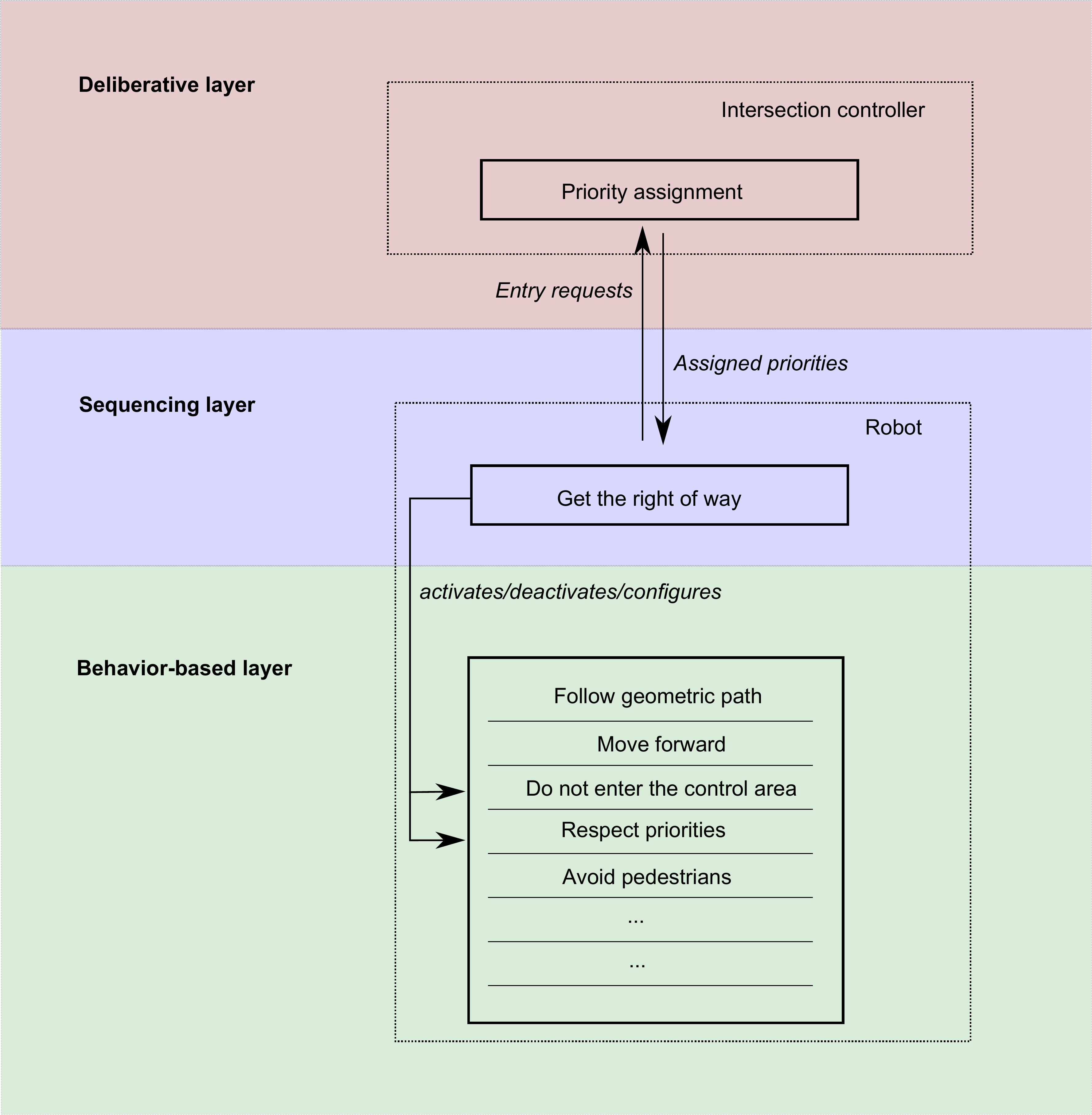}\hfill
\end{center}
\caption{The three-layer architecture of priority-based coordination}
\label{fig:three-layer-architecture}
\end{figure}

\subsection{The intersection controller}

The intersection controller constitutes the deliberative layer of the proposed architecture -- reasoning on the future -- and manages the control area, defined as a subset of the two-dimensional real space in which the collision area wholly resides. The control area must contain, at least, the subset of the two-dimensional space corresponding to all possible collisions between robots, excluding only regions where collision avoidance is reduced to safe car following (see Figure~\ref{fig-controlled-area}).
\begin{figure}[!htbp]
\begin{center}
\includegraphics[width=0.5\linewidth]{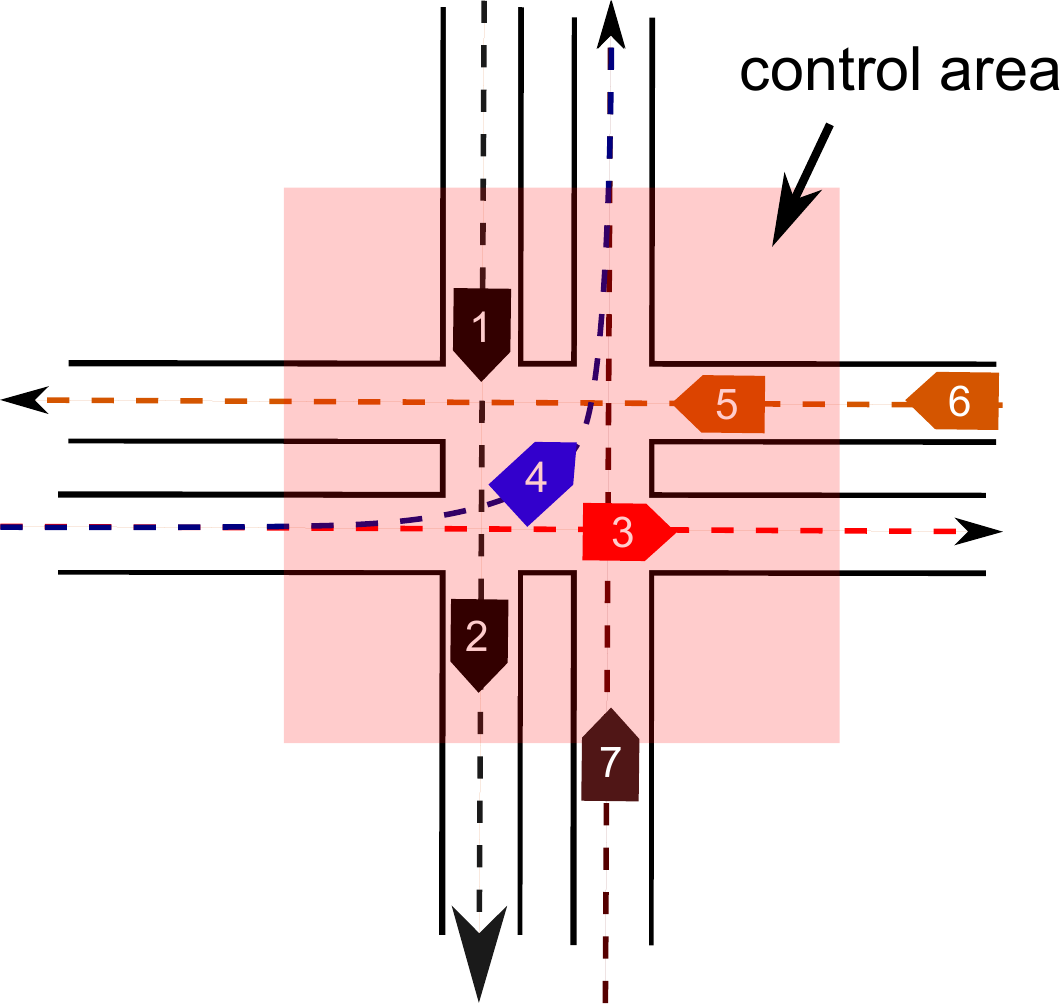}\hfill
\end{center}
\caption{The control area.}
\label{fig-controlled-area}
\end{figure}
For each path, an entry position and an exit position are defined. Let $x_i^\entry$ denote the entry position for robot $i$ and $x_i^\exit$ its exit position. Robot $i$ is in the control area if $x_i\in[x_i^\entry,x_i^\exit]$. To enter the control area, robots send a request to the intersection controller. The job of the intersection controller is to process requests. Either the request is rejected, the robot cannot enter the control area and will have to send a new request; or, the request is accepted and priorities with respect to robots already accepted in the control area are assigned. This task, referred as priority assignment is the topic of Section~\ref{sec:priority-assignment}.

\subsection{The behavior-based layer}

The behavior-based layer ensures the reactive quality of the system and is implemented by designing control laws.

\paragraph{Move forward} All robots implement behavior 'move forward'. Using the second-order dynamics model of Chapter~\ref{chap:control-acceleration}, 'move forward' behavior for robot $i$ consists in applying maximum throttle command $\uH_i$. However, this behavior is subsumed~\cite{Brooks1986} by all other behaviors, e.g., the robot will brake if another behavior like 'respect priorities' requires braking. 

\paragraph{Do not enter the control area} Robots also implement 'do not enter the control area' behavior. 
Each robot $i$ that has not already been accepted in the control area checks at every time slot whether accelerating (or maintaining maximum velocity) during the next time slot will inevitably result in an entry into the control area. If this is the case and if it is not accepted into the control area, robot $i$ must brake. To formulate this mathematically, the final (and maximal) position reached by robot $i$ with initial state $s_i$ under impulse control is computed as follows:
\begin{equation}
x_i^\stop(s_i) := \max \phi_{x,i}(\RR_+,s_i,\ubf_i^\imp)
\label{eq:stop-position}
\end{equation}
The condition for 'do not enter the control area' behavior to make robot brake then simply becomes: $x_i^\stop(s_i)>x_i^\entry$.

\paragraph{Respect priorities} Robots also implement priority preserving control of Part~\ref{part:priorities-to-guide-robots} (Chapter~\ref{chap:control-acceleration}) through 'respect priorities' behavior. This behavior cannot be active before priorities are assigned as it needs priorities as input to configure the control law $g^G$ by specifying the priority graph $G$.

\paragraph{Additional behaviors} In addition to the behaviors presented above, in charge of coordination, robots may implement other behaviors, not directly related to coordination. First of all, the fixed paths assumption (see Section~\ref{sec:coordination-space},~Figure~\ref{fig-paths}) requires robots to implement a 'follow geometric path' behavior using lateral control. More interestingly, robots may implement behaviors to react to unexpected events. For example, for an application in autonomous vehicles, an 'avoid pedestrians' behavior is a must. It is not conceivable to let autonomous vehicles go through an intersection in an urban area, executing an open-loop planned trajectory without implementing a behavior to detect pedestrians and react accordingly. The benefit of the proposed behavior-based architecture is that a behavior like 'avoid pedestrians' can be implemented in a manner that it subsumes all other behaviors. Most of the time, such reactive safety behaviors will require the robot to brake, and priorities will be conserved as the control law of Chapter~\ref{chap:control-acceleration} guarantees that it is always priority preserving to brake at any point of time (see Theorem~\ref{thm-robustness-brake-application}). Hence, priority-based coordination can handle a large class of unexpected events -- all events requiring one or more robots to brake -- without need to replan, i.e., without need to reassign priorities.

\subsection{The sequencer}

In a three-layer architecture, the sequencer's job is to activate/deactivate and/or configure the behaviors~\cite{Gat1998} that we just listed above. When should 'do not enter the control area' behavior be deactivated in favor of 'respect priorities' behavior ? Note that the state of the sequencer reflects the past as it is necessary to store whether the robot is accepted or not into the control area and to store priorities as well, in order to configure and activate/deactivate behaviors accordingly. 

The sequencer communicates with the deliberative layer, i.e., the intersection controller, by sending queries. The goal of these queries is to 'get the right of way'. The condition $x_i^\stop(s_i)>x_i^\entry-\delta$ is used as the condition to request the entry of the control area. The margin $\delta \geq 0$ enables to anticipate the entry of the control area, so that the intersection controller can possibly accept the robot into the control area in the remaining time, before 'do not enter the control area' behavior's brake condition $x_i^\stop(s_i)>x_i^\entry$ holds. The sequencer communicates asynchronously with the intersection controller to ensure a reactive quality. As long as the intersection controller does not accept the robot, the sequencer keeps 'do not enter the control area' behavior active. When the robot is accepted into the control area, 'do not enter the control area' behavior is deactivated in favor of 'respect priorities'. The assigned priorities received by the sequencer in the response of the intersection controller serve as input of 'respect priorities' behavior to configure the control law by specifying the priority graph $G$.

\section{Priority assignment}
\label{sec:priority-assignment}
This section focuses on how priorities are assigned, i.e., how the intersection controller processes entry requests. 

\paragraph{Priorities as a byproduct of traditional trajectory planning algorithms} First of all, it is key to notice that priorities can be obtained as a byproduct of all existing trajectory planning algorithms espousing the plan-as-program paradigm. One can simply assign the priorities induced by the feasible path returned by the planning algorithm. For certain existing algorithms, e.g., in~\cite{Akella2002}, priorities are even directly accessible (in~\cite{Akella2002}, they can be retrieved through the binary variables of the MILP formulation of the problem). Therefore, priority assignment is not the core of the present thesis and we will not provide complex priority assignment policies adapting existing algorithms. In this section, a simple priority assignment policy is proposed resulting in acylic and thus necessarily feasible priorities. Then, perspectives towards "liveness" and "stability" guarantees are presented.

\subsection{A simple priority assignment policy}
\label{subsec:simple-priority-assignment}
The idea of the proposed policy is to let robots spend as little time as possible in the intersection area, inspired from~\cite{Dresner2008-multiagent-approach}. Thus a robot is accepted into the control area only if it can travel with maximum throttle command and with lowest priority. The second point is key: assigning the newly accepted robot the lowest priority with regards to robots already accepted into the control area leads to a necessarily acyclic graph, enforcing liveness (see Theorem~\ref{thm:liveness-control-law}). This can be formulated as follows and implementation aspects are presented in Section~\ref{sec:implementation-aspects}. 

First of all, recall the control law of Chapter~\ref{chap:control-acceleration} when robots are controlled in acceleration:
\begin{equation}
g_i^G(s):=\begin{cases}
\uL_i & \text{if } \exists (j,i)\in E(G), \exists t\geq  0 \text{ s.t. } \phi_x(t,s,\tilde{\ubf}^i) \in \chiobs_{j\succ i} \\
\uH_i & \text{ else.}
\end{cases}
\end{equation}
As $\kappa_{j\succ i}$ is the cross-section of $\chiobs_{j\succ i}$, using the definition of $\tilde{\ubf}^i$, the control law can be formulated as follows:
\begin{equation}
g_i^G(s):=\begin{cases}
\uL_i & \text{if } \exists (j,i)\in E(G), \exists t\geq  0: (\phi_{x,j}(t,s_j,\uLbf_j),\phi_{x,i}(t,s_i,\ubf^\imp_i))\in\kappa_{j\succ i} \\
\uH_i & \text{ else.}
\end{cases}
\end{equation}

Consider a robot $i$ that requests the entry of the control area. To decide to accept it or not, we can simulate a trajectory that consists in applying control $\uH_i$ constantly to robot $i$ while robots $j \neq i$ follow the trajectory that they would have followed in the absence of $i$, i.e., following control law $g^G$. Let $s=(s_j)_{j\in\robots}$ denote the current state of robots $j\in\robots$, let $s_i$ denote the current state of the requesting robot $i$ and let $\varsigma$ denote the simulated trajectory defined as follows:
\begin{eqnarray}
\forall t \geq 0, \varsigma_i(t) &:=& \phi_i(t,s_i,\uHbf_i)\\
\forall j\in \robots, \forall t\geq 0, \varsigma_j(t) &:=& \phi_j(t,s,g^{G}) \label{eq-predicted-trajectory}
\end{eqnarray}
Then, there are two options:
\begin{itemize}
\item if for all $k\in\NN$ and for all $j\in\robots$ satisfying $\kappa_{ij}\neq\emptyset$, we have:
\begin{equation}
\forall t\geq 0, (\phi_{x,j}(t,\varsigma_j(k),\uLbf_j),\phi_{x,i}(t,\varsigma_i(k),\ubf^\imp_i))\notin \kappa_{j\succ i} 
\label{eq:condition-accept-i}
\end{equation}
the request is accepted and we do:
\begin{eqnarray}
\robots &\gets& \robots\cup\{i\}\\
E(G)& \gets& E(G)\cup\{(j,i):j\in\robots, \kappa_{ij}\neq\emptyset\}
\end{eqnarray}
\item else the request is rejected.
\end{itemize}

Note that the described algorithm ensures the priority relation to be a partial order, that is $G$ to be a directed acyclic graph at all times. Each robot is sequentially accepted into the control area by the intersection controller if it can go through the intersection at maximum throttle command and after all already accepted robots. Condition~\eqref{eq:condition-accept-i} ensures that once robot $i$ is accepted and controlled by the control law $g^G$, if all robots follows $g^G$ (no uncertainty, no unexpected event), the control law will always return $\uH_i$. This means that, in the absence of uncertainty, the coordination system will result in robots either waiting at the entry of the control area (possibly stopped at the entry), or accepted into the control area and applying maximum throttle command, thus going through the intersection at maximum speed. This is what is observable in the simulations of Subsection~\ref{subsec:sim-deterministic}. However, remind that a key motivation for our priority-based approach is precisely to handle uncertainty. Hence, if some robot does not apply maximum throttle command at some point for an arbitrary reason, the priority preserving control law will ensure that priorities are nevertheless respected as demonstrated by the simulations of Subsection~\ref{subsec:sim-unexpected-events}.

\subsection{Request processing liveness}

The weakness of the policy presented above is quite similar to the one of the First-Come-First-Serve policy of~\cite{Dresner2008-multiagent-approach}. As highlighted in~\cite{Au2011} for First-Come-First-Serve reservation policy, and it also holds for the priority assignment policy presented above, handling requests separately and not taking into account the history of requests, causes undesired behaviors like a vehicle in an alley waiting indefinitely at the entry of the intersection.

A solution is presented in~\cite{Au2011} to avoid this phenomenon. A batch policy with locking is proposed, consisting of mapping requests to a real value computed using a cost function of the form $f(wait):=a\times wait^b$ where $a,b$ are constants and $wait$ is the estimated amount of the time the robot has been waiting to enter the intersection. The "locking" mechanism is described as follows: when a request $r$ has an associated cost greater than a threshold, then requests from other robots whose path intersects the path of the robot of $r$ will not be granted, until the robot of $r$ is accepted. Interestingly, the proposed policy provably guarantees liveness, i.e., every robot waiting to enter the intersection can eventually enter. This liveness property is different from the one proved in Part~\ref{part:priorities-to-guide-robots} which ensures that once robots are accepted into the intersection, respecting the assigned priorities, they will eventually go through the intersection. The "locking" mechanism can be easily adapted to enhance performance and ensure liveness of the simple priority assignment proposed in Subsection~\ref{subsec:simple-priority-assignment}.

\subsection{Stability guarantees}
\label{subsec:bp-priority-assignment}

In traffic signal control, queue lengths are a standard indicator of a control policy's performance. In particular, recently, back-pressure control~\cite{Tassiulas1992} applied to traffic signals (see, e.g., ~\cite{Varaiya2009, Wongpiromsarn2012,Gregoire2013-capacity,Gregoire2014-unknown-routing}) aims at providing stability guarantees of the control policy. Loosely speaking, stable queues do not grow indefinitely through time. We believe that this work can be used to endow the priority assignment policy with stability guarantees.

We do not aim to formalize the proposed approach in the general case as it is beyond the scope of the present thesis, so the approach is presented for a particular example. Consider the intersection depicted in Figure~\ref{fig:intersection-BP}. 
\begin{figure}[!htbp]
\begin{center}
\includegraphics[width=0.8\linewidth]{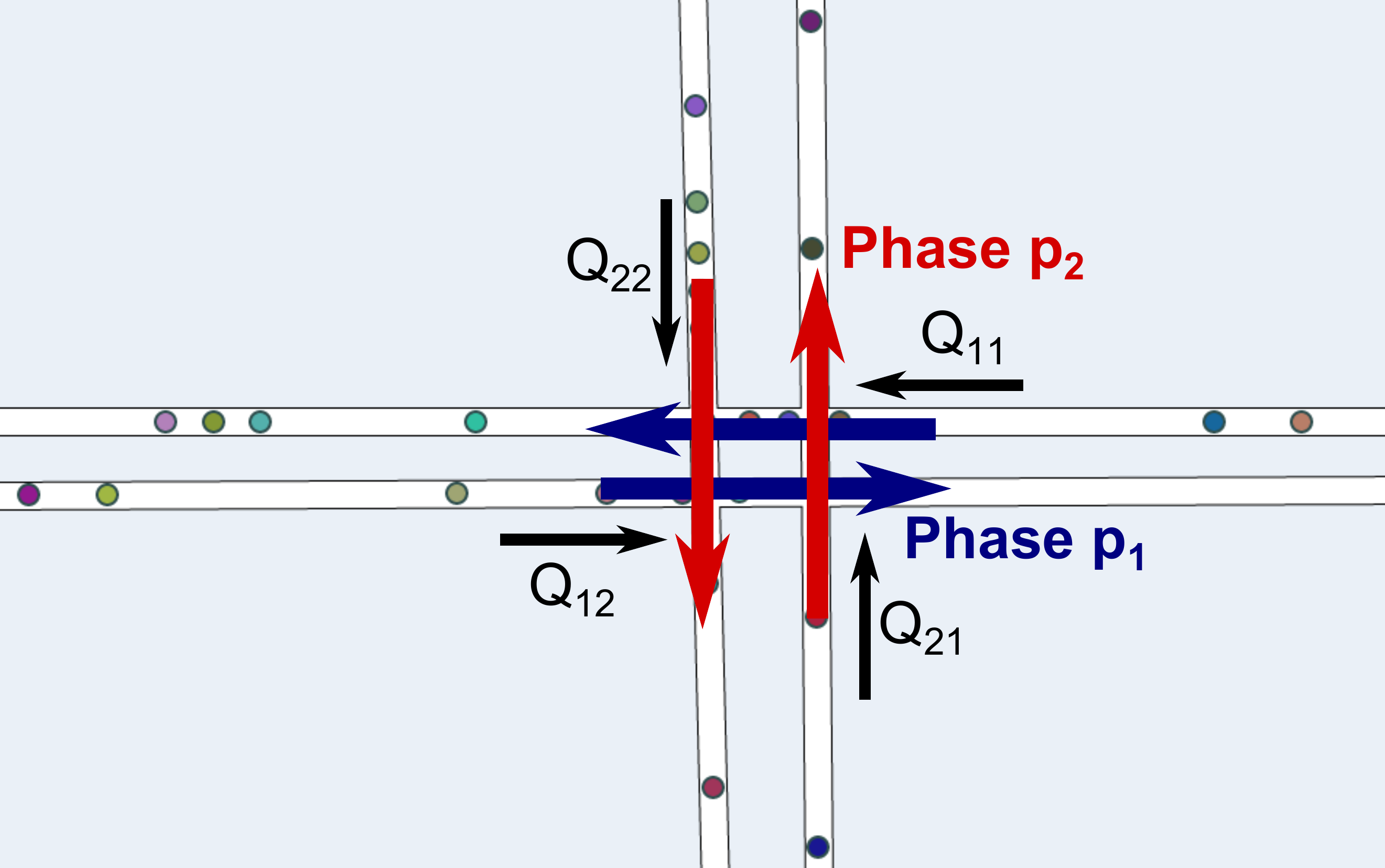}\hfill
\end{center}
\caption{A four-path intersection, with two phases and four queues. Phase $p_1$ empties queues $Q_{11}$ and $Q_{12}$ and phase $p_2$ empties queues $Q_{21}$ and $Q_{22}$.}
\label{fig:intersection-BP}
\end{figure}
We let $Q_{11}$, $Q_{12}$, $Q_{21}$, $Q_{22}$ denote the queues lengths at the entry of each path. If the intersection was controlled by a traffic signal, there would be two phases: $p_1$ and $p_2$. Assuming this intersection is isolated, under standard back-pressure control, phase $p_1$ (resp. $p_2$) is applied if $Q_{11}+Q_{12} \geq Q_{21}+Q_{22}$ (resp. $Q_{11}+Q_{12} < Q_{21}+Q_{22}$). 

Traffic signal control is not efficient at low traffic density because of the phase duration. A typical situation is a single vehicle waiting at the intersection for the right of way. The vehicle needs to wait for the end of the current phase before obtaining the right of way. At high traffic density, there are always queues at the entry of the intersection, so traffic signal control is particularly efficient as it lets vehicles move in platoons. Moreover, under back-pressure control, optimal stability can be proved, i.e., the queuing network is stabilized for all arrival rates that can be stably handled considering all control policies. That is why we propose an adaptive priority assignment policy that consists in applying back-pressure priority assignment if robots accumulate at the entry of the intersection (above a certain threshold), while the basic priority assignment policy presented in Subsection~\ref{subsec:simple-priority-assignment} is applied otherwise. More precisely, a phase duration $T$ and a threshold $\dql \geq 0$ are chosen, and periodically, for $t=0,T,2T\cdots$, the phase update algorithm proceeds as follows:
\begin{itemize}
\item if $(Q_{11}+Q_{12})-(Q_{21}+Q_{22}) \geq \dql$: phase $p_1$ is applied. It means that for all the duration of the phase ($T$), only the entry requests of robots on the corresponding paths will be accepted. However, requests still need to be accepted according to the priority assignment policy presented in Section~\ref{sec:priority-assignment}. Typically, when there is a phase switch, the first requests will probably be rejected as there are still robots of the other phase in the intersection. These first requests which are rejected can be seen as a kind of yellow time. 
\item if $(Q_{21}+Q_{22})-(Q_{11}+Q_{12}) > \dql$: it is the symmetric case, and phase $p_2$ is applied.
\item otherwise, all phases are activated (both $p_1$ and $p_2$), so that the requests of all robots can be potentially accepted. The priority assignment policy is not affected by the phase, and is exactly as presented in Subsection~\ref{subsec:simple-priority-assignment}.
\end{itemize}

\chapter{ Simulations}
\label{chap:simulations}
\minitoc

\paragraph{Sketch of the chapter} The first section provides some details on the implementation of priority-based coordination in simulations. In particular, collision checking, robots random generation and the size of the control area are discussed. Mainly qualitative simulations results are then presented and interpreted.

\section{Implementation aspects}
\label{sec:implementation-aspects}

For the sake of the simplicity, we have implemented our algorithms for circle-shaped robots along straight paths. This choice eases the computation of the obstacle region as every $\kappa_{ij}$ is the interior of an ellipse whose equation can be easily derived from the radius of robots and the angle between the two straight geometric paths. All robots are supposed to be circle-shaped with a common diameter $D$. Note that the collision region between each couple of paths can be precomputed once and for all during the design phase of the intersection controller. The lateral control is not simulated and all robots are assumed to follow their assigned geometric path.

To check whether a trajectory is collision-free, as all we can do is to compute a discrete sequence of points, we have used a conservative collision checking algorithm. Basically, to check whether a given flow $(\phi(t,s,\ubf))_{t\geq 0}$ is collision-free with regards to $\chiobs_{i\succ j}$, we compute the following sequence of points of $\RR^2$:
\begin{eqnarray}
x_i(k)&:=&\phi_{x,i}(k,s,\ubf)\\
x_j(k)&:=&\phi_{x,j}(k+1,s,\ubf)
\end{eqnarray}
Our collision checking algorithm asserts that the flow $(\phi(t,s,\ubf))_{t\geq 0}$ is collision-free with regards to $\chiobs_{i\succ j}$ if and only if the sequence $(x_i(k),x_j(k))_{k\in\NN}$ is collision-free with regards to $\kappa_{i \succ j}$. This method is illustrated in Figure~\ref{fig:collision-checking}. It is direct that it is conservative as some collision-free trajectories are not asserted to be collision-free; yet the difference vanishes for small enough time slot length.
\begin{figure}[!htbp]
\begin{center}
\includegraphics[width=1.0\linewidth]{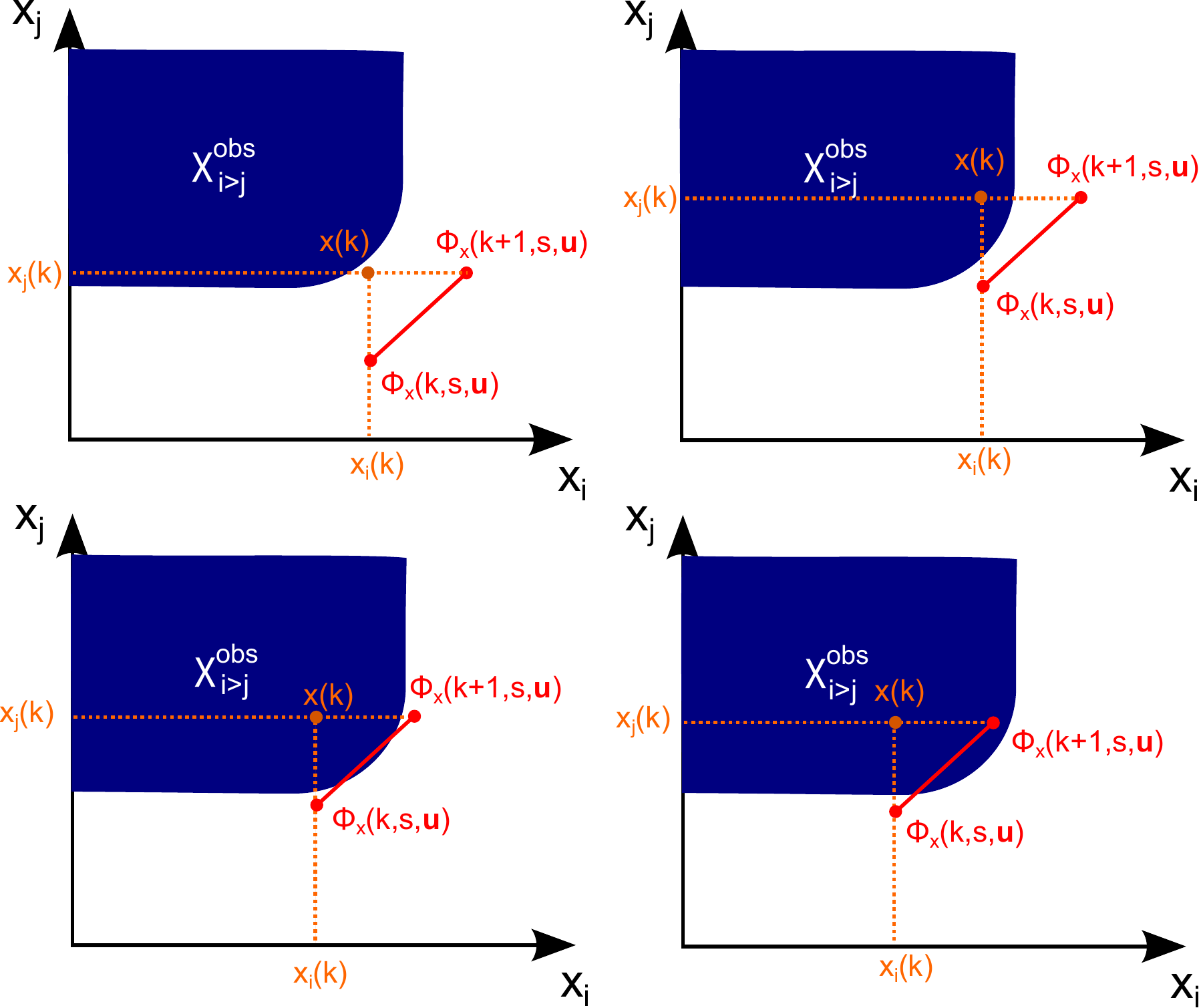}\hfill
\end{center}
\caption{Method used for collision checking based on a discrete sequence of points. In the two top drawings, the trajectory for $t\in[k,k+1]$ is collision-free. However, in the case of the top right drawing the collision checking algorithm will consider the trajectory as non collision-free. In the two bottom drawings, the trajectory for $t\in[k,k+1]$ is not collision-free. In the case of the bottom left drawing, both $\phi_x(k,s,\ubf)$ and $\phi_x(k+1,s,\ubf)$ are collision-free. However, $x(k)$ is not collision-free. This case illustrates that checking whether the endpoints are collision-free is not sufficient, which justifies the use of $x(k)$.}
\label{fig:collision-checking}
\end{figure}

Robots are generated at the origin of each path randomly at a constant rate. Basically, at each time slot, for each path, a random value between $0$ and $1$ according to a uniform distribution is taken, and if this value exceeds a certain threshold, a robot is generated on this path. The value of this threshold is precisely the generation rate at the path. When generated, a robot $i$ is positioned with zero velocity at the coordinate $0$ of the path, or if there is already a robot $j$ at position $x_j \leq D$, $i$ is positioned at the coordinate $x_j-D$. 

As noticed in~\cite{Dresner2008-multiagent-approach}, maximizing the velocity of robots in the intersection minimizes the time spent within the collision region, yielding a better performance. Hence, to ensure that robots have a maximum velocity within the collision region, the entry of the control area is defined to be far enough from the collision region (in the simulation videos we see that the robots that are not already accepted in the control area stop way before potential collision configurations). 

Finally, the priority assignment policy of Subsection~\ref{subsec:simple-priority-assignment} is simplified based on heuristic considerations. To decide whether robot $i$ can be accepted or not, we need to check whether, under maximum throttle command, it can go through the intersection after all robots already accepted in the control area. To do so, note first that it is sufficient to check if it is the case for the lastly accepted robot of each intersecting path. Now, assume that robot $j$ is the lastly accepted robot on path $\gamma_j$. Intuitively, it is clear that, under maximum throttle command, robot $i$ can go through the intersection after robot $j$, if and only if there is a sufficient time offset between their entries. To this purpose, we compute:
\begin{itemize}
\item $\tau_i$~: the number of time slots necessary for robot $i$, under maximum throttle command, to reach "the entry of the collision area between paths $\gamma_i$ and $\gamma_j$", i.e., to reach position $\min\{x_i:x\in\chiobs_{ij}\}$;
\item $\tau_j$~: the number of time slots necessary for robot $j$, under maximum throttle command, to reach "the exit the collision area between paths $\gamma_i$ and $\gamma_j$", i.e., to reach position $\max\{x_j:x\in\chiobs_{ij}\}$.
\end{itemize}
Our heuristic approach considers that, under maximum throttle command, robot $i$ can go through the intersection after robot $j$ if and only if $\tau_i\geq \tau_j$. It looks quite natural as it means that robot $i$ should "enter the collision area between paths $\gamma_i$ and $\gamma_j$" after robot $j$ exits this area. Naturally, due, e.g., to the brake safety constraint, this is not equivalent to the formulation of~Subsection~\ref{subsec:simple-priority-assignment}. However, it is much easier to implement and checking whether the heuristic condition is satisfied is also much less time consuming. Simulation results of Subsection~\ref{subsec:sim-deterministic} confirm the efficiency of our heuristic approach as robots seem to enter the control area at the right time, so as to go through the intersection at maximum speed.

\section{Simulation results}

The purpose of the presented simulations is fourfold; they aim to demonstrate:
\begin{enumerate}[(a)]
\item the ability of priority-based coordination to carry out as complex scheduling as with plan-as-program approaches;
\item the robustness enabled by planning priorities instead of precise trajectories, making possible to handle unexpected events requiring braking without replanning, making also possible to deal with bounded, possibly time-varying, uncertainty;
\item and the ability of priority assignment policies to implement back-pressure algorithms guaranteeing queues stability and opening avenues for the control of a network of autonomous intersections.
\end{enumerate}

\subsection{Simulations under deterministic control}
\label{subsec:sim-deterministic}

The experimental intersection is depicted in Figure~\ref{fig-simulation-intersection}. It is composed of eight straights paths. The maximum velocity of robots is such that a robot at maximum velocity travels $D/2$ (one radius) during one slot. All robots share the same kinodynamic  constraints with $\uL=-\uH$ and 20 slots are necessary to go from stop to full speed (and conversely). Hence, to ensure that robots are at maximum velocity when they reach the first potential collision configuration, the entry position is fixed at a distance $6 D$ from the first potential collision configuration. Symmetrically, the exit position is fixed at a distance $6 D$ after the last potential collision configuration. As communication aspects are not considered in this simulation setting, there is no delay for the intersection controller to respond to requests, so robots do not need to anticipate their entry and we take $\delta\equiv 0$, i.e., robots request the entry of the control area if $x_i^\stop(s_i)>x_i^\entry$, that is just in time.

\begin{figure}[ht]
\begin{center}
\includegraphics[width=0.5\linewidth]{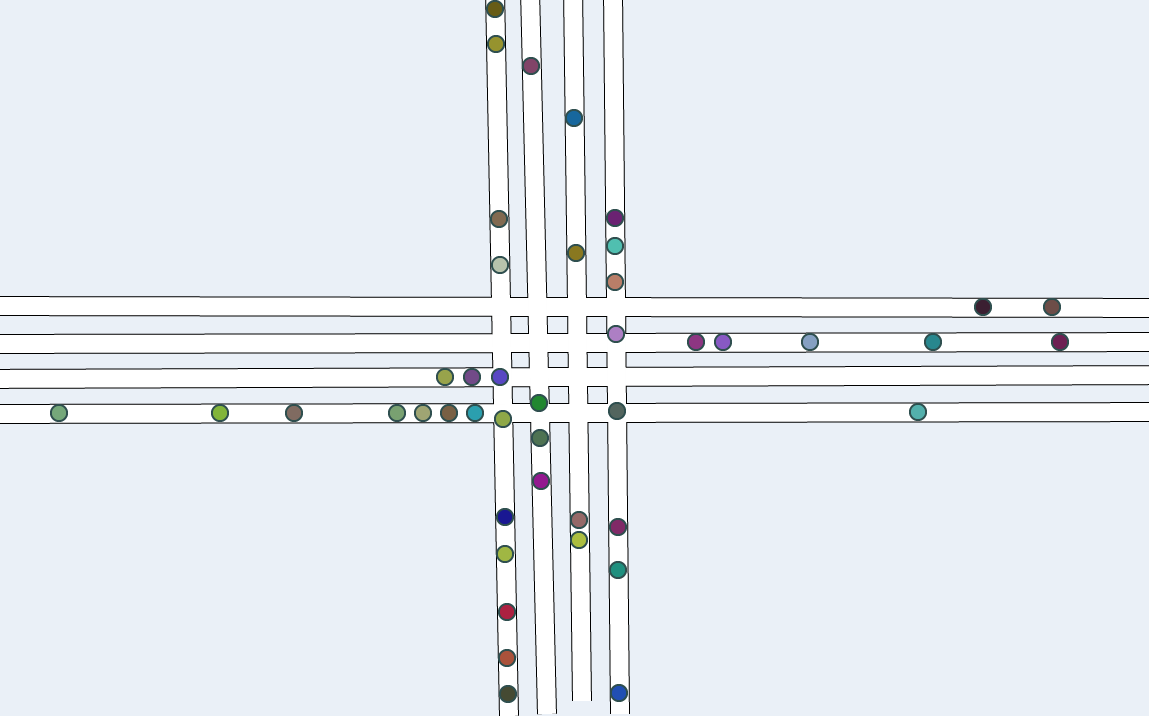}\hfill
\end{center}
\caption{The intersection composed of eight straight paths used for simulations.}
\label{fig-simulation-intersection}
\end{figure}

A video capture of the simulation for an arrival rate of $0.04$ robots per time slot on each path is available \href{http://youtu.be/T5ASnKuJLT4}{here}\footnote{\url{http://youtu.be/T5ASnKuJLT4}}. One can observe that robots not accepted in the control area stop at a distance equivalent to 6 robots before the first potential collision configuration. In this simulation, there is no uncertainty, and the video capture confirms that in the absence of uncertainty, the presented algorithms result in robots always at maximum throttle command inside the control area. Finally, note that the entry management of the control area is not a first come first serve policy. Some robots requesting the entry before another robot may be accepted into the control area after that robot.

The latter phenomenon is more obvious in the video capture of the simulation for an arrival rate of $0.08$ robots per time slot available \href{http://youtu.be/tYC6m7Z-S3Y}{here}\footnote{\url{http://youtu.be/tYC6m7Z-S3Y}}. At such an arrival rate, queues are formed at the entry of the control area, but the size of the queues are not considered for processing the requests. Finally, note that queues are stable at this arrival rate which denotes an ergodic dynamics of the system.

At this point, it just appears that priority-based coordination enables to carry out as complex scheduling as traditional approaches using a plan-as-program approach, e.g.,~\cite{Dresner2008-multiagent-approach}. However, the benefit of the priority-based approach is not visible, because in the absence of uncertainty, the control law under assigned priorities always returns $\uH$, it is very similar to an open-loop plan execution.

\subsection{Robustness regarding unexpected deceleration}
\label{subsec:sim-unexpected-events}

Here, to illustrate the robustness of the proposed coordination system with respect to unexpected events requiring deceleration, we consider a scenario in which robots may decide to brake within the control area unexpectedly. The intersection controller, when assigning priorities, does not know that the robot is going to brake within the control area. At the beginning of every time slot, each robot $i$ may switch from a controlled regime under the control law $g^G$ to an unexpected deceleration under constant control $\uL_i$, and vice versa, with probability transitions displayed in Figure~\ref{fig-transitions}.
\begin{figure}[ht]
\begin{center}
\includegraphics[width=0.6\linewidth]{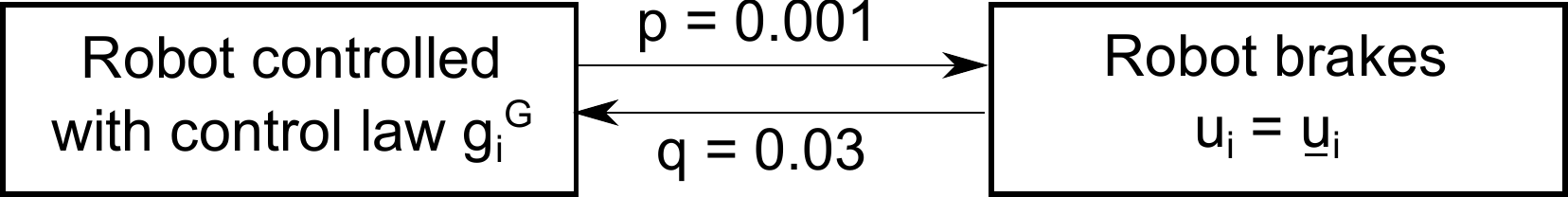}\hfill
\end{center}
\caption{Non-deterministic transitions between control regimes}
\label{fig-transitions}
\end{figure}
The probability values $p,q$ are chosen arbitrarily, as the goal is is not to reproduce a realistic scenario but to test and validate the robustness of the approach. One may consider transitions to brake control regime as modeling some unexpected events subject to occur in applications to transportation systems such as a loss of communication abilities or a pedestrian crossing the road, both requiring the robot to slow down unexpectedly. 

A video capture of the simulation for an arrival rate of $0.04$ robots per time slot on each path is available \href{http://youtu.be/8Xz3S_OhK80}{here}\footnote{\url{http://youtu.be/8Xz3S_OhK80}}. Even if some robots stop within the control area, other robots adapt and brake if necessary thanks to the control law. In contrast with simulations under deterministic control, the control law is useful here and enables to handle robots slowing down unexpectedly. No collision occurs during the simulation, the control law is effectively safe and robust with regards to brake application. We see that the priorities are satisfied, that no collision occurs, and that all robots eventually exit the intersection, although the trajectory may be very far from the trajectory under perfect control law.

\subsection{Robustness regarding bounded uncertainty}

The simulation results that follow aim at demonstrating the robustness of priority-based coordination in the presence of bounded uncertainty in sensing and control. The same inertia/geometrical parameters as for the previous simulations are used. However, uncertainty is additionally considered.

First of all, we assume the presence of control uncertainty, so the dynamics of robots is described by Equations~\eqref{eq-diff-state1-non-deterministic}-\eqref{eq-diff-state2-non-deterministic}. In the presented simulations, the value of control uncertainty bounds are different for each robot. Their average values are (here, $n$ denotes the total number of robots through the simulation run and the sum is over all these robots):
\begin{equation}
\frac{1}{n}\sum_{i} |\dmin_i^v|=\frac{1}{n}\sum_{i}\dmax_i^v=0.1~|\uL_i|
\label{eq:control-uncertainty-values-1}
\end{equation}
\begin{equation}
\frac{1}{n}\sum_{i} |\dmin_i^u|=\frac{1}{n}\sum_{i}\dmax_i^u=0.1~\uH_i = 0.1~|\uL_i|
\label{eq:control-uncertainty-values-2}
\end{equation}
and the actual control uncertainty bounds on each robot vary between $0$ and twice the average values according to a uniform distribution. This enables to illustrate that the proposed approach can deal with different control uncertainty bounds on each robot. In average, the uncertainty in control is $10\%$ of the maximum control value as stated by Equations~\eqref{eq:control-uncertainty-values-1} and~\eqref{eq:control-uncertainty-values-2}.

Uncertainty in sensing is also simulated and again, as for control uncertainty, the value of sensing uncertainty bounds are different for each robot. Let $\delta y_i^x$ and $\delta y_i^v$ denote the respective maximum absolute errors in position and velocity observations on robot $i$, their average values are:
\begin{equation}
\frac{1}{n}\sum_{i} \delta y_i^v=0.1~\vmax_i
\label{eq:sensing-uncertainty-values-1}
\end{equation}
\begin{equation}
\frac{1}{n}\sum_{i} \delta y_i^x = D/2
\label{eq:sensing-uncertainty-values-2}
\end{equation}
and again, the actual observation uncertainty bounds on each robot vary between $0$ and twice the average values according to a uniform distribution. In average, the uncertainty in position is one radius of robot and the uncertainty in velocity is $10\%$ of the maximum velocity as stated by Equations~\eqref{eq:sensing-uncertainty-values-1} and~\eqref{eq:sensing-uncertainty-values-2}.

Note that to decide to accept or not a robot in the control area, the intersection controller can only access to the non-deterministic state of robots. The heuristic approach is adapted to deal with that and the values of $\tau_i$ and $\tau_j$ (see Section~\ref{sec:implementation-aspects}), which are necessary to decide to accept or not a robot in the control area, are computed based on the average state of robots considering all possible current true states.

A video capture of the simulation for an arrival rate of $0.02$ robots per time slot on each path is available \href{http://youtu.be/vpqHbNE6smM}{here}\footnote{\url{http://youtu.be/vpqHbNE6smM}}. The red segments represent the set of positions where a robot believes it is located in. One can see that no collision occurs, neither between robots, nor between the red segments. It confirms that the control law in the non-deterministic information space proposed in Chapter~\ref{chap:control-uncertainty} results in a collision-free trajectory of the non-deterministic information state. 

Finally, to demonstrate the robustness of our approach regarding time-varying uncertainty, we consider a scenario where uncertainty on the observation of position is much higher during a limited time period. In the following simulations, all the parameters are unchanged, but between time slots $t = 500$ and $t = 1000$, the uncertainty on position measures is suddenly multiplied by a factor $10$. A video capture of the simulation for an arrival rate of $0.02$ robots per time slot on each path is available \href{http://youtu.be/k14t-fYpy3g}{here}\footnote{\url{http://youtu.be/k14t-fYpy3g}}. It is remarkable that such a change in the uncertainty of position observation can be handled in a completely reactive manner. Note also that, interestingly, the priority assignment policy during the period of large position uncertainty demonstrates an emerging traffic signal like behavior. 

\subsection{Stability guarantees under back-pressure control}

The following simulation results illustrate the ability of priority-based coordination to ensure both efficiency in term of travel time at low traffic density and stability of the queue lengths at high traffic density, in an adaptive manner. The adaptive priority assignment policy proposed in Subsection~\ref{subsec:bp-priority-assignment} has been implemented with the same inertia/geometrical parameters as the simulations presented previously and simulations results are presented in Figure~\ref{fig:bp-simulations}.
\begin{figure}[!htbp]
\begin{center}
\includegraphics[width=0.5\linewidth]{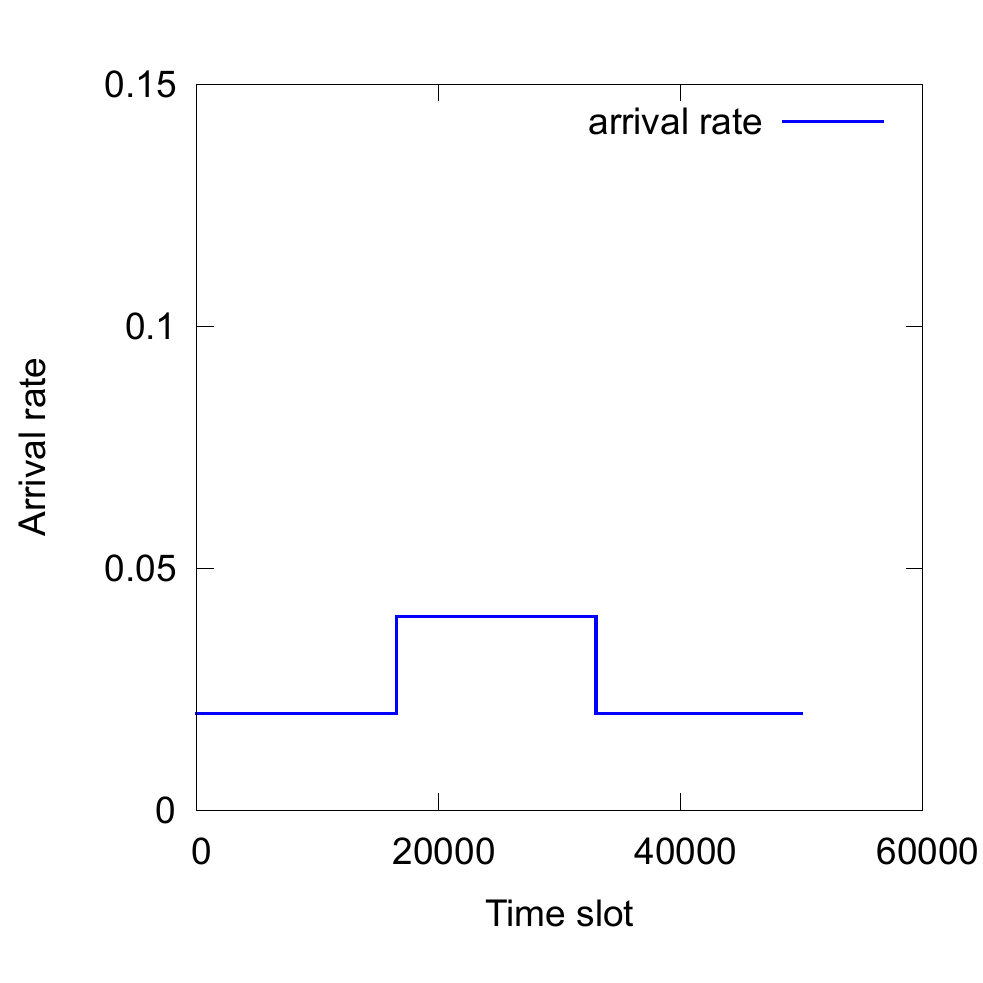}\hfill
\includegraphics[width=0.5\linewidth]{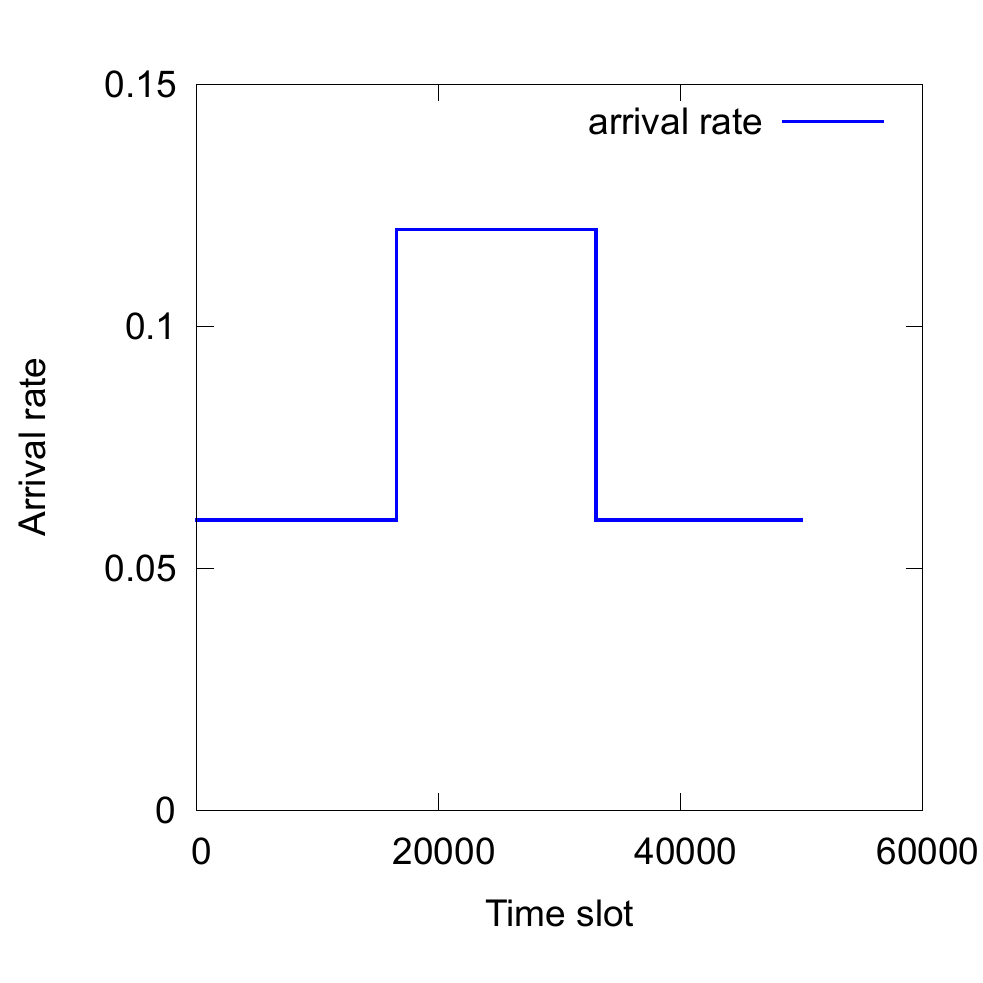}\hfill
\includegraphics[width=0.5\linewidth]{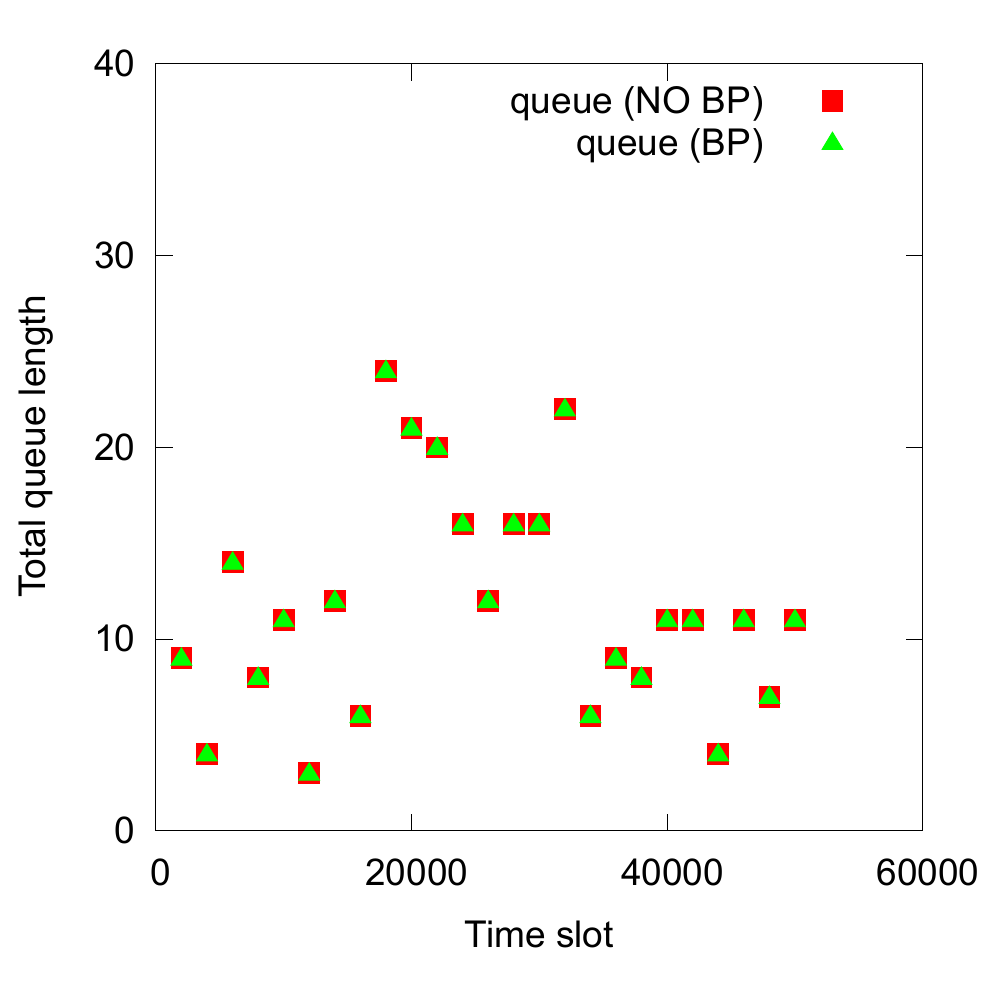}\hfill
\includegraphics[width=0.5\linewidth]{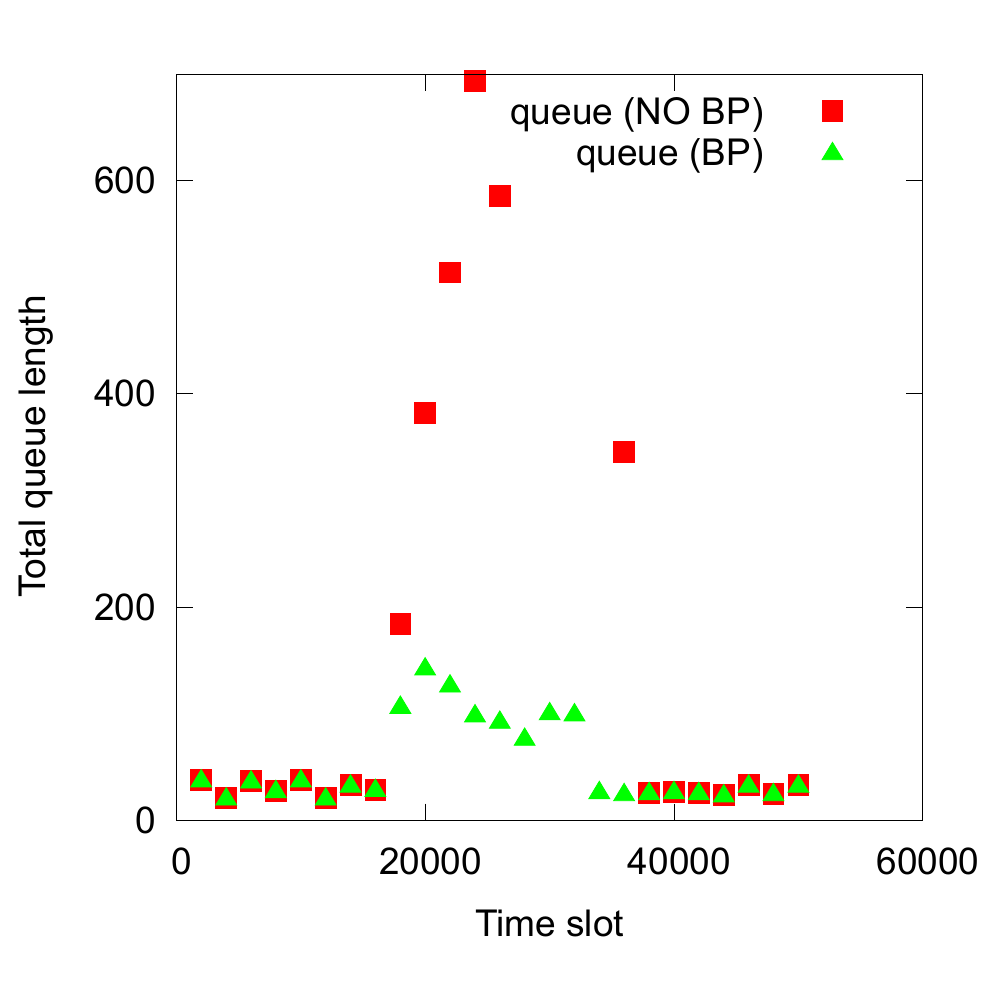}\hfill
\includegraphics[width=0.5\linewidth]{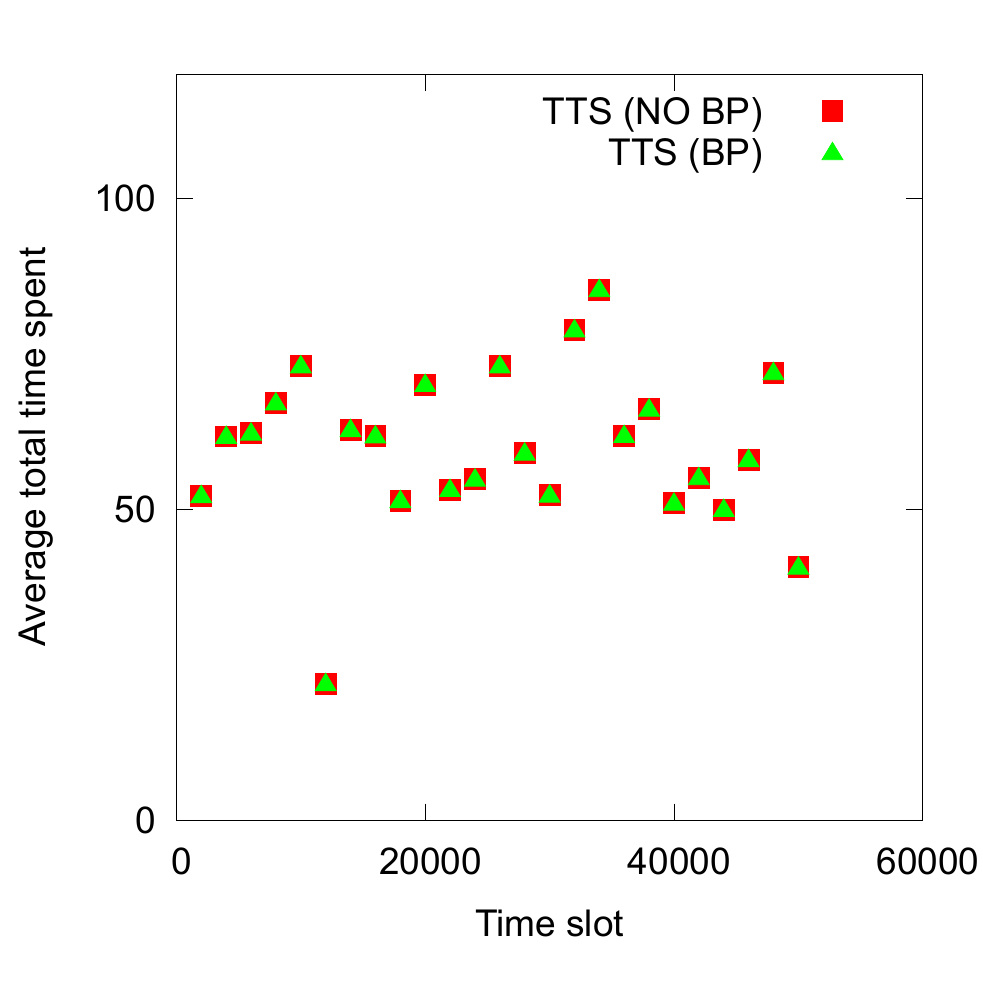}\hfill
\includegraphics[width=0.5\linewidth]{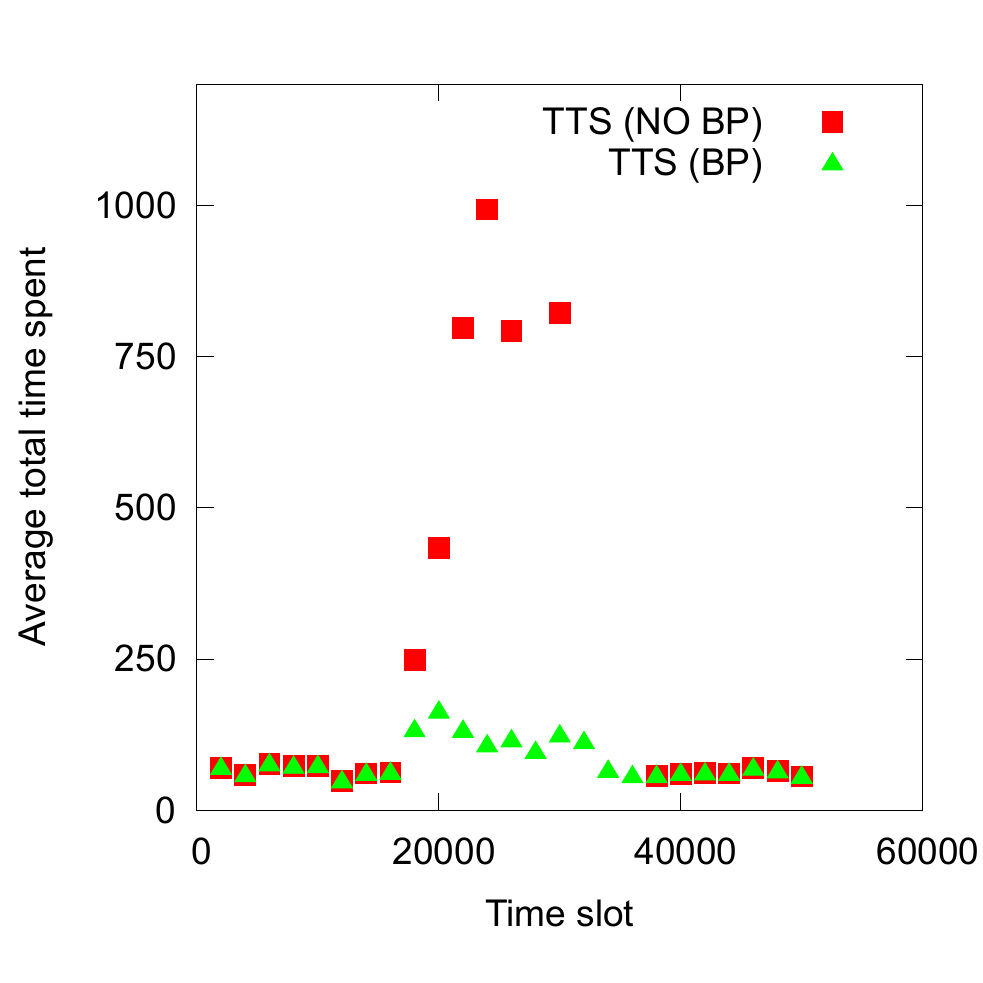}\hfill
\end{center}
\caption{Simulations under adaptive priority assignment policy for two arrival rate scenarios.}
\label{fig:bp-simulations}
\end{figure}
For the phase duration, we take $T=100\text{ time slots}$ and the threshold is $\dql=30$. We compare the behavior of the system under the simple priority assignment policy presented in Section~\ref{sec:priority-assignment} (referred as "NO BP" in Figure~\ref{fig:bp-simulations}) versus the adaptive priority assignment policy proposed here (referred as "BP" in Figure~\ref{fig:bp-simulations}). The top drawings provide the evolution of the arrival rate through time for the two scenarios considered. The drawings in the center depict the evolution of the sum of the four queue lengths ($Q=Q_{11}+Q_{12}+Q_{21}+Q_{22}$) through time. Finally, the bottom drawings represent the average total time spent by robots currently in the intersection. On the left drawings of Figure~\ref{fig:bp-simulations}, it is clear that at low traffic density, the behavior under both policies are identical: it is not surprising as the queue difference is not likely to be greater than the threshold, so that the two priority assignment policies coincide most of the time. On the right drawings of Figure~\ref{fig:bp-simulations}, it appears that the queues get unstable under the priority assignment policy of Section~\ref{sec:priority-assignment} for an arrival rate of $0.12$ robots per time slot. By contrast, under adaptive priority assignment policy, queues are stable. In conclusion, the proposed adaptive priority assignment policy combines the efficiency of the priority assignment policy of Section~\ref{sec:priority-assignment} at low traffic density and the stability guarantees of back-pressure control at high traffic density.

\chapter*{Conclusions}

The present part proposed a priority-based coordination system adopting a three-layer architecture integrating both priority assignment -- in the deliberative layer -- and priority preserving control -- in the behavior-based layer -- which are executed in parallel and interface through the sequencing layer. The proposed coordination system is able to manage continuous arrivals of robots at the intersection and to assign priorities dynamically. The proposed architecture takes full benefit of the brake safety property of priority preserving control of Part~\ref{part:priorities-to-guide-robots} as some behavior may require a robot to brake at any moment to handle some unexpected event with the guarantee that priorities will be respected. Priority-based coordination demonstrates both planning and reactive abilities.

In the proposed setting, priority assignment is centralized and is processed by the intersection controller. It is the planning -- time consuming -- task: assigning efficient priorities requires reasoning about the future. Interestingly, previous work can be used to design efficient priority assignment policies. Priorities are indeed a byproduct of existing trajectory planning algorithms as it suffices to assign the priorities induced by the output planned trajectory. That is why little attention has been paid to the design of efficient priority assignment policies in this part. However, what is quite novel is that, some algorithms commonly used for the control of traffic signals proved useful to assign priorities at high traffic densities as it becomes more efficient to have a traffic-signal-like behavior. Interestingly, priority assignment can adapt to traffic load resulting in a traffic-signal-like behavior ensuring queue lengths stability at high traffic density, and in a much more complex scheduling through combinatorial optimization of priorities to minimize travel delays at low traffic density. This brings elements to the debate about whether autonomous intersection management can really outperform traffic signals. We believe that at high traffic density, a traffic-signal-like behavior is likely to be an optimal strategy as it enables to move robots in platoons, maximizing throughput. However, at low traffic density, optimizing priorities can significantly decrease travel delays and outperform traffic signals. The results of the present part demonstrate that priority-based coordination is able to perform both strategies adaptively. 

As highlighted in the introduction of the thesis, a key motivation to the automation of transportation systems is a reduced need of costly infrastructure. Therefore, a major concern of the architecture proposed here is that it requires a central agent at each junction with processing capabilities. However, the only task which is centralized is priority assignment. We believe that the conciseness of the priority graph that merely maps each couple of robot identifiers to a binary value encoding the priority, is a real asset to design a distributed priority assignment policy. However, if first-come-first-serve distributed priority assignment is likely to be quite easily designed, distributed priority assignment performing some combinatorial optimization will be a much more challenging task.

\part*{Conclusions and perspectives}
\addcontentsline{toc}{part}{Conclusions and perspectives}

\fancyhead[RE]{\bfseries\nouppercase{Conclusions and perspectives}} 
\fancyhead[LO]{\bfseries\nouppercase{Conclusions and perspectives}}  

\section*{Conclusions}

This thesis proposed to study the coordination of mobile robots at intersections espousing an approach considering planning as the computation of resources to guide -- not control -- robots. As a result, the plan representation appeared to differ widely from what it is in traditional motion planning. In traditional approaches, planning consists in computing a planned reference trajectory that robots must execute. The reference trajectory constitutes the plan and in the control phase, a control law configured by the reference trajectory is in charge of tracking the planned trajectory. In the approach of this thesis, the plan is the priority graph encoding the particular homotopy class chosen to solve the coordination problem and to our knowledge, there existed no standard control scheme to ensure the described trajectory belongs to the chosen homotopy class, i.e., to ensure priorities are respected. In priority-based coordination, as there is no reference trajectory, there is no trajectory tracking which is replaced by so-called priority preserving control. Our approach using priorities as a plan to guide -- not control -- robots confirmed suitable to coordinate multiple robots at an intersection area, endowing the system with robustness properties. Part~\ref{part:priority-framework} provides a powerful tool to characterize the structure of solutions to the coordination problem: the priority graph. Previous work already noticed the existence of homotopy classes of feasible paths in the coordination space, yet without providing a meaningful representative of homotopy classes. The main contribution of the first part of the thesis is to provide such a meaningful representative: the priority graph. Priorities uniquely encode the homotopy classes of feasible paths. Choosing a particular priority graph to coordinate robots appears as the discrete part of the coordination problem. It thus provides a geometrical understanding in the coordination space of why planning priorities instead of a precise trajectory results in an increased robustness. It merely appears as the consequence of constraining the path of robots in the coordination space to remain in a homotopy class -- a large set of feasible paths continuously deformable into each other --, instead of assigning a particular precise feasible path to follow. The "size" of the homotopy class provides some freedom of action. Part~\ref{part:priorities-to-guide-robots} demonstrates that robots can easily go through the intersection while respecting priorities in a reactive manner. Under assigned priorities, the combinatorial complexity of multi robot control is avoided as for each pair of robots there is not two strategies to avoid collisions anymore: the robot with lower priority must decelerate in favor of the robot with higher priority. It is thus not surprising that priority preserving control can be carried out in polynomial time. From the coordination space point of view, priority preserving control ensures that the trajectory described by the multi robot system belongs to the homotopy class encoded by the assigned priorities. To this purpose, a control law is configured by the assigned priorities and is in charge of priority preservation. It is very different from the trajectory tracking approach as it allows for example all robots to stop for a while to handle some unexpected event while respecting priorities, i.e., without replanning. By contrast, in traditional motion planning, the plan must be executed, and the only way to recover some freedom of action is replanning. In the absence of inertia, the control law proposed in Chapter~\ref{chap:optimal-control-velocity} ensures that the resulting trajectory is optimal for the assigned priorities, recovering the existence of a local optimum in each homotopy class~\cite{Ghrist2005}. Even though the dynamics model used in Chapter~\ref{chap:control-acceleration} is quite simple, it convinces that the additional complexity in the presence of kinodynamic constraints can be easily tackled by introducing the brake safety constraint. The byproduct of the conservative brake safety constraint is an increased robustness regarding unexpected deceleration of robots. Moreover, the proposed control scheme is decentralized as the output of the control law can be computed on each robot independently, thus not requiring any form of agreement through communication links. This valuable benefit is allowed by the prior agreement on the priority graph which is done at the planning level. This prior agreement which requires some form of communication enables to select a particular strategy for collision avoidance -- the priority graph --, so that no more agreement is required at the control level which can be decentralized. Finally, Chapter~\ref{chap:control-uncertainty} provides some elements to control the system in the presence of uncertainty. It shows that bounded uncertainty can be handled by considering worst-case scenarios. Even though the uncertainty model is quite simple, it demonstrates that the key benefit of priority-based coordination is its ability to handle uncertainty in a reactive manner. For example, robots may stop for a while if uncertainty is very large due to communication concerns and restart without replanning, merely using a control law in the information space. Part~\ref{part:priority-based-coordination} has a more engineering flavor and proposes a three-layer architecture integrating both priority assignment and priority preserving control which are executed in parallel. Priority assignment is carried out by a central deliberative intersection controller. Robots implement multiple behaviors including one ensuring priorities are respected. Robot's sequencer interfaces the reactive and the deliberative layers through asynchronous communication with the intersection controller to negotiate the entry of the control area and by activating/deactivating/configuring primitive behaviors. Compared to traditional plan-as-program approaches, robots retain reactive capabilities through the intersection. A large class of unexpected events -- all events requiring braking -- can be handled in a reactive manner without need to replan endowing the system with significant robustness. This thesis proposes a novel class of coordination systems at intersections -- using priorities to guide robots -- and therefore still suffers from some limitations and opens several perspectives for future work.

\section*{Limitations and perspectives}

\subsection*{From the theoretical point of view}

\paragraph{Homotopy classes under imperfect lateral control} The path-following assumption of Figure~\ref{fig-paths} is key to the definition of priorities and to the existence of homotopy classes of feasible paths uniquely encoded by priorities. In real systems, perfect path following cannot be guaranteed as lateral control is based on imperfect mapping/localization data and imperfect actuators. Hence, future work should investigate which assumptions on lateral control still guarantee all the results of Part~\ref{part:priority-framework} which is the foundation of the priority-based approach. We believe that under bounded uncertainty on lateral control, the results of Part~\ref{part:priority-framework} can be extended by considering the worst-case obstacle region considering all possible geometric paths. However, this could decrease performance and it would raise the problem of handling a lateral error beyond the fixed bound.

\paragraph{Dealing with partial information} In the current setting, all robots are assumed to know the assigned priorities and the current state of other robots. Even though Chapter~\ref{chap:control-uncertainty} provides elements on how to deal with bounded uncertainty, it still assumes that all robots know the complete priority graph and the current non-deterministic information state of all other robots, which is still a strong assumption. Further work should focus on relaxing these assumptions. We believe that priority-based coordination has real strengths to deal with partial information concerns. First of all, some priorities are redundant, as if robots $2$ and $3$ travel along the same path (say $3$ follows $2$) and have both priority over robot $1$, then robot $1$ only needs to know that it has priority over the first of the two robots, i.e., robot $2$. Moreover, to execute the priority preserving control law, a given robot only needs to know the current state of robots in the neighborhood as there is no need to anticipate beyond a certain area.

\paragraph{Distributed priority assignment} As highlighted in the conclusion of Part~\ref{part:priority-based-coordination}, while a key motivation to the automation of transportation systems is a reduced need of costly infrastructure, the proposed three-layer architecture requires a central agent at each junction with processing capabilities. Distributed priority assignment would imply a consensus algorithm as all robots need to agree on a common decision: the assigned priorities. We believe that without efficiency considerations, previous work on consensus algorithms should help to design simple distributed priority assignment policies, e.g., a first-come-first-serve policy. However, optimizing priorities requires to perform time consuming algorithms reasoning about the future and a distributed implementation of such algorithms should prove challenging.

\subsection*{From the application point of view}

\paragraph{Challenges for an implementation in real systems} First of all, localization and mapping aspects have not been addressed in this thesis and are challenges in themselves. These topics are intensive research fields, both in robotics and intelligent transportation systems communities. Priority-based coordination requires building a map specifying predefined geometric paths to go through the intersection and robots need to have an estimate of their position on their path. We believe that for an application in self-driven vehicles, existing maps of the road network and lane markings/panels -- more generally, the physical infrastructure -- should help localization and mapping tasks. Communication is another important aspect that has been left behind. Standardized messages should be designed to support priority-based coordination, taking into account constraints in terms of delay and amount of data. This is one of the tasks currently achieved in the European project Autonet2030 where a complete cooperative system architecture is designed for the cooperation of intelligent vehicles supporting, in particular, coordination at intersections. We believe that the conciseness of the plan representation -- the priority graph is merely mapping couple of robot identifiers to a binary value -- is a valuable feature of priority-based coordination as it limits the amount of data to be exchanged. By contrast, plan-as-program approaches need to exchange precise trajectories which include much more data.

\paragraph{Sharing the road between autonomous and human-driven vehicles} Autonomous vehicles will arrive gradually, and they will have to "share the road" for a while. According to~\cite{LaFortelle2014}, only 50\% of vehicles will be autonomous by 2030. Cooperation between autonomous/semi-autonomous/human-driven vehicles thus appears necessary. As respecting priorities is a capability of both humans and robots, the priority-based approach is particularly adapted for the development of algorithms aiming at coordinate both human-driven and autonomous vehicles. In~\cite{Qian2013}, a priority-based autonomous intersection management system is proposed in this context of "mixed traffic flow". Priorities are assigned by an intersection controller, yet human-driven vehicles are not aware of that, and just respect traffic signals as in a usual signalized intersection. A video capture of simulations is available \href{http://www.youtube.com/watch?v=L3B_FrNn_Pk}{here}\footnote{\url{http://www.youtube.com/watch?v=L3B_FrNn_Pk}}. 

\paragraph{Energy efficient priority preserving control} Priority preserving control proposed in Part~\ref{part:priorities-to-guide-robots} is a kind of bang-bang control switching quite abruptly between maximum brake and maximum throttle. Such control law, while priority preserving, raises energy efficiency concerns as abrupt control switches are energy consuming. To this purpose, we believe that model predictive control with a cost function accounting for both priority preservation, delay reduction and energy efficiency should prove useful by anticipating the need to brake and result in smoother trajectories as demonstrated by recent work~\cite{Makarem2013}.

\paragraph{Handling priority violation} The present work assumed robots respect the assigned priorities. We are convinced that the ability to respect priorities is a must that self-driven vehicles need to possess to be deployed on the roads. People would accept that vehicles cannot follow a precise trajectory through an intersection. By contrast, they would not accept an autonomous vehicle unable to respect assigned priorities. That is why we believe that priority violations should occur mainly under major system failure of one robot. Another reason why it can occur in an application to self-driven cars is when a driver decides to stop the self-driving system and to take back the control of the vehicle for some unexpected reason. Such circumstances would require both priority violation detection and real-time dynamic priority assignment with all the attention paid on safety, i.e., collision avoidance. 

\paragraph{Towards a network of autonomous intersections}

All the work around traffic signal control demonstrates that controlling a network of intersections is of high complexity. Recent work based on back-pressure algorithms tend do demonstrate that queues stability guarantees of the network can be obtained, while each intersection controller uses only local information. Future work should extend the simple back-pressure priority assignment policy at a single intersection proposed in Chapter~\ref{chap:coordination-system} and consider adaptive priority assignment at a network of autonomous intersections using back-pressure algorithms.

\paragraph{Towards coordination of aerial drones}

Finally, even though this work was originally motivated by applications in autonomous vehicles on roadways, applications to other fields should be investigated. For instance, the results of this thesis may be applicable to coordinate aerial drones in a three-dimensional space provided the geometric three-dimensional paths followed by drones are fixed. If so, each drone $i$ still has only one degree of freedom -- its curvilinear coordinate along its three-dimensionnal path $\gamma_i \subset \RR^3$ -- and the priority-based framework proposed in this thesis is still applicable.

\cleardoublepage
\fancyhead[RE]{\bfseries\nouppercase{\leftmark}}      
\fancyhead[LO]{\bfseries\nouppercase{\rightmark}}     
\bookmarksetup{startatroot}
\bibliographystyle{plain}
\bibliography{Thesis}

\appendix

\addtocontents{toc}{\protect\newpage}
\part*{Appendix}
\addcontentsline{toc}{part}{Appendix}
{\pagestyle{plain}
\parttoc
\cleardoublepage}
\chapter{ Appendix of Chapter~\ref{chap:priority-graph}}
\section{Topology results}

In the sequel, given a topological space $X$ and a subset $C\subset X$, $C^c$ denotes the complementary of a $C$ in $X$, $\cl{C}$ its closure and $\partial C$ its boundary. 

\label{app:topology-properties}
\begin{lemma}
\label{lemma:two-subsets-frontier}
Let $A$ and $B$ be two disjoint subsets of a topological space $X$ ($A \cap B=\emptyset$). Let $f:[0,1]\to A\cup B$ denote a continuous application taking values in $A\cup B$ and satisfying $f(0)\in A$ and $f(1)\in B$. Then there exists $t^0\in[0,1]$ such that $f(t^0)\in \partial A \cap \partial B$.
\end{lemma}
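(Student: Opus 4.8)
The plan is to locate the desired point as the ``first crossing time'' of $f$ from $A$ into $B$. Concretely, I would set
\[
t^0 := \inf\{\, t \in [0,1] : f(t) \in B \,\},
\]
which is well defined because $f(1)\in B$ makes the set nonempty. The guiding intuition is that at the first instant the path touches $B$ it must still be clinging to $A$, so $f(t^0)$ should lie in $\cl{A}\cap\cl{B}$; the conclusion will then follow from a purely set-theoretic reduction.

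That reduction is the first fact I would record, since it isolates exactly where disjointness is used and removes $\partial$ from the rest of the argument: for any two disjoint sets one has $\cl{A}\cap\cl{B}\subseteq \partial A\cap\partial B$. Indeed, $A\cap B=\emptyset$ gives $B\subseteq A^c$ and $A\subseteq B^c$, hence $\cl{B}\subseteq\cl{A^c}$ and $\cl{A}\subseteq\cl{B^c}$; so any point of $\cl{A}\cap\cl{B}$ lies in $\cl{A}\cap\cl{A^c}=\partial A$ and in $\cl{B}\cap\cl{B^c}=\partial B$. It therefore suffices to prove the single membership $f(t^0)\in\cl{A}\cap\cl{B}$.

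The core of the proof is establishing these two closure memberships, and here lies the main subtlety: since $X$ is only a general topological space, I cannot characterise closures by convergent sequences in $X$ and must instead argue directly with open neighbourhoods, pulling them back to $[0,1]$ through the continuity of $f$. For $f(t^0)\in\cl{B}$: given any open $O\ni f(t^0)$, the set $f^{-1}(O)$ is an open neighbourhood of $t^0$ in $[0,1]$, hence contains every $t\in[0,1]$ within some $\epsilon>0$ of $t^0$; by the definition of the infimum there is a time $t'\in[t^0,t^0+\epsilon)$ with $f(t')\in B$, so $f(t')\in O\cap B$. For $f(t^0)\in\cl{A}$: every $t\in[0,t^0)$ satisfies $f(t)\in A$ (as $f(t)\notin B$ while $f$ takes values in $A\cup B$), so the same neighbourhood $f^{-1}(O)$ contains a time $t''\in[0,t^0)$ with $f(t'')\in O\cap A$ — unless $t^0=0$, the single edge case, where instead $f(t^0)=f(0)\in A\subseteq\cl{A}$ directly.

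I expect the only real obstacle to be the bookkeeping around this $t^0=0$ boundary case, together with the dual possibility that $f(t^0)$ already lies in $B$; handling both uniformly is precisely why I phrase everything through neighbourhoods rather than one-sided limits. Once both closure memberships are in hand, the set-theoretic reduction immediately yields $f(t^0)\in\partial A\cap\partial B$, which completes the proof.
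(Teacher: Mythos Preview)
Your proof is correct and follows essentially the same approach as the paper: both locate a crossing time (you take $\inf\{t:f(t)\in B\}$, the paper takes the dual $\sup\{t:f(t)\in A\}$) and then show that every neighbourhood of the image point meets both $A$ and $B$, combined with the same disjointness reduction $B\subseteq A^c$, $A\subseteq B^c$. Your phrasing in terms of general open neighbourhoods is in fact cleaner than the paper's use of balls $\ball(f(t^0),r)$, which tacitly presumes a metric structure not guaranteed by the statement.
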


\begin{figure}[!htbp]
\begin{center}
\includegraphics[width=0.5\linewidth]{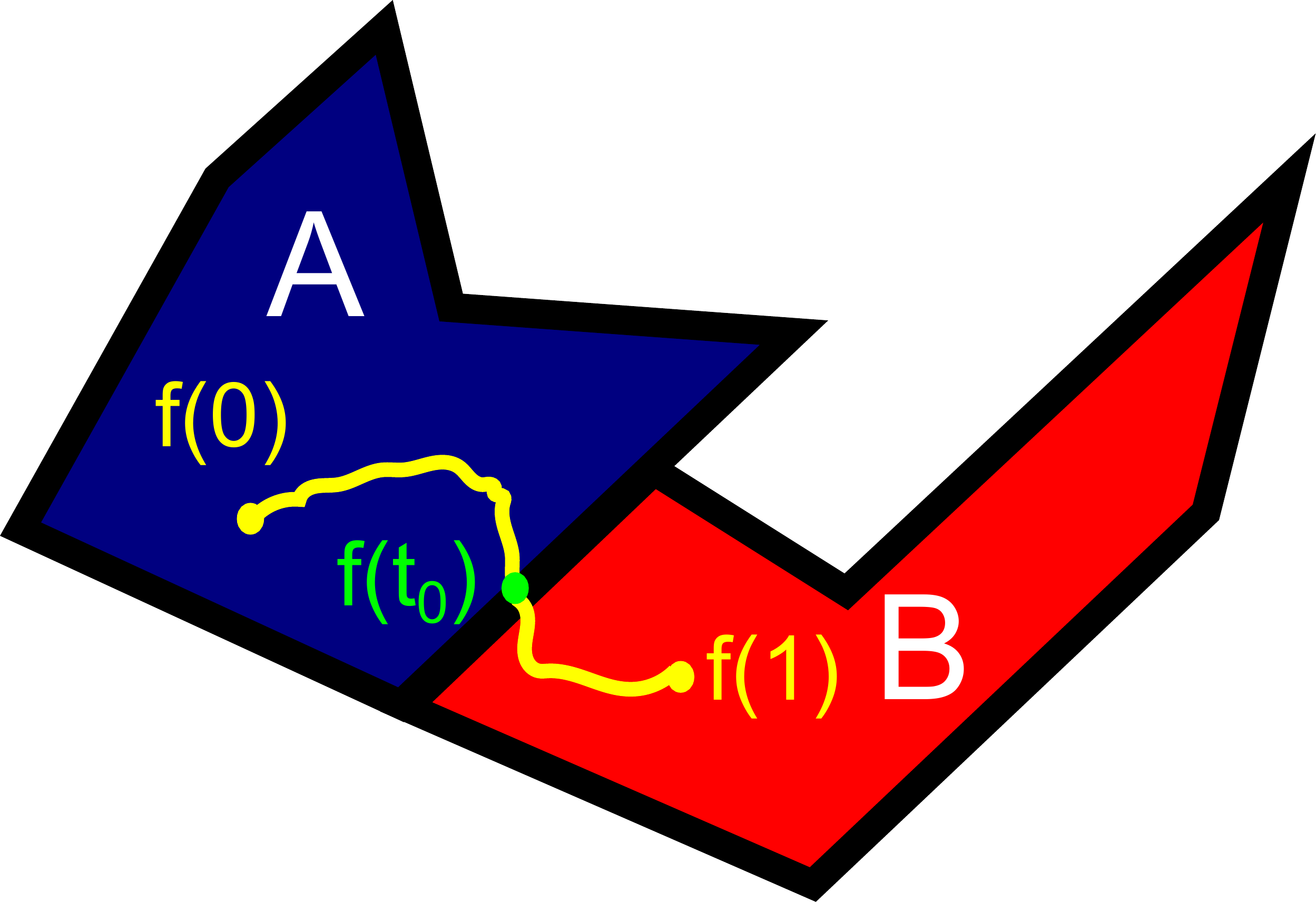}\hfill
\end{center}
\caption{Illustration of Lemma~\ref{lemma:two-subsets-frontier}}
\label{fig:topology-continuous-map-A-B}
\end{figure}

\begin{proof}
Consider $t^0$ defined below:
\begin{eqnarray}
t^0 &:=& \sup\{ t\in[0,1]:f(t)\in A\}\label{eq:definition-t0}\\
&=&\sup\{ t\in[0,1]:f(t)\notin B\} \nonumber
\end{eqnarray}
It exists as the supremum of a non-empty ($f(0)\in A$) upper-bounded subset of $\RR$. 

Take some $r>0$ and let $\ball(f(t^0),r)$ denote the open ball of radius $r$ centered in $f(t^0)$. Consider two options: 
\begin{itemize}
\item either $t^0=1$, and $\ball(f(t^0),r)\cap B \neq \emptyset$ as $f(1)\in B$;
\item or $t^0<1$, then, by continuity of $f$, for small enough $\eta>0$, we have $f(t^0+\eta)\in \ball(f(t^0),r)$ and $f(t^0+\eta)\in B$ as $t^0+\eta>t^0$  by definition of $t^0$.
\end{itemize}
In both cases, we have $\ball(f(t^0),r)\cap B \neq \emptyset$. Again, consider two options:
\begin{itemize}
\item either $t^0=0$, and $\ball(f(t^0),r)\cap A \neq \emptyset$ as $f(0)\in A$;
\item or $t^0>0$, then, by continuity of $f$, for small enough $\eta>0$, we have for all $t\in(t^0-\eta,t^0]$, $f(t)\in \ball(f(t^0),r)$ and there exists some $t^1\in(t^0-\eta,t^0]$ such that $f(t^1)\in A$ by definition of $t^0$ (otherwise, $t^0$ would not be the supremum defined in Equation~\eqref{eq:definition-t0}).
\end{itemize}
In both cases, we have $\ball(f(t^0),r)\cap A \neq \emptyset$.

In conclusion, for all $r>0$, we have:
\begin{eqnarray}
\ball(f(t^0),r)\cap A&\neq&\emptyset\label{eq:boundary-A-1}\\
\ball(f(t^0),r)\cap B&\neq&\emptyset\label{eq:boundary-B-1}
\end{eqnarray}
Let $C^c$ denote the complementary of a subset $C$ of $X$. Since $A\cap B=\emptyset$, we have $B\subset A^c$ and $A\subset B^c$, so that we also have:
\begin{eqnarray}
\ball(f(t^0),r)\cap B^c&\neq&\emptyset\label{eq:boundary-B-2}\\
\ball(f(t^0),r)\cap A^c&\neq&\emptyset\label{eq:boundary-A-2}
\end{eqnarray}
By definition of the boundary, as Equations~\eqref{eq:boundary-A-1} and~\eqref{eq:boundary-A-2} are satisfied for all $r>0$, we have $f(t^0)\in\partial A$ and as Equations~\eqref{eq:boundary-B-1} and~\eqref{eq:boundary-B-2} are satisfied for all $r>0$, we have $f(t^0)\in\partial B$. In conclusion, we have:
\begin{equation}
f(t^0)\in\partial A\cap \partial B
\end{equation} 
\end{proof}

\begin{lemma}
Let $A$ and $B$ be two subsets of a topological space $X$ and assume that $B$ is open, then we have:
\begin{equation}
\partial\left(A\setminus B\right) \cap B = \emptyset
\end{equation}
\label{lemma:frontier-A-minus-B-cap-B}
\end{lemma}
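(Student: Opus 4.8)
The plan is to reduce the statement to a single elementary observation about open sets and closures, using the characterization of the boundary as $\partial C \subset \cl{C}$ for any subset $C\subset X$. The key algebraic remark is that $A\setminus B = A\cap B^c$, and this set is disjoint from $B$, since
\begin{equation}
(A\setminus B)\cap B = A\cap B^c\cap B = \emptyset.
\end{equation}
So the strategy is: take an arbitrary point $x\in B$ and produce an open neighborhood of $x$ that misses $A\setminus B$ entirely; this will force $x$ to lie outside $\cl{A\setminus B}$, hence outside $\partial(A\setminus B)$.

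First I would fix $x\in B$. Since $B$ is assumed open, $B$ is itself an open neighborhood of $x$. By the disjointness relation above, $B$ is an open neighborhood of $x$ that does not meet $A\setminus B$, which is exactly the statement that $x$ is not a point of closure of $A\setminus B$, i.e.\ $x\notin\cl{A\setminus B}$. Using the inclusion $\partial(A\setminus B)\subset\cl{A\setminus B}$, I conclude $x\notin\partial(A\setminus B)$. Since $x\in B$ was arbitrary, no point of $B$ lies in $\partial(A\setminus B)$, which is precisely $\partial(A\setminus B)\cap B=\emptyset$.

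There is essentially no hard obstacle here: the result is a routine point-set topology fact, and the only thing to get right is invoking openness of $B$ at the correct moment (it is what guarantees that $B$ serves as a legitimate open neighborhood witnessing that points of $B$ are not limit points of $A\setminus B$). If one prefers to avoid the neighborhood language, the same conclusion can be packaged purely via closures: from $A\setminus B\subset B^c$ and the fact that $B^c$ is closed, one gets $\cl{A\setminus B}\subset B^c$, so $\cl{A\setminus B}\cap B=\emptyset$, and a fortiori $\partial(A\setminus B)\cap B=\emptyset$. I would likely present this closure-based version as the cleanest write-up, since it makes the role of the hypothesis ``$B$ open'' completely transparent through the complementary statement ``$B^c$ closed''.
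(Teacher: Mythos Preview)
Your proposal is correct, and the closure-based version you single out as ``the cleanest write-up'' is exactly the argument the paper gives: $\partial(A\setminus B)\subset\cl{A\setminus B}=\cl{A\cap B^c}\subset\cl{B^c}=B^c$ since $B$ is open, hence the intersection with $B$ is empty. Your neighborhood-based phrasing is an equivalent reformulation of the same idea.
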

\begin{proof}
By simple manipulations, we obtain:
\begin{equation}
\partial\left(A\setminus B\right)\subset\cl{A\setminus B} = \cl{A\cap B^c} \subset \cl{B^c}
\end{equation}
As $B$ is open, $B^c$ is closed, so that:
\begin{equation}
\partial\left(A\setminus B\right) \subset B^c 
\end{equation}
As a consequence, 
\begin{equation}
\left(\partial\left(A\setminus B\right) \cap B\right)  \subset \left(B\cap B^c\right) = \emptyset
\end{equation}
\end{proof}

\section{Proof of the South-West North-East completion}
\label{app:south-west-completion}

First of all, note that the following property is a direct consequence of the non-decreasing constraint. 
\begin{property}
For all $\path\in\phifree,~ t^0\in[0,1]$, we have:
\begin{equation}
\forall t\in[0,1], \path(t)\in \left(\path(t^0)-\RR_+^{*n}\right)\cup\left(\path(t^0)+\RR_+^n\right)
\end{equation}
\label{property:non-decreasing-path}
\end{property}

\begin{proof}[Proof of Lemma~\ref{lemma:south-west-north-east-completion}]
We will prove Lemma~\ref{lemma:south-west-north-east-completion} by contradiction. Take $i,j\in\robots$, $x^0\in\left(\chiobs_{i\succ j}\cap\chiobs_{j\succ i}\right)$ and assume there exists a feasible path $\path\in\phifree$ such that for some $t^0\in[0,1]$, $\path(t^0)=x^0$. Define $A$ and $B$ as follows:
\begin{eqnarray}
A & := & x^0-\RR_+\e_j+\RR_+\e_i\\
B & := & x^0-\RR_+\e_i+\RR_+\e_j
\end{eqnarray}
As $x^0\in\left(\chiobs_{i\succ j}\cap\chiobs_{j\succ i}\right)$, we have:
\begin{eqnarray}
\chiobs_{ij}\cap A &\neq& \emptyset\\
\chiobs_{ij}\cap B &\neq& \emptyset
\end{eqnarray}
Take $x^1\in\chiobs_{ij}\cap A$ and $x^2\in\chiobs_{ij}\cap B$. By construction of $A$ and $B$, the following inequalities hold:
\begin{eqnarray}
x_i^2\leq x_i^0\leq x_i^1\\
x_j^1\leq x_j^0\leq x_j^2
\end{eqnarray}
By convexity of $\chiobs_{ij}$ and as $\chiobs_{ij}$ is a cylinder,
\begin{equation}
\Sigma:=\left\{x\in\chi: \begin{matrix}
x_i = \alpha x_i^1 + (1-\alpha) x_i^2\\
x_j = \alpha x_j^1 + (1-\alpha) x_j^2
\end{matrix}~,\alpha\in[0,1] \right\}
\label{eq:definition-sigma}
\end{equation}
is a subset of $\chiobs_{ij}$. Define $K^+$ and $K^-$ as follows:
\begin{eqnarray}
K^+&:=&\Sigma+\RR_+\e_i+\RR_+\e_j\\
K^-&:=&\Sigma-\RR_+^*\e_i-\RR_+^*\e_j
\end{eqnarray}
Note that $K^+\cap K^-=\emptyset$ and $\partial K^+ \cap \partial K^- = \Sigma$. By construction, we have: 
\begin{eqnarray}
\left(\path(t^0)+\RR_+^n\right)  &\subset& (K^+\cup K^-)\\
\left(\path(t^0)-\RR_+^{*n}\right)&\subset& (K^+\cup K^-)
\end{eqnarray}
Hence, by Property~\ref{property:non-decreasing-path}, we have for all $t\in[0,1]$, 
\begin{equation}
\path(t)\in\left( K^+ \cup K^-   \right)
\end{equation}

\begin{minipage}{\linewidth}
\begin{center}
\includegraphics[width=0.5\linewidth]{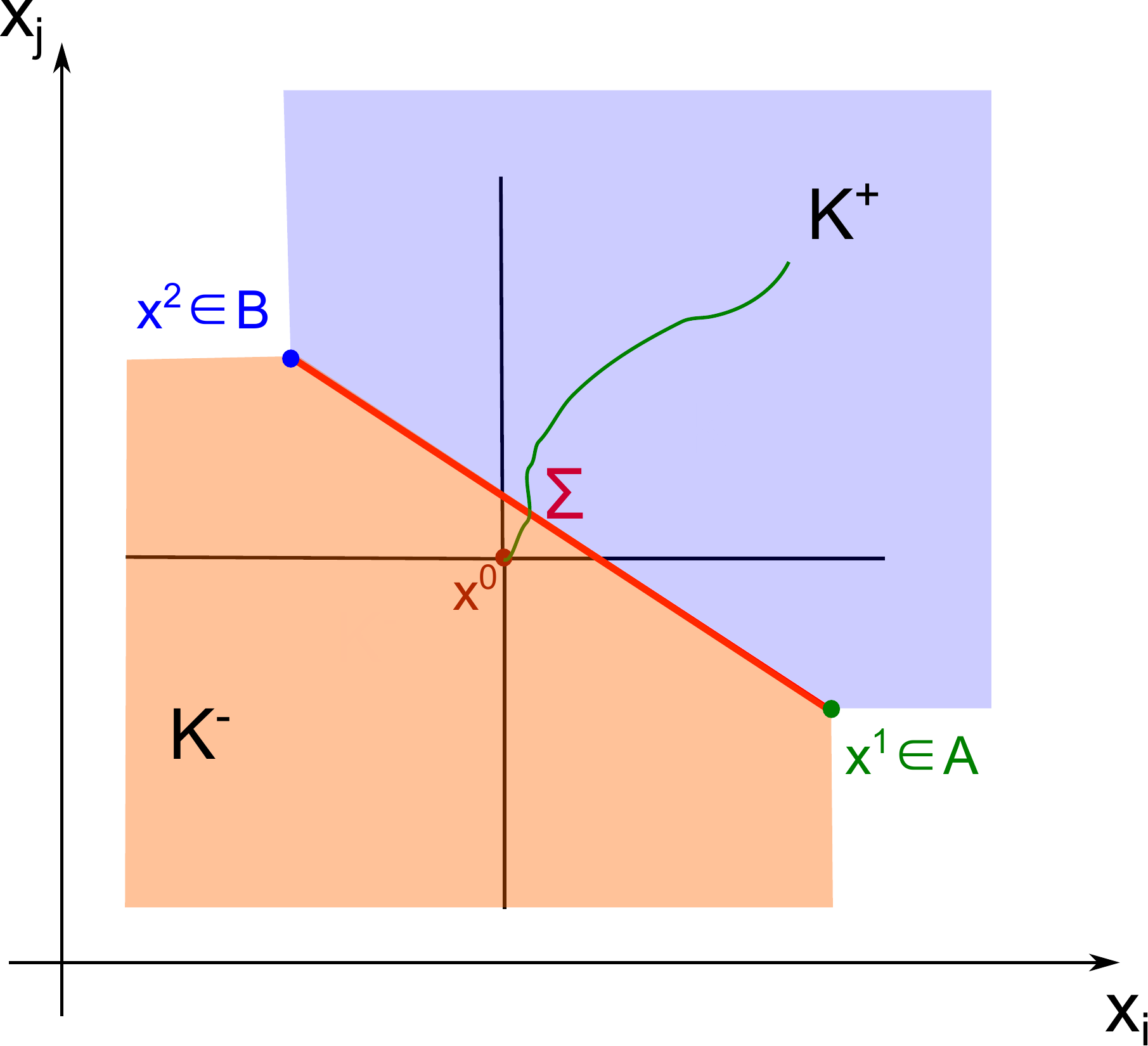}\hfill
\includegraphics[width=0.5\linewidth]{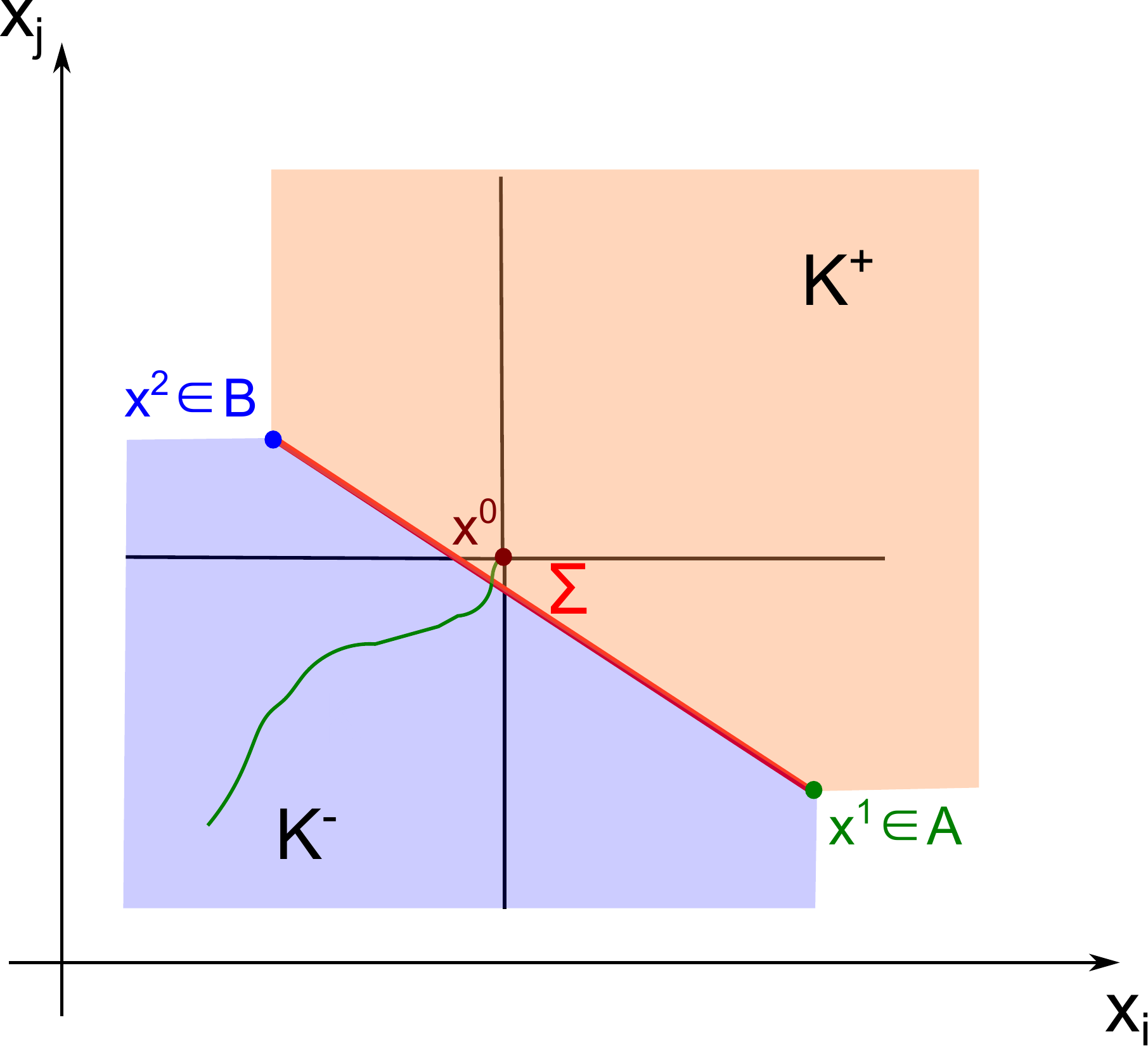}\hfill
\end{center}
\captionof{figure}{The two cases that appear in the proof of Lemma~\ref{lemma:south-west-north-east-completion}}
\label{fig:proof-lemma}
\end{minipage}
\vspace{0.2cm}

By construction, we have two options as depicted in Figure~\ref{fig:proof-lemma}:
\begin{itemize}
\item either $x^0\in K^+$ (right drawing of Figure~\ref{fig:proof-lemma}). This case can be interpreted as the case when $x^0$ is not "reachable". As $(\path(0)-\RR_+^n)\subset\chifree$ and $\path(0)\in(\path(t^0)-\RR_+^n)$, we have:
\begin{eqnarray}
\path(0)&\in&K^-\\
\path(t^0)&\in&K^+
\end{eqnarray}
Hence, as $\path$ is continuous, by Lemma~\ref{lemma:two-subsets-frontier} (see Appendix~\ref{app:topology-properties}), there exists some $t^\Sigma$ such that $\path(t^\Sigma)\in\left(\partial K^-\cap \partial K^+\right) =\Sigma\subset\chiobs_{ij}$ and $\path$ is not collision-free.
\item or $x^0\in K^-$ (left drawing of Figure~\ref{fig:proof-lemma}). This case can be interpreted as the case when $x^0$ will inevitably lead to a deadlock between robots $i$ and $j$. As $(\path(1)+\RR_+^n)\subset\chifree$ and $\path(1)\in(\path(t^0)+\RR_+^n)$, we have:
\begin{eqnarray}
\path(1)&\in& K^+\\
\path(t^0)&\in&K^-
\end{eqnarray}
Hence, as $\path$ is continuous, by Lemma~\ref{lemma:two-subsets-frontier} (see Appendix~\ref{app:topology-properties}), there exists some $t^\Sigma$ such that $\path(t^\Sigma)\in \left(\partial K^+ \cap \partial K^-\right) = \Sigma\subset\chiobs_{ij}$ and $\path$ is not collision-free.
\end{itemize}
In conclusion, there is no feasible path going through any configuration $x^0 \in \chiobs_{i\succ j}\cap\chiobs_{j\succ i}$.
\end{proof}

\section{Proof of the existence of paths with maximal margin}
\label{app:existence-paths-with-maximal-margin}

\begin{lemma}[Existence of paths with maximal margin]
\label{lemma:existence-paths-with-maximal-margin}
Given a priority graph $G\in\graphs$, consider $\rho_G$ defined as follows:
\begin{equation}
\rho_G:=\begin{cases}
~~\sup\left\{r\geq 0:\paths(\chifree_G\ominus[-r,r]^n)\neq\emptyset\right\}&\text{ if } \paths(\chifree_G)\neq\emptyset\\
-\inf\left\{r> 0:\paths(\chifree_G+[-r,r]^n)\neq\emptyset\right\} & \text{ else.}
\end{cases}
\end{equation}
$\rho_G\in\RR\cup\{+\infty\}$ and if $\rho_G\in\RR$, it is attained, so that we can use the maximal element notation:
\begin{equation}
\rho_G:=\begin{cases}
~~\max\left\{r\geq 0:\paths(\chifree_G\ominus[-r,r]^n)\neq\emptyset\right\}&\text{ if } \paths(\chifree_G)\neq\emptyset\\
-\min\left\{r> 0:\paths(\chifree_G+[-r,r]^n)\neq\emptyset\right\} & \text{ else.}
\end{cases}
\end{equation}
\end{lemma}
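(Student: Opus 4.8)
The plan is to handle the two cases of the definition together, since they are dual, and to reduce everything to a compactness argument. Write $C_r:=\chifree_G\ominus[-r,r]^n$ for the erosions of the first case and $C_r:=\chifree_G+[-r,r]^n$ for the dilations of the second. In both cases $C_r$ varies monotonically with $r$ (erosion shrinks, dilation grows), and since $C\subseteq C'$ implies $\paths(C)\subseteq\paths(C')$, the admissible radii form an interval whose endpoint is exactly $\rho_G$. So the real content is that this endpoint belongs to the admissible set, i.e.\ attainment. Before that I would check $\rho_G$ is well defined: in the first case $r=0$ gives $\paths(\chifree_G)\neq\emptyset$, so the supremum is $\ge 0$ and possibly $+\infty$ (no attainment is then claimed); in the second case, using that $\{x:x\le\xobsmin\}\subset\chifree_G$ and $\{x:x\ge\xobsmax\}\subset\chifree_G$ (a direct consequence of the boundedness assumption on $\chiobs$, since no point with all coordinates $\le\xobsmin$ or all $\ge\xobsmax$ can lie in any $\chiobs_{i\succ j}$), a bounded monotone path from $\xobsmin$ to $\xobsmax$ lies in $C_r$ for all large $r$, so $\{r>0:\paths(C_r)\neq\emptyset\}$ is nonempty and bounded below by $0$; hence $\rho_G\in(-\infty,0]$ is finite.

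Next I would record two structural facts about the sets $C_r$. First, each $C_r$ is \emph{closed}: $\chifree_G=\chi\setminus\chiobs_G$ is closed (the $\chiobs_{i\succ j}$ are open), the erosion has open complement $\chiobs_G+[-r,r]^n$, and the dilation is a Minkowski sum of a closed set with a compact box. Second, each $C_r$ is \emph{closed under the $\min$ and $\max$ operators}. For erosions this follows from Property~\ref{property:min-max} together with the identity $\max\{x,y\}+z=\max\{x+z,y+z\}$; for dilations, writing $x=p+s$, $y=q+w$ with $p,q\in\chifree_G$ and $s,w\in[-r,r]^n$, one takes $m'=\max\{p,q\}\in\chifree_G$ (using that $\chifree_G=\bigcap\chifree_{i\succ j}$ is $\max$-closed) and checks $\max\{x,y\}-m'\in[-r,r]^n$ coordinatewise. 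These two facts are what make the limiting argument work.

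I would then fix a common compact box $[a,b]$ with $a\le b$ such that $a-\RR_+^n\subset C_r$ and $b+\RR_+^n\subset C_r$ for every admissible $r$; concretely $a=\xobsmin-\rho_G\mathbf 1$, $b=\xobsmax+\rho_G\mathbf 1$ in the eroded case and $a=\xobsmin$, $b=\xobsmax$ in the dilated case, where $\mathbf 1:=\sum_{i\in\robots}\e_i$. Given a sequence $r_k\to\rho_G$ of admissible radii with paths $\path^k\in\paths(C_{r_k})$, the first step is a normalization: replace $\path^k$ by its clamp $\hat\path^k:=\min\{b,\max\{a,\path^k\}\}$. Using $\min,\max$-closure of $C_{r_k}$ and the lattice identity $\max\{\min\{w,a\},\min\{w,\path^k(0)\}\}=\min\{w,\max\{a,\path^k(0)\}\}$, one verifies that $\hat\path^k$ still lies in $\paths(C_{r_k})$ (the delicate points being that the open-ended start/end conditions survive), is non-decreasing and continuous, and has image inside $[a,b]$. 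Since a non-decreasing path in $[a,b]$ has total coordinate increment $\sum_i(\hat\path^k_i(1)-\hat\path^k_i(0))\le\sum_i(b_i-a_i)=:L$, I reparametrize each $\hat\path^k$ by normalized arc length, obtaining $\Psi^k:[0,1]\to[a,b]$ with $\|\Psi^k(u)-\Psi^k(u')\|_1=L_k|u-u'|\le L|u-u'|$ and $\im{\Psi^k}=\im{\hat\path^k}\subset C_{r_k}$.

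Finally I would apply Arzel\`a--Ascoli: the family $\{\Psi^k\}$ is uniformly bounded and uniformly Lipschitz, hence has a subsequence converging uniformly to a continuous non-decreasing $\Psi^\star:[0,1]\to[a,b]$. It remains to check $\Psi^\star\in\paths(C_{\rho_G})$. The image condition $\im{\Psi^\star}\subset C_{\rho_G}$ uses closedness of $C_{\rho_G}$: for the eroded case, given $z\in[-\rho_G,\rho_G]^n$ one takes $z_k=\tfrac{r_k}{\rho_G}z\in[-r_k,r_k]^n$ so that $\Psi^k(u)+z_k\in\chifree_G\to\Psi^\star(u)+z$, forcing $\Psi^\star(u)+z\in\chifree_G$; the dilated case is symmetric, extracting a convergent subsequence of the decompositions. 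The two endpoint conditions are obtained by the same device applied to $w_k:=\min\{w,\Psi^k(0)\}\to w$ for $w\le\Psi^\star(0)$ (and dually at the end). I expect the main obstacle to be exactly this last step: a pointwise limit of monotone paths may jump and fail to be continuous, which is why the arc-length reparametrization (to force equicontinuity) and the careful propagation of the \emph{open}-ended boundary conditions $(\path(0)-\RR_+^n)\subset C$, $(\path(1)+\RR_+^n)\subset C$ through the limit via the $\min/\max$-closure are the crux; the case $\rho_G=0$ is trivial since attainment then reduces to the hypothesis $\paths(\chifree_G)\neq\emptyset$.
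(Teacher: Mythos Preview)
Your proposal is correct but takes a genuinely different route from the paper's proof. The paper works in the topology of \emph{pointwise} convergence: it first restricts to paths with values in the fixed box $[\xobsmin,\xobsmax]$ (using the same $\min/\max$ clamping you use), observes that this set $\phibounded(\chi)$ is compact by Tychonoff's theorem, and then expresses $\paths(\Cfree_G)\cap\phibounded(\chi)$ as a nested intersection $\bigcap_{r>0}\bigl(\paths(\Cfree_G+[-r,r]^n)\cap\phibounded(\chi)\bigr)$ of nonempty closed subsets of a compact space, concluding via Cantor's intersection theorem. No equicontinuity or reparametrization is needed, because pointwise limits of monotone paths in a box remain monotone paths (jumps are not an issue in that topology), and the openness of $\Cobs_G\ominus[-r,r]^n$ makes each level set closed. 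Your approach instead upgrades to \emph{uniform} convergence via an arc-length reparametrization so that Arzel\`a--Ascoli applies; this is more elementary (no Tychonoff, no appeal to the axiom of choice) and yields a continuous limit path directly, but at the cost of the extra reparametrization step and the more delicate verification that the open-ended endpoint conditions and the $\min/\max$-closure survive the limit. Both arguments hinge on the same two structural ingredients you identified---closedness of $C_r$ and stability under $\min/\max$---and on bounding the paths into a common box; the difference is purely in the compactness mechanism.
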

\begin{proof}[Proof of $\rho_G\in\RR\cup\{+\infty\}$]
Consider a path $\path\in\Phi(\chi)$ whose image is the segment joining $\xobsmin$ to $\xobsmax$. For big enough enough $r\in\RR_+$, this segment is collision-free with regards to $\chiobs_G\ominus[-r,r]^n$. Moreover, $(\xobsmin-\RR_+^n)\subset\chifree_G\subset\chifree_G+[-r,r]^n$ and $(\xobsmax+\RR_+^n)\subset\chifree_G\subset\chifree_G+[-r,r]^n$, so that $\path\in\Phi(\chifree_G+[-r,r]^n)$. Hence, $\rho_G \neq -\infty$ and we obtain $\rho_G\in\RR\cup\{+\infty\}$.
\end{proof}
\begin{proof}[Proof that it is attained] 
Take a priority graph and assume that $\rho_G\in\RR$. Define $\Cfree_G:=\chifree_G\ominus(-\rho_G,\rho_G)^n$ if $\rho_G\geq 0$ (resp. $\Cfree_G:=\chifree_G + (-\vert\rho_G\vert,\vert\rho_G\vert)^n$ if $\rho_G\leq 0$) and $\Cobs_G:=\chi\setminus\Cfree_G$. We have to prove that: 
\begin{equation}
\paths(\Cfree_G)\neq\emptyset
\end{equation}

We have the following identity:
\begin{equation}
\Cfree_G=\bigcap_{r>0}\left( \Cfree_G+[-r,r]^n \right)
\end{equation}
Hence, $\paths(\Cfree_G)$ can be expressed as the limit of a nested sequence of sets of paths as follows:
\begin{equation}
\paths(\Cfree_G)=\bigcap_{r>0}\paths\left( \Cfree_G+[-r,r]^n \right)
\label{eq:nested-sets-paths-sequence}
\end{equation}
By definition of $\rho_G$, we have $\paths\left( \Cfree_G+[-r,r]^n \right)\neq\emptyset$ for all $r>0$, so that $(\paths\left( \Cfree_G+[-r,r]^n \right))_{r>0}$ is a nested sequence of non-empty sets of paths whose limit is $\paths(\Cfree_G)$. We have to prove that this limit is not the empty set. 

Consider the topology of pointwise convergence on $\paths(\chi)$. Define $\phibounded(\chi) \subset \paths(\chi)$ as follows:
\begin{equation}
\phibounded(\chi) := \left\{ \path\in\Phi(\chi): \forall t\in[0,1], \xobsmin \leq \path(t) \leq \xobsmax \right\}
\end{equation}
As for all $\path\in\phibounded(\chi)$ and $i\in\robots$, $\im{\path_i}\subset [\xobsmin_i,\xobsmax_i]$, by Tychonoff's theorem, $\phibounded(\chi)$ with the topology of pointwise convergence is compact in $\paths(\chi)$.  Intersecting with $\phibounded(\chi)$ in~\eqref{eq:nested-sets-paths-sequence}, we obtain:
\begin{equation}
\paths(\Cfree_G)\cap\phibounded(\chi) =\bigcap_{r>0} \left(\paths\left( \Cfree_G+[-r,r]^n \right) \cap \phibounded(\chi) \right)
\end{equation}

Now, we are going to use Cantor's intersection theorem to prove that $\paths(\Cfree_G)\cap\phibounded(\chi) \neq \emptyset$. To this purpose, we are going to prove that for all $r>0$, $\paths\left( \Cfree_G+[-r,r]^n \right) \cap \phibounded(\chi)$ is a non-empty compact in $\paths(\chi)$. As $\phibounded(\chi)$ is compact, it is sufficient to prove that $\paths\left( \Cfree_G+[-r,r]^n \right) \cap \phibounded(\chi)$ is (\ref{item:non-empty}) non-empty and (\ref{item:closed}) closed in $\paths(\chi)$. 
\begin{enumerate}[(a)]
\item $\paths\left( \Cfree_G+[-r,r]^n \right)$ is non-empty. Moreover, taking a path $\path\in\Phi\left( \Cfree_G+[-r,r]^n \right)$, and building the path $\tilde{\path}:=\min(\max(\xobsmin,\path),\xobsmax)$ yields a path in $\phibounded(\chi)\cap\Phi\left( \Cfree_G+[-r,r]^n \right)$. As a result, $\paths\left( \Cfree_G+[-r,r]^n \right) \cap \phibounded(\chi)$ is not empty.\label{item:non-empty}
\item First, we prove that the complementary of $\paths\left( \Cfree_G+[-r,r]^n \right)$ in $\paths(\chi)$ is open. Take a path $\path\in\paths(\chi)$ and assume $\path\notin \paths\left( \Cfree_G+[-r,r]^n \right)$. As $\path\in\paths(\chi)$, $\path$ is necessarily non-decreasing, so that $\path\notin \paths\left( \Cfree_G+[-r,r]^n \right)$ means that $\path$ (or $\path(0)-\RR_+^n$, or $\path(1)+\RR_+^n$) is not collision-free with respect to $\Cobs_G\ominus[-r,r]^n$, which is an open set. Hence, any path $\psi\in \paths(\chi)$ close enough to $\path$ (in the topology of pointwise convergence) also interects $\Cobs_G\ominus[-r,r]^n$, so that $\psi\notin \paths\left( \Cfree_G+[-r,r]^n \right)$. It results that $\paths\left( \Cfree_G+[-r,r]^n \right)$ is closed in $\paths(\chi)$ (as its complementary is open). Moreover, $\phibounded(\chi)$ is also closed as it is compact. In conclusion, $\paths\left( \Cfree_G+[-r,r]^n \right) \cap \phibounded(\chi)$ is closed in $\paths(\chi)$ as the intersection of closed sets in $\paths(\chi)$. \label{item:closed}
\end{enumerate}
Applying Cantor's intersection theorem, we obtain that $\paths(\Cfree_G)\cap\phibounded(\chi)$ is a non-empty compact of $\paths(\chi)$, which implies that $\paths(\Cfree_G)\neq\emptyset$.
\end{proof}

\section{Proof of the characterization of feasible priority graphs}
\label{app:thm-feasible}

Theorem~\ref{thm:feasible} is a direct consequence of the following lemmas. We let $\Cobs_G:=\chiobs_G+[-r,r]^n$ (or $\Cobs_G:=\chiobs_G\ominus[-r,r]^n$) with $\Cobs_{i\succ j}:=\chiobs_{i\succ j}+[-r,r]^n$ (resp. $\Cobs_{i\succ j}:=\chiobs_{i\succ j}\ominus[-r,r]^n$) and we let $\Cfree_G:=\chi\setminus\chiobs_G$. Note that $\Cobs_{G}$ satisfy the same invariance properties as $\chiobs_G$ (Properties~\ref{property:geometric-invariance},~\ref{property:min-max},~\ref{property-union-fixed-priority-cylinder-1}~and~\ref{property-union-fixed-priority-cylinder-1}). Lemma~\ref{lemma:necessary-condition-phi-r-feasible}  provides a necessary condition for the existence of a feasible path taking values in $\Cfree_G$ and Lemma~\ref{lemma:sufficient-condition-phi-r-feasible} provides a sufficient condition for the existence of such path. Note that the lemmas apply in particular for $\Cobs_G\equiv \chiobs_G$ ($r=0$). 

\begin{lemma}
\label{lemma:necessary-condition-phi-r-feasible}
Given a priority graph $G\in\graphs$, a necessary condition for 
 $\paths(\Cfree_G)\neq\emptyset$ is:
\begin{eqnarray}
\forall \C\in\cycles(G), \bigcap_{(i,j)\in E(\C)}\Cobs_{i\succ j}=\emptyset
\end{eqnarray}
\end{lemma}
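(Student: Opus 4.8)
The plan is to prove the contrapositive, following verbatim the structure of the necessary-condition argument already given for Theorem~\ref{thm:feasible}, but carried out with $\Cobs_{i\succ j}$ in place of $\chiobs_{i\succ j}$. This is legitimate because, as noted just before the lemma, $\Cobs_G$ (whether an erosion or a dilation of $\chiobs_G$ by $[-r,r]^n$) enjoys the same geometric-invariance and cylindrical structure as $\chiobs_G$; in particular Property~\ref{property:geometric-invariance} holds for each $\Cobs_{i\succ j}$. So I would assume that some elementary cycle $\C\in\cycles(G)$ satisfies $\bigcap_{(i,j)\in E(\C)}\Cobs_{i\succ j}\neq\emptyset$, fix a point $x^1$ in this intersection, and aim to deduce $\paths(\Cfree_G)=\emptyset$.

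First I would reconstruct the ``south-west'' orthant. For each $j\in V(\C)$ set $K_j:=\{x: x_j=x^1_j,\ x_i\le x^1_i \text{ for } i\ne j\}$ and $\orthant:=\{x\in\chi:\forall j\in V(\C),\ x_j\le x^1_j\}$, whose boundary (in the coordinates appearing in $\C$) is $\bigcup_{j\in V(\C)}K_j$. Exactly as in the main proof, Property~\ref{property:geometric-invariance} for $\Cobs_{i\succ j}$ turns $x^1\in\Cobs_{i\succ j}$ into $K_j\subset\Cobs_{i\succ j}$. Since $\C$ is a cycle, every vertex $j\in V(\C)$ is the head of some edge $(i,j)\in E(\C)$, hence $\partial\orthant=\bigcup_{j\in V(\C)}K_j\subset\Cobs_G$.

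Next, suppose for contradiction that there is a path $\path\in\paths(\Cfree_G)$. I would show that the two ``no coordination beyond the endpoints'' conditions place the endpoints on opposite sides of $\orthant$: $(\path(0)-\RR_+^n)\subset\Cfree_G$ forces $\path(0)\in\orthant$, while $(\path(1)+\RR_+^n)\subset\Cfree_G$ forces $\path(1)\notin\orthant$. Concretely, if $\path(0)\notin\orthant$ then $\path(0)_{j_0}>x^1_{j_0}$ for some $j_0$, and lowering the $i_0$-coordinate (for an in-edge $(i_0,j_0)\in E(\C)$) together with dropping coordinate $j_0$ down to $x^1_{j_0}$ yields, through the invariance of $\Cobs_{i_0\succ j_0}$ under $-\RR_+\e_{i_0}+\RR_+\e_{j_0}$, a point $\le\path(0)$ lying in $\Cobs_G$, contradicting feasibility at the start; symmetrically, raising a single coordinate of $\path(1)$ up to $x^1_{j}$ produces a point $\ge\path(1)$ in $\Cobs_G$. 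Finally, since $\path$ is continuous and $\orthant,\orthant^c$ are disjoint, Lemma~\ref{lemma:two-subsets-frontier} gives some $t$ with $\path(t)\in\partial\orthant\subset\Cobs_G$, so $\path$ meets $\Cobs_G$, contradicting $\path\in\paths(\Cfree_G)$.

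The main obstacle is the verification of the two endpoint-membership claims $\path(0)\in\orthant$ and $\path(1)\notin\orthant$: these are the only places where one must genuinely use that $\Cobs_{i\succ j}$ is a completed cylinder invariant under $-\RR_+\e_i+\RR_+\e_j$, and the delicate point is to perturb exactly the right single coordinate so that the modified point stays $\le\path(0)$ (respectively $\ge\path(1)$) while landing back inside $\Cobs_{i\succ j}$. Everything else is a direct transcription of the $r=0$ argument, so I expect no additional difficulty there.
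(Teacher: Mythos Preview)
Your proposal is correct and follows exactly the approach the paper takes: the paper explicitly omits the proof of this lemma, noting that it is identical to the proof of the necessary condition in Theorem~\ref{thm:feasible} with $\Cobs_G$ replacing $\chiobs_G$. Your extra care in justifying the endpoint claims $\path(0)\in\orthant$ and $\path(1)\notin\orthant$ via the invariance of $\Cobs_{i\succ j}$ even fills in a step the paper left implicit.
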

The proof of the above lemma is omitted as it is exactly the same as the proof of the necessary condition of Theorem~\ref{thm:feasible} where $\Cobs_G$ replaces $\chiobs_G$.

In order to provide a constructive proof of the existence of feasible paths taking values in $\Cfree_G$ under certain conditions, we introduce the concept of local priority graph (an extension of the concept introduced in the corpus of the manuscript with $\Cobs_G$ instead of $\chiobs_G$). Given a radius $r>0$ and a configuration $x\in\chi$, the local priority graph with regards to $\Cobs_G$ at configuration $x$ with radius $r \geq 0$ is the sub-graph $G_{|\Cobs_G,x,r}$ of $G$ with the same vertices and whose edge set is defined below:
\begin{equation}
E(G_{|\Cobs_G,x,r}):=\left\{(i,j)\in E(G): x \in \left(\Cobs_{i \succ j} + [-r,r]^n \right) \right\}
\end{equation}
Note that $G_{|x,r}\equiv G_{|\chiobs_G,x,r}$ which is simply referred as the priority graph at configuration $x$ with radius $r\geq 0$ (without mentioning $\chiobs_G$). 

\begin{lemma}[Sufficient condition for locally acyclic priority graph] Consider a priority graph $G\in\graphs$ satisfying for all elementary cycles $\C$ in $G$:
\begin{equation}
\bigcap_{(i,j)\in E(\C)}\left(\Cobs_{i\succ j}+(-\epsilon,\epsilon)^n\right)=\emptyset
\label{eq:condition-acyclic-local-priority-graph}
\end{equation}
for some $\epsilon>0$, then $G_{|\Cobs_G,x,\epsilon}$ is acyclic at all configurations $x\in\chi$.
\label{lemma:sufficient-condition-locally-acyclic-priority-graph}
\end{lemma}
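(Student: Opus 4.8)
The plan is to mirror the short contradiction argument already used for the corpus version (Lemma~\ref{lemma:sufficient-condition-locally-acyclic-priority-graph-corpus}); the only genuinely new point is reconciling the \emph{open} structuring element $(-\epsilon,\epsilon)^n$ appearing in the hypothesis~\eqref{eq:condition-acyclic-local-priority-graph} with the \emph{closed} one $[-\epsilon,\epsilon]^n$ appearing in the definition of the local priority graph $G_{|\Cobs_G,x,\epsilon}$. First I would unfold that definition: by construction,
\begin{equation}
E(G_{|\Cobs_G,x,\epsilon})=\left\{(i,j)\in E(G): x\in\Cobs_{i\succ j}+[-\epsilon,\epsilon]^n\right\}.
\end{equation}
Suppose, for contradiction, that at some $x\in\chi$ the graph $G_{|\Cobs_G,x,\epsilon}$ contains a cycle. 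Then it contains an elementary cycle $\C$, and since $G_{|\Cobs_G,x,\epsilon}$ is a subgraph of $G$ on the same vertex set, $\C$ is also an elementary cycle of $G$. For every $(i,j)\in E(\C)$ we have $x\in\Cobs_{i\succ j}+[-\epsilon,\epsilon]^n$, whence
\begin{equation}
x\in\bigcap_{(i,j)\in E(\C)}\left(\Cobs_{i\succ j}+[-\epsilon,\epsilon]^n\right).
\end{equation}

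The hard part, and really the only non-routine step, is that this intersection uses the closed cube, while~\eqref{eq:condition-acyclic-local-priority-graph} only asserts emptiness of the \emph{open}-cube intersection, which is a priori strictly smaller. I would resolve this by exploiting openness of each $\Cobs_{i\succ j}$. Indeed $\chiobs_{i\succ j}=\chiobs_{ij}-\RR_+\e_i+\RR_+\e_j$ is a Minkowski sum of the open set $\chiobs_{ij}$, hence open; and both operations producing $\Cobs_{i\succ j}$ from it preserve openness (dilation $+[-r,r]^n$ because a Minkowski sum with an open set is open; erosion $\ominus[-r,r]^n$ because the erosion of an open set by a compact set is open, via a uniform-margin/tube-lemma argument). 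For any open set $A$ one then has the identity $A+[-\epsilon,\epsilon]^n=A+(-\epsilon,\epsilon)^n$: the inclusion $\supseteq$ is trivial, and for $\subseteq$ one writes a point $a+b$ with $a\in A$ and $\Vert b\Vert_\infty\le\epsilon$ as $(a+\delta b)+(1-\delta)b$, where for small enough $\delta>0$ one has $a+\delta b\in A$ by openness and $\Vert(1-\delta)b\Vert_\infty<\epsilon$.

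Applying this identity term by term turns the displayed inclusion into $x\in\bigcap_{(i,j)\in E(\C)}\left(\Cobs_{i\succ j}+(-\epsilon,\epsilon)^n\right)$, which contradicts~\eqref{eq:condition-acyclic-local-priority-graph} for the elementary cycle $\C$. Consequently $G_{|\Cobs_G,x,\epsilon}$ admits no cycle at any $x\in\chi$, i.e.\ it is acyclic everywhere. In the write-up I would isolate the open/closed dilation identity as a one-line preliminary observation (or a tiny auxiliary lemma), so that the core of the argument stays as crisp as in the corpus case and the contradiction with the cycle hypothesis is immediate.
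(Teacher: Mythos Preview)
Your approach is essentially the same as the paper's: the paper in fact omits the proof entirely, stating that it is ``exactly the same proof as Lemma~\ref{lemma:sufficient-condition-locally-acyclic-priority-graph-corpus} where $\Cobs_G$ replaces $\chiobs_G$,'' i.e.\ the short contradiction argument you reproduce. You actually go further than the paper by noticing and resolving the open/closed cube mismatch between the hypothesis~\eqref{eq:condition-acyclic-local-priority-graph} and the definition of $E(G_{|\Cobs_G,x,\epsilon})$; the paper glosses over this (the corpus version uses closed cubes in both places, so the issue does not arise there), and your observation that $A+[-\epsilon,\epsilon]^n=A+(-\epsilon,\epsilon)^n$ for open $A$, together with the verification that each $\Cobs_{i\succ j}$ is open, is a correct and clean way to close that gap.
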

The proof of the above lemma is omitted as it is exactly the same proof as Lemma~\ref{lemma:sufficient-condition-locally-acyclic-priority-graph-corpus} where $\Cobs_G$ replaces $\chiobs_G$.

It is of high interest to know that the local priority graph with radius $\epsilon>0$ with regards to $\Cobs_G$ is acyclic at all configurations $x\in\chi$. Indeed, when this condition is satisfied, whatever the current configuration $x\in\Cfree_G$ of the system, it is always possible to find a robot $i\in\robots$ which can move forward the distance $\epsilon>0$ without colliding, which enables to construct a feasible path in $\Cfree_G$ by iterations.

\begin{lemma}
\label{lemma:sufficient-condition-phi-r-feasible}
Given a priority graph $G\in\graphs$, a sufficient condition for 
 $\paths(\Cfree_G)\neq\emptyset$  is:
\begin{eqnarray}
\exists\epsilon>0: \forall \C\in\cycles(G), \bigcap_{(i,j)\in E(\C)}(\Cobs_{i\succ j}+(-\epsilon,\epsilon)^n)=\emptyset
\label{eq:condition-acyclic-local-priority-graph-with-epsilon}
\end{eqnarray}
\end{lemma}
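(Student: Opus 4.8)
The plan is to give a constructive proof in the spirit of the sufficient-condition argument already sketched in the corpus (the flow-based construction using the local priority graph), but adapted to the eroded/dilated obstacle region $\Cobs_G$ and to the strict neighborhood $(-\epsilon,\epsilon)^n$. By Lemma~\ref{lemma:sufficient-condition-locally-acyclic-priority-graph}, the hypothesis~\eqref{eq:condition-acyclic-local-priority-graph-with-epsilon} guarantees that the local priority graph $G_{|\Cobs_G,x,\epsilon}$ is acyclic at every configuration $x\in\chi$. This local acyclicity is the engine of the whole argument: at any reachable configuration there is always a maximal (source) vertex in the local graph, i.e. a robot that is not currently ``blocked'' within distance $\epsilon$ by any higher-priority robot, and such a robot can safely advance a step of length $\epsilon$ while remaining in $\Cfree_G$.

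First I would define a finite-time flow $\phi(t,x)$ componentwise exactly as in the corpus proof: each robot $j$ stays put if it has an incoming edge in $G_{|\Cobs_G,x,\epsilon}$, and otherwise moves forward at speed $\epsilon$ toward a goal configuration $x^{\mathrm{goal}}$ with $x^{\mathrm{goal}}_i := \xobsmax_i+\epsilon$ (chosen so that $(x^{\mathrm{goal}}+\RR_+^n)\subset\Cfree_G$). Then I would verify that this flow keeps $\Cfree_G$ positively invariant. The verification splits, for each edge $(i,j)\in E(G)$, into the two cases $(i,j)\in E(G_{|\Cobs_G,x,\epsilon})$ and $(i,j)\notin E(G_{|\Cobs_G,x,\epsilon})$, using Property~\ref{property:geometric-invariance} applied to $\Cobs_{i\succ j}$ (recall that $\Cobs_G$ inherits the geometric invariances of $\chiobs_G$, as noted just before Lemma~\ref{lemma:necessary-condition-phi-r-feasible}). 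In the blocked case robot $j$ does not move while $i$ only advances, so geometric invariance preserves membership in $\Cfree_{i\succ j}$; in the unblocked case the defining condition $x\notin \Cobs_{i\succ j}+[-\epsilon,\epsilon]^n$ yields $x+\epsilon\e_j\in\Cfree_{i\succ j}$, and again geometric invariance closes the argument.

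Next I would build the feasible path iteratively: set $\path(0):=\xobsmin$ and $\path(p+t):=\phi(t,\path(p))$ for $p\in\NN$, $t\in[0,1]$. Induction gives $\im{\path}\subset\Cfree_G$ and the non-decreasing property follows from $\phi_j(t,x)\geq x_j$. The key termination step uses local acyclicity once more: whenever $\path(p)\neq x^{\mathrm{goal}}$, there is a maximal vertex $j$ with $x_j<x^{\mathrm{goal}}_j$ and no incoming edge in $G_{|\Cobs_G,x,\epsilon}$, so at least one robot advances a full $\epsilon$ (or reaches its goal) per unit time interval. Since the total distance $\sum_i (x^{\mathrm{goal}}_i-\xobsmin_i)$ is finite, $x^{\mathrm{goal}}$ is reached in finite time $T\le \lceil \sum_i(x^{\mathrm{goal}}_i-\xobsmin_i)/\epsilon\rceil$, and rescaling time by $1/T$ produces the desired $\tilde\path\in\paths(\Cfree_G)$, so $\paths(\Cfree_G)\neq\emptyset$.

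The main obstacle I anticipate is bookkeeping rather than conceptual: I must keep the strict versus closed neighborhoods straight. The hypothesis is stated with the open box $(-\epsilon,\epsilon)^n$, whereas the definition of $E(G_{|\Cobs_G,x,\epsilon})$ and the flow use the closed box $[-\epsilon,\epsilon]^n$. I would resolve this by shrinking $\epsilon$ slightly (replacing $\epsilon$ by some $\epsilon'<\epsilon$) so that the closed-box local graph built from $\epsilon'$ still has empty cycle intersections, which is legitimate since~\eqref{eq:condition-acyclic-local-priority-graph-with-epsilon} holds with the open box and the intersection of the $\Cobs_{i\succ j}$ over a cycle is a decreasing function of the dilation radius; alternatively I would note that an empty intersection of open dilations persists under a marginally larger closed dilation by a compactness/continuity argument. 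Once this reconciliation is made explicit, the rest of the proof is a direct transcription of the corpus argument with $\Cobs_G$ in place of $\chiobs_G$, so I would present the construction and invariance check in full and simply remark that termination is verbatim the earlier reasoning.
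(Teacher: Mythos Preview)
Your proposal is correct and follows exactly the paper's own approach: the paper explicitly omits the proof, stating it is the same as the corpus construction of the sufficient condition (the flow $\phi(t,x)$ built from local-graph maximal vertices) with $\Cobs_G$ substituted for $\chiobs_G$. Your additional care about the open/closed box discrepancy, resolved by shrinking $\epsilon$ to some $\epsilon'<\epsilon$ so that $\Cobs_{i\succ j}+[-\epsilon',\epsilon']^n\subset \Cobs_{i\succ j}+(-\epsilon,\epsilon)^n$, is a legitimate refinement that the paper glosses over.
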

The proof of the above lemma is omitted as it is exactly the same proof as Lemma~\ref{lemma:sufficient-condition-locally-acyclic-priority-graph-corpus} where $\Cobs_G$ replaces $\chiobs_G$.

\begin{proof}[Proof of Theorem~\ref{thm:feasible}]
We will prove Theorem~\ref{thm:feasible} by combining the above lemmas. Take a priority graph $G\in\graphs$ and consider two options.
\begin{itemize}
\item If $G$ satisfies:
\begin{equation}
\exists r>0: \forall \C\in\cycles(G), \bigcap_{(i,j)\in E(\C)}(\chiobs_{i\succ j}+[-r,r]^n)=\emptyset
\label{eq:r-exists}
\end{equation}
Then, by Lemma~\ref{lemma:sufficient-condition-phi-r-feasible}, $\paths(\chifree_G)\neq\emptyset$, so $G$ is feasible. Moreover, take $\rmax$ defined as follows:
\begin{multline}
\rmax:=\sup\Bigl\{r> 0: \forall \C\in\cycles(G), \bigcap_{(i,j)\in E(\C)}(\chiobs_{i\succ j}+[-r,r]^n)=\emptyset\Bigr\}
\end{multline}
$\rmax$ exists in $\RR_+\cup\{+\infty\}$ as the supremum of a non-empty subset of $\RR$ (it is non-empty as~\eqref{eq:r-exists} is satisfied). By Lemma~\ref{lemma:necessary-condition-phi-r-feasible}, for all $r>\rmax$, $\paths(\chifree_G\ominus[-r,r]^n)=\emptyset$. And by Lemma~\ref{lemma:sufficient-condition-phi-r-feasible}, for all $r<\rmax$, $\paths(\chifree_G\ominus[-r,r]^n)\neq\emptyset$ (take $0<\epsilon<\rmax-r$). By definition of $\rho_G$, we have $\rho_G=\rmax$, i.e.,
\begin{equation}
\rho_G=\max\left\{r > 0: \forall \C\in\cycles(G), \bigcap_{(i,j)\in E(\C)}(\chiobs_{i\succ j}+[-r,r]^n)=\emptyset\right\}
\label{eq:feasibility-margin-expression}
\end{equation}
where $\max$ replaces $\sup$ since $\paths(\chifree_G\ominus(-\rho_G,\rho_G)^n)\neq\emptyset$ (see Lemma~\ref{lemma:existence-paths-with-maximal-margin}) implies $\bigcap_{(i,j)\in E(\C)}(\chiobs_{i\succ j}+(-\rho_G,\rho_G))^n)=\emptyset$ by Lemma~\ref{lemma:sufficient-condition-phi-r-feasible}.
\item If $G$ satisfies:
\begin{equation}
\forall r>0: \exists \C\in\cycles(G), \bigcap_{(i,j)\in E(\C)}(\chiobs_{i\succ j}+[-r,r]^n)\neq\emptyset
\end{equation}
Then, by Lemma~\ref{lemma:necessary-condition-phi-r-feasible}, $\rho_G\leq 0$ ($G$ is not feasible or it is feasible with a safety margin of $0$). Take $\rmax$ defined as follows:
\begin{multline}
\rmax:=\inf\Bigl\{r\geq 0: \forall \C\in\cycles(G), \bigcap_{(i,j)\in E(\C)}(\chiobs_{i\succ j}\ominus[-r,r]^n)=\emptyset\Bigr\}
\end{multline}
By Lemma~\ref{lemma:sufficient-condition-phi-r-feasible}, for all $r>\rmax$, $\paths(\chifree_G+[-r,r]^n)\neq\emptyset$ (take $0<\epsilon<\rmax-r$) and by Lemma~\ref{lemma:necessary-condition-phi-r-feasible}, for all $r<\rmax$, $\paths(\chifree_G+[-r,r]^n)=\emptyset$. By definition of $\rho_G$, we have $\rho_G=-\rmax$, i.e.,
\begin{equation}
\rho_G=-\min\left\{r \geq 0: \exists \C\in\cycles(G), \bigcap_{(i,j)\in E(\C)}(\chiobs_{i\succ j}\ominus[-r,r]^n)=\emptyset\right\}
\label{eq:feasibility-margin-expression-case-not-feasible}
\end{equation}
where $\min$ replaces $\inf$ and $\geq$ replaces $>$ as $\paths(\chifree_G\ominus(-\rho_G,\rho_G)^n)\neq\emptyset$ (see Lemma~\ref{lemma:existence-paths-with-maximal-margin}) implies $\bigcap_{(i,j)\in E(\C)}(\chiobs_{i\succ j}\ominus(-\rho_G,\rho_G)^n)=\emptyset$ by Lemma~\ref{lemma:sufficient-condition-phi-r-feasible}. Hence, in that case, $G$ is feasible if and only if $\bigcap_{(i,j)\in E(\C)}\chiobs_{i\succ j}=\emptyset$.
\end{itemize}
In conclusion, a necessary and sufficient condition for $G$ being feasible is:
\begin{equation}
\forall\C\in\cycles(G), \bigcap_{(i,j)\in E(\C)}\chiobs_{i\succ j}=\emptyset
\end{equation}
and if this condition is satisfied, the feasibility margin is given by:
\begin{equation}
\rho_G=\max\left\{r \geq 0: \forall \C\in\cycles(G), \bigcap_{(i,j)\in E(\C)}(\chiobs_{i\succ j}+[-r,r]^n)=\emptyset\right\}
\end{equation}
where $\geq$ replaces $>$ of Equation~\eqref{eq:feasibility-margin-expression} to include the case $\rho_G=0$.
\end{proof}

\chapter[Priority preserving control: extension to feasible cyclic priorities]{Priority preserving control: \\extension to feasible cyclic priorities}
\chaptermark{Extension to feasible cyclic priorities}
\label{app:control-extension-all-feasible-priorities}

The results presented in Part~\ref{part:priorities-to-guide-robots} have been proved under the assumption that the assigned priorities are acyclic. However, they can be extended to all feasible priority graphs -- including those containing cycles -- under certain mild conditions. This is the topic of this appendix chapter.

\paragraph{Sketch of the chapter} Sections~\ref{sec:extension-velocity},~\ref{sec:extension-acceleration} and~\ref{sec:extension-uncertainty} respectively extend the results of Chapter~\ref{chap:optimal-control-velocity} (control in velocity),~\ref{chap:control-acceleration} (control in acceleration) and~\ref{chap:control-uncertainty} (bounded uncertainty).

\section{In the absence of inertia}
\label{app:extension-control-velocity}
\label{sec:extension-velocity}

First of all, we examine the case of robots controlled in velocity along their paths. In Chapter~\ref{chap:optimal-control-velocity}, the acyclicity of $G$ is key as it is even necessary to the existence of the control law (see Theorem~\ref{thm:control-law-existence}). A relaxed sufficient condition on the priority graph for all the results of Chapter~\ref{chap:optimal-control-velocity} to hold is that the maximum distance traveled by a robot in one time slot satisfies:
\begin{equation}
\max_{i\in\robots} \vmax_i \leq \rho_G \label{eq:ineq-epsilon}
\end{equation}
In particular, it is necessary for the safety margin to be strictly positive. In practice, to satisfy this assumption, one can choose a sufficiently small time slot length (the maximum distance traveled by a robot in one time slot is proportional to the time slot length). However, due to actuators limitations, there is a lower bound in time slot length. Assume that the minimal time slot length is chosen. A robot $i$ either is stopped or travels the distance $\vmax_i$ which can be seen as the control resolution. Basically, Equation~\eqref{eq:ineq-epsilon} states that the control resolution needs to be lower than the safety margin $\rho_G$.

All the results of Section~\ref{app:extension-control-velocity} (existence, priority preservation, optimality, liveness) are extended under Condition~\eqref{eq:ineq-epsilon} and proofs are provided. The key idea is to use the fact that if the priority graph is feasible with a strictly positive margin $\rho_G$, then the local priority graph is acyclic at all configurations. Hence, it is always possible to move forward some robot provided the control resolution is small enough. The evolution of the multi robot system under control law $f^G$ with cyclic priorities is depicted in Figure~\ref{fig:control-law-velocity-example-cyclic} (the evolution of the local priority graph is also represented). Figure~\ref{fig:trajectory-velocity-control-example-cyclic} depicts the trajectory in the coordination space.
\begin{figure}[p]
\begin{center}
\includegraphics[width=1.0\linewidth]{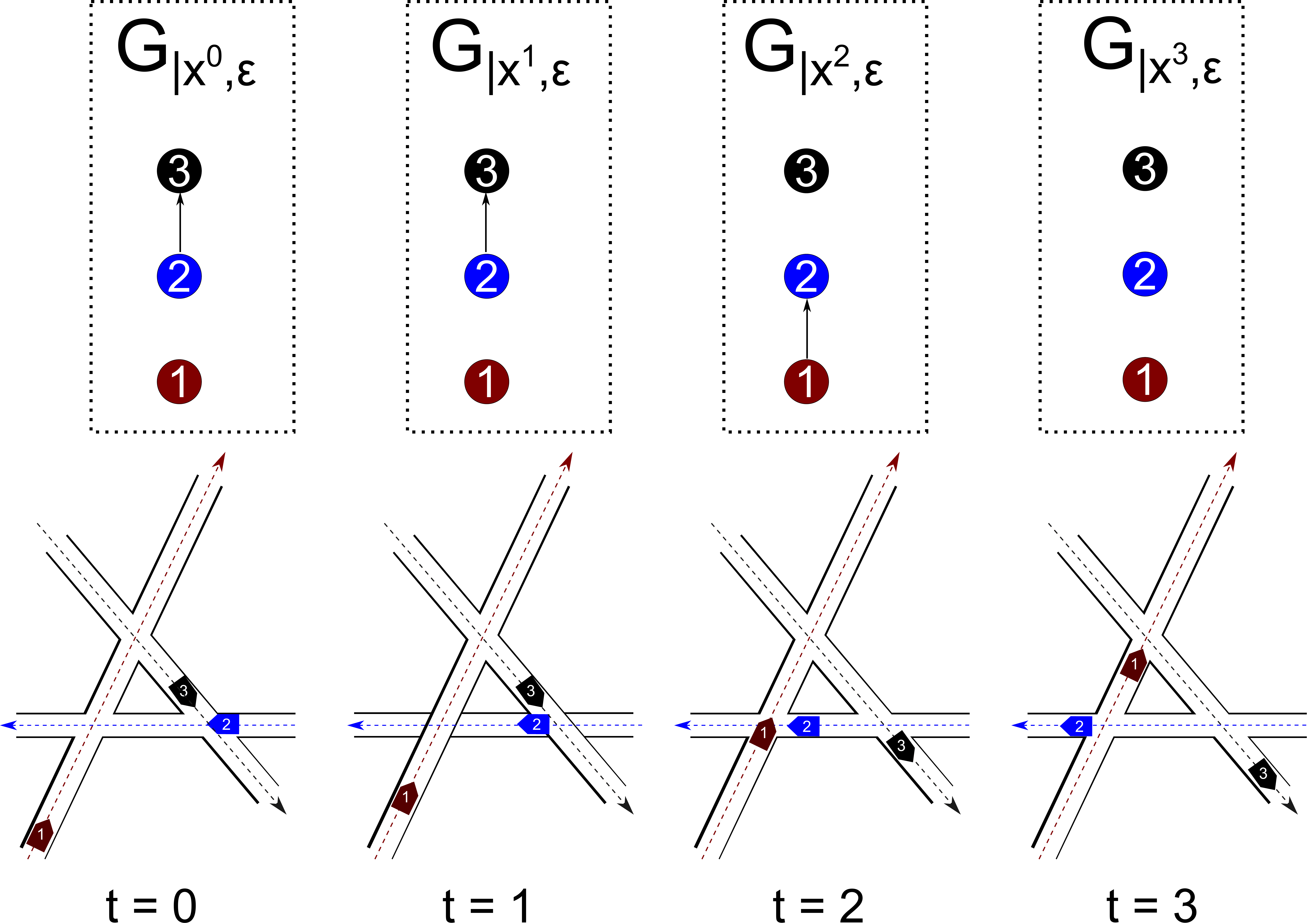}\hfill
\end{center}
\caption{A three-robot system with cyclic assigned priorities: $1\succ 2$, $2\succ 3$, and $3\succ 1$. Robots are controlled under control law $f^G$. The top drawings show the evolution of the local priority graph and the bottom drawings depict the evolution of the robots along their paths.}
\label{fig:control-law-velocity-example-cyclic}
\end{figure}
\begin{figure}[p]
\begin{center}
\includegraphics[width=1.0\linewidth]{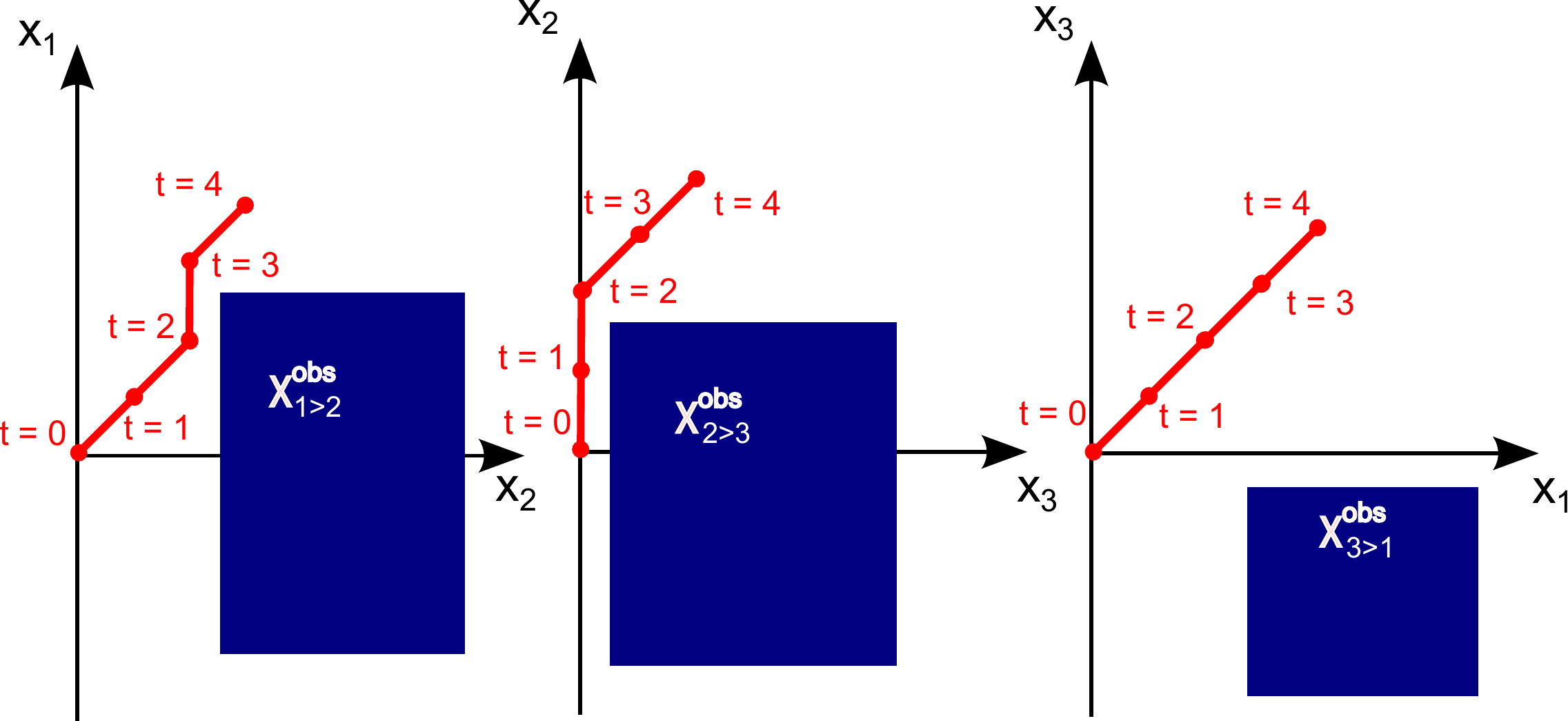}\hfill
\end{center}
\caption{Trajectory in the coordination space for the three-robot system under control law $f^G$ of Figure~\ref{fig:control-law-velocity-example}.}
\label{fig:trajectory-velocity-control-example-cyclic}
\end{figure}

\subsection{Existence}

\begin{theorem}[Control law existence]
Given a	feasible priority graph $G$ with a strictly positive feasibility margin satisfying Condition~\eqref{eq:ineq-epsilon}, Equation~\eqref{eq:definition-control-law-velocity} uniquely defines a control law $f^G:\chi\to V$. 
\label{thm:control-law-existence-app}
\end{theorem}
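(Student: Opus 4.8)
The obstruction to well-definedness is exactly the circular appearance of $f^G$ on both sides of Equation~\eqref{eq:definition-control-law-velocity}: when $G$ has cycles there is no global topological order to resolve the recursion as in the proof of Theorem~\ref{thm:control-law-existence}. The plan is to show that, at any fixed configuration $x\in\chi$, the recursion only couples robots along the edges of the \emph{local} priority graph $G_{|x,\epsilon}$ with $\epsilon:=\max_{i\in\robots}\vmax_i$, and that this local graph is acyclic, so an $x$-dependent topological order suffices to compute $f^G(x)$ unambiguously.

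First I would fix $\epsilon:=\max_{i\in\robots}\vmax_i$ (the degenerate case $\epsilon=0$ forces $V_i=\{0\}$ and $f^G\equiv 0$, which is trivially well-defined). By Theorem~\ref{thm:feasible} the feasibility margin satisfies $\rho_G=\max\{r\ge 0:\forall\C\in\cycles(G),\bigcap_{(i,j)\in E(\C)}(\chiobs_{i\succ j}+[-r,r]^n)=\emptyset\}$, and since the defining condition is downward-closed in $r$ (the dilated cylinders are nested), Condition~\eqref{eq:ineq-epsilon}, i.e.\ $\epsilon\le\rho_G$, guarantees $\bigcap_{(i,j)\in E(\C)}(\chiobs_{i\succ j}+[-\epsilon,\epsilon]^n)=\emptyset$ for every elementary cycle $\C\in\cycles(G)$. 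Lemma~\ref{lemma:sufficient-condition-locally-acyclic-priority-graph-corpus} then yields that $G_{|x,\epsilon}$ is acyclic at every configuration $x\in\chi$.

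The key step I would carry out next is to show that the branch-selecting condition in Equation~\eqref{eq:definition-control-law-velocity} can never be triggered by an edge outside the local graph. Take $(j,i)\in E(G)\setminus E(G_{|x,\epsilon})$; by definition of the local priority graph this means $x\notin\chiobs_{j\succ i}+[-\epsilon,\epsilon]^n$, i.e.\ $\norm{x-y}_\infty>\epsilon$ for every $y\in\chiobs_{j\succ i}$. For any $t\in[0,1]$ and any admissible value $f_j^G(x)\in\{0,\vmax_j\}$, the probe point $x'=x+t(\vmax_i\e_i+f_j^G(x)\e_j)$ satisfies $\norm{x'-x}_\infty=t\max(\vmax_i,f_j^G(x))\le\max(\vmax_i,\vmax_j)\le\epsilon$, hence $x'\notin\chiobs_{j\succ i}$. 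Crucially, this displacement bound is uniform in the still-unknown quantity $f_j^G(x)$, so no circularity is introduced. Therefore in Equation~\eqref{eq:definition-control-law-velocity} the existential over $(j,i)\in E(G)$ may be replaced by the existential over $(j,i)\in E(G_{|x,\epsilon})$ without altering the value of $f_i^G(x)$.

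Finally I would assemble these facts: since $G_{|x,\epsilon}$ is acyclic, I fix a topological order of its vertices so that every edge $(j,i)\in E(G_{|x,\epsilon})$ has $j$ preceding $i$; computing the components in this order, all quantities $f_j^G(x)$ on which $f_i^G(x)$ genuinely depends are already determined, so each $f_i^G(x)$ is pinned down unambiguously. This produces a unique $f^G(x)\in V$ for every $x\in\chi$, hence a well-defined map $f^G:\chi\to V$, exactly as claimed. The main obstacle is the truncation argument of the third paragraph: one must justify restricting to local edges \emph{before} $f^G$ is known, and this works precisely because the bound $\norm{x'-x}_\infty\le\epsilon$ holds for every possible value of $f_j^G(x)$, decoupling the legitimacy of the truncation from the value ultimately computed.
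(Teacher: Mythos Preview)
Your proof is correct and follows essentially the same approach as the paper: reduce the recursion at each $x$ to the acyclic local priority graph and then compute in topological order. The only cosmetic difference is that the paper works with $G_{|x,\rho_G}$ while you work with $G_{|x,\epsilon}$ for $\epsilon=\max_i\vmax_i\le\rho_G$; both choices succeed, and you are in fact more explicit than the paper about why non-local edges can be discarded \emph{before} $f^G$ is known (your uniform bound $\norm{x'-x}_\infty\le\epsilon$ over all admissible values of $f_j^G(x)$), a point the paper leaves implicit.
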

\begin{proof}
Take a feasible priority graph satisfying Condition~\eqref{eq:ineq-epsilon}. By Lemma~\ref{lemma:sufficient-condition-locally-acyclic-priority-graph-corpus}, $G_{|x,\rho_G}$ is acyclic at all configurations $x\in\chi$. Moreover, by definition of the local priority graph and as Condition~\eqref{eq:ineq-epsilon} is satisfied, Equation~\eqref{eq:definition-control-law-velocity} is equivalent to:
\begin{equation}
f_i^G(x):= \begin{cases}
0&\text{ if } \exists(j,i)\in E(G_{|x,\rho_G}),t\in[0,1] \text{ s.t. } \left(x+t\left(\vmax_i\e_i+f_j(x)\e_j\right)\right)\in\chiobs_{j\succ i} \\
\vmax_i&\text{ else.}
\end{cases}
\label{eq:definition-control-law-velocity-with-epsilon-app}
\end{equation}
Since $G_{|x,\rho_G}$ is a directed acyclic graph, there exists a topological ordering of the graph such that for every edge $(j,i)\in E(G_{|x,\rho_G})$, $j$ comes before $i$ in the ordering. Hence, following the topological order induced by $G_{|x,\rho_G}$, it is possible to compute $f_i^G(x)$ for all $i\in\robots$ iteratively. As a result,  Equation~\eqref{eq:definition-control-law-velocity-with-epsilon-app} uniquely defines a control law $f^G:\chi\to V$.
\end{proof}

\subsection{Safety}

\begin{theorem}[Priority preservation]
Given a	feasible priority graph $G$ with a strictly positive feasibility margin satisfying Condition~\eqref{eq:ineq-epsilon}, $\chifree_{G}$ is positively invariant for the system under control law $f^G$:
\begin{equation}
\forall x\in \chifree_{G}, \forall t\geq 0, \phi(t,x,f^G) \in \chifree_{G}
\end{equation}
\label{thm:safety-robust-control-map-velocity-app}
\end{theorem}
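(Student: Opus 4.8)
The plan is to mirror the proof of Theorem~\ref{thm:safety-robust-control-map-velocity} (the acyclic case) almost verbatim, since the only role that acyclicity played there was to guarantee that the control law $f^G$ is well-defined. Once Theorem~\ref{thm:control-law-existence-app} establishes that $f^G$ exists under Condition~\eqref{eq:ineq-epsilon}, the safety argument itself never used acyclicity again: it was a purely local, edge-by-edge invariance check. So the goal is to verify that the same edge-by-edge reasoning still goes through when $G$ is cyclic but feasible with $\rho_G>0$.

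First I would reduce, by induction on the time slot, to proving the one-slot statement: for all $x\in\chifree_G$, $\phi(t,x,f^G)\in\chifree_G$ for all $t\in[0,1]$. Since the velocity control is piecewise constant on unit intervals and $\chifree_G$ is the target set at every slot boundary, establishing positive invariance on a single slot and iterating suffices. Then, fixing $x\in\chifree_G$, I would take an arbitrary edge $(j,i)\in E(G)$ and show that the flow stays in $\chifree_{j\succ i}$ throughout $[0,1]$; because $\chifree_G=\bigcap_{(j,i)\in E(G)}\chifree_{j\succ i}$, collecting this over all edges gives $\phi(t,x,f^G)\in\chifree_G$.

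The heart of the argument is the same two-case split on the value of $f_i^G(x)$. If $f_i^G(x)=0$, then robot $i$ does not move while robot $j$ can only move forward, so $x_i+t f_i^G(x)=x_i$ and $x_j+t f_j^G(x)\ge x_j$; by the geometric invariance Property~\ref{property:geometric-invariance}, namely $\chifree_{j\succ i}+\RR_+\e_j-\RR_+\e_i=\chifree_{j\succ i}$, the point stays in $\chifree_{j\succ i}$. If instead $f_i^G(x)=\vmax_i$, then by the very definition of the control law in Equation~\eqref{eq:definition-control-law-velocity} — or equivalently its local reformulation~\eqref{eq:definition-control-law-velocity-with-epsilon-app} used in the existence proof — the displaced configuration $x+t(\vmax_i\e_i+f_j^G(x)\e_j)$ does not meet $\chiobs_{j\succ i}$ for any $t\in[0,1]$, so it lies in $\chifree_{j\succ i}$. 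Either way the flow coordinate pair $(\phi_i,\phi_j)=(x_i+tf_i^G(x),\,x_j+tf_j^G(x))$ stays off $\chiobs_{j\succ i}$.

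The one subtlety — and the step I expect to be the real (if modest) obstacle — is making sure the control law's collision-check, which in~\eqref{eq:definition-control-law-velocity} ranges over all edges of $G$, is correctly captured by the local graph $G_{|x,\rho_G}$ under Condition~\eqref{eq:ineq-epsilon}. The point is that if $(j,i)\notin E(G_{|x,\rho_G})$ then $x$ is at distance more than $\rho_G\ge\max_k\vmax_k$ from $\chiobs_{j\succ i}$, so a single slot's motion of magnitude at most $\vmax_i$ cannot possibly reach $\chiobs_{j\succ i}$; hence such edges are automatically safe and need not force $f_i^G(x)=0$. This is exactly what justified replacing~\eqref{eq:definition-control-law-velocity} by~\eqref{eq:definition-control-law-velocity-with-epsilon-app} in Theorem~\ref{thm:control-law-existence-app}, and it is what lets the per-edge argument above close without any global acyclicity hypothesis. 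With this observation in hand, the induction completes and $\chifree_G$ is positively invariant under $f^G$.
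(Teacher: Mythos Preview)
Your proposal is correct and matches the paper's approach exactly: the paper omits the proof entirely, noting that it is identical to the proof of Theorem~\ref{thm:safety-robust-control-map-velocity} because acyclicity was used there only to ensure $f^G$ is well defined, which is now guaranteed by Theorem~\ref{thm:control-law-existence-app} under Condition~\eqref{eq:ineq-epsilon}. Your last paragraph on the local graph is a correct observation but is not actually needed for the safety argument itself---once $f^G$ exists, the definition~\eqref{eq:definition-control-law-velocity} already quantifies over all edges of $G$, so when $f_i^G(x)=\vmax_i$ the collision-freeness with respect to every $\chiobs_{j\succ i}$, $(j,i)\in E(G)$, follows immediately.
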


The proof is omitted as it is exactly the same as the proof of Theorem~\ref{thm:safety-robust-control-map-velocity}. Indeed, in the proof of Theorem~\ref{thm:safety-robust-control-map-velocity}, the acyclicity of the priority graph is useful only to ensure that the control law $f_G$ is well defined. Condition~\eqref{eq:ineq-epsilon} precisely ensures that the control law $f_G$ is well defined even if priorities are cyclic (see Theorem~\ref{thm:control-law-existence-app}).

\subsection{Optimality}

\begin{theorem}[Optimality]
Given a	feasible priority graph $G$ with a strictly positive feasibility margin satisfying Condition~\eqref{eq:ineq-epsilon}, the control law $f^G$ is optimal for the priority graph $G$.
\label{thm:control-law-optimality-app}
\end{theorem}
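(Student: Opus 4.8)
The plan is to reproduce, almost verbatim, the contraposition argument used in the proof of Theorem~\ref{thm:control-law-optimality}, replacing every place where global acyclicity of $G$ was invoked by the acyclicity of the \emph{local} priority graph $G_{|x,\rho_G}$. The latter holds at every configuration because $G$ is feasible with strictly positive margin and Condition~\eqref{eq:ineq-epsilon} holds, so Lemma~\ref{lemma:sufficient-condition-locally-acyclic-priority-graph-corpus} applies; it is also Condition~\eqref{eq:ineq-epsilon} that guarantees, via Theorem~\ref{thm:control-law-existence-app}, that $f^G$ is well defined and admits the local formulation~\eqref{eq:definition-control-law-velocity-with-epsilon-app}. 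Optimality will again be proved by contraposition: I fix $x\in\chifree_G$ and an arbitrary $\vbf\in\vcontrols$, assume $\phi_i(t,x,f^G)<\phi_i(t,x,\vbf)$ for some $i\in\robots$ and some $t\geq 0$, and aim to conclude $\phi(\RR_+,x,\vbf)\cap\chiobs_G\neq\emptyset$, i.e. $\vbf\notin\vcontrols_G^\free(x)$.

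First I would set $t^0$ to be the infimum of the overtaking times and let $k^0$ be the slot containing it. Since velocities are piecewise constant and binary ($V_i=\{0,\vmax_i\}$), the same elementary manipulations as in Theorem~\ref{thm:control-law-optimality} yield the three facts~\eqref{eq:not-already-overtaking-k},~\eqref{eq:overtaking-k-plus-1} and~\eqref{eq:phi-i-equality-k}–\eqref{eq:velocity-strictly-greater}: for all $j$, $\phi_j(k^0,x,f^G)\geq\phi_j(k^0,x,\vbf)$; the set $S$ of robots $i$ with $\phi_i(k^0+1,x,f^G)<\phi_i(k^0+1,x,\vbf)$ is non-empty; and every $i\in S$ satisfies $\phi_i(k^0,x,f^G)=\phi_i(k^0,x,\vbf)$ together with $f_i^G(x^0)=0<\vmax_i=\vbf_i(k^0)$, where $x^0:=\phi(k^0,x,f^G)$. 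None of this uses acyclicity of $G$.

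The only genuinely new step concerns the choice of the blocking robot. For each $i\in S$ we have $f_i^G(x^0)=0$, so by the local formulation~\eqref{eq:definition-control-law-velocity-with-epsilon-app} there is an edge $(j,i)\in E(G_{|x^0,\rho_G})$ and a $t\in[0,1]$ with $x^0+t\left(\vmax_i\e_i+f_j^G(x^0)\e_j\right)\in\chiobs_{j\succ i}$; the displacement here has infinity norm at most $\max_k\vmax_k\leq\rho_G$, which is precisely what forces the offending edge into the \emph{local} graph. Because $G_{|x^0,\rho_G}$ is acyclic, so is the subgraph it induces on $S$, and the latter therefore admits a maximal element $i^\ast\in S$, that is, a vertex carrying no edge $(j,i^\ast)\in E(G_{|x^0,\rho_G})$ with $j\in S$. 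Since $i^\ast$ is nonetheless blocked, its blocking edge $(j,i^\ast)$ must have $j\notin S$, so $j$ does not overtake and $\phi_j(k^0+t,x,f^G)\geq\phi_j(k^0+t,x,\vbf)$.

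From this point the argument is identical to that of Theorem~\ref{thm:control-law-optimality}: combining $\phi_{i^\ast}(k^0,x,f^G)=\phi_{i^\ast}(k^0,x,\vbf)$ with the non-overtaking of $j$ gives $\phi_{i^\ast}(k^0+t,x,\vbf)=x^0_{i^\ast}+t\vmax_{i^\ast}$ and $\phi_j(k^0+t,x,\vbf)\leq x^0_j+t f_j^G(x^0)$, so Property~\ref{property:geometric-invariance} applied to the membership $x^0+t\left(\vmax_{i^\ast}\e_{i^\ast}+f_j^G(x^0)\e_j\right)\in\chiobs_{j\succ i^\ast}$ yields $\phi(k^0+t,x,\vbf)\in\chiobs_{j\succ i^\ast}\subset\chiobs_G$, contradicting $\vbf\in\vcontrols_G^\free(x)$. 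I expect the main obstacle to be purely one of bookkeeping, namely making airtight the reduction from an arbitrary edge $(j,i)\in E(G)$ produced by the control law to an edge of $G_{|x^0,\rho_G}$: this localization, resting on the displacement bound $\max_k\vmax_k\leq\rho_G$ and on the equivalence~\eqref{eq:definition-control-law-velocity-with-epsilon-app}, is exactly what substitutes for the global acyclicity assumption and must be invoked consistently wherever the original proof said ``the acyclic subgraph of $G$''.
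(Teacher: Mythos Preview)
Your proposal is correct and, in substance, is precisely the argument one must run. The paper itself omits the proof, asserting that ``it is exactly the same as the proof of Theorem~\ref{thm:control-law-optimality}'' and that acyclicity ``is useful only to ensure that the control law $f^G$ is well defined.'' Your write-up shows you noticed that this last claim is not quite accurate: in the original proof acyclicity is also used a second time, to select $i$ as a maximal element of the subgraph of $G$ induced on the overtaking robots. You handle this correctly by passing to the local graph $G_{|x^0,\rho_G}$, which is acyclic by Lemma~\ref{lemma:sufficient-condition-locally-acyclic-priority-graph-corpus}, and by invoking the equivalent local formulation~\eqref{eq:definition-control-law-velocity-with-epsilon-app} to guarantee that the blocking edge $(j,i^\ast)$ actually lies in $E(G_{|x^0,\rho_G})$. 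That localization step, together with the displacement bound $\max_k\vmax_k\leq\rho_G$ from Condition~\eqref{eq:ineq-epsilon}, is exactly the missing ingredient; everything else is verbatim from Theorem~\ref{thm:control-law-optimality}. One minor point worth stating explicitly in a final write-up: the inequality $\phi_j(k^0+t,x,f^G)\geq\phi_j(k^0+t,x,\vbf)$ for intermediate $t\in[0,1]$ follows because the difference is affine on the slot and nonnegative at both endpoints ($t=0$ by~\eqref{eq:not-already-overtaking-k}, $t=1$ since $j\notin S$).
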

The proof is omitted as it is exactly the same as the proof of Theorem~\ref{thm:control-law-optimality}. Again, in the proof of Theorem~\ref{thm:control-law-optimality} is useful only to ensure that the control law $f_G$ is well defined which is already guaranteed by Condition~\eqref{eq:ineq-epsilon}.

\subsection{Liveness}

\begin{theorem}[Liveness]
Given a	feasible priority graph $G$ with a strictly positive feasibility margin satisfying Condition~\eqref{eq:ineq-epsilon} and a configuration $x^0\in\chifree_G$, there exists $T>0$ such that:
\begin{equation}
\phi(T,x^0,f^G)\in\chigoal
\end{equation}
\label{thm:liveness-control-law-velocity-app}
\end{theorem}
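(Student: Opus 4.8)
The plan is to adapt the liveness argument of Theorem~\ref{thm:liveness-control-law-velocity} (the acyclic case), replacing the \emph{global} topological ordering---which no longer exists when $G$ contains cycles---by the \emph{local} priority graph, which Condition~\eqref{eq:ineq-epsilon} guarantees to be acyclic. Throughout I work with the localized form of the control law (Equation~\eqref{eq:definition-control-law-velocity-with-epsilon-app}), which is legitimate since Theorem~\ref{thm:control-law-existence-app} shows it coincides with $f^G$ under Condition~\eqref{eq:ineq-epsilon}; in particular each robot $i$ only needs to inspect edges $(j,i)\in E(G_{|x,\rho_G})$. By Lemma~\ref{lemma:sufficient-condition-locally-acyclic-priority-graph-corpus}, $G_{|x,\rho_G}$ is acyclic at every configuration $x\in\chi$.

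First I would record two elementary facts. Positions are non-decreasing along the flow (the control values lie in $\{0,\vmax_i\}\subset\RR_+$), so a robot that has \emph{exited}, i.e. satisfies $x_i\geq\xobsmax_i$, stays exited. Moreover an exited robot never blocks another: every point of $\chiobs_{j\succ i}=\chiobs_{ij}-\RR_+\e_j+\RR_+\e_i$ has $j$-coordinate strictly below $\xobsmax_j$ (every point of $\chiobs_{ij}$ does, by boundedness of the obstacle region, and subtracting $\RR_+\e_j$ only lowers it), so if $x_j\geq\xobsmax_j$ then $x+t(\vmax_i\e_i+f_j^G(x)\e_j)$ keeps a $j$-coordinate $\geq\xobsmax_j$ and hence cannot meet $\chiobs_{j\succ i}$ for any $t\in[0,1]$.

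The heart of the proof is the following progress claim: whenever the set $N$ of non-exited robots is non-empty, at least one robot of $N$ travels at full speed during the current slot. To establish it I would restrict the (acyclic) local priority graph $G_{|x,\rho_G}$ to $N$ and pick a source $i^\ast$, i.e. a non-exited robot with no incoming edge from within $N$. If $f_{i^\ast}^G(x)=0$, then some edge $(j,i^\ast)\in E(G_{|x,\rho_G})$ produces a collision of $x+t(\vmax_{i^\ast}\e_{i^\ast}+f_j^G(x)\e_j)$ with $\chiobs_{j\succ i^\ast}$; the case $j\in N$ contradicts the choice of $i^\ast$ as a source, while the case $j$ exited is impossible by the blocking fact above. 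Hence $f_{i^\ast}^G(x)=\vmax_{i^\ast}>0$ and $i^\ast$ advances by $\vmax_{i^\ast}$ over the slot.

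Finally I would close with a potential/counting argument. Set $v_{\min}:=\min_{i\in\robots}\vmax_i>0$ and consider $\Psi(x):=\sum_{i\in\robots}\max\{0,\xobsmax_i-x_i\}$, the total remaining distance to exit, which is finite at $x^0$, non-increasing along the flow, and vanishes exactly when every robot has exited, i.e. when $x\in\chigoal$. By the progress claim, as long as $\Psi>0$ the source $i^\ast$ advances by $\vmax_{i^\ast}$, so each slot either decreases $\Psi$ by at least $v_{\min}$, or makes $i^\ast$ cross its exit threshold; the latter can occur at most $n=|\robots|$ times. Consequently $\Psi$ reaches $0$ after at most $\lceil \Psi(x^0)/v_{\min}\rceil+n$ slots, yielding the desired $T>0$ with $\phi(T,x^0,f^G)\in\chigoal$; safety along the way is supplied by Theorem~\ref{thm:safety-robust-control-map-velocity-app}, so every intermediate configuration lies in $\chifree_G$. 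I expect the delicate step to be the progress claim---the book-keeping that, in the presence of cycles, the source extracted from the non-exited sub-graph is genuinely unobstructed---which hinges jointly on the local acyclicity furnished by Condition~\eqref{eq:ineq-epsilon} and on the fact that already-exited robots impose no constraint.
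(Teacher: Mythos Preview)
Your proof is correct and follows essentially the same route as the paper's: both use the local acyclicity of $G_{|x,\rho_G}$ (Lemma~\ref{lemma:sufficient-condition-locally-acyclic-priority-graph-corpus}) to extract, at each slot, a source $i^\ast$ among the non-exited robots and show it moves at full speed, then conclude by a finite-total-distance argument. Your version is in fact more careful than the paper's on one point: you explicitly verify that an already-exited robot $j$ cannot trigger the braking condition on $i^\ast$ (since $x_j\geq\xobsmax_j$ keeps the test configuration out of $\chiobs_{j\succ i^\ast}$), whereas the paper's ``by construction of $i^\ast$'' sweeps this case under the rug; you also make the termination bound explicit via the potential $\Psi$, which the paper leaves informal.
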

\begin{proof}
Take a feasible priority graph $G$ satisfying Condition~\eqref{eq:ineq-epsilon}. First of all, note that by Lemma~\ref{lemma:sufficient-condition-locally-acyclic-priority-graph-corpus}, the local priority graph of radius $\rho_G>0$ is acyclic at all configurations. Take a configuration $x^0\in\chifree_G$. Define the set $I(t)\subset \robots$ defined as follows for all $t\geq 0$:
\begin{equation}
I(t):=\left\{i\in\robots: \phi_i(t,x^0,f^G)<\xobsmax_i \right\}
\end{equation}
Theorem~\ref{thm:liveness-control-law-velocity-app} is equivalent to the existence of $T>0$ such that:
\begin{equation}
I(T)=\emptyset
\end{equation}

First of all, note that the set $I(t)$ decreases through time, as the trajectory of robots is non-decreasing. At every time slot $k$, if $I(k)$ is not empty, consider $i^*$, a maximal element of the (acyclic) sub-graph of $G_{|\phi(k,x^0,f^G), \rho_G}$ retaining only vertices in $I(k)$. Remember that, by definition of the local priority graph and as Condition~\eqref{eq:ineq-epsilon} is satisfied, the definition of the control law is equivalent to:
\begin{equation}
f_i^G(x)= \begin{cases}
0&\text{ if } \exists(j,i)\in E(G_{|x,\rho_G}),t\in[0,1] \text{ s.t. } \left(x+t\left(\vmax_i\e_i+f_j(x)\e_j\right)\right)\in\chiobs_{j\succ i} \\
\vmax_i&\text{ else.}
\end{cases}
\end{equation} 
Hence, by construction of $i^*$, we have $f_{i^*}^G(\phi(k,x^0,f^G))=\vmax_{i^*}$, so that:
\begin{equation} 
\phi_{i^*}(k+1,x^0,f^G)-\phi_{i^*}(k,x^0,f^G) = \vmax_{i^*}
\end{equation}

Hence, at every time slot $k$, there is at least one robot $i^* \in I(k)$ traveling distance $\vmax_{i^*}$. As there is initially a finite number of robots in $I(0)$ and as each robot $i$ has a finite distance to travel to reach position $\xobsmax_i$, it is guaranteed that $I(t)$ gets empty in finite time, i.e., the multi robot system reaches $\chigoal$ in finite time.
\end{proof}

\section{Under kinodynamic constraints}
\label{sec:extension-acceleration}

When robots are controlled in acceleration, the control law $g^G$ proposed in Chapter~\ref{chap:control-acceleration} is well defined, safe and robust even if the priority graph $G$ is not feasible. However, the liveness guarantee of Theorem~\ref{thm:liveness-control-law} has been proved under the assumption that priorities are acyclic. Actually, liveness is still guaranteed under a less restrictive assumption on the priority graph. Again, the assumption is on the safety margin of the priority graph:
\begin{equation}
\max_{i\in\robots} \sup_{t\in\RR_+} \phi_{x,i}(t,0,\ubf^\imp_i) \leq \rho_G
\label{eq:condition-liveness-control-law}
\end{equation}
Basically, under control in acceleration, $\sup_{t\in\RR_+} \phi_{x,i}(t,0,\ubf_i^\imp)$ represents the control resolution as it is the maximum distance traveled by a robot starting from stop and applying an impulse control considering all robots. Again, Equation~\eqref{eq:condition-liveness-control-law} states that the control resolution needs to be lower than the safety margin. Liveness under Condition~\eqref{eq:condition-liveness-control-law} is proved in the sequel.

\begin{theorem}[Liveness]
Given a feasible priority graph $G$ with a strictly positive feasibility margin satisfying Condition~\eqref{eq:condition-liveness-control-law} and an initial brake safe state $s\in B_G$, there exists $T>0$ such that:
\begin{equation}
\phi_x(T,s,g^G)\in\chigoal
\end{equation}
\label{thm:liveness-control-law-app}
\end{theorem}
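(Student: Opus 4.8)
The plan is to mimic the liveness proof of Theorem~\ref{thm:liveness-control-law-velocity-app} from the velocity case, replacing the role of the locally acyclic priority graph and the unit forward step by their second-order analogues. The key structural fact is that, since $G$ is feasible with strictly positive margin $\rho_G$, Lemma~\ref{lemma:sufficient-condition-locally-acyclic-priority-graph-corpus} (applied with $\epsilon=\rho_G$) guarantees that the local priority graph $G_{|x,\rho_G}$ is acyclic at every configuration $x\in\chi$. Condition~\eqref{eq:condition-liveness-control-law} then ties the ``control resolution'' of the acceleration dynamics — the maximal distance a stopped robot can travel under impulse control, $\sup_{t}\phi_{x,i}(t,0,\ubf_i^\imp)$ — to this margin, ensuring that whenever the control law lets a maximal robot accelerate, the distance it covers does not exceed $\rho_G$ and hence cannot create a forbidden cyclic dependency.

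First I would set up the monotone ``exited'' set $I(t):=\{i\in\robots:\phi_{x,i}(t,s,g^G)<\xobsmax_i\}$ and reduce the theorem to showing $I(T)=\emptyset$ for some finite $T$, exactly as in the velocity proof; $I(t)$ is non-increasing because the flow is non-decreasing (order preservation). Next I would argue that at each time slot $k$ with $I(k)\neq\emptyset$, the sub-graph of $G_{|\phi_x(k,s,g^G),\rho_G}$ restricted to vertices in $I(k)$ is acyclic (it is a sub-graph of an acyclic graph), so it admits a maximal element $i^\ast$. For that maximal robot, no higher-priority robot in the local graph obstructs it, so the control law $g^G$ returns $\uH_{i^\ast}$; by the definition of $g^G$ through Equation~\eqref{eq-control-map} together with Condition~\eqref{eq:condition-liveness-control-law}, one checks that robot $i^\ast$ genuinely makes positive progress over the slot rather than merely being allowed to accelerate in principle. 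Iterating, each slot advances at least one not-yet-exited robot by a definite amount, and since each robot has a finite distance to cover to reach $\xobsmax_i$ and $|I(0)|<\infty$, $I(t)$ empties in finite time, giving $\phi_x(T,s,g^G)\in\chigoal$.

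The main obstacle I expect is the progress-quantification step in the acceleration setting. In the velocity case a robot with $f^G_i=\vmax_i$ travels exactly $\vmax_i$ in one slot, a clean positive lower bound. Under inertia, a robot that is allowed to accelerate may currently have low (or zero) velocity, so a single slot at $\uH_i$ yields only a small, state-dependent displacement, and one must rule out the degenerate scenario where the ``maximal'' robot keeps getting $\uH_i$ but accumulates progress arbitrarily slowly, or where the maximal element of the local graph shifts among robots each slot without any single robot ever finishing. The resolution is that an unobstructed robot receives $\uH_i$ for consecutive slots (its local-graph predecessors stay cleared as long as it leads the acyclic order), so it accelerates toward $\vmax_i$ and exits in finite time; here Condition~\eqref{eq:condition-liveness-control-law} is exactly what certifies that the impulse-control reach stays within the margin so that the local acyclicity — and hence the persistence of $i^\ast$ as an unobstructed leader until it exits — is preserved throughout its acceleration phase. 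Making this persistence argument rigorous, rather than the one-slot displacement bound of the velocity proof, is the crux; the rest follows the same finite-exhaustion template as Theorem~\ref{thm:liveness-control-law}.
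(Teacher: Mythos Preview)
Your proposal has a genuine gap at exactly the point you flag as the ``main obstacle'', and the resolution you sketch does not work.

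The claim that a maximal element $i^\ast$ of $G_{|\phi_x(k,s,g^G),\rho_G}$ restricted to $I(k)$ receives $g^G_{i^\ast}=\uH_{i^\ast}$ is not justified. The control law~\eqref{eq-control-map} checks whether the impulse trajectory $\phi_x(t,s,\tilde{\ubf}^{i^\ast})$ ever enters $\chiobs_{j\succ i^\ast}$; this trajectory starts from the \emph{current} state $s_{i^\ast}=(x_{i^\ast},v_{i^\ast})$, not from rest. Condition~\eqref{eq:condition-liveness-control-law} bounds $\sup_t\phi_{x,i}(t,0,\ubf_i^\imp)$, the reach from \emph{zero} initial velocity; when $v_{i^\ast}>0$ the impulse reach exceeds this bound, so $i^\ast$ may be forced to brake by an edge $(j,i^\ast)\in E(G)$ that is \emph{not} in the local graph $G_{|x,\rho_G}$. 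Hence maximality in the local graph does not imply $g^G_{i^\ast}=\uH_{i^\ast}$ unless $v_{i^\ast}=0$. This is precisely the hypothesis of the paper's Lemma~\ref{lemma:liveness-control-law}, which you are implicitly invoking without its key assumption. Your persistence argument inherits the same defect: once $i^\ast$ begins accelerating it acquires positive velocity, and the reduction to the local graph is no longer available.

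The paper sidesteps this entirely by arguing by contradiction. It sets $\Imin=\bigcap_{t}I(t)$, then shows (Lemma~\ref{lemma:robot-brakes-indefinitely-if-no-exit}) that every robot in $\Imin$ must eventually receive $\uL_i$ at every slot, and (Lemma~\ref{lemma:robot-brakes-velocity-vanishes}) therefore comes to rest. Only once all robots in $\Imin$ are simultaneously at zero velocity does it invoke the local-priority-graph reduction (Lemma~\ref{lemma:liveness-control-law}), and then Lemma~\ref{lemma:flow-intersection-non-empty-implication} forces the obstructing higher-priority robot $j$ to also lie in $\Imin$, producing a cycle in $G_{|x,\rho_G}$ among vertices of $\Imin$ --- contradicting local acyclicity. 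The contradiction structure is what lets the paper wait until velocities vanish, which is exactly what makes Condition~\eqref{eq:condition-liveness-control-law} applicable.
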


We start with some lemmas.

\begin{lemma}
Given $i\in\robots$, $s_i\in S_i$ and $\ubf_i\in\controls_i$ taking values in $\{\uL_i,\uH_i\}$, the following implication holds:
\begin{equation}
\left[\forall k\in\NN, \phi_{x,i}(k,s_i,\ubf_i) \leq \xobsmax_i\right] \Rightarrow \left[\exists k^0\in\NN: \forall k\geq k^0, \ubf_i(k)=\uL_i\right]
\end{equation}
\label{lemma:robot-brakes-indefinitely-if-no-exit}
\end{lemma}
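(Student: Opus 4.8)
The plan is to prove the contrapositive: if the throttle command $\uH_i$ is applied on infinitely many slots, then the position $\phi_{x,i}(k,s_i,\ubf_i)$ must eventually exceed $\xobsmax_i$. The whole argument rests on two monotonicity facts that follow at once from the dynamics~\eqref{eq-diff-state1-deterministic}--\eqref{eq-diff-state2-deterministic} together with the invariant $v_i\in[0,\vmax_i]$ enforced by $\delta$. First, since $\dot{x}_i=v_i\geq 0$, the projected flow $t\mapsto\phi_{x,i}(t,s_i,\ubf_i)$ is non-decreasing, so it will suffice to show that the position grows without bound. Second, by the order-preservation property, the distance travelled during a single throttle slot is a non-decreasing function of the velocity at the start of that slot (two throttle slots started from the same position with ordered initial velocities produce ordered flows, hence ordered final positions).

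Next I would establish a uniform positive lower bound on the progress made during any throttle slot. On a slot $[k,k+1)$ with $\ubf_i\equiv\uH_i$, if the velocity at time $k$ is $v\in[0,\vmax_i]$, then integrating~\eqref{eq-diff-state2-deterministic} gives $v_i(k+t)=\min(v+\uH_i t,\vmax_i)$ for $t\in[0,1]$, so the position advance over the slot equals $\int_0^1 \min(v+\uH_i t,\vmax_i)\,dt$. By the second monotonicity fact this is minimized at $v=0$, where it takes the value
\begin{equation}
\Delta_i:=\int_0^1 \min(\uH_i\, t,\vmax_i)\,dt,
\end{equation}
which is strictly positive because $\uH_i>0$ and $\vmax_i>0$ (a small case split on whether $\uH_i\leq\vmax_i$ confirms positivity, but the integrand is positive on a set of positive measure in either case). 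Thus every throttle slot advances the position by at least $\Delta_i$, while every slot leaves the position non-decreasing.

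Combining these, if $\ubf_i(k)=\uH_i$ for infinitely many $k$, then the non-decreasing sequence $\phi_{x,i}(k,s_i,\ubf_i)$ receives an increment of at least $\Delta_i$ infinitely often and therefore tends to $+\infty$, contradicting the hypothesis $\phi_{x,i}(k,s_i,\ubf_i)\leq\xobsmax_i$ for all $k$. Hence the throttle command is used on only finitely many slots; taking $k^0$ strictly larger than the index of the last such slot and using that $\ubf_i$ takes values only in $\{\uL_i,\uH_i\}$ yields $\ubf_i(k)=\uL_i$ for all $k\geq k^0$. The main obstacle is the uniform lower bound $\Delta_i$: one must rule out the possibility that successive throttle slots make vanishingly small progress, which is precisely what the monotonicity in the starting velocity (reducing the worst case to $v=0$) and the velocity-saturation analysis resolve; everything else is bookkeeping.
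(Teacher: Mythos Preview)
Your proof is correct and follows essentially the same approach as the paper: both argue by contraposition, extract an infinite sequence of throttle slots, and lower-bound the progress in each such slot by the distance travelled from rest under maximum throttle for one time step (your $\Delta_i$ is exactly the paper's $\phi_{x,i}(1,0,\uHbf_i)$). The only difference is that you spell out in detail, via order-preservation in the initial velocity, why the worst case occurs at $v=0$, whereas the paper simply asserts the bound.
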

\begin{proof}
We reason by contraposition. Consider $i\in\robots$, $s_i\in S_i$ and $\ubf_i\in\controls_i$ taking values in $\{\uL_i,\uH_i\}$. Assume that $\forall k^0\in\NN, \exists k\in\NN, k\geq k^0: \ubf_i(k)=\uH_i$. Then, we can build an infinite sequence of slots $\{k^0_1, k^0_2, ...\}$ where $\ubf_i$ takes value $\uH_i$. At each of them, the increase in $\phi_{x,i}(k,s_i,\ubf_i)$ within the slot is at least of $\phi_{x,i}(1,0,\uHbf_i)>0$. Hence, we have $\lim_{k\to+\infty}\phi_{x,i}(k,s_i,\ubf_i)=+\infty$ and in particular $\phi_{x,i}(k,s_i,\ubf_i) > \xobsmax_i$ for some $k\in\NN$.
\end{proof}

\begin{lemma}
Given $i\in\robots$, for all $s_i\in S_i$, there exists $k^1\in\NN$ such that:
\begin{equation}
\forall k\geq k^1, \phi_{v,i}(k,s_i,\uLbf_i)=0
\end{equation}
\label{lemma:robot-brakes-velocity-vanishes}
\end{lemma}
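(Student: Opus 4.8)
The statement to prove is Lemma~\ref{lemma:robot-brakes-velocity-vanishes}: for a single robot $i$ with any initial state $s_i=(x_i,v_i)\in S_i$, applying the constant maximum brake control $\uLbf_i$ eventually drives the velocity to exactly zero and keeps it there. The plan is to analyze the one-dimensional second-order dynamics of Equations~\eqref{eq-diff-state1-deterministic}--\eqref{eq-diff-state2-deterministic} under the constant control $\ubf_i\equiv\uL_i$, exploiting that $\uL_i<0$ is a strictly negative (braking) acceleration and that the function $\delta$ clamps the velocity to the admissible range $[0,\vmax_i]$.

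First I would unpack what the flow does. Starting from velocity $v_i\geq 0$, while $v_i(t)>0$ the factor $\delta(\uL_i,v_i(t))=1$, so $\dot v_i(t)=\uL_i<0$ and the velocity decreases linearly at constant rate $\uL_i$. Thus for $t$ in the interval before the velocity hits zero we have $\phi_{v,i}(t,s_i,\uLbf_i)=v_i+t\,\uL_i$. Since $\uL_i<0$ and $v_i\leq \vmax_i$ is bounded, this reaches $0$ at the finite time $t^*=v_i/(-\uL_i)=-v_i/\uL_i$. At that moment $v_i(t^*)=0$, and the definition of $\delta$ forces $\delta(\uL_i,0)=0$ because the control is negative and the velocity is at its lower bound; hence $\dot v_i=\uL_i\cdot 0=0$ and the velocity stays pinned at $0$ thereafter. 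This is exactly the clamping behavior described in the text right after Equation~\eqref{eq-diff-state2-deterministic}.

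The remaining work is to turn this continuous-time argument into the discrete-slot conclusion asked for, namely the existence of an integer $k^1\in\NN$ with $\phi_{v,i}(k,s_i,\uLbf_i)=0$ for all $k\geq k^1$. The natural choice is $k^1:=\lceil -v_i/\uL_i\rceil$, the first integer slot at or beyond the continuous stopping time $t^*$. For any integer $k\geq k^1$ we have $k\geq t^*$, so the velocity has already reached zero and been held there, giving $\phi_{v,i}(k,s_i,\uLbf_i)=0$. I would write this out cleanly, noting that $\uLbf_i$ is piecewise constant equal to $\uL_i$ so no slot-boundary subtleties arise, and that monotonicity of the stopping behavior (once at zero under braking, forever at zero) is immediate from $\delta$ vanishing at $(v=0,\,u<0)$.

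I do not anticipate a genuine obstacle here: the lemma is a routine consequence of constant-deceleration kinematics with a velocity floor. The only points requiring care are (i) invoking the precise definition of $\delta$ to justify that the velocity neither goes negative nor leaves zero once it arrives, and (ii) being explicit that the finite stopping time $t^*$ is bounded by $\vmax_i/(-\uL_i)$ uniformly, so that $k^1$ is well-defined and in fact can be bounded independently of the particular $s_i$ if a uniform statement were ever needed. Everything else is direct substitution into the flow.
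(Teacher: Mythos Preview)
Your proof is correct and follows essentially the same idea as the paper: constant deceleration $\uL_i<0$ drives the velocity linearly to zero in finite time, after which the clamping $\delta$ holds it there. The paper phrases this as a per-slot dichotomy (either the velocity vanishes within the slot or it drops by $\uL_i$), while you compute the continuous stopping time $t^*=-v_i/\uL_i$ explicitly and take $k^1=\lceil t^*\rceil$; the difference is purely cosmetic.
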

\begin{proof}
Take $i\in\robots$, $s_i\in S_i$ and consider the trajectory of robot $i$ under control $\uLbf_i$. At each time slot, there are two options:
\begin{itemize}
\item the velocity of robot $i$ vanishes within the slot;
\item the variation of the velocity of robot $i$ within the slot is $\uL_i<0$.
\end{itemize}
In conclusion, the velocity of robot $i$ vanishes in finite time.
\end{proof}

\begin{lemma}
Given a priority graph $G\in\graphs$ satisfying Condition~\eqref{eq:condition-liveness-control-law} and a state $s\in S$, for all $i\in\robots$ satisfying $\pi_{v,i}(s)=0$, we have:
\begin{equation}
g_i^G(s)=\begin{cases}
\uL_i & \text{if } \exists (j,i)\in E(G_{|\pi_x(s),\rho_G}), \exists t\geq 0 \text{ s.t. } \phi_x(t,s,\tilde{\ubf}^i) \in \chiobs_{j\succ i} \\
\uH_i & \text{ else.}
\end{cases}
\label{eq-control-map-with-local-priority-graph}
\end{equation}
\label{lemma:liveness-control-law}
\end{lemma}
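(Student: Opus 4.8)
The statement asserts that for a robot $i$ at rest ($\pi_{v,i}(s)=0$), the test defining $g_i^G(s)$ in \eqref{eq-control-map} — which ranges over all edges $(j,i)\in E(G)$ — gives the same answer if one only ranges over the local edges $(j,i)\in E(G_{|\pi_x(s),\rho_G})$. Since $E(G_{|\pi_x(s),\rho_G})\subseteq E(G)$, one inclusion is immediate: if some local edge produces a collision of the worst-case flow $\phi_x(\cdot,s,\tilde{\ubf}^i)$ with $\chiobs_{j\succ i}$, the same edge witnesses that collision in the full graph, so both formulas return $\uL_i$. The plan is therefore to prove the converse: every edge $(j,i)\in E(G)$ whose worst-case flow meets $\chiobs_{j\succ i}$ is in fact a local edge, i.e. $\pi_x(s)\in\chiobs_{j\succ i}+[-\rho_G,\rho_G]^n$.

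Write $x:=\pi_x(s)$ and fix such an edge $(j,i)$, with $x^t:=\phi_x(t,s,\tilde{\ubf}^i)\in\chiobs_{j\succ i}$ for some $t\geq 0$. First I would control the two relevant coordinates of $x^t$. In the $i$-direction, robot $i$ runs the impulse control $\ubf_i^\imp$ from zero velocity, so by translation invariance of the flow in position, $x^t_i=x_i+\phi_{x,i}(t,0,\ubf_i^\imp)$; this displacement is nonnegative (the velocity stays $\geq 0$) and bounded above by $\rho_G$ thanks to Condition~\eqref{eq:condition-liveness-control-law}, giving $x_i\leq x^t_i\leq x_i+\rho_G$. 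In every other direction $k\neq i$, robot $k$ brakes under $\uLbf_k$, so its position is non-decreasing and $x^t_k\geq x_k$; in particular $x^t_j\geq x_j$.

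The crux is then purely geometric. The completed cylinder $\chiobs_{j\succ i}$ extends infinitely in the $-\e_j$ and $+\e_i$ directions (Property~\ref{property:geometric-invariance}), so the possibly-large forward displacement of robot $j$ costs nothing: decreasing the $j$-coordinate of $x^t$ back down to $x_j$ keeps us inside $\chiobs_{j\succ i}$. Concretely, since $\chiobs_{j\succ i}$ is a cylinder in the $(\e_i,\e_j)$-plane, I would build the witness $y$ with $y_j:=x_j$, $y_i:=x^t_i$, and $y_k:=x_k$ for $k\neq i,j$; membership $y\in\chiobs_{j\succ i}$ follows from $(x^t_j,x^t_i)\in\kappa_{j\succ i}$ together with the invariance of $\kappa_{j\succ i}$ under decreasing its $j$-entry. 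Then $x$ and $y$ differ only in the $i$-coordinate, by $x^t_i-x_i\leq\rho_G$, so $\Vert x-y\Vert_\infty\leq\rho_G$ and hence $x\in\chiobs_{j\succ i}+[-\rho_G,\rho_G]^n$, i.e. $(j,i)\in E(G_{|x,\rho_G})$. This closes the converse inclusion, so the two case-conditions coincide and the two formulas for $g_i^G(s)$ agree.

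The main obstacle to anticipate is exactly the asymmetry between the two coordinate directions: a naive attempt to bound the whole flow displacement would fail, because robot $j$'s braking distance is not controlled by $\rho_G$ and can be arbitrarily large. The zero-velocity hypothesis on robot $i$ is essential precisely because Condition~\eqref{eq:condition-liveness-control-law} only bounds the impulse displacement starting from rest, and it is only the $i$-direction displacement that must be small; the $j$-direction displacement is absorbed for free by the directional invariance of the completed cylinder.
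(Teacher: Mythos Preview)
Your proof is correct and follows essentially the same approach as the paper's. The paper argues by contraposition (assume $(j,i)\notin E(G_{|x,\rho_G})$, deduce $x+\rho_G\e_i\in\chifree_{j\succ i}$, and use the invariance of $\chifree_{j\succ i}$ to show the flow never hits $\chiobs_{j\succ i}$), whereas you argue directly by building a witness in $\chiobs_{j\succ i}$ close to $x$; both routes hinge on exactly the same ingredients: the zero-velocity hypothesis combined with Condition~\eqref{eq:condition-liveness-control-law} to bound the $i$-displacement by $\rho_G$, and the directional invariance of the completed cylinder (Property~\ref{property:geometric-invariance}) to absorb the unbounded $j$-displacement.
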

\begin{proof}
Take a priority graph $G\in\graphs$ satisfying Condition~\eqref{eq:condition-liveness-control-law}, a state $s=(x,v)\in S$ and $i\in\robots$ satisfying $\pi_{v,i}(s)=0$. By construction of $g^G$, we have:
\begin{equation}
g_i^G(s):=\begin{cases}
\uL_i & \text{if } \exists (j,i)\in E(G), \exists t\geq  0 \text{ s.t. } \phi_x(t,s,\tilde{\ubf}^i) \in \chiobs_{j\succ i} \\
\uH_i & \text{ else.}
\end{cases}
\end{equation}
Hence, we have to prove that for all $(j,i)\in E(G)$, if $\phi_x(t,s,\tilde{\ubf}^i) \in \chiobs_{j\succ i}$ for some $t\geq 0$, then we have $(j,i)\in E(G_{|x,\rho_G})$. We will prove this by contraposition.

Take $(j,i)\in E(G)$ and assume that $(j,i)\notin E(G_{|x,\rho_G})$, then we have:
\begin{equation}
x+\rho_G \e_i \in\chifree_{j \succ i}
\label{eq:x-plus-epsilon-in-chifree}
\end{equation}
By assumption, we have $\pi_{v,i}(s)=0$. Hence, we obtain for all $t\geq 0$:
\begin{equation}
\phi_i(t,s,\tilde{u}^i)=s_i+\phi_i(t,0,\tilde{u}^i)
\end{equation}
Using Condition~\eqref{eq:condition-liveness-control-law} and as $t \mapsto \phi_{x,j}(t,s,\tilde{u}^i)$ is non-decreasing, we obtain:
\begin{eqnarray}
\phi_{x,i}(t,s,\tilde{u}^i) &\leq& x_i+ \rho_G\\
\phi_{x,j}(t,s,\tilde{u}^i) &\geq& x_j
\end{eqnarray}
By Property~\ref{property:geometric-invariance}, Equation~\eqref{eq:x-plus-epsilon-in-chifree} implies that $\phi_x(t,s,\tilde{u}^i) \in\chifree_{j\succ i}$ for all $t\geq 0$. 
\end{proof}

\begin{lemma}
Given $s\in S$, for all $i,j\in\robots$, for all $\ubf\in\controls$, the following implication holds:
\begin{equation}
\left[\exists t\geq 0: \phi_x(t,s,\ubf) \in \chiobs_{j\succ i}\right]  \Rightarrow \left[\pi_{x,j}(s) < \xobsmax_j\right]
\end{equation}
\label{lemma:flow-intersection-non-empty-implication}
\end{lemma}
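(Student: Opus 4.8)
The plan is to combine two elementary facts: the monotonicity of the position flow and a purely geometric containment property of the completed cylinder $\chiobs_{j\succ i}$ along the $j$-axis. Neither step is hard; the whole difficulty is just in reading off the correct orientation of the Minkowski translations defining $\chiobs_{j\succ i}$.

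First I would record the monotonicity. Whatever the control $\ubf\in\controls$, the velocity coordinate of each robot is constrained to $[0,\vmax_j]$ by the clamping function $\delta$ in the dynamics, so $\dot{x}_j = v_j \geq 0$ holds at all times. Consequently $t\mapsto\phi_{x,j}(t,s,\ubf)$ is non-decreasing, and in particular
\[
\phi_{x,j}(t,s,\ubf) \geq \phi_{x,j}(0,s,\ubf) = \pi_{x,j}(s)
\quad\text{for every } t\geq 0 .
\]
Second I would unfold the definition of the completed cylinder. Swapping the roles of $i$ and $j$ in~\eqref{eq-fixed-priority-collision-cylinder} and using $\chiobs_{ji}=\chiobs_{ij}$ yields $\chiobs_{j\succ i} = \chiobs_{ij} - \RR_+\e_j + \RR_+\e_i$. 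Hence any $y\in\chiobs_{j\succ i}$ can be written $y = x - \alpha\e_j + \beta\e_i$ with $x\in\chiobs_{ij}$ and $\alpha,\beta\geq 0$, so that $y_j = x_j - \alpha \leq x_j$. The boundedness property of the obstacle region (which gives $x_j < \xobsmax_j$ for every $x\in\chiobs_{ij}$) then forces $y_j < \xobsmax_j$ for \emph{every} point $y$ of the completed cylinder.

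Finally I would apply the second observation to $y=\phi_x(t,s,\ubf)$ at the time $t$ where, by hypothesis, the flow enters $\chiobs_{j\succ i}$: this gives $\phi_{x,j}(t,s,\ubf) < \xobsmax_j$. Chaining with the monotonicity bound produces $\pi_{x,j}(s) \leq \phi_{x,j}(t,s,\ubf) < \xobsmax_j$, which is exactly the desired implication. The only point that demands care — and it is barely an obstacle — is confirming that the completion $-\RR_+\e_j$ is indeed \emph{downward} in the $j$-coordinate, so that the strict upper bound $\xobsmax_j$ inherited from $\chiobs_{ij}$ survives the translation; the $+\RR_+\e_i$ term only moves points along axis $i$ and is irrelevant to the $j$-component estimate.
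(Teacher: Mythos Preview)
Your proof is correct and follows essentially the same approach as the paper: monotonicity of the position flow gives $\pi_{x,j}(s)\leq\phi_{x,j}(t,s,\ubf)$, and membership in $\chiobs_{j\succ i}$ forces $\phi_{x,j}(t,s,\ubf)<\xobsmax_j$. The only cosmetic difference is that the paper invokes the bound $\sup\{x_j:x\in\chiobs_{j\succ i}\}\leq\xobsmax_j$ directly, whereas you unfold the Minkowski-sum definition explicitly to justify it; both arrive at the same chain of inequalities.
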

\begin{proof}
Take $s\in S$, $i,j\in\robots$ and $\ubf\in\controls$. The flow $t\mapsto \phi_x(t,s,\ubf)$ is non-decreasing. Hence, we have:
\begin{equation}
\phi_{x,j}(t,s,\ubf) \geq \phi_{x,j}(0,s,\ubf) = \pi_{x,j}(s)
\end{equation}
Assume that $\phi_x(t,s,\ubf) \in \chiobs_{j\succ i} $ for some $t\geq 0$. It implies that $\phi_{x,j}(t,s,\ubf) < \sup\{x_j:x\in\chiobs_{j\succ i}\} \leq \xobsmax_j$. By transitivity, we obtain $\pi_{x,j}(s) < \xobsmax_j$.
\end{proof}

\begin{proof}[Proof of Theorem~\ref{thm:liveness-control-law-app}]
Take a priority graph $G\in\graphs$ satisfying Condition~\eqref{eq:condition-liveness-control-law}. Remember that the local priority graph with radius $\rho_G$ is acyclic at all configuration by Lemma~\ref{lemma:sufficient-condition-locally-acyclic-priority-graph-corpus}. Consider the set $I(t)$ defined as follows:
\begin{equation}
\forall t\in\RR_+, I(t):=\left\{i\in\robots: \phi_{x,i}(t,s,g^G) < \xobsmax_i \right\}
\end{equation}
We prove Theorem~\ref{thm:liveness-control-law-app} by contradiction. Theorem~\ref{thm:liveness-control-law-app} is equivalent to the existence of $T>0$ such that:
\begin{equation}
I(T)=\emptyset
\end{equation}
Assume that for all $t \geq 0$, $I(t)\neq\emptyset$. Then, consider $\Imin$ defined as follows:
\begin{equation}
\Imin := \cap_{t\geq 0} I(t) \neq \emptyset
\end{equation}
We have for all $i\in\Imin$, 
\begin{equation}
\forall k\in\NN, \phi_{x,i}(k,s,g^G) < \xobsmax_i
\end{equation}
By Lemma~\ref{lemma:robot-brakes-indefinitely-if-no-exit}, a necessary condition for the above equation to be satisfied is that for all $i\in\Imin$, there exists $k^0_i\in\NN$ such that:
\begin{equation}
\forall k\geq k_i^0, g_i^G(\phi(k,s,g ^G))=\uL_i
\end{equation}
Hence, for $k^0:=\max_{i\in\Imin} k^0_i$, we have:
\begin{equation}
\forall i\in\Imin, \forall k\geq k^0, g_i^G(\phi(k,s,g ^G))=\uL_i
\end{equation}
By Lemma~\ref{lemma:robot-brakes-velocity-vanishes}, the velocity of all robots $i\in\Imin$ will vanish in finite time. There exists $k^1\geq k^0$ such that:
\begin{eqnarray}
\forall k\geq k^1, g_i^G(\phi(k,s,g ^G))&=&\uL_i\\
\phi_{v,i}(k,s,g^G)&=&0
\end{eqnarray}
Take $k \geq k^1$ big enough, so that $I(k)=\Imin$. By Lemma~\ref{lemma:liveness-control-law}, for all $i\in\Imin$, there exists $(j,i)\in E(G_{|\phi_x(k,s,g^G),\rho_G})$ such that $\phi_x(\RR_+,\phi(k,s,g^G),\tilde{\ubf}^i)\cap \chiobs_{j\succ i}\neq \emptyset$ and we have necessarily $\phi_{x,j}(k,s,g^G) <\xobsmax_j$ by Lemma~\ref{lemma:flow-intersection-non-empty-implication}, i.e., $j\in I(k)=\Imin$. As a result, $G_{|\phi_x(k,s,g^G),\rho_G}$ should be cyclic (there would be a cycle involving vertices in $\Imin$). It is absurd.
\end{proof}

\section{Under uncertainty in control and sensing}
\label{sec:extension-uncertainty}
\label{app:liveness-control-law-uncertainty}

Again, the control law $\hat g^G$ under the scenario of bounded uncertainty in control and sensing, as proposed in Chapter~\ref{chap:control-uncertainty} is well defined, safe and robust even if the priority graph $G$ is not feasible. However, if the priority graph is feasible but contains cycles, we need to make an additional assumption in order to prove liveness. First of all, we assume that the observation signal $\ybf$ is such that for all $i\in\robots$ and $k\in\NN$, $\ybf_i^x(k)$ is an interval of length $\sigma_i^x$. We say $\sigma^x$ is the observation precision on the position. The condition to ensure liveness is:
\begin{equation}
\max_{i\in\robots} \left(\sigma_i^x+ \sup_{t\in\RR_+} \phi_{x,i}(t,0,\ubf_i^\imp, \dbfmax_i)\right) \leq \rho_G
\label{eq:condition-liveness-control-law-uncertainty}
\end{equation}
Basically, under control in acceleration with uncertainty, $\sup_{t\in\RR_+} \phi_{x,i}(t,0,\ubf_i^\imp, \dbfmax_i)$ represents the control resolution as it is the maximum distance traveled by a robot starting from stop and applying an impulse control under maximal disturbance considering all robots. However, compared to Equation~\eqref{eq:condition-liveness-control-law}, a new term appears: $\sigma_i^x$, the precision on the position. Hence, Equation~\eqref{eq:condition-liveness-control-law-uncertainty} states that the sum of the control resolution and the observation precision on the position needs to be lower than the safety margin. It is not surprising that the precision on the position appears in the liveness condition. Consider the scenario of Figure~\ref{fig:non-feasible-because-of-uncertainty} where priorities are cyclic. As the uncertainty in position is large, it is very similar to as if robots were much larger than they are. Hence, one can easily see that a deadlock can possibly occur.
\begin{figure}[!htbp]
\begin{center}
\includegraphics[width=0.54\linewidth]{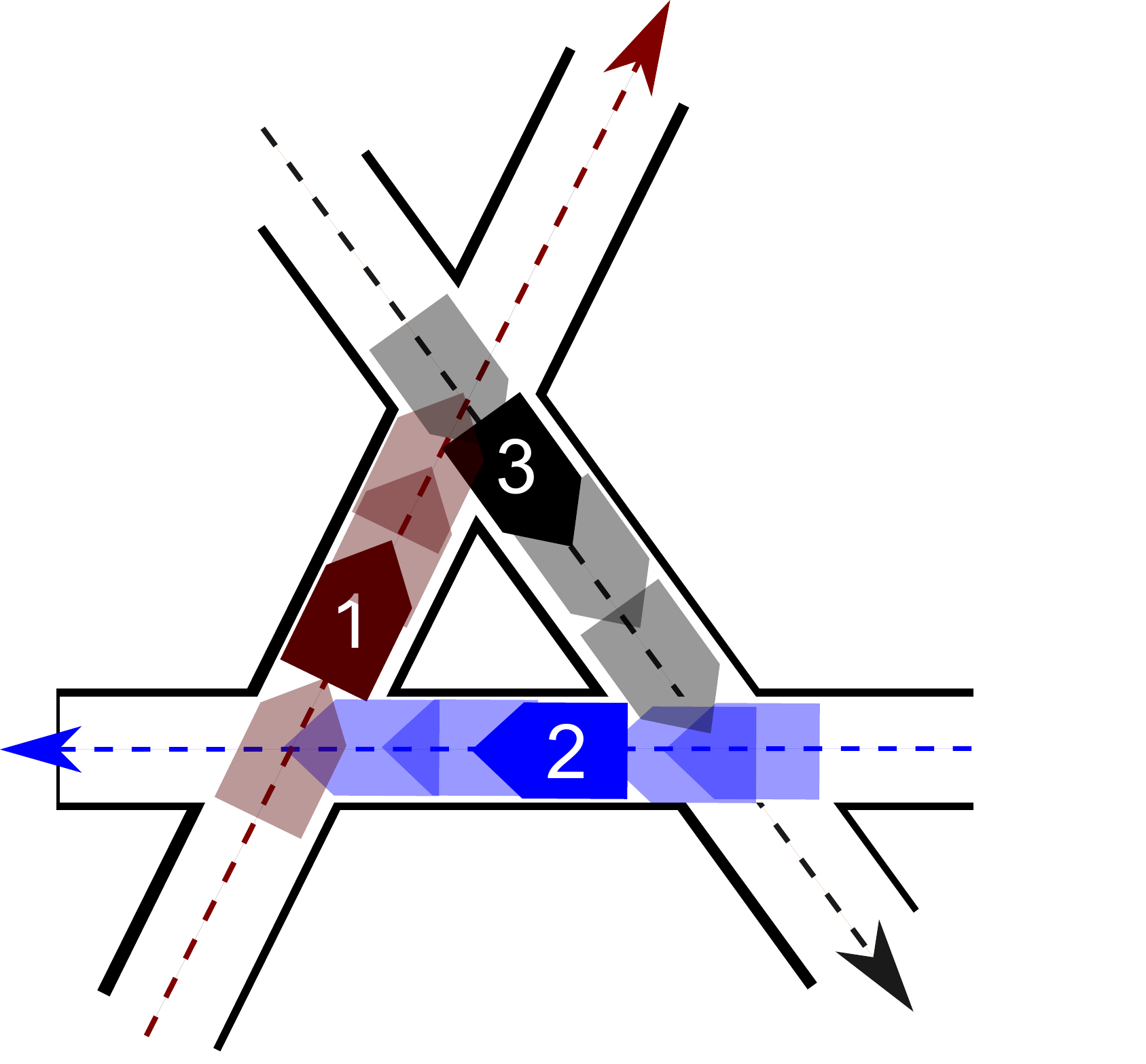}\hfill
\end{center}
\caption{A three-robot scenario where Condition~\eqref{eq:condition-liveness-control-law-uncertainty} is not respected. Even if the priority graph is feasible, due to a large uncertainty on position, liveness cannot be guaranteed and robots following the control law will be possibly stuck in a deadlock configuration.}
\label{fig:non-feasible-because-of-uncertainty}
\end{figure}

Liveness under Condition~\eqref{eq:condition-liveness-control-law-uncertainty} is proved in the sequel.

\begin{theorem}[Liveness]
Given a feasible priority graph $G$ with a strictly positive feasibility margin $\rho_G$ satisfying Condition~\eqref{eq:condition-liveness-control-law-uncertainty}, an initial brake safe non-deterministic state $\hat s\in \hat B_G$ and an observation signal $\ybf\in\Ybf$, there exists $T>0$ such that:
\begin{equation}
\hat \phi_x(T,\hat s,\hat g^G, \ybf)\in 2^{\chigoal}
\end{equation}
\label{thm:liveness-control-law-uncertainty-app}
\end{theorem}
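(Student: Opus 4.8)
The plan is to mirror the deterministic liveness argument of Theorem~\ref{thm:liveness-control-law-app} (the acceleration case without uncertainty), replacing deterministic flows with worst-case non-deterministic flows and reusing the four supporting lemmas after adapting them to the uncertain dynamics. The key structural observation, exactly as before, is that Condition~\eqref{eq:condition-liveness-control-law-uncertainty} guarantees via Lemma~\ref{lemma:sufficient-condition-locally-acyclic-priority-graph-corpus} that the local priority graph of radius $\rho_G$ is acyclic at every configuration, so that a purely graph-theoretic contradiction can be derived if all robots were to stall forever.

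First I would set up the argument by contradiction. Define the set of non-exited robots at time $t$ by looking at the worst (smallest) possible position component of the non-deterministic information state:
\begin{equation}
I(t):=\left\{i\in\robots: \inf \pi_{x,i}\left(\hat\phi(t,\hat s,\hat g^G,\ybf)\right) < \xobsmax_i\right\}
\end{equation}
Since $\hat\phi_x$ is non-decreasing in $t$ by order-preservation, $I(t)$ is non-increasing, and the theorem is equivalent to $I(T)=\emptyset$ for some $T$. Assuming otherwise, the intersection $\Imin:=\cap_{t\ge 0} I(t)$ is non-empty. For each $i\in\Imin$ the worst-case position never exits, so a non-deterministic analogue of Lemma~\ref{lemma:robot-brakes-indefinitely-if-no-exit} forces $\hat g_i^G$ to eventually return $\uL_i$ permanently (otherwise infinitely many throttle slots, even under minimal disturbance $\dbfmin$ which still yields net forward motion by assumption~\eqref{eq:assumption-uncertainty-acceleration}, would push the worst-case position past $\xobsmax_i$). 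Then a uncertainty-version of Lemma~\ref{lemma:robot-brakes-velocity-vanishes} shows that under $\uLbf$ with any disturbance the velocity of each such robot vanishes in finite time, using assumption~\eqref{eq:assumption-uncertainty-brake}. Hence past some slot $k^1$ every $i\in\Imin$ has permanently $\hat g_i^G=\uL_i$ and zero worst-case velocity.

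Next I would establish the uncertain analogue of Lemma~\ref{lemma:liveness-control-law}: when robot $i$ has vanished worst-case velocity, Condition~\eqref{eq:condition-liveness-control-law-uncertainty} lets one rewrite the control law $\hat g_i^G$ using the \emph{local} priority graph $G_{|\hat\phi_x,\rho_G}$ rather than the full graph $G$. The computation is identical in spirit to the deterministic proof, but the extra slack $\sigma_i^x$ absorbs the positional uncertainty: if $(j,i)\notin E(G_{|\cdot,\rho_G})$ then $\sup\hat s_i+\rho_G\e_i\in\chifree_{j\succ i}$, and the bound $\sigma_i^x+\sup_t\phi_{x,i}(t,0,\ubf_i^\imp,\dbfmax_i)\le\rho_G$ guarantees the worst-case impulse flow stays below this margin, so by Property~\ref{property:geometric-invariance} no intersection with $\chiobs_{j\succ i}$ occurs. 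Combined with the non-deterministic analogue of Lemma~\ref{lemma:flow-intersection-non-empty-implication} (an edge $(j,i)$ active in the local graph forces the worst-case position of $j$ to be below $\xobsmax_j$, i.e.\ $j\in\Imin$), I conclude that for $k\ge k^1$ large enough so that $I(k)=\Imin$, every vertex of $\Imin$ has an in-edge from another vertex of $\Imin$ in the acyclic local graph $G_{|\hat\phi_x(k,\hat s,\hat g^G,\ybf),\rho_G}$. That is impossible in a directed acyclic graph, giving the contradiction.

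The main obstacle I anticipate is handling the two compounding sources of pessimism correctly in the worst-case flow. Unlike the deterministic case, the quantity that must be tracked is the extremal corner $\widetilde{\sup}^i$ of a parallelepipedic information state propagated under the adversarial disturbance $\tilde\dbf^i$, and one must verify that the braking/vanishing-velocity arguments of Lemmas~\ref{lemma:robot-brakes-indefinitely-if-no-exit} and~\ref{lemma:robot-brakes-velocity-vanishes} survive for \emph{all} admissible $\dbf\in\dsignals$ simultaneously, which is where assumptions~\eqref{eq:assumption-uncertainty-velocity}--\eqref{eq:assumption-uncertainty-brake} are essential. The delicate bookkeeping is ensuring that the margin budget in Condition~\eqref{eq:condition-liveness-control-law-uncertainty} is allocated so that both the control resolution term and the observation-precision term $\sigma_i^x$ are simultaneously dominated by $\rho_G$; once that single inequality is threaded through the rewrite of $\hat g_i^G$ in terms of the local graph, the remaining argument is a routine adaptation of the acyclicity contradiction already used for Theorem~\ref{thm:liveness-control-law-app}.
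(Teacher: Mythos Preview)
Your proposal is correct and follows essentially the same route as the paper: a contradiction argument on the set $\Imin$ of never-exiting robots, driven by four uncertainty-adapted lemmas (eventual permanent braking, velocity vanishing, rewriting $\hat g_i^G$ via the local priority graph, and forcing the predecessor $j$ into $\Imin$), ending with the impossibility of a cycle in the acyclic local graph $G_{|\cdot,\rho_G}$. The only point to tighten is the anchor of the local priority graph: the paper centers it at $\pi_x(\inf\hat s)$ (not at $\sup\hat s$ or at the set $\hat\phi_x$), which is what makes the $\sigma_i^x$ term appear when bounding $\pi_{x,i}(\supi(\hat s))\le\pi_{x,i}(\inf\hat s)+\sigma_i^x$ and, crucially, gives a single common configuration at which all the edges $(j,i)$ for $i\in\Imin$ live, so that the cycle contradiction applies to one acyclic graph rather than to several graphs centered at different points.
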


We start with some lemmas.

\begin{lemma}
Given $i\in\robots$, $\hat s_i\in 2^{S_i}$, $\ybf_i\in\Ybf_i$ and $\ubf_i\in\controls_i$ taking values in $\{\uL_i,\uH_i\}$, the following implication holds:
\begin{equation}
\left[\forall k\in\NN, \inf \hat \phi_{x,i}(k,\hat s_i,\ubf_i,\ybf_i) \leq \xobsmax_i\right] \Rightarrow \left[\exists k^0\in\NN: \forall k\geq k^0, \ubf_i(k)=\uL_i\right]
\end{equation}
\label{lemma:robot-brakes-indefinitely-if-no-exit-uncertainty}
\end{lemma}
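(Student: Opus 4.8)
The plan is to mirror the contraposition argument already used for Lemma~\ref{lemma:robot-brakes-indefinitely-if-no-exit}, the only genuinely new ingredient being the passage from the set-valued non-deterministic flow to a single worst-case deterministic trajectory. So I would assume that $\uH_i$ is applied infinitely often, i.e. $\forall k^0\in\NN,\ \exists k\geq k^0:\ \ubf_i(k)=\uH_i$, and aim to show that $\inf\hat\phi_{x,i}(k,\hat s_i,\ubf_i,\ybf_i)$ eventually exceeds $\xobsmax_i$, which contradicts the hypothesis.

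First I would dispose of the observations. Writing $\mathbf{S}$ for the trivial observation signal ($\ybf(t)\equiv S$), an easy induction on $k$ gives $\hat\phi(k,\hat s_i,\ubf_i,\ybf_i)\subseteq\hat\phi(k,\hat s_i,\ubf_i,\mathbf{S})$: the propagation step is inclusion-monotone (flows are collected over a smaller set of initial states), and the subsequent intersection with $\ybf(k+1)$ can only shrink the set, whereas intersecting with $S$ does nothing. Since a smaller set has a larger infimum (with the harmless convention $\inf\emptyset=+\infty$), this yields $\inf\hat\phi_{x,i}(k,\hat s_i,\ubf_i,\ybf_i)\geq\inf\hat\phi_{x,i}(k,\hat s_i,\ubf_i,\mathbf{S})$. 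It therefore suffices to prove that the right-hand side diverges, and there observations play no role.

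Next, using that the flow $t\mapsto\phi_i(t,s_i,\ubf_i,\dbf_i)$ is order-preserving in $s_i$ and $\dbf_i$, the infimum of the position over the propagated information state is attained at the lowest initial state and the lowest disturbance, so $\inf\hat\phi_{x,i}(k,\hat s_i,\ubf_i,\mathbf{S})=\phi_{x,i}(k,\inf\hat s_i,\ubf_i,\dbfmin_i)$, where $\dbfmin_i$ denotes the constant signal at the lower disturbance bound. On this single deterministic trajectory I would run the argument of Lemma~\ref{lemma:robot-brakes-indefinitely-if-no-exit} almost verbatim: at every slot where $\ubf_i$ takes the value $\uH_i$, order preservation in the initial velocity bounds the position increment below by the rest-start increment $\phi_{x,i}(1,(0,0),\uHbf_i,\dbfmin_i)$, which is strictly positive because maximum throttle dominates the worst disturbance, $\uH_i+\dmin_i^u>0$, by assumption~\eqref{eq:assumption-uncertainty-acceleration}. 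Summing over the infinitely many such slots forces $\phi_{x,i}(k,\inf\hat s_i,\ubf_i,\dbfmin_i)\to+\infty$; hence it exceeds $\xobsmax_i$ for some $k$, and by the inequality from the previous paragraph so does $\inf\hat\phi_{x,i}(k,\hat s_i,\ubf_i,\ybf_i)$, which closes the contraposition.

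The main obstacle is the reduction in the second paragraph: verifying the inclusion in the correct direction and the resulting inequality on infima, and being confident that intersecting with observations only \emph{raises} the worst-case position rather than lowering it. Once this is settled, the kinodynamic estimate is identical to the inertial case and rests solely on the standing assumption~\eqref{eq:assumption-uncertainty-acceleration} (together with~\eqref{eq:assumption-uncertainty-velocity}, which keeps the velocity term well behaved during the rest-start slot).
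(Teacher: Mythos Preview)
Your argument is correct and follows the same contraposition strategy as the paper: assume $\uH_i$ is applied infinitely often and show that $\inf\hat\phi_{x,i}$ diverges. The paper's proof is more terse---it directly asserts that at each throttle slot the increment of $\inf\hat\phi_{x,i}(k,\hat s_i,\ubf_i,\ybf_i)$ is at least $\phi_{x,i}(1,0,\uHbf_i,\dbfmin_i)>0$ (invoking Assumptions~\eqref{eq:assumption-uncertainty-velocity} and~\eqref{eq:assumption-uncertainty-acceleration}) without isolating the two reductions you perform---whereas you make explicit both the removal of observations (via the inclusion $\hat\phi(\cdot,\cdot,\cdot,\ybf_i)\subseteq\hat\phi(\cdot,\cdot,\cdot,\mathbf{S})$ and the resulting reversed inequality on infima) and the identification of the worst-case deterministic trajectory at $(\inf\hat s_i,\dbfmin_i)$. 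Your extra care is exactly what justifies the paper's one-line increment bound, so the two proofs coincide in substance.
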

\begin{proof}
We reason by contraposition. Consider $i\in\robots$, $s_i\in S_i$, $\ybf_i\in\Ybf_i$ and $\ubf_i\in\controls_i$ taking values in $\{\uL_i,\uH_i\}$. Assume that $\forall k^0\in\NN, \exists k\in\NN, k\geq k^0: \ubf_i(k)=\uH_i$. Then, we can build an infinite sequence of slots $\{k^0_1, k^0_2, ...\}$ where $\ubf_i$ takes value $\uH_i$. At each of them, the increase in $\inf \hat \phi_{x,i}(k,\hat s_i,\ubf_i, \ybf_i)$ within the slot is at least of $\phi_{x,i}(1,0,\uHbf_i,\dbfmin_i)>0$ (see Assumptions~\eqref{eq:assumption-uncertainty-velocity} and~\eqref{eq:assumption-uncertainty-acceleration}). Hence, we have $\lim_{k\to+\infty}\inf \hat \phi_{x,i}(k,\hat s_i,\ubf_i, \ybf_i)=+\infty$ and in particular $\inf \hat \phi_{x,i}(k,\hat s_i,\ubf_i, \ybf_i) > \xobsmax_i$ for some $k\in\NN$.
\end{proof}

\begin{lemma}
Given $i\in\robots$, for all $\hat s_i\in 2^{S_i}$ and $\ybf_i\in\Ybf_i$, there exists $k^1\in\NN$ such that:
\begin{equation}
\forall k\geq k^1, \sup \hat \phi_{v,i}(k,\hat s_i,\uLbf_i, \ybf_i)=0
\end{equation}
\label{lemma:robot-brakes-velocity-vanishes-uncertainty}
\end{lemma}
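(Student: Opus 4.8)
The final statement to prove is Lemma~\ref{lemma:robot-brakes-velocity-vanishes-uncertainty}: for a fixed robot $i$, any initial non-deterministic state $\hat s_i$ and any observation signal $\ybf_i$, the worst-case (supremum) velocity under constant maximum brake control eventually reaches and stays at zero. The plan is to mirror the deterministic argument of Lemma~\ref{lemma:robot-brakes-velocity-vanishes}, but carried out on the \emph{upper} boundary of the non-deterministic information state, since the supremum of the velocity is the quantity of interest. The essential tool is order-preservation of the flow together with Assumption~\eqref{eq:assumption-uncertainty-brake}, which guarantees that even under the worst-case disturbance $\dbfmax_i$, applying maximum brake $\uL_i$ yields a strictly negative acceleration $\uL_i+\dmax_i^u<0$ whenever the velocity is still positive.

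First I would reduce the statement about the whole set $\hat\phi_{v,i}(k,\hat s_i,\uLbf_i,\ybf_i)$ to a statement about a single worst-case trajectory. By order-preservation (the stated Property of order preservation for the noisy flow), the supremum of the velocity at each time is attained by taking the largest initial velocity $\sup\pi_{v,i}(\hat s_i)$ together with the worst-case disturbance $\dbfmax_i$. One must also account for the observation intersections at integer times (see Equation~\eqref{eq:non-deterministic-flow-intersection-with-observation}): intersecting with $\ybf_i(k)$ can only shrink the information state, hence can only decrease the supremum of the velocity. Therefore it suffices to bound $\phi_{v,i}(t,\bar s_i,\uLbf_i,\dbfmax_i)$ from above, where $\bar s_i$ is the single state with position $\sup\pi_{x,i}(\hat s_i)$ and velocity $\sup\pi_{v,i}(\hat s_i)$ — a purely deterministic flow under fixed control $\uL_i$ and fixed disturbance $\dbfmax_i$.

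Second, for this single deterministic trajectory I would replicate the two-case slot analysis of Lemma~\ref{lemma:robot-brakes-velocity-vanishes}. In each time slot $[k,k+1)$, the governing equation~\eqref{eq-diff-state2-non-deterministic} with constant control $\uL_i$ and constant disturbance $\dmax_i^u$ gives acceleration $(\uL_i+\dmax_i^u)\,\delta(\cdot)$. By Assumption~\eqref{eq:assumption-uncertainty-brake}, $\uL_i+\dmax_i^u<0$, so as long as the velocity is strictly positive the clamping factor $\delta$ equals $1$ and the velocity decreases at the fixed strictly negative rate $\uL_i+\dmax_i^u$; otherwise the velocity has already reached $0$ and is held there by $\delta$. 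Since each slot either drives the velocity to $0$ within the slot or decreases it by the fixed positive amount $|\uL_i+\dmax_i^u|$, the velocity must vanish after at most $\lceil \sup\pi_{v,i}(\hat s_i)/|\uL_i+\dmax_i^u|\rceil$ slots, and thereafter stays at $0$ by the clamping. This yields the desired $k^1$.

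The main obstacle I anticipate is handling the observation updates cleanly: one must verify that re-intersecting with $\ybf_i(k)$ at each integer time does not reintroduce a positive worst-case velocity after the free (unobserved) flow has driven it down. This follows because $\hat\phi_{v,i}(k,\hat s_i,\uLbf_i,\ybf_i)$ is always a subset of the flow computed without the observation intersection, so its supremum is dominated by the deterministic worst-case trajectory analysed above; once that trajectory is pinned at $0$, the observed supremum is squeezed between $0$ and the dominating value and hence is also $0$ (using that velocities are non-negative by the state space $S_i=\RR\times[0,\vmax_i]$). A minor care point is ensuring the bound $\sup\pi_{v,i}(\hat s_i)\le\vmax_i$ so that $k^1$ is finite uniformly, which holds by definition of the state space. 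Once these monotonicity-under-intersection observations are made explicit, the rest is the elementary finite-slot counting argument already used in the deterministic case.
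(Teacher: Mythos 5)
Your proof is correct and follows essentially the same route as the paper: the paper's argument is exactly the two-case slot analysis you describe, showing that in each slot $\sup \hat\phi_{v,i}(t,\hat s_i,\uLbf_i,\ybf_i)$ either vanishes or decreases by at least $\vert\uL_i+\dmax_i^u\vert>0$ by Assumption~\eqref{eq:assumption-uncertainty-brake}, hence vanishes in finite time. Your additional steps --- the order-preservation reduction to a single dominating worst-case trajectory from $(\sup\pi_{x,i}(\hat s_i),\sup\pi_{v,i}(\hat s_i))$ under $\dbfmax_i$, and the check that the observation intersections of Equation~\eqref{eq:non-deterministic-flow-intersection-with-observation} only shrink the information state --- simply make explicit what the paper leaves implicit.
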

\begin{proof}
Take a priority graph $G\in\graphs$ satisfying Condition~\eqref{eq:condition-liveness-control-law-uncertainty}, a robot $i\in\robots$, a non-deterministic state $\hat s_i\in 2^{S_i}$ and a control $\ybf_i\in\Ybf_i$. Consider the trajectory $t\mapsto \phi(t,\hat s_i,\uLbf_i, \ybf_i)$ of robot $i$ under control $\uLbf_i$. At each time slot, there are two options:
\begin{itemize}
\item $\sup \hat \phi_{v,i}(t,\hat s_i,\uLbf_i, \ybf_i)$ vanishes within the slot;
\item the variation of $\sup \hat \phi_{v,i}(t,\hat s_i,\uLbf_i, \ybf_i)$ within the slot is lower than $\uL_i+\dmax_i<0$ (see Assumption~\eqref{eq:assumption-uncertainty-brake}).
\end{itemize}
In conclusion, $\sup \hat \phi_{v,i}(t,\hat s_i,\uLbf_i, \ybf_i)$ vanishes in finite time.
\end{proof}

\begin{lemma}
Consider a non-deterministic state $\hat s\in 2^S$ included in an open parallelepiped whose side along axis $i$ has length $\sigma_i$. Then, for all $i\in\robots$ satisfying $\pi_{v,i}(\sup \hat s)=0$, we have:
\begin{equation}
\hat g_i^G(\hat s):=\begin{cases}
\uL_i & \text{if } \exists (j,i)\in E(G_{|\pi_x(\inf \hat s),\rho_G}), \exists t\geq  0 \text{ s.t. } \phi_x(t,\supi(\hat{s}),\tilde{\ubf}^i, \tilde{\dbf}^i) \in \chiobs_{j\succ i} \\
\uH_i & \text{ else.}
\end{cases}
\label{eq-control-map-with-local-priority-graph-uncertainty}
\end{equation}
\label{lemma:liveness-control-law-uncertainty}
\end{lemma}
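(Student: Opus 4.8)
The plan is to mirror the proof of Lemma~\ref{lemma:liveness-control-law}, which settles the deterministic analogue, while carefully tracking the extra slack introduced by the uncertainty. As in that case, I would argue by contraposition. By construction of $\hat g^G$, the claimed formula differs from the defining one only in the range of the quantifier over edges (all of $E(G)$ versus the local edge set $E(G_{|\pi_x(\inf\hat s),\rho_G})$), so it suffices to show that every edge $(j,i)\in E(G)$ that is \emph{absent} from $G_{|\pi_x(\inf\hat s),\rho_G}$ cannot trigger the brake condition, i.e. that $\phi_x(t,\supi(\hat s),\tilde\ubf^i,\tilde\dbf^i)\notin\chiobs_{j\succ i}$ for all $t\geq 0$.

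Fix such an edge $(j,i)$, writing $\underline x:=\pi_x(\inf\hat s)$. By the definition of the local priority graph, $(j,i)\notin E(G_{|\underline x,\rho_G})$ means $\underline x\notin\chiobs_{j\succ i}+[-\rho_G,\rho_G]^n$, hence in particular $\underline x+\rho_G\e_i\in\chifree_{j\succ i}$. The heart of the argument is an estimate on the two relevant coordinates of the worst-case flow $y:=\phi_x(t,\supi(\hat s),\tilde\ubf^i,\tilde\dbf^i)$. For coordinate $i$, the hypothesis $\pi_{v,i}(\sup\hat s)=0$ forces the initial velocity of robot $i$ to vanish, so translation invariance of the one-dimensional dynamics gives $\phi_{x,i}(t,\supi(\hat s),\tilde\ubf^i,\tilde\dbf^i)=\sup\hat s_i^x+\phi_{x,i}(t,0,\ubf_i^\imp,\dbfmax_i)$, using $\tilde\ubf^i_i=\ubf_i^\imp$ and $\tilde\dbf^i_i=\dbfmax_i$. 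Bounding $\sup\hat s_i^x\leq\underline x_i+\sigma_i$ (the parallelepiped containing $\hat s$ has extent $\sigma_i\leq\sigma_i^x$ along the position axis of robot $i$) and invoking Condition~\eqref{eq:condition-liveness-control-law-uncertainty} yields $y_i\leq\underline x_i+\rho_G$ for every $t\geq 0$. For coordinate $j$, $\supi_j(\hat s)=\inf\hat s_j$ gives initial position $\underline x_j$, and since $\tilde\dbf^i_j=\dbfmin_j$ with Assumption~\eqref{eq:assumption-uncertainty-velocity} keeping the velocity non-negative, the position flow is non-decreasing, so $y_j\geq\underline x_j$.

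To conclude I would combine these via Property~\ref{property:geometric-invariance}. Since $\chiobs_{j\succ i}$ is a cylinder based on the $(\e_i,\e_j)$-plane, only coordinates $i$ and $j$ matter, and we have $y_i\leq(\underline x+\rho_G\e_i)_i$ together with $y_j\geq(\underline x+\rho_G\e_i)_j$. Thus $y$ differs from $\underline x+\rho_G\e_i$ by $-\beta\e_i+\alpha\e_j$ with $\alpha,\beta\geq 0$, and the invariance $\chifree_{j\succ i}+\RR_+\e_j-\RR_+\e_i=\chifree_{j\succ i}$ carries $\underline x+\rho_G\e_i\in\chifree_{j\succ i}$ to $y\in\chifree_{j\succ i}$. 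Hence the worst-case flow never meets $\chiobs_{j\succ i}$, which is precisely the desired contrapositive.

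The main obstacle is making the margin bookkeeping airtight: the position spread $\sigma_i$ of the information state and the braking reach $\sup_{t}\phi_{x,i}(t,0,\ubf_i^\imp,\dbfmax_i)$ must \emph{together} fit inside $\rho_G$, which is exactly what Condition~\eqref{eq:condition-liveness-control-law-uncertainty} budgets. The delicate point is the identification $\sigma_i\leq\sigma_i^x$: this holds for the information states produced by the update rule~\eqref{eq:non-deterministic-flow-intersection-with-observation}, each of which is intersected with an observation interval of length $\sigma_i^x$, and it is in this post-observation form that the lemma will be applied within the liveness proof. Everything else reduces to the translation invariance of the scalar flows and the order-preservation already recorded for the uncertain dynamics.
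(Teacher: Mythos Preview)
Your proof is correct and follows essentially the same approach as the paper's: contraposition on the edge quantifier, the key displacement bound $y_i\leq\underline x_i+\rho_G$ obtained from the vanishing initial velocity together with Condition~\eqref{eq:condition-liveness-control-law-uncertainty}, the monotonicity bound $y_j\geq\underline x_j$, and the conclusion via Property~\ref{property:geometric-invariance}. Your extra care in justifying $\sigma_i\leq\sigma_i^x$ through the observation-intersection step~\eqref{eq:non-deterministic-flow-intersection-with-observation} is a useful clarification that the paper glosses over (its proof silently identifies the parallelepiped side $\sigma_i$ of the statement with the observation width $\sigma_i^x$).
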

\begin{proof}[Proof of Lemma~\ref{lemma:liveness-control-law-uncertainty}]
Take $\hat s\in 2^S$ and $i\in\robots$ satisfying $\pi_{v,i}(\sup \hat s)=0$, take $i\in\robots$ and $(j,i)\in E(G)$. By construction of $\hat g^G$, we have:
\begin{equation}
\hat g_i^G(\hat s):=\begin{cases}
\uL_i & \text{if } \exists (j,i)\in E(G), \exists t\geq  0 \text{ s.t. } \phi_x(t,\supi(\hat{s}),\tilde{\ubf}^i, \tilde{\dbf}^i) \in \chiobs_{j\succ i} \\
\uH_i & \text{ else.}
\end{cases}
\end{equation}
Hence, we have to prove that for all $(j,i)\in E(G)$, if $\phi_x(t,\supi(\hat{s}),\tilde{\ubf}^i, \tilde{\dbf}^i) \in \chiobs_{j\succ i}$ for some $t\geq 0$, then we have $(j,i)\in E(G_{|\pi_x (\inf \hat s),\rho_G})$. We will prove this by contraposition.

Assume that $(j,i)\notin E(G_{|\pi_x (\inf \hat s),\rho_G})$, then we have:
\begin{equation}
\pi_x (\inf \hat s)+\rho_G \e_i \in\chifree_{j \succ i}
\label{eq:x-plus-epsilon-in-chifree-uncertainty}
\end{equation}
By assumption, we have $\pi_{v,i}(\sup \hat s)=0$, hence $\pi_{v,i}(\supi_i(\hat s)) = 0$. As the length of $\hat s$ along axis $x_i$ is $\sigma_i^x$, we have $\pi_{x,i}(\supi_i(\hat s)) \leq \pi_x(\inf\hat s_i)+ \sigma_i^x$. Hence, by order-preservation, we obtain for all $t\geq 0$:
\begin{equation}
\phi_{x,i}(t,\supi(\hat{s}),\tilde{\ubf}^i, \tilde{\dbf}^i)  \leq \pi_x(\inf\hat s_i)+ \sigma_i^x+\phi_{x,i}(t,0,\tilde{\ubf}^i_i, \tilde{\dbf}^i_i)
\end{equation}
By Equation~\eqref{eq:condition-liveness-control-law-uncertainty}, we obtain:
\begin{equation}
\phi_{x,i}(t,\supi(\hat{s}),\tilde{\ubf}^i, \tilde{\dbf}^i) \leq
\pi_x(\inf\hat s_i)+\rho_G
\end{equation}

As $t\mapsto \phi_j(t,\supi(\hat{s}),\tilde{\ubf}^i, \tilde{\dbf}^i)$ is non-decreasing, we obtain:
\begin{equation}
\phi_{x,j}(t,\supi(\hat{s}),\tilde{\ubf}^i, \tilde{\dbf}^i) \geq \pi_x (\inf\hat s_j) 
\end{equation}
By Property~\ref{property:geometric-invariance}, Equation~\eqref{eq:x-plus-epsilon-in-chifree-uncertainty} implies that $\phi_x(t,\supi(\hat{s}),\tilde{\ubf}^i, \tilde{\dbf}^i) \in\chifree_{j\succ i}$ for all $t\geq 0$. 
\end{proof}

\begin{lemma}
Given $\hat s\in 2^S$, for all $i,j\in\robots$, for all $\ubf\in\controls$ and $\dbf\in\dsignals$, the following implication holds:
\begin{equation}
\left[\exists t\geq 0: \phi_x(t,\supi(\hat s),\ubf,\dbf) \in \chiobs_{j\succ i}\right]  \Rightarrow \left[\pi_{x}(\inf \hat s_j) < \xobsmax_j\right]
\end{equation}
\label{lemma:flow-intersection-non-empty-implication-uncertainty}
\end{lemma}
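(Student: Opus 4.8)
The plan is to transcribe almost verbatim the deterministic argument of Lemma~\ref{lemma:flow-intersection-non-empty-implication}, replacing the initial state $s$ by the worst-case state $\supi(\hat s)$ and verifying that monotonicity of the projected flow survives the presence of the disturbance $\dbf$. First I would dispose of the degenerate case $j=i$: since $\chiobs_{ii}=\emptyset$ we have $\chiobs_{i\succ i}=\emptyset$, so the hypothesis $\phi_x(t,\supi(\hat s),\ubf,\dbf)\in\chiobs_{i\succ i}$ can never be met and the implication holds vacuously. It thus suffices to treat $j\neq i$, in which case the definition of the worst-case state gives $\supi_j(\hat s)=\inf\hat s_j$, so the $j$-th initial position of the flow equals $\pi_x(\inf\hat s_j)$.

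The key observation, and the only point needing more care than the deterministic case, is that the $j$-th projected flow stays non-decreasing in time despite the disturbance. Indeed, from the dynamics~\eqref{eq-diff-state1-non-deterministic} one has $\dot x_j = v_j + \mathbf{1}_{v_j=\vmax_j}\dbf^v_j$, and Assumption~\eqref{eq:assumption-uncertainty-velocity} ($\dmin_j^v+\vmax_j>0$) guarantees $v_j+\dbf^v_j\geq\vmax_j+\dmin_j^v>0$ whenever $v_j=\vmax_j$, while $\dot x_j=v_j\geq 0$ otherwise; hence $\dot x_j\geq 0$ for every admissible $\dbf$. Monotonicity of $t\mapsto\phi_{x,j}(t,\supi(\hat s),\ubf,\dbf)$ then yields
\begin{equation}
\phi_{x,j}(t,\supi(\hat s),\ubf,\dbf)\geq\phi_{x,j}(0,\supi(\hat s),\ubf,\dbf)=\pi_x(\inf\hat s_j).
\end{equation}

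To finish I would exploit the geometry of the completed cylinder exactly as in the deterministic proof. Since $\chiobs_{j\succ i}=\chiobs_{ij}-\RR_+\e_j+\RR_+\e_i$ extends to $-\infty$ along axis $j$, it is upper-bounded along that axis by the same bound as $\chiobs_{ij}$. Assuming $\phi_x(t,\supi(\hat s),\ubf,\dbf)\in\chiobs_{j\succ i}$ for some $t\geq 0$, the $j$-th coordinate of this point satisfies $\phi_{x,j}(t,\supi(\hat s),\ubf,\dbf)<\sup\{x_j:x\in\chiobs_{j\succ i}\}\leq\xobsmax_j$, where the strict inequality holds because the point lies in the open set $\chiobs_{j\succ i}$ and the last inequality is the boundedness condition on $\chiobs_{ij}$. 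Combining this with the displayed inequality by transitivity gives $\pi_x(\inf\hat s_j)<\xobsmax_j$, which is the claim. Everything except the monotonicity step, where Assumption~\eqref{eq:assumption-uncertainty-velocity} is precisely what prevents the velocity disturbance from driving the position backward, is a direct translation of the deterministic lemma.
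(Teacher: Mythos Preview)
Your proof is correct and follows essentially the same route as the paper's own argument: monotonicity of $t\mapsto\phi_{x,j}(t,\supi(\hat s),\ubf,\dbf)$, the identity $\pi_x(\supi_j(\hat s))=\pi_x(\inf\hat s_j)$, and the upper bound $\sup\{x_j:x\in\chiobs_{j\succ i}\}\leq\xobsmax_j$. You add two pieces of care the paper omits: the vacuous case $j=i$ (needed since $\supi_i(\hat s)=\sup\hat s_i$, not $\inf\hat s_i$), and an explicit justification via Assumption~\eqref{eq:assumption-uncertainty-velocity} that the disturbance cannot make $\dot x_j$ negative, whereas the paper simply asserts the flow is non-decreasing.
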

\begin{proof}
Take $\hat s\in 2^S$, $i,j\in\robots$, $\dbf\in\dsignals$ and $\ubf\in\controls$. The flow $t\mapsto \phi_x(t,\supi(\hat s),\ubf,\dbf))$ is non-decreasing. Hence, we have:
\begin{equation}
\phi_{x,j}(t,\supi(\hat s),\ubf,\dbf) \geq \phi_{x,j}(0,\supi(\hat s),\ubf,\dbf) = \pi_{x}(\supi_j(\hat s))=\pi_x(\inf\hat s_j)
\end{equation}
Assume that $\phi_x(t,\supi(\hat s),\ubf,\dbf) \in \chiobs_{j\succ i} $ for some $t\geq 0$. It implies that $\phi_{x,j}(t,\supi(\hat s),\ubf,\dbf) < \sup\{x_j:x\in\chiobs_{j\succ i}\} \leq \xobsmax_j$. By transitivity, we obtain $\pi_x(\inf\hat s_j) < \xobsmax_j$.
\end{proof}

\begin{proof}[Proof of Theorem~\ref{thm:liveness-control-law-uncertainty-app}]
Take a priority graph $G\in\graphs$ satisfying Condition~\eqref{eq:condition-liveness-control-law-uncertainty}. Consider the set $I(t)$ defined as follows:
\begin{equation}
\forall t\in\RR_+, I(t):=\left\{i\in\robots: \inf \hat \phi_{x,i}(t,\hat s,\hat g^G, \ybf) < \xobsmax_i \right\}
\end{equation}
We prove Theorem~\ref{thm:liveness-control-law-uncertainty-app} by contradiction. Theorem~\ref{thm:liveness-control-law-uncertainty-app} is equivalent to the existence of $T>0$ such that:
\begin{equation}
I(T)=\emptyset
\end{equation}
Assume that for all $t \geq 0$, $I(t)\neq\emptyset$. Then, consider $\Imin$ defined as follows:
\begin{equation}
\Imin := \cap_{t\geq 0} I(t) \neq \emptyset
\end{equation}
We have for all $i\in\Imin$, 
\begin{equation}
\forall k\in\NN, \inf \hat \phi_{x,i}(k,\hat s,\hat g^G, \ybf) < \xobsmax_i
\end{equation}
By Lemma~\ref{lemma:robot-brakes-indefinitely-if-no-exit-uncertainty}, a necessary condition for the above equation to be satisfied is that for all $i\in\Imin$, there exists $k^0_i\in\NN$ such that:
\begin{equation}
\forall k\geq k_i^0, \hat g_i^G(\hat \phi(k,\hat s,\hat g ^G, \ybf))=\uL_i
\end{equation}
Hence, for $k^0:=\max_{i\in\Imin} k^0_i$, we have:
\begin{equation}
\forall i\in\Imin, \forall k\geq k^0, \hat g_i^G(\hat \phi(k,\hat s,\hat g ^G, \ybf))=\uL_i
\end{equation}
By Lemma~\ref{lemma:robot-brakes-velocity-vanishes-uncertainty}, the velocity of all robots $i\in\Imin$ will vanish in finite time. There exists $k^1\geq k^0$ such that:
\begin{eqnarray}
\forall k \geq k^1, \hat g_i^G(\hat \phi(k,\hat s,\hat g ^G, \ybf))&=&\uL_i\\
\sup \hat \phi_{v,i}(k,\hat s,\hat g^G, \ybf)&=&0
\end{eqnarray}
Take $k \geq k^1$ big enough, so that $I(k)=\Imin$. By Lemma~\ref{lemma:liveness-control-law-uncertainty}, for all $i\in\Imin$, there exists $(j,i)\in E(G_{|\inf \hat \phi_x(k,\hat s,\hat g ^G, \ybf),\rho_G})$ such that $\phi_x(\RR_+,\supi(\hat\phi(k,\hat s,\hat g ^G, \ybf)),\tilde{\ubf}^i, \tilde{\dbf}^i)\cap \chiobs_{j\succ i}\neq \emptyset$ and we have necessarily $\inf \hat \phi_{x,j}(k,\hat s,\hat g^G, \ybf) <\xobsmax_j$ by Lemma~\ref{lemma:flow-intersection-non-empty-implication-uncertainty}, i.e., $j\in I(k)=\Imin$. As a result, $G_{|\inf \hat \phi_x(k,\hat s,\hat g ^G, \ybf),\rho_G}$ should be cyclic (there would be a cycle involving vertices in $\Imin$). It is absurd.
\end{proof}

\part*{Résumés en français}
\addcontentsline{toc}{part}{Résumés en Français}
{\pagestyle{plain}
\parttoc
\cleardoublepage}

\includepdf[pages=1-]{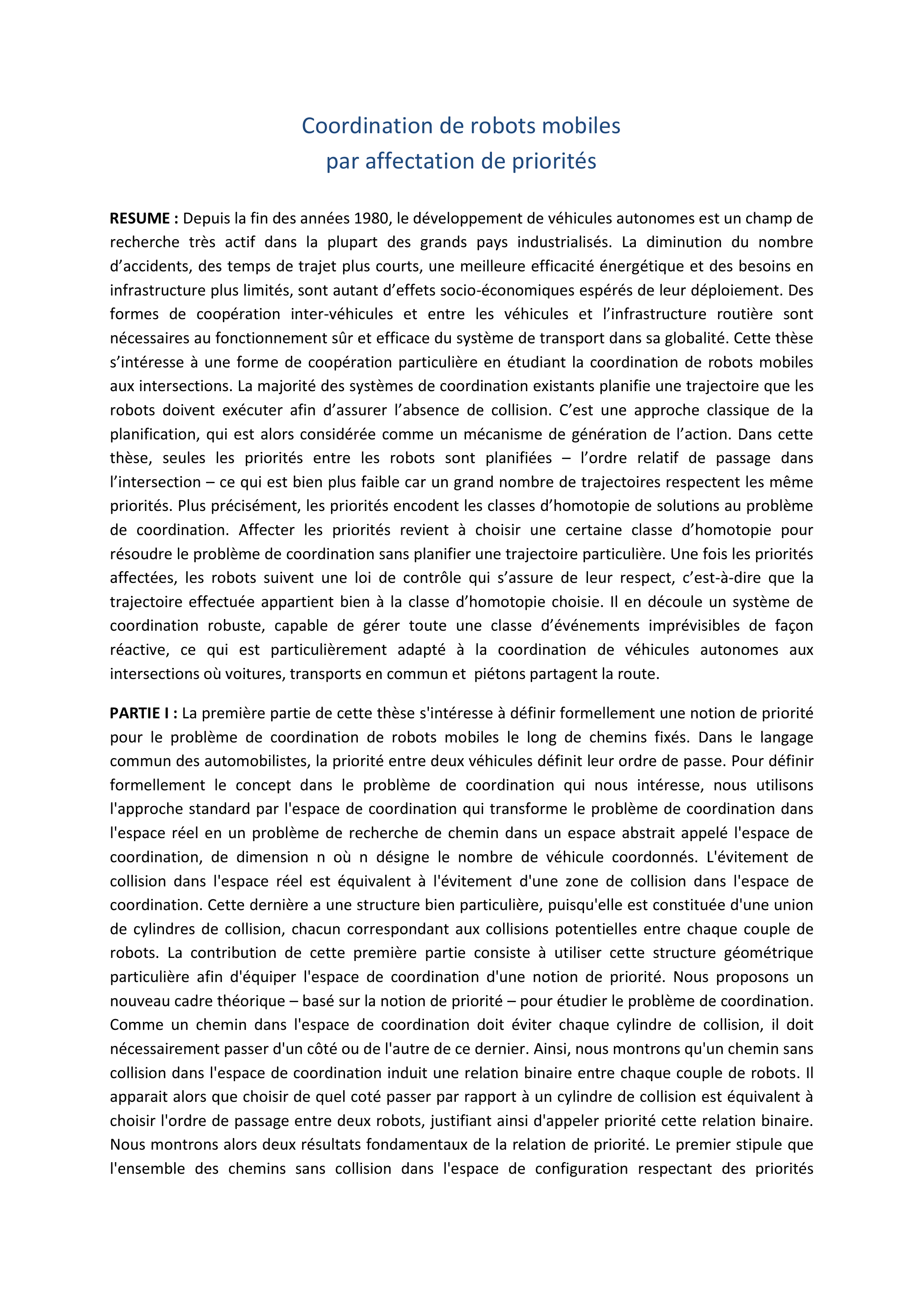}

\pagestyle{empty}
\cleardoublepage
\includepdf{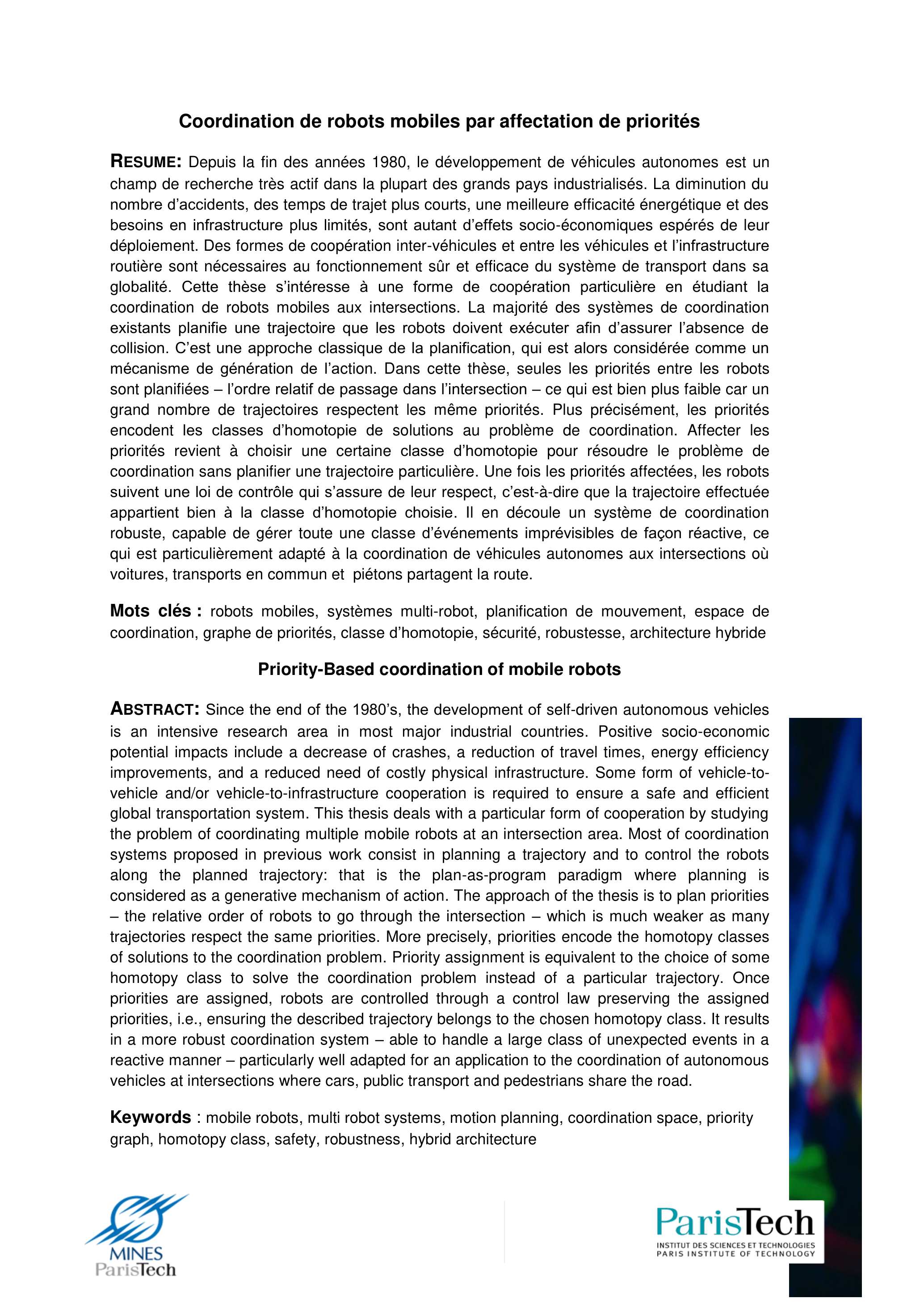}
\end{document}